\newtheorem{theorem}{Theorem}
\newtheorem{lemma}{Lemma}
\newtheorem{definition}{Definition}
\newtheorem{assumption}{Assumption}
\newcommand{\R}{\mathbb{R}}
\newcommand{\N}{\mathbb{N}}
\newcommand{\bv}{\boldsymbol{v}}
\newcommand{\bw}{\boldsymbol{w}}
\newcommand{\cA}{\mathcal{A}}
\newcommand{\cE}{\mathcal{E}}
\newcommand{\cF}{\mathcal{F}}
\newcommand{\cG}{\mathcal{G}}
\newcommand{\cH}{\mathcal{H}}
\newcommand{\cI}{\mathcal{I}}
\newcommand{\cM}{\mathcal{M}}
\newcommand{\cN}{\mathcal{N}}
\newcommand{\cP}{\mathcal{P}}
\newcommand{\cS}{\mathcal{S}}
\newcommand{\argmax}{\operatornamewithlimits{argmax}}
\newcommand{\argmin}{\operatornamewithlimits{argmin}}
\mathchardef\mhyphen="2D
\newcommand{\ex}{\mathbb{E}}
\newcommand{\sbr}[1]{\left( #1 \right)}
\newcommand{\mbr}[1]{\left[ #1 \right]}
\newcommand{\lbr}[1]{\left\{ #1 \right\}}
\newcommand{\opt}{\mathtt{OPT}}
\newcommand{\oracle}{\mathtt{MaxOracle}}
\newcommand{\existoracle}{\mathtt{ExistOracle}}
\newcommand{\aroracle}{\mathtt{AR\mbox{-}Oracle}}
\newcommand{\findewinset}{\mathtt{BottleneckSearch}}
\newcommand{\rad}{\textup{rad}}
\newcommand{\uniformfc}{\mathtt{UniformFC}}
\newcommand{\uniformfb}{\mathtt{UniformFB}}
\newcommand{\nonlucb}{\mathtt{GenLUCB}}
\newcommand{\algbottleneck}{\mathtt{BLUCB}}
\newcommand{\algbottleneckpac}{\mathtt{BLUCB\mbox{-}PAC}}
\newcommand{\algbottleneckexplore}{\mathtt{BLUCB\mbox{-}Explore}}
\newcommand{\algbottleneckverify}{\mathtt{BLUCB\mbox{-}Verify}}
\newcommand{\algbottleneckparallel}{\mathtt{BLUCB\mbox{-}Parallel}}
\newcommand{\algbottleneckparallelpac}{\mathtt{BLUCB\mbox{-}Parallel\mbox{-}PAC}}
\newcommand{\bsar}{\mathtt{BSAR}}
\newcommand{\wmin}{\mathtt{MinW}}
\newcommand{\emin}{\mathtt{MinE}}
\newcommand{\kl}{\textup{KL}}
\newcommand{\fc}{\textup{C}}
\newcommand{\fb}{\textup{B}}
\newcommand{\fg}{\textup{G}}
\newcommand{\superset}{\cS}
\newcommand{\sub}{\textup{sub}}
\newcommand{\exclude}{\textup{ex}}
\newcommand{\compilehidecomments}{true}
	\newcommand{\wei}[1]{}
	\newcommand{\yuko}[1]{}
	\newcommand{\yihan}[1]{}
	\newcommand{\wei}[1]{{\color{red}  [\text{Wei:} #1]}}
	\newcommand{\yuko}[1]{{\color{blue} [\text{Yuko:} #1]}}
	\newcommand{\yihan}[1]{{\color{teal} [\text{Yihan:} #1]}}
\newcommand{\compilefullversion}{true} 
	\newcommand{\OnlyInFull}[1]{}
	\newcommand{\OnlyInShort}[1]{#1}
	\newcommand{\OnlyInFull}[1]{#1}%
	\newcommand{\OnlyInShort}[1]{}%
\title{Combinatorial Pure Exploration with Bottleneck Reward Function}
\author{%
    Yihan Du\\
  IIIS, Tsinghua University\\
  Beijing, China\\
  \texttt{duyh18@mails.tsinghua.edu.cn} \\
   \And
   Yuko Kuroki \\
   The University of Tokyo / RIKEN \\
   Tokyo, Japan \\
   \texttt{yukok@is.s.u-tokyo.ac.jp} \\
   \AND
   Wei Chen \\
   Microsoft Research \\
   Beijing, China \\
   \texttt{weic@microsoft.com} \\
}
\begin{document}

\maketitle

\begin{abstract}
In this paper, we study the Combinatorial Pure Exploration problem with the Bottleneck reward function (CPE-B) under the fixed-confidence (FC) and fixed-budget (FB) settings.
In CPE-B, given a set of base arms and a collection of subsets of base arms (super arms) following a certain combinatorial constraint, a learner sequentially plays a base arm and observes its random reward, with the objective of finding the optimal super arm with the maximum bottleneck value, defined as the minimum expected reward of the base arms contained in the super arm.
CPE-B captures a variety of practical scenarios such as network routing in communication networks, and its \emph{unique challenges} fall on how to utilize the bottleneck property to save samples and achieve the statistical optimality. None of the existing CPE studies (most of them assume linear rewards) can be adapted to solve such challenges, and thus we develop brand-new techniques to handle them.
For the FC setting, we propose novel algorithms with optimal sample complexity for a broad family of instances and establish a matching lower bound to demonstrate the optimality (within a logarithmic factor).
For the FB setting, we design an algorithm which achieves the state-of-the-art error probability guarantee and is the first to run efficiently on fixed-budget path instances, compared to existing CPE algorithms. 
Our experimental results on the top-$k$, path and matching instances validate the empirical superiority of the proposed algorithms over their baselines.
\end{abstract}

\section{Introduction}
The Multi-Armed Bandit (MAB) problem~\cite{lai_robbins1985,thompson1933,UCB_auer2002,agrawal2012analysis} is a classic model to solve the exploration-exploitation trade-off in online decision making.
Pure exploration~\cite{Audibert2010,kalyanakrishnan2012,bubeck2013_SAR,APT2016} is an important variant of the MAB problem, which aims to identify the best arm under a given confidence or a given sample budget.
There are various works studying pure exploration, such as top-$k$ arm identification~\cite{Even2006,kalyanakrishnan2012,bubeck2013_SAR, Multiple-Arm_Identification_Kuroki2020}, top-$k$ arm under matriod constraints~\cite{matroid_ChenLJ2016} and multi-bandit best arm identification~\cite{Multi_Bandit,bubeck2013_SAR}. 

The Combinatorial Pure Exploration (CPE) framework, firstly proposed by Chen et al.~\cite{chen2014cpe}, encompasses a rich class of pure exploration problems~\cite{Audibert2010,kalyanakrishnan2012,matroid_ChenLJ2016}. 
In CPE, there are a set of base arms, each associated with an unknown reward distribution. A subset of base arms is called a super arm, which follows a certain combinatorial structure.
At each timestep, a learner plays a base arm and observes a random reward sampled from its distribution, with the objective to identify the optimal super arm with the maximum expected reward.
While Chen et al.~\cite{chen2014cpe} provide this general CPE framework, 
their algorithms and analytical techniques only work under the linear reward function and cannot be applied to other nonlinear reward cases.\footnote{The algorithmic designs and analytical tools (e.g., symmetric difference and exchange set) in \cite{chen2014cpe} all rely on the linear property and cannot be applied to nonlinear reward cases, e.g, the bottleneck reward problem.}

However, in many real-world scenarios, the expected reward function is not necessarily linear.
One of the common and important cases is the \emph{bottleneck reward} function, i.e., the expected reward of a super arm is the minimum expected reward of the base arms contained in it.
%
For example, in communication networks~\cite{communication_networks}, the transmission speed of a path is usually determined by the link with the lowest rate, and a learner samples the links in order to find the optimal transmission path which maximizes its bottleneck link rate. 
In traffic scheduling~\cite{urban_traffic}, a scheduling system collects the information of road segments in order to plan an efficient route which optimizes its most congested (bottleneck) road segment.
In neural architecture search~\cite{network_architecture_search}, the overall efficiency of a network architecture is usually constrained by its worst module, and an agent samples the available modules with the objective to identify the best network architecture in combinatorial search space.

In this paper, we study the Combinatorial Pure Exploration with the Bottleneck reward function (CPE-B) which aims to identify the optimal super arm with the maximum bottleneck value by querying the base arm rewards, where the bottleneck value of a super arm is defined as the minimum expected reward of its containing base arms.
We consider two popular settings in pure exploration, i.e, \emph{fixed-confidence (FC)}, where given confidence parameter $\delta$, the learner aims to identify the optimal super arm with probability $1-\delta$ and minimize the number of used samples (sample complexity), and \emph{fixed-budget (FB)}, where the learner needs to use a given sample budget to find the optimal super arm and minimize the error probability. 


\begin{wrapfigure}[12]{r}{0.4\textwidth}
	\centering    
	\vspace*{-2em}
	\includegraphics[width=0.39\textwidth]{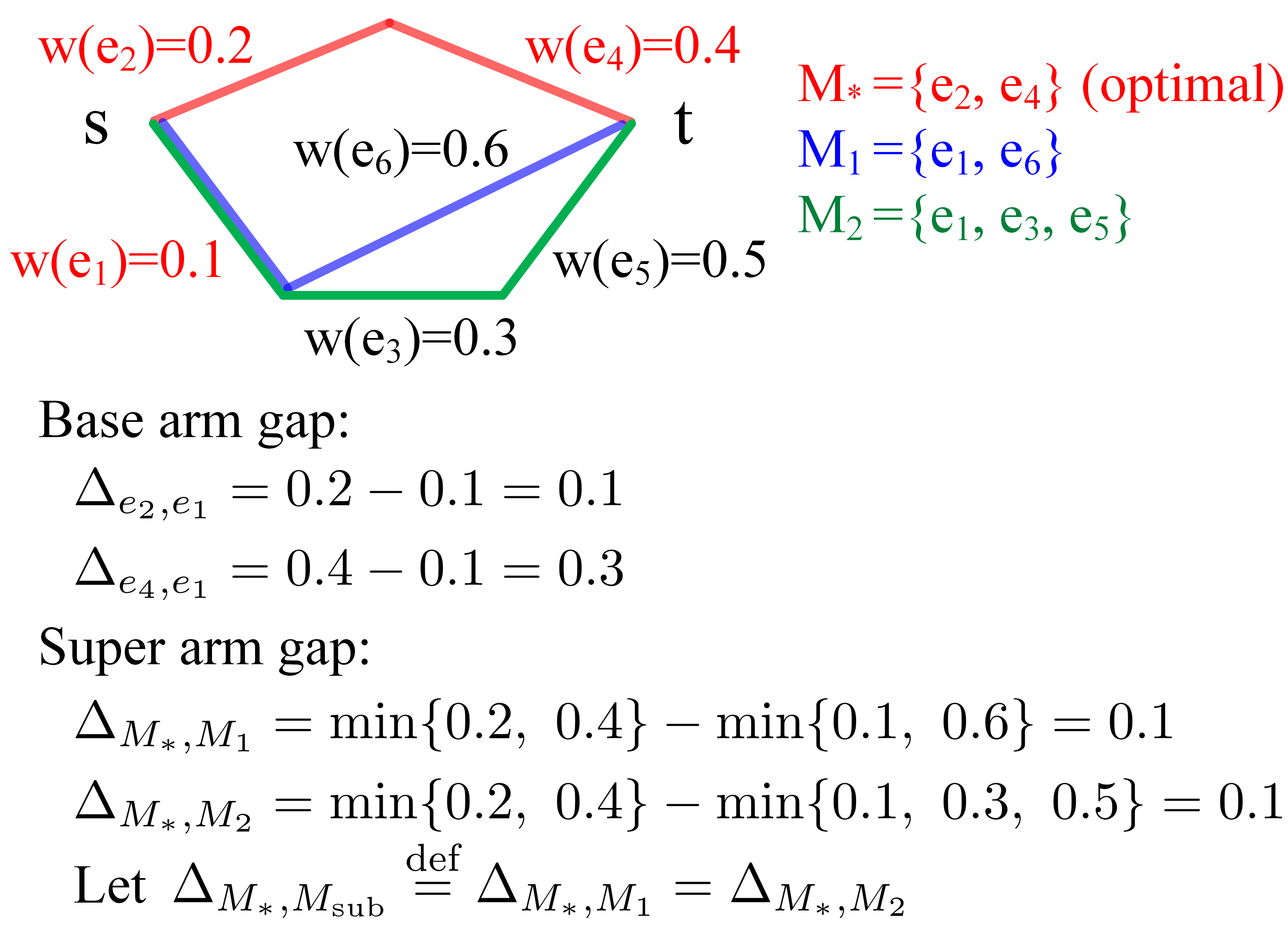}  
	\caption{Illustrating example.}     
	\label{fig:example}     
\end{wrapfigure}
\textbf{Challenges of CPE-B.}
Compared to prior CPE works~\cite{chen2014cpe,ChenLJ_Nearly_Optimal_Sampling17,huang_CPE_CS2018}, our CPE-B aims at utilizing the bottleneck property to save samples and achieve the statistical optimality.
It faces with two \emph{unique challenges}, i.e., how to (i) achieve the tight \emph{base-arm-gap dependent} sample complexity and (ii) avoid the dependence on \emph{unnecessary base arms} in the results, while running in polynomial time.
We use a simple example in Figure~\ref{fig:example} to illustrate our challenges. 
In Figure~\ref{fig:example}, there are six edges (base arms) and three $s$-$t$ paths (super arms), and the base arm reward $w(e_i)$, base arm gap $\Delta_{e_i, e_j}$ and super arm gap $\Delta_{M_*, M_{\sub}}$ are as shown in the figure. 
In order to identify the optimal path, all we need is to pull $e_1,e_2,e_4$ to determine that $e_1$ is worse than $e_2$ and $e_4$, and $e_3,e_5,e_6$ are useless for revealing the sub-optimality of $M_1$ and $M_2$. 
In this case, the optimal sample complexity should be $O( (\frac{2}{\Delta_{e_2,e_1}^2}+\frac{1}{\Delta_{e_4,e_1}^2}) \ln \delta^{-1})$, which depends on the tight base arm gaps and only includes the critical base arms ($e_1,e_2,e_4$). 
However, if one naively adapts existing CPE algorithms~\cite{chen2014cpe,CPE_DB_2020,CPE_BL_PL2020} to work with bottleneck reward function, an inferior sample complexity of $O(\sum_{e_i, i \in [6]}\frac{1}{\Delta_{M_*,M_{\sub}}^2} \ln \delta^{-1})$ is incurred, which depends on the loose super arm gaps and 
contains a summation over all base arms (including the unnecessary $e_3,e_5,e_6$).
Hence, our challenge falls on how to achieve such efficient sampling in an online environment, where we do not know which are critical base arms $e_1, e_2, e_4$ but want to gather just enough information to identify the optimal super arm. 
We remark that, none of existing CPE studies can be applied to solve the unique challenges of CPE-B, and thus we develop brand-new techniques to handle them and attain the optimal results (up to a logarithmic factor).

\textbf{Contributions.}
For CPE-B in the FC setting, (i) we first develop a novel algorithm $\algbottleneck$, which employs a bottleneck-adaptive sample strategy and achieves the tight base-arm-gap dependent sample complexity. (ii) We further propose an improved algorithm $\algbottleneckparallel$ in high confidence regime, which adopts
an efficient ``bottleneck-searching'' offline procedure and a novel ``check-near-bottleneck'' stopping condition. 
The sample complexity of $\algbottleneckparallel$ drops the dependence on unnecessary base arms and achieves the optimality (within a logarithmic factor) under small enough $\delta$. 
(iii) A matching sample complexity lower bound for the FC setting is also provided, which demonstrates the optimality of our algorithms.
For the FB setting, (iv) we propose a novel algorithm $\bsar$ with a special acceptance scheme for the bottleneck identification task. $\bsar$ achieves the state-of-the-art error probability and is the first to run efficiently on fixed-budget path instances, compared to existing CPE algorithms.
All our proposed algorithms run in \emph{polynomial time}.\footnote{Here ``polynomial time'' refers to polynomial time in the number of base arms $n$ (which is equal to the number of edges $E$ in our considered instances such as $s$-$t$ paths, matchings and spanning trees).\yihan{Added the clarification for ``polynomial time''.}}
The experimental results demonstrate that our algorithms significantly outperform the baselines.
\OnlyInFull{Due to space limit, we defer all the proofs to Appendix.}
\OnlyInShort{Due to space limit, we defer all the proofs to the supplementary material.}

\subsection{Related Work}
In the following we briefly review the related work in the CPE literature.
Chen et al.~\cite{chen2014cpe} firstly propose the CPE model and only consider the linear reward function
(CPE-L), and their results for CPE-L are further improved by~\cite{gabillon2016improved,ChenLJ_Nearly_Optimal_Sampling17}.
Huang et al.~\cite{huang_CPE_CS2018} investigate the continuous and separable reward functions (CPE-CS), but their algorithm only runs efficiently on simple cardinality constraint instances.
All these works consider directly sampling base arms and getting their feedback.
There are also several CPE studies which consider other forms of sampling and feedback.
Chen et al.~\cite{CPE_DB_2020} propose the CPE for dueling bandit setting, where at each timestep the learner pulls a duel between two base arms and observes their comparison outcome.
Kuroki et al.~\cite{Online_Dense_Subgraph_Kuroki2020} study an online densest subgraph problem, where the decision is a subgraph and the feedback is the reward sum of the edges in the chosen subgraph (i.e., full-bandit feedback). 
Du et al.~\cite{CPE_BL_PL2020} investigate CPE with the full-bandit or partial linear feedback. 
All of the above studies consider the pure exploration setting, while in combinatorial bandits there are other works~\cite{chen2013combinatorial_ICML,combes2015combinatorial,chen2016combinatorial_JMLR} studying the regret minimization setting (CMAB). In CMAB, the learner plays a super arm and observes the rewards from all base arms contained in it, with goal of minimizing the regret, which is significantly different from our setting.
Note that none of the above studies covers our CPE-B problem or can be adapted to solve the unique challenges of CPE-B, and thus CPE-B demands a new investigation.


\section{Problem Formulation} \label{sec:formulation}

In this section, we give the formal formulation of CPE-B.
%
In this problem, a learner is given $n$ base arms numbered by $1, 2, \dots, n$. Each base arm $e \in [n]$ is associated with an \emph{unknown} reward distribution with the mean of $w(e)$ and 
an $R$-sub-Gaussian tail,
which is a standard assumption in bandits~\cite{improved_linear_bandit2011,chen2014cpe,APT2016,Tao2018}. Let $\bw=(w(1), \dots, w(n))^\top$ be the expected reward vector of base arms. 
The learner is also given a decision class $\cM \subseteq 2^{[n]}$, which is a collection of super arms (subsets of base arms) and generated from a certain combinatorial structure, such as $s$-$t$ paths, maximum cardinality matchings, and spanning trees. 
For each super arm $M \in \cM$, we define its expected reward (also called \emph{bottleneck value}) as $\wmin(M, \bw)=\min_{e \in M} w(e)$,\footnote{In general, the second input of function $\wmin$ can be any vector: for any $M \in \cM$ and $\bv \in \R^n$, $\wmin(M, \bv)=\min_{e \in M} v(e)$.} i.e., the minimum expected reward of its constituent base arms, which is so called \emph{bottleneck reward function}.
Let $M_*=\argmax_{M \in \cM} \wmin(M, \bw)$ be the optimal super arm with the maximum  bottleneck value, and  $\opt=\wmin(M_*, \bw)$ be the optimal value.
Following the pure exploration literature~\cite{Even2006,chen2014cpe,ChenLJ_Nearly_Optimal_Sampling17,CPE_DB_2020}, we assume that $M_*$ is unique, and this assumption can be removed in our extension to the PAC learning setting \OnlyInFull{(see Section~\ref{apx:PAC}).}\OnlyInShort{(see the supplementary material).}

At each timestep, the learner plays (or samples) a base arm $p_t \in [n]$ and observes a random reward sampled from its reward distribution, where the sample is independent among different timestep $t$. 
The learner's objective is to identify the optimal super arm $M_*$ from $\cM$.

For this identification task, we study two common metrics in pure exploration~\cite{kalyanakrishnan2012,bubeck2013_SAR,APT2016,ChenLJ_Nearly_Optimal_Sampling17},
i.e., fixed-confidence (FC) and fixed-budget (FB) settings.
In the FC setting, given a confidence parameter $\delta \in (0,1)$, the learner needs to identify $M_*$ with probability at least $1-\delta$ and minimize the \emph{sample complexity}, i.e., the number of samples used.
%
%
In the FB setting, the learner
is given a fixed sample budget $T$, and needs to identify $M_*$ within $T$ samples and minimize the \emph{error probability}, i.e., the probability of returning a wrong answer.


\section{Algorithms for the Fixed-Confidence Setting} 
\label{sec:fixed_confidence}
In this section, we first propose a simple algorithm $\algbottleneck$ for the FC setting, which adopts a novel bottleneck-adaptive sample strategy to obtain the tight base-arm-gap dependent sample complexity. 
We further develop an improvement $\algbottleneckparallel$ in high confidence regime, 
whose sample complexity drops the dependence on unnecessary base arms for small enough $\delta$.
Both algorithms achieve the optimal sample complexity for a family of instances (within a logarithmic factor).
%

\subsection{Algorithm $\algbottleneck$ with Base-arm-gap Dependent Results} \label{sec:fc_algorithm}

\begin{algorithm}[t!]
	\caption{$\algbottleneck$, algorithm for CPE-B in the FC setting} \label{alg:bottleneck}
	\begin{multicols}{2}
		\begin{algorithmic}[1]
			\STATE \textbf{Input:} $\cM$, $\delta \in (0,1)$ and $\oracle$.
			\STATE Initialize: play each $e \in [n]$ once, and update empirical means $\hat{\bw}_{n+1}$ and $T_{n+1}$\\
			\FOR{$t=n+1, n+2, \dots$}
			\STATE $\rad_t(e) \leftarrow  \sqrt{2 \ln (\frac{4nt^3}{\delta})/ T_t(e)},\  \forall e \in [n]$\;
			\vspace*{-0.8em}
			\STATE $\underline{w}_t(e)\leftarrow \hat{w}_t(e)-\rad_t(e),\  \forall e \in [n]$\;
			\STATE $\bar{w}_t(e)\leftarrow \hat{w}_t(e)+\rad_t(e),\  \forall e \in [n]$\;
			\STATE $M_t \leftarrow \oracle(\cM, \underline{\bw}_t)$\;
			\label{line:blucb_first_oracle}
			\STATE $\tilde{M}_t \leftarrow \oracle(\cM \setminus \superset(M_t), \bar{\bw}_t)$\;
			\label{line:blucb_second_oracle}
			\IF{ $\wmin(M_t, \underline{\bw}_t) \geq \wmin(\tilde{M}_t, \bar{\bw}_t)$ } 
			\label{line:blucb_stop}
			\STATE \textbf{return} $M_t$\; 
			\ENDIF
			\STATE $c_t \leftarrow \argmin_{e \in M_t} \underline{w}_t(e)$\; \label{line:blucb_c_t}
			\STATE $d_t \leftarrow \argmin_{e \in \tilde{M}_t} \underline{w}_t(e)$\; 
			\STATE $p_t \leftarrow \argmax_{e \in \{c_t, d_t\}} \rad_t(e) $\; \label{line:blucb_p_t}
			\STATE Play $p_t$, and observe the reward\;
			\STATE Update empirical means $\hat{w}_{t+1}(p_t)$\;
			\STATE Update the number of samples $T_{t+1}(p_t)$\;
			\ENDFOR
		\end{algorithmic}
	\end{multicols}
	\vspace*{-1em}
\end{algorithm}

Algorithm~\ref{alg:bottleneck} illustrates the proposed algorithm $\algbottleneck$ for CPE-B in the FC setting.
Here $\superset(M_t)$ denotes the set of all supersets of super arm $M_t$ (Line~\ref{line:blucb_second_oracle}).
Since the bottleneck reward function is monotonically decreasing, for any $M' \in \superset(M_t)$, we have $\wmin(M',\bw) \leq \wmin(M_t,\bw)$.
Hence, to verify the optimality of $M_t$, we only need to compare $M_t$ against super arms in $\cM \setminus \superset(M_t)$, and this property will also be used in the later algorithm $\algbottleneckparallel$.
%

$\algbottleneck$ is allowed to access an efficient \emph{bottleneck maximization oracle} $\oracle(\cF, \bv)$, 
which returns an optimal super arm from $\cF$ with respect to $\bv$, i.e., $\oracle(\cF, \bv) \in \argmax_{M \in \cF} \wmin(M, \bv)$. For $\cF=\cM$ (Line~\ref{line:blucb_first_oracle}), such an efficient oracle exists for many decision classes, such as the bottleneck shortest path~\cite{bottleneck_shortest_path2006}, bottleneck bipartite matching~\cite{bottleneck_bipartite_matching1994} and minimum bottleneck spanning tree~\cite{bottleneck_spanning_tree1978} algorithms.
For $\cF=\cM \setminus \superset(M_t)$ (Line~\ref{line:blucb_second_oracle}), 
we can also efficiently find the best super arm (excluding the supersets of $M_t$) by repeatedly removing each base arm in $M_t$ and calling the basic maximization oracle, and then selecting the one with the maximum bottleneck value.



We describe the procedure of $\algbottleneck$ as follows: 
at each timestep $t$, we calculate the lower and upper confidence bounds of base arm rewards, denoted by $\underline{\bw}_t$ and $\bar{\bw}_t$, respectively. 
Then, we call $\oracle$ to find the super arm $M_t$ with the maximum pessimistic bottleneck value from $\cM$ using $\underline{\bw}_t$ (Line~\ref{line:blucb_first_oracle}), and the super arm $\tilde{M}_t$ with the maximum optimistic bottleneck value from $\cM \setminus \superset(M_t)$ using $\bar{\bw}_t$ (Line~\ref{line:blucb_second_oracle}). 
$M_t$ and $\tilde{M}_t$ are two critical super arms that determine when the algorithm should stop or not.
If the pessimistic bottleneck value of $M_t$ is higher than the optimistic bottleneck value of $\tilde{M}_t$ (Line~\ref{line:blucb_stop}), we can determine that $M_t$ has the higher bottleneck value than any other super arm with high confidence, and then the algorithm can stop and output $M_t$. 
Otherwise, we 
select two base arms $c_t$ and $d_t$ with the minimum lower reward confidence bounds in $M_t$ and $\tilde{M}_t$  respectively, and play the one with the larger confidence radius~(Lines~\ref{line:blucb_c_t}-\ref{line:blucb_p_t}).

\textbf{Bottleneck-adaptive sample strategy.}
The ``select-minimum'' sample strategy in Lines~\ref{line:blucb_c_t}-\ref{line:blucb_p_t} comes from an \emph{insight} for the bottleneck problem:
to determine that $M_t$ has a higher bottleneck value than $\tilde{M}_t$, it suffices to find a base arm from $\tilde{M}_t$ which is worse than any base arm (the bottleneck base arm) in $M_t$.
To achieve this, base arms $c_t$ and $d_t$, which have the most potential to be the bottlenecks of $M_t$ and $\tilde{M}_t$, are the most necessary ones to be sampled.
This bottleneck-adaptive sample strategy is crucial for $\algbottleneck$ to achieve the tight base-arm-gap dependent sample complexity.
In contrast, the sample strategy of prior CPE algorithms \cite{chen2014cpe,CPE_DB_2020,CPE_BL_PL2020} treats all  base arms in critical super arms ($M_t$ and $\tilde{M}_t$) equally and does a uniform choice. If one naively adapts those algorithms with the current reward function $\wmin(M, \bw)$, a loose super-arm-gap dependent sample complexity is incurred.

To formally state the sample complexity of $\algbottleneck$, we introduce some notation and gap definition.
Let $N = \{e \mid e \notin M_*, w(e) < \opt \}$ and $\tilde{N} = \{e \mid e \notin M_*, w(e) \geq \opt \}$, which stand for the necessary and \emph{unnecessary} base arms contained in the sub-optimal super arms, respectively. 
We define the reward gap for the FC setting as
\begin{definition}[Fixed-confidence Gap] \label{def:fc_gap}
	\begin{equation}
	\Delta^\fc_e \!=\!
	\left\{
	\begin{array}{lr} 
	w(e)-\max_{M \neq M_*} \wmin(M, \bw),  
	\  \textup{if } e \in M_* , &\textup{\qquad(a)}
	\\
	w(e)-\max_{M \in \cM: e\in M} \wmin(M, \bw),
	\  \textup{if } e \in \tilde{N}, &\textup{\qquad(b)}
	\\
	\opt-\max_{M \in \cM: e\in M} \wmin(M, \bw),
	\  \textup{if } e \in N. 
	\end{array}
	\right. \nonumber
	\end{equation}
\end{definition}



Now we present the sample complexity upper bound of $\algbottleneck$.

\begin{theorem}[Fixed-confidence Upper Bound] 
	\label{thm:bottleneck_baseline_ub}
	With probability at least $1-\delta$, algorithm $\algbottleneck$ (Algorithm~\ref{alg:bottleneck}) for CPE-B in the FC setting returns the optimal super arm with sample complexity 
	$$
	O \sbr{ \sum_{e \in [n]} \frac{ R^2 }{(\Delta^\fc_e)^2} \ln \sbr{  \sum_{e \in [n]} \frac{ R^2 n  }{(\Delta^\fc_e)^2 \delta} }  } .
	$$
\end{theorem}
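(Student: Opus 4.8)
The plan is the standard three‑step template for LUCB‑style confidence algorithms, specialised to the bottleneck reward function. First I would set up a good event and prove correctness, then prove a ``sampling lemma'' bounding how long each base arm can be pulled, and finally solve the resulting implicit inequality.

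\textbf{Step 1 (good event and correctness).} I would define the event $\xi=\{\,|\hat w_t(e)-w(e)|\le\rad_t(e)\ \text{for all }t>n,\ e\in[n]\,\}$ and show $\Pr[\xi]\ge 1-\delta$ using the $R$-sub-Gaussian tail bound together with a union bound over $e\in[n]$, over the round index $t$, and over the random count $T_t(e)\in\{1,\dots,t\}$, absorbing $\sum_{t\ge1}t^{-3}<\infty$ into the constant inside $\ln(4nt^3/\delta)$. Conditioned on $\xi$, correctness is immediate from the stopping rule: if the algorithm returns $M_t$ then $\wmin(M_t,\bw)\ge\wmin(M_t,\underline{\bw}_t)\ge\wmin(\tilde M_t,\bar{\bw}_t)\ge\wmin(M',\bar{\bw}_t)\ge\wmin(M',\bw)$ for every $M'\in\cM\setminus\superset(M_t)$, while for $M'\in\superset(M_t)$ monotonicity of $\min$ gives $\wmin(M',\bw)\le\wmin(M_t,\bw)$; hence $M_t$ is optimal, so $M_t=M_*$ by uniqueness. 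That the algorithm terminates on $\xi$ follows from Step 3.

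\textbf{Step 2 (key sampling lemma).} The heart of the argument is: on $\xi$, if the algorithm does not stop at round $t$, then $\rad_t(p_t)\ge c\,\Delta^\fc_{p_t}$ for an absolute constant $c$ (one can take $c=\tfrac18$). I would prove this from four facts, all valid on $\xi$. (i) Since $c_t$ (resp.\ $d_t$) is the $\underline{\bw}_t$-minimiser of $M_t$ (resp.\ $\tilde M_t$), evaluating $\underline{\bw}_t$ at the \emph{true} bottleneck of that super arm yields $\underline w_t(c_t)\le\wmin(M_t,\bw)$ and $\underline w_t(d_t)\le\wmin(\tilde M_t,\bw)$. (ii) Since $M_t$ maximises $\wmin(\cdot,\underline{\bw}_t)$ over $\cM\ni\tilde M_t$, we get $\underline w_t(d_t)\le\underline w_t(c_t)$; combined with the no‑stop inequality $\underline w_t(c_t)<\wmin(\tilde M_t,\bar{\bw}_t)\le\bar w_t(d_t)$, this gives $w(c_t)-w(d_t)<2\rad_t(c_t)+2\rad_t(d_t)\le4\rad_t(p_t)$. (iii) If $M_t\ne M_*$ then $M_*\in\cM\setminus\superset(M_t)$ (else $\opt=\wmin(M_*,\bw)\le\wmin(M_t,\bw)<\opt$, a contradiction), so $\bar w_t(d_t)\ge\wmin(\tilde M_t,\bar{\bw}_t)\ge\wmin(M_*,\bar{\bw}_t)\ge\opt$, giving $w(d_t)\ge\opt-2\rad_t(d_t)$. (iv) If $M_t=M_*$ then $\tilde M_t\ne M_*$, so by (i) $\underline w_t(d_t)\le\wmin(\tilde M_t,\bw)\le\Lambda:=\max_{M\ne M_*}\wmin(M,\bw)$, giving $w(d_t)\le\Lambda+2\rad_t(d_t)$. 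I would then run a case analysis on whether $p_t=c_t$ or $p_t=d_t$ and on whether $p_t\in M_*$, $p_t\in\tilde N$, or $p_t\in N$, matching each case to the corresponding branch of Definition~\ref{def:fc_gap}: for $p_t\notin M_*$ with $w(p_t)\ge\opt$ (type (b)), fact (i) alone gives $\Delta^\fc_{p_t}\le2\rad_t(p_t)$; for $p_t\notin M_*$ with $w(p_t)<\opt$ (type (c)), combine (i) with (iii) (and, when $M_t=M_*$, with (ii)) to also bound $\opt-w(p_t)$ by $O(\rad_t(p_t))$; for $p_t\in M_*$ (type (a)), use (i) directly when $M_t\ne M_*$ and (ii)+(iv) when $M_t=M_*$. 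This case analysis — threading the three‑way gap definition through the bottleneck/superset structure — is the main obstacle; the rest is bookkeeping.

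\textbf{Step 3 (summation).} Since $\rad_t(e)^2=\Theta\!\big(R^2\ln(nt^3/\delta)/T_t(e)\big)$, the key lemma shows each arm $e$ is sampled at round $t$ only while $T_t(e)=O\!\big(R^2\ln(nt^3/\delta)/(\Delta^\fc_e)^2\big)$, hence never again once this threshold is exceeded. Letting $\tau$ be the total number of samples, $T_\tau(e)\le 1+O\!\big(R^2\ln(n\tau^3/\delta)/(\Delta^\fc_e)^2\big)$, and summing over $e$,
\[
\tau \;\le\; n + O\!\big(\ln(n\tau^3/\delta)\big)\sum_{e\in[n]}\frac{R^2}{(\Delta^\fc_e)^2}.
\]
Writing $H=\sum_{e\in[n]}R^2/(\Delta^\fc_e)^2$, this is an implicit inequality $\tau\le a+b\ln\tau$ with $a,b$ polynomial in $H,n,\delta^{-1}$; solving it and absorbing lower‑order terms gives $\tau=O\!\big(H\ln(nH/\delta)\big)$, which is exactly the claimed bound and, being finite, also certifies termination on $\xi$. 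I expect Steps 1 and 3 to be routine and Step 2 — in particular getting the pulled arm's \emph{own} gap $\Delta^\fc_{p_t}$ onto the right‑hand side, which is precisely what the ``select‑minimum then larger‑radius'' rule in Lines~\ref{line:blucb_c_t}--\ref{line:blucb_p_t} buys (it forces $p_t$ to be a genuine bottleneck candidate so fact (i) applies to $p_t$ itself, and makes $\rad_t(p_t)$ dominate both terms from fact (ii)) — to be where all the real work lies.
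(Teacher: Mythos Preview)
Your proposal is correct and follows essentially the same approach as the paper. The paper proves your Step~2 as three separate lemmas (one per branch of Definition~\ref{def:fc_gap}), each phrased in the contrapositive ``if $\rad_t(e)<\Delta^\fc_e/4$ then $p_t\neq e$'' and argued by case analysis on whether $M_t$ or $\tilde M_t$ equals $M_*$; your facts (i)--(iv) repackage exactly those inequalities, and your Step~3 and the paper's summation (via a $T\le c_1\ln(c_2 T)+c_3$ lemma) are identical.
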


\textbf{Base-arm-gap dependent sample complexity.}
Owing to the bottleneck-adaptive sample strategy, the reward gap $\Delta^\fc_e$ (Definition~\ref{def:fc_gap}(a)(b)) is just defined as the difference between some critical bottleneck value and $w(e)$ itself, instead of the bottleneck gap between two super arms, and thus
our result depends on the tight base-arm-level (instead of super-arm-level) gaps.
For example, in Figure~\ref{fig:example}, $\algbottleneck$ only spends $\tilde{O}( (\frac{2}{\Delta_{e_2,e_1}^2}+\sum_{i=3,4,5,6}\frac{1}{\Delta_{e_i,e_1}^2} ) \ln \delta^{-1})$ samples, while a naive adaptation of prior CPE algorithms~\cite{chen2014cpe,CPE_DB_2020,CPE_BL_PL2020} with the bottleneck reward function will cause a loose super-arm-gap dependent result $\tilde{O}(\sum_{e_i, i \in [6]}\frac{1}{\Delta_{M_*,M_{\sub}}^2} \ln \delta^{-1})$.
Regarding the optimality, Theorem~\ref{thm:bottleneck_baseline_ub} matches the lower bound (presented in Section~\ref{sec:lower_bound_fixed_confidence}) for some family of instances (up to a logarithmic factor). However, in general cases there still exists a gap on those needless base arms $\tilde{N}$ ($e_3,e_5,e_6$ in Figure~\ref{fig:example}), which are not contained in the lower bound. Next, we show how to bridge this gap.

\subsection{Remove Dependence on Unnecessary Base Arms under Small $\delta$} \label{sec:verify}

\begin{algorithm}[t]
	\caption{$\algbottleneckparallel$, an improved algorithm for the FC setting under small $\delta$ }
	\label{alg:bottleneck_parallel}
	\begin{algorithmic}[1]
		\STATE \textbf{Input:} $\delta \in (0, 0.01)$ and sub-algorithm $\algbottleneckverify$.
		\STATE For $k=0,1,\dots$, let $\algbottleneckverify_k$ be the sub-algorithm $\algbottleneckverify$ with $\delta_k=\frac{\delta}{2^{k+1}}$\;
		\FOR{$t=1,2,\dots$}
		\FOR{ \textup{each} $k = 0,1,\dots \ \textup{ such that }\  t \textup{ mod } 2^k =0$}
		\STATE Start or resume  $\algbottleneckverify_k$ with one sample, and then suspend $\algbottleneckverify_k$\; \label{line:parallel_multiple_simulation}
		\STATE \textbf{if} $\algbottleneckverify_k$ returns an answer $M_\textup{out}$, then \textbf{return} $M_\textup{out}$\;
		\ENDFOR
		\ENDFOR
	\end{algorithmic}
\end{algorithm}

\begin{algorithm}[t]
	\caption{$\algbottleneckverify$, sub-algorithm of $\algbottleneckparallel$} \label{alg:bottleneck_verify}
	\begin{multicols}{2}
		\begin{algorithmic}[1]
			\STATE \textbf{Input:} $\cM$, $\delta^{V} \!\in\! (0,0.01)$ and $\oracle$.
			\STATE $\kappa \leftarrow 0.01$\; 
			\STATE $\!\!\! \hat{M}_*,\! \hat{B}_{\sub} \!\! \leftarrow \!\! \algbottleneckexplore(\cM, \kappa, \oracle)$\;
			\label{line:blucb_verify_call_explore}
			\vspace*{-1em}
			\STATE Initialize: play each $e \in [n]$ once, and update empirical means $\hat{\bw}_{n+1}$ and $T_{n+1}$\;
			\FOR{$t=n+1, n+2, \dots$}
			\STATE $\rad_t(e) \! \leftarrow \! R  \sqrt{2 \ln (\frac{4nt^3}{\delta^{V} })/ T_t(e)}, \forall e \!\in\! [n]$\;
			\STATE $\underline{w}_t(e)\leftarrow \hat{w}_t(e)-\rad_t(e),\  \forall e \in [n]$\;
			\STATE $\bar{w}_t(e)\leftarrow \hat{w}_t(e)+\rad_t(e),\  \forall e \in [n]$\;
			\STATE $\tilde{M}_t=\oracle(\cM \setminus \superset(\hat{M}_*), \bar{\bw}_t)$\;
			\IF{$\wmin(\hat{M}_*, \underline{\bw}_t) \geq \wmin(\tilde{M}_t, \bar{\bw}_t)$}
			\label{line:bottleneck_verify_stop}
			\STATE \textbf{return} $\hat{M}_*$\; 
			\ENDIF
			\STATE $c_t \leftarrow \argmin_{e \in \hat{M}_*} \underline{w}_t(e)$\;
			\label{line:blucb_verify_sample_start}
			\STATE $F_t \leftarrow \{ e \in \hat{B}_{\sub}: \bar{w}_t(e) > \underline{w}_t(c_t) \}$\;
			\STATE $p_t \leftarrow \argmax_{e \in F_t \cup \{ c_t \}} \rad_t(e)$\;
			\label{line:blucb_verify_sample_end}
			\STATE Play $p_t$, and observe the reward\;
			\STATE Update empirical means $\hat{w}_{t+1}(p_t)$\;
			\STATE Update the number of samples $T_{t+1}(p_t)$\;
			\ENDFOR
		\end{algorithmic}
	\end{multicols}
	\vspace*{-1em}
\end{algorithm}

\begin{algorithm}[t]
	\caption{$\algbottleneckexplore$, sub-algorithm of $\algbottleneckverify$, the \emph{key algorithm}} \label{alg:bottleneck_explore}
	\begin{multicols}{2}
		\begin{algorithmic}[1]
			\STATE \textbf{Input:} $\cM$, $\kappa=0.01$ and $\oracle$.
			\STATE Initialize: play each $e \in [n]$ once, and update empirical means $\hat{\bw}_{n+1}$ and $T_{n+1}$\;
			\FOR{$t=n+1, n+2, \dots$}
			\STATE $\!\! \rad_t(e) \!\leftarrow\! R \sqrt{2 \ln (\frac{4nt^3}{ \kappa })/ T_t(e)},\  \forall e \!\in\! [n]$\;
			\STATE $\underline{w}_t(e)\leftarrow \hat{w}_t(e)-\rad_t(e),\  \forall e \in [n]$\;
			\STATE $\bar{w}_t(e)\leftarrow \hat{w}_t(e)+\rad_t(e),\  \forall e \in [n]$\;
			\STATE $M_t \leftarrow \oracle(\cM, \underline{\bw}_t)$\;
			\STATE $\hat{B}_{\sub,t} \!\!\leftarrow\!\! \findewinset(\cM, M_t, \underline{\bw}_t)$\; \label{line:blucb_explore_findewinset}
			\vspace*{-0.8em} 
			\IF{$\bar{w}_t(e) \!\leq \! \frac{1}{2} (\wmin(M_t,\underline{\bw}_t)+\underline{w}_t(e))$ for all $e \in \hat{B}_{\sub,t}$} \label{line:blucb_explore_stop_condition}
			\STATE \textbf{return} $M_t, \hat{B}_{\sub,t}$\; 
			\ENDIF
			\STATE $c_t \leftarrow \argmin_{e \in M_t} \underline{w}_t(e)$\;
			\STATE $\hat{B}'_{\sub,t} \leftarrow \{ e \in \hat{B}_{\sub,t}:$\\ $\quad \bar{w}_t(e) > \frac{1}{2} (\wmin(M_t, \underline{\bw}_t)+\underline{w}_t(e)) \}$\; 
			\STATE $p_t \leftarrow \argmax_{e \in \hat{B}'_{\sub,t} \cup \{ c_t \}} \rad_t(e)$\;
			\STATE Play $p_t$, and observe the reward\;
			\STATE Update empirical means $\hat{w}_{t+1}(p_t)$\;
			\STATE Update the number of samples $T_{t+1}(p_t)$\;
			\ENDFOR
		\end{algorithmic}
	\end{multicols}
	\vspace*{-1em}
\end{algorithm}

\textbf{Challenges of avoiding unnecessary base arms.}
Under the bottleneck reward function, in each sub-optimal super arm $M_{\textup{sub}}$, only the base arms with rewards lower than $\opt$ (base arms in $N$) can determine the relationship of bottleneck values between $M_*$ and $M_{\textup{sub}}$ (the bottleneck of $M_{\textup{sub}}$ is the most efficient choice to do this), and the others (base arms in $\tilde{N}$) are useless for revealing the sub-optimality of $M_{\textup{sub}}$. 
Hence, to determine $M_*$, all we need is to sample the base arms in $M_*$ and the \emph{bottlenecks from all sub-optimal super arms}, denoted by $B_{\sub}$, to see that each sub-optimal super arm contains at least one base arm that is worse than anyone in $M_*$.
However, before sampling, (i) we do not know which is $M_*$ that should be taken as the comparison benchmark, and in each $M_{\sub}$, which base arm is its bottleneck (included in $B_{\sub}$). Also, (ii) under combinatorial setting, how to efficiently collect $B_{\sub}$ from all sub-optimal super arms is another challenge. 


To handle these challenges, we propose algorithm $\algbottleneckparallel$ based on the explore-verify-parallel framework~\cite{verification_karnin2016,ChenLJ_Nearly_Optimal_Sampling17}.
%
%
$\algbottleneckparallel$ (Algorithm~\ref{alg:bottleneck_parallel}) simultaneously simulates multiple $\algbottleneckverify_k$ (Algorithm~\ref{alg:bottleneck_verify}) with confidence $\delta^V_k=\delta/2^{k+1}$ for $k \in \N$.  
$\algbottleneckverify_k$ first calls $\algbottleneckexplore$ (Algorithm~\ref{alg:bottleneck_explore}) to guess an optimal super arm $\hat{M}_*$ and collect a \emph{near bottleneck set} $\hat{B}_{\sub}$ with \emph{constant confidence} $\kappa$, and then uses the required confidence $\delta^V_k$ to verify the correctness of $\hat{M}_*$ by only sampling base arms in $\hat{M}_*$ and $\hat{B}_{\sub}$. 
Through parallel simulations, $\algbottleneckparallel$ guarantees the $1-\delta$ correctness.

The \emph{key component} of this framework is $\algbottleneckexplore$ (Algorithm~\ref{alg:bottleneck_explore}), which provides a hypothesized answer $\hat{M}_*$ and critical base arms $\hat{B}_{\sub}$ for verification to accelerate its identification process.
Below we first describe the procedure of $\algbottleneckexplore$, and then explicate its two \emph{innovative techniques}, i.e.  offline subroutine and stopping condition, developed to handle the challenges~(i),(ii).
$\algbottleneckexplore$ employs the subroutine $\findewinset(\cM, M_{\exclude}, \bv)$ to return the set of bottleneck base arms from all super arms in $\cM \setminus \superset(M_{\exclude})$ with respect to  weight vector $\bv$.
At each timestep, we first calculate the best super arm $M_t$ under lower reward confidence bound $\underline{\bw}_t$, and call $\findewinset$ to collect the bottlenecks $\hat{B}_{\sub,t}$ from all super arms in $\cM \setminus \superset(M_t)$ with respect to $\underline{\bw}_t$ (Line~\ref{line:blucb_explore_findewinset}). Then, we use a stopping condition (Line~\ref{line:blucb_explore_stop_condition}) to examine if $M_t$ is correct and $\hat{B}_{\sub,t}$ is close enough to $\hat{B}_{\sub}$ (with confidence $\kappa$). If so, $M_t$ and $\hat{B}_{\sub,t}$ are eligible for verification and returned; otherwise, we play a base arm from $M_t$ and $\hat{B}_{\sub,t}$, which is most necessary for achieving the stopping condition.
In the following, we explicate the two innovative techniques in $\algbottleneckexplore$.

\textbf{Efficient ``bottleneck-searching'' offline subroutine.}
$\findewinset(\cM, M_{\exclude}, \bv)$ (Line~\ref{line:blucb_explore_findewinset}) serves as an efficient offline procedure to collect bottlenecks from all super arms in given decision class $\cM \setminus \superset(M_{\exclude})$ with respect to $\bv$.
To achieve efficiency, the main idea behind $\findewinset$ is to avoid enumerating super arms in the combinatorial space, but only enumerate base arms $e \in [n]$ to check if $e$ is the bottleneck of some super arm in $\cM \setminus \superset(M_{\exclude})$. We achieve this by removing all base arms with rewards lower than $v(e)$ and examining whether there exists a feasible super arm $M$ that contains $e$ in the remaining decision class. If so, $e$ is the bottleneck of $M$ and added to the output (more procedures are designed to exclude $\superset(M_{\exclude})$).
This efficient offline subroutine solves challenge (ii) on computation complexity (\OnlyInFull{see Section~\ref{apx:verification_algorithm_details} for its pseudo-codes and details}\OnlyInShort{see the supplementary material for its pseudo-codes and details}).

\textbf{Delicate ``check-near-bottleneck'' stopping condition.}
The stopping condition (Line~\ref{line:blucb_explore_stop_condition}) aims to ensure the returned $\hat{B}_{\sub,t}=\hat{B}_{\sub}$ to satisfy the following Property~(1): for each sub-optimal super arm $M_{\sub}$, some base arm $e$ such that $w(e) \leq \frac{1}{2} (\wmin(M_*,\bw) + \wmin(M_{\sub},\bw))$ is included in $\hat{B}_{\sub}$, 
which implies that $e$ is near to the actual bottleneck of $M_{\sub}$ within $\frac{1}{2} \Delta_{M_*,M_{\sub}}$, and cannot be anyone in $\tilde{N}$.
Property~(1) is crucial for $\algbottleneckverify$ to achieve the optimal sample complexity, since it guarantees that in verification using $\hat{B}_{\sub}$ to verify $M_*$ just costs the same order of samples as using $B_{\sub}$, which matches the lower bound.
%
In the following, we explain why this stopping condition can guarantee Property (1).

If the stopping condition (Line~\ref{line:blucb_explore_stop_condition}) holds, i.e., $ \forall e \in \hat{B}_{\sub,t}, \bar{w}_t(e) \leq \frac{1}{2} (\wmin(M_t, \underline{\bw}_t)+\underline{w}_t(e))$, using the definition of $\findewinset$, we have that for any $M' \in \cM \setminus \superset(M_t)$, its bottleneck $e'$ with respect to $\underline{\bw}_t$ is included in $\hat{B}_{\sub,t}$ and satisfies that
\begin{align*}
w(e') \leq \bar{w}_t(e')  
\overset{\textup{(a)}}{\leq} \frac{1}{2} (\wmin(M_t, \underline{\bw}_t)+\wmin(M', \underline{\bw}_t)) 
\leq \frac{1}{2} (\wmin(M_t, \bw)+\wmin(M', \bw)) , \label{eq:blucb_explore_stop_condition}
\end{align*}
where inequality (a) comes from $\bar{w}_t(e') \leq \frac{1}{2} (\wmin(M_t, \underline{\bw}_t)+\underline{w}_t(e'))$ and $\underline{w}_t(e'))=\wmin(M', \underline{\bw}_t)$.
Hence, we can defer that $\wmin(M', \bw) \leq w(e') \leq \frac{1}{2} (\wmin(M_t, \bw)+\wmin(M', \bw))$ for any $M' \in \cM \setminus \superset(M_t)$, and thus $M_t=M_*$ (with confidence $\kappa$). In addition, the returned $\hat{B}_{\sub,t}$ satisfies Property~(1).
This stopping condition offers knowledge of a hypothesized optimal super arm $\hat{M}_*$ and a near bottleneck set $\hat{B}_{\sub}$ for verification, which solves the challenge (i) and enables the overall sample complexity to achieve the  optimality for small enough $\delta$. Note that these two techniques are new in the literature, which are specially designed for handling the unique challenges of CPE-B.
We formally state the sample complexity of  $\algbottleneckparallel$ in  Theorem~\ref{thm:bottleneck_verify_ub}.

\begin{theorem}[Improved Fixed-confidence Upper Bound]
	\label{thm:bottleneck_verify_ub}
	For any $\delta < 0.01$, with probability at least $1-\delta$, algorithm $\algbottleneckparallel$ (Algorithm~\ref{alg:bottleneck_parallel}) for CPE-B in the FC setting returns $M_*$ and takes the expected sample complexity 
	\begin{align*}
		O \Bigg(   \sum_{e \in M_* \cup N} \frac{ R^2 }{(\Delta^\fc_e)^2} \ln \Bigg( \frac{1}{\delta} \sum_{e \in M_* \cup N} \frac{ R^2 n  }{(\Delta^\fc_e)^2 } \Bigg)  + \sum_{e \in \tilde{N}} \frac{ R^2 }{(\Delta^\fc_e)^2} \ln \Bigg(  \sum_{e \in \tilde{N}} \frac{ R^2 n  }{(\Delta^\fc_e)^2 } \Bigg)  \Bigg) .
	\end{align*}
\end{theorem}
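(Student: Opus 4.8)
The plan is to analyze the three nested routines in turn and then the parallel wrapper on top of them. Fix the \emph{clean event} of a given sub-routine instance to be that every confidence interval it computes is valid, i.e.\ $w(e)\in[\underline w_t(e),\bar w_t(e)]$ for all $e$ and $t$; by the $R$-sub-Gaussian assumption and a union bound absorbed into the $4nt^3$ factor of $\rad_t$, the instance ``$\algbottleneckverify$ with $\delta^V_k=\delta/2^{k+1}$'' violates its clean event with probability at most $\delta^V_k$, and $\algbottleneckexplore$ (run at the constant confidence $\kappa$) violates its clean event with probability at most $\kappa$. Correctness then follows quickly: on the clean event of the $k$-th copy of $\algbottleneckverify$, whenever its stopping rule (Line~\ref{line:bottleneck_verify_stop}) fires we have $\wmin(\hat M_*,\bw)\ge\wmin(\hat M_*,\underline{\bw}_t)\ge\wmin(\tilde M_t,\bar{\bw}_t)\ge\wmin(M',\bw)$ for every $M'\in\cM\setminus\superset(\hat M_*)$, while $\wmin$ monotonicity gives $\wmin(M',\bw)\le\wmin(\hat M_*,\bw)$ on $\superset(\hat M_*)$; hence $\hat M_*$ is optimal and, by uniqueness, equals $M_*$. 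A union bound $\sum_{k\ge0}\delta^V_k=\delta$ shows that with probability $\ge1-\delta$ no copy ever outputs a wrong super arm, which together with almost-sure termination yields the stated correctness.

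The heart of the proof is $\algbottleneckexplore$. On its clean event, the stopping-condition computation already displayed after Algorithm~\ref{alg:bottleneck_explore} shows that at the stopping time $M_t=M_*$ and that every $e$ placed into $\hat B_{\sub}$ is the $\underline{\bw}_t$-bottleneck of some $M'\in\cM\setminus\superset(M_*)$ with $w(e)\le\tfrac12(\opt+\wmin(M',\bw))$. Since $M'\neq M_*$ this forces $w(e)<\opt$, so $\hat B_{\sub}\subseteq N$; it also gives Property~(1), because $\findewinset$ is built to return a bottleneck of \emph{every} super arm of $\cM\setminus\superset(M_*)$. Combining the last display with $\wmin(M',\bw)\le\max_{M\ni e}\wmin(M,\bw)=\opt-\Delta^\fc_e$ (Definition~\ref{def:fc_gap}(c)) sharpens it to $w(e)\le\opt-\tfrac12\Delta^\fc_e$ for all $e\in\hat B_{\sub}$, which is exactly what lets the verification phase be charged to the base-arm gaps $\Delta^\fc_e$. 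For the sample complexity of $\algbottleneckexplore$, a standard ``once all relevant radii are small enough, the stopping rule fires'' argument shows the loop halts once each $e\in M_*\cup N$ has been pulled $\Omega\big(R^2(\Delta^\fc_e)^{-2}\ln(\cdot)\big)$ times and each $e\in\tilde N$ has dropped out of $\hat B_{\sub,t}$; the latter happens once $\rad_t(e)\lesssim w(e)-\max_{M''\ni e,\,M''\neq M_*}\wmin(M'',\bw)\ge\Delta^\fc_e$, since such an $e$ can remain a $\underline{\bw}_t$-bottleneck only while its lower bound lies below those of arms of strictly smaller true mean. Solving the resulting implicit inequality in the cumulative pull count (using the ``pull the largest-radius candidate'' rule) bounds the explore cost by $O\big(\sum_{e\in M_*\cup N\cup\tilde N}R^2(\Delta^\fc_e)^{-2}\ln(\cdot)\big)$.

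Given a successful explore call (so $\hat M_*=M_*$ and $\hat B_{\sub}\subseteq N$ obeys Property~(1)), the verify loop pulls only $c_t\in M_*$ and arms of $F_t\subseteq\hat B_{\sub}\subseteq N$, hence never touches $\tilde N$; re-running the $\algbottleneck$ analysis of Theorem~\ref{thm:bottleneck_baseline_ub} restricted to $M_*\cup N$ — an arm $c\in M_*$ leaves the active set once $\rad_t(c)\lesssim\Delta^\fc_c$, and $e\in\hat B_{\sub}$ leaves $F_t$ once $\rad_t(e)\lesssim\opt-w(e)\ge\tfrac12\Delta^\fc_e$, while Property~(1) ensures this certifies every sub-optimal super arm — gives a verify cost $T_{\mathrm{vf}}(\delta^V_k)=O\big(\sum_{e\in M_*\cup N}R^2(\Delta^\fc_e)^{-2}\ln(\tfrac1{\delta^V_k}\sum_{e\in M_*\cup N}R^2 n(\Delta^\fc_e)^{-2})\big)$. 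Let $T^{(k)}$ be the total pull budget of the $k$-th copy (its explore cost $T_{\mathrm{exp}}$ plus $T_{\mathrm{vf}}(\delta/2^{k+1})$); since $\ln(2^{k+1}/\delta)=\ln(1/\delta)+(k+1)\ln2$ we get $T^{(k)}\le T_{\mathrm{exp}}+T_{\mathrm{vf}}(\delta)+c_0(k+1)\sum_{e\in M_*\cup N}R^2(\Delta^\fc_e)^{-2}$. Now let $K$ be the smallest $k$ whose explore call lands on its clean event; these events are independent over $k$ with probability $\ge1-\kappa$ and $2\kappa<1$, so $K$ is dominated by a geometric variable and $\E[2^K],\,\E[2^K(K+1)]$ are $O(1)$. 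By the interleaving schedule the $K$-th copy gets one pull every $2^K$ global rounds, so it terminates by global round $O(2^KT^{(K)})$, by which time the total number of pulls of all copies together is $\sum_{k\ge0}2^{-k}\cdot O(2^KT^{(K)})=O(2^KT^{(K)})$. Taking expectations, absorbing the $M_*\cup N$ part of $T_{\mathrm{exp}}$ into $T_{\mathrm{vf}}(\delta)$ (it carries a strictly smaller logarithm), and noting that all failure probabilities decay geometrically in $k$ and polynomially-fast in $t$ — dominating the at-most-polynomial running-time growth off the clean events — yields precisely the claimed bound.

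The step I expect to be hardest is the second paragraph: simultaneously establishing that the ``check-near-bottleneck'' stopping rule certifies $\hat M_*=M_*$, forces $\hat B_{\sub}\subseteq N$ and Property~(1), and delivers $w(e)\le\opt-\tfrac12\Delta^\fc_e$, \emph{and} that the bottleneck-adaptive schedule of $\algbottleneckexplore$ wastes at most $O(R^2(\Delta^\fc_e)^{-2}\ln(\cdot))$ pulls on each $e\in\tilde N$. This is where the combinatorial subroutine $\findewinset$ and the min-structure of the bottleneck reward enter essentially, and where the logarithmic factors must be tracked carefully so that the $\tilde N$ term ends up with a $\delta$-free and $\tilde N$-local logarithm rather than one depending on the whole instance.
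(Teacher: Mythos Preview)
Your proposal follows essentially the same architecture as the paper: prove correctness of each $\algbottleneckverify$ copy on its own verify clean event (Lemma~\ref{lemma:blucb_verify_correctness}), establish the three properties of $\hat B_{\sub}$ on the explore clean event (Lemma~\ref{lemma:bottleneck_explore_correctness}), bound the explore cost over all of $[n]$ with a $\kappa$-dependent (hence $\delta$-free) logarithm (Theorem~\ref{thm:blucb_explore_ub}, via Lemmas~\ref{lemma:blucb_explore_e_in_M_star}--\ref{lemma:blucb_explore_e_notin_M_star_lower}), bound the verify cost over $M_*\cup N$ only (Theorem~\ref{thm:blucb_verify_ub}, via Lemmas~\ref{lemma:bottleneck_verify_e_in_M_star}--\ref{lemma:bottleneck_verify_e_notin_M_star}), and then lift to $\algbottleneckparallel$ by a doubling/geometric argument (the paper simply cites Lemma~4.8 of \cite{ChenLJ_Nearly_Optimal_Sampling17} for this last step). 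Your per-arm stopping thresholds and the derivation $w(e)\le\opt-\tfrac12\Delta^\fc_e$ for $e\in\hat B_{\sub}$ match the paper's up to constants.

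There is one genuine gap in your parallel step. You define $K$ as the smallest $k$ whose \emph{explore} clean event holds, and then treat $T^{(K)}$ as bounded by $T_{\mathrm{exp}}+T_{\mathrm{vf}}(\delta/2^{K+1})$. But the verify-cost bound you invoke (your paragraph~3, the paper's Lemmas~\ref{lemma:bottleneck_verify_e_in_M_star}--\ref{lemma:bottleneck_verify_e_notin_M_star}) requires the conjunction $\xi_0\cap\xi$, i.e.\ the verify clean event as well. If only $\xi_0$ holds but $\xi$ fails, $\algbottleneckverify_K$ may not terminate within the stated budget; and if the explore of some earlier copy returns a wrong $\hat M_*$, that copy's verify may never terminate at all. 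Your closing remark that ``failure probabilities decay \ldots\ polynomially-fast in $t$ --- dominating the at-most-polynomial running-time growth off the clean events'' therefore does not go through: the off-event running time is not polynomial in $t$, it can be infinite. The fix is simple and is what the cited black-box lemma does implicitly: define $K$ to be the smallest $k$ for which \emph{both} clean events hold; then $\Pr[K>k]\le(\kappa+\delta^V_0)^{k+1}\le(2\kappa)^{k+1}$, and since $2\cdot 2\kappa<1$ for $\kappa=0.01$ one still has $\E[2^K]$ and $\E[2^K(K{+}1)]$ finite, after which your interleaving accounting is correct.
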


\vspace*{-0.5em}
\textbf{Results without dependence on $\tilde{N}$ in the dominant term.}
Let $H_V=\sum_{e \in M_* \cup N} \frac{ R^2 }{(\Delta^\fc_e)^2}$ and $H_E=\sum_{e \in [n]} \frac{ R^2 }{(\Delta^\fc_e)^2}$ denote the verification and exploration hardness, respectively. 
Compared to $\algbottleneck$ (Theorem~\ref{thm:bottleneck_baseline_ub}), the sample complexity of $\algbottleneckparallel$ removes the redundant dependence on $\tilde{N}$ in the $\ln \delta^{-1}$ term, which guarantees better performance when $\ln \delta^{-1} \geq \frac{H_E}{H_E-H_V}$, i.e., $\delta \leq \exp(-\frac{H_E}{H_E-H_V})$.
This sample complexity matches the lower bound (within a logarithmic factor) under small enough $\delta$.
For the example in Figure~\ref{fig:example}, $\algbottleneckparallel$ only requires $\tilde{O}( (\frac{2}{\Delta_{e_2,e_1}^2}+\frac{1}{\Delta_{e_4,e_1}^2} ) \ln \delta^{-1})$ samples, which are just enough efforts (optimal) for identifying $M_*$. 

The condition $\delta<0.01$ in Theorem 2 is due to that the used explore-verify-parallel framework~\cite{verification_karnin2016,ChenLJ_Nearly_Optimal_Sampling17} needs a small $\delta$ to guarantee that $\algbottleneckparallel$ can maintain the same order of sample complexity as its sub-algorithm $\algbottleneckverify_k$. Prior pure exploration works~\cite{verification_karnin2016,ChenLJ_Nearly_Optimal_Sampling17} also have such condition on $\delta$.
\yihan{Added the explanation for condition $\delta$.}

\textbf{Time Complexity.}
All our algorithms can run in polynomial time, and the running time mainly depends on the offline oracles.
For example, on $s$-$t$ path instances with $E$ edges and $V$ vertices, the used offline procedures $\oracle$ and $\findewinset$ only spend $O(E)$ and $O(E^2(E+V))$ time, respectively. \OnlyInFull{See Section~\ref{apx:time_complexity} for more time complexity analysis.}\OnlyInShort{See the supplementary material for more time complexity analysis.}

\vspace*{-0.5em}
\section{Lower Bound for the Fixed-Confidence Setting}
\label{sec:lower_bound_fixed_confidence}
\vspace*{-0.5em}

In this section, we establish a matching sample complexity lower bound for CPE-B in the FC setting. 
To formally state our results, we first define the notion of \emph{$\delta$-correct algorithm} as follows.
For any confidence parameter $\delta \in (0,1)$, we call an algorithm $\cA$ a $\delta$-correct algorithm if for the fixed-confidence CPE-B problem, $\cA$ returns the optimal super arm with probability at least $1-\delta$.

\begin{theorem}[Fixed-confidence Lower Bound]
	\label{thm:fixed_confidence_lb}
	There exists a family of instances for the fixed-confidence CPE-B problem, for which given any $\delta \in (0,0.1)$, any $\delta$-correct algorithm has the expected sample complexity 
	$$
	\Omega \Bigg(\sum_{e \in M_* \cup N} \frac{ R^2 }{  (\Delta^\fc_e)^2} \ln \left(\frac{1}{\delta}\right)  \Bigg).
	$$
\end{theorem}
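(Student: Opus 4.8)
The plan is to construct an explicit family of CPE-B instances on which the lower bound is tight, and then apply the standard change-of-measure (transportation) argument for pure exploration lower bounds. The instance family will be designed so that the only way a $\delta$-correct algorithm can certify optimality of $M_*$ is to (a) distinguish each bottleneck base arm of $M_*$ from the best competing bottleneck value, and (b) distinguish each necessary base arm $e \in N$ from the bottleneck value of the super arms that contain it. Concretely, I would take a decision class (e.g.\ $s$-$t$ paths or a top-$k$/matching structure, mirroring the illustrative example in Figure~\ref{fig:example}) in which each base arm $e \in M_* \cup N$ sits on a ``private'' sub-optimal super arm whose other base arms all have reward much larger than $\opt$, so that $e$ alone is responsible for that super arm's sub-optimality, and these private super arms are otherwise disjoint in their informative arms. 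Gaussian (or generally $R$-sub-Gaussian) rewards with unit-variance scaling make the KL computations clean: $\kl(\cN(w(e),R^2), \cN(w'(e),R^2)) = (w(e)-w'(e))^2/(2R^2)$.

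\textbf{Second}, I would invoke the generic lower bound lemma of the Kaufmann--Cappé--Garivier / Garivier--Kaufmann type: for any $\delta$-correct algorithm $\cA$ and any alternative instance $\bw'$ under which the optimal super arm differs from $M_*$, we have
\[
\sum_{e \in [n]} \E_{\bw}[T_e] \cdot \kl\bigl(\cN(w(e),R^2),\cN(w'(e),R^2)\bigr) \;\ge\; \kl(\delta, 1-\delta) \;\ge\; \ln\tfrac{1}{2.4\delta}.
\]
For each $e^\star \in M_* \cup N$ I build a dedicated alternative instance $\bw^{(e^\star)}$ that perturbs \emph{only} the reward of $e^\star$ by just enough to flip the identity of the optimal super arm: if $e^\star \in M_*$, lower $w(e^\star)$ below $\max_{M\ne M_*}\wmin(M,\bw)$; if $e^\star \in N$, raise $w(e^\star)$ above $\opt$ so that the super arm witnessing its gap becomes optimal. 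In either case the perturbation magnitude is exactly $\Theta(\Delta^\fc_{e^\star})$ by Definition~\ref{def:fc_gap}, the KL term is $\Theta((\Delta^\fc_{e^\star})^2/R^2)$, and the inequality forces $\E_{\bw}[T_{e^\star}] = \Omega\bigl(R^2 (\Delta^\fc_{e^\star})^{-2} \ln \delta^{-1}\bigr)$. Summing over all $e^\star \in M_* \cup N$ and using $\E_{\bw}[\text{total samples}] = \sum_e \E_{\bw}[T_e]$ yields the claimed bound.

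\textbf{The subtle point}, and the step I expect to be the main obstacle, is ensuring that a \emph{single-arm} perturbation of $e^\star$ actually suffices to change the optimal super arm without disturbing the gaps of the other base arms — i.e.\ designing the decision class so the private sub-optimal super arm of $e^\star$ really is the ``second best'' once $e^\star$ is nudged, and so that nudging $e^\star$ does not accidentally create an even better super arm elsewhere. This is exactly where the structure of the instance family must be chosen carefully (it is why the theorem claims only ``there exists a family of instances,'' not all instances): one needs the non-$e^\star$ arms on each private super arm to be pinned well above $\opt$, the arms of $M_*$ pinned so that $M_*$'s bottleneck gap is governed arm-by-arm, and the combinatorial constraint loose enough that each private super arm is feasible. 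A convenient concrete choice is a disjoint union of parallel two-edge $s$-$t$ paths (one ``good'' edge per path in $M_*$ versus one ``bad'' edge), which makes all of these independence and feasibility requirements transparent while still exhibiting genuine combinatorial constraints.

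\textbf{Finally}, I would note that the $\delta < 0.1$ restriction enters only through the bound $\kl(\delta,1-\delta) \ge \ln\frac{1}{2.4\delta} = \Omega(\ln\delta^{-1})$, and that the $R$-sub-Gaussian assumption is used in the one direction we need — an upper bound on the KL divergence between the true and alternative reward distributions of the perturbed arm — which holds for the Gaussian realization of the instance family, giving a valid lower bound over the whole $R$-sub-Gaussian class.
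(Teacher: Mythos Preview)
Your proposal is correct and follows essentially the same route as the paper: a Gaussian instance family satisfying structural conditions that make each $e\in M_*\cup N$ individually pivotal (the paper phrases these as ``the second best super arm has no overlap with $M_*$'' and ``each sub-optimal super arm has a single base arm below $\opt$,'' which is exactly your ``private super arm'' idea), a single-arm perturbation by $2\Delta^\fc_e$ to flip the optimal super arm, and the Kaufmann--Capp\'e--Garivier change-of-measure inequality to extract $\E[T_e]=\Omega\bigl(R^2(\Delta^\fc_e)^{-2}\ln(1/\delta)\bigr)$ per arm before summing. Your disjoint-parallel-paths realization is a concrete instantiation of the paper's abstract conditions (i)--(iii), and your identification of the ``subtle point'' matches precisely where the paper imposes those conditions.
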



This lower bound demonstrates that the sample complexity of $\algbottleneckparallel$ (Theorem~\ref{thm:bottleneck_verify_ub}) is optimal (within a logarithmic factor) under small enough $\delta$, since its $\ln \delta^{-1}$ (dominant)  term does not depend on unnecessary base arms $\tilde{N}$ either.
In addition, if we impose some constraint on the constructed instances, the sample complexity of $\algbottleneck$ (Theorem~\ref{thm:bottleneck_baseline_ub}) can also match the lower bound up to a logarithmic factor 
\OnlyInFull{(see Appendix~\ref{apx:lower_bound} for details).}\OnlyInShort{(see the supplementary material for details).}
The condition $\delta<0.1$ comes from the lower bound analysis, which ensures that the binary entropy of finding a correct or wrong answer can be lower bounded by $\ln \delta^{-1}$. Existing pure exploration works~\cite{chen2014cpe,ChenLJ_Nearly_Optimal_Sampling17} also have such condition on $\delta$ in their lower bounds. 
\yihan{Added the explanation for condition on $\delta$ in Theorem 3.}
%

Notice that, both our lower and upper bounds depend on the tight base-arm-level (instead of super-arm-level) gaps, and capture the \emph{bottleneck insight}: different base arms in one super arm play distinct roles in determining its (sub)-optimality  and impose different influences on the problem hardness.


\vspace*{-0.5em}
\section{Algorithm for the Fixed-Budget Setting}
\label{sec:fixed_budget}
\vspace*{-0.5em}

\begin{algorithm}[t!]
	\caption{$\bsar$, algorithm for CPE-B in the FB setting} \label{alg:sar_bottleneck}
	\begin{multicols}{2}
		\begin{algorithmic}[1]
			\STATE \textbf{Input:} budget $T$, $\cM$, and $\mathtt{AR-Oracle}$.
			\STATE $\tilde{\log}(n) \!\leftarrow\!\! \sum_{i=1}^{n} \! \frac{1}{i}$. $\tilde{T}_0 \!\leftarrow\! 0$. $A_1, R_1 \!\leftarrow\! \varnothing$.\;
			\FOR{$t=1, \dots, n$}
			\STATE $\tilde{T}_t \leftarrow \left \lceil \frac{T-n}{\tilde{\log}(n)(n-t+1)} \right \rceil$\;
			\STATE $U_t \leftarrow [n]\setminus(A_t \cup R_t)$
			\STATE Play each $e \in U_t$ for $\tilde{T}_t-\tilde{T}_{t-1}$ times\; \label{line:bsar_play_U_t}
			\STATE Update empirical mean $\hat{w}_t(e)$, $\forall e \!\in\! U_t$
			\STATE $\hat{w}_t(e) \leftarrow \infty$ for all $e \in A_t$\; \label{line:bsar_infinity}
			\STATE $M_t \leftarrow \aroracle(\perp, R_t, \hat{\bw}_t)$\; \label{line:bsar_find_M_t}
			\FOR{ \textup{each} $e \in U_t$}
			\IF{$e \in M_t$} \label{line:bsar_e_in_M_t_if}
			\STATE \hspace*{-0.1em}$\tilde{M}_{t,e} \!\leftarrow\! \aroracle(\perp, R_t\!\cup\!\{e\}, \hat{\bw}_t)$ 
			\label{line:bsar_e_in_M_t_compute}
			\label{line:bsar_first_aroracle}
			\ELSE
			\STATE $\tilde{M}_{t,e} \leftarrow \aroracle(e, R_t, \hat{\bw}_t)$
			\label{line:bsar_second_aroracle}
			\ENDIF 
			\STATE // $\aroracle$ returns $\perp$ if the calculated feasible set is empty
			\ENDFOR 
			\STATE \hspace*{-0.5em}$ p_t \!\! \leftarrow \!\! \argmax \limits_{e \in U_t}  \wmin(\!M_t,\! \hat{\bw}_t\!) \!-\! \wmin(\!\tilde{M}_{t,e},\! \hat{\bw}_t\!)$ 
			\label{line:bsar_p_t}
			\STATE // {$\wmin(\perp, \hat{\bw}_t)=-\infty$}
			\IF{$p_t \in M_t$}
			\label{bsar:e_in_M_t}
			\label{line:bsar_acceptarm_if}
			\STATE $A_{t+1} \leftarrow A_t \cup \{p_t\}, R_{t+1} \leftarrow R_t$ \label{line:bsar_acceptarm}  \;
			\ELSE
			\STATE $A_{t+1} \leftarrow A_t, R_{t+1} \leftarrow R_t \cup \{p_t\}$\;
			\ENDIF
			\ENDFOR
			\STATE \textbf{return} $A_{n+1}$\;
		\end{algorithmic}
	\end{multicols}
	\vspace*{-1em}
\end{algorithm}

For CPE-B in the FB setting, we design a novel algorithm $\bsar$ that adopts a special acceptance scheme for bottleneck identification.
%
We allow $\bsar$ to access an efficient accept-reject oracle $\aroracle$, which takes an accepted base arm $e$ or $\perp$, a rejected base arm set $R$ and a weight vector $\bv$ as inputs, and returns an optimal super arm from the decision class 
$\cM(e, R)=\{ M \in \cM: e \in M, R \cap M = \varnothing \}$
with respect to $\bv$, i.e., $\aroracle \in \argmax_{M \in \cM(e, R)} \wmin(M, \bw)$.
If $\cM(e, R)$ is empty, $\aroracle$ simply returns $\perp$. 
Such an efficient oracle exists for many decision classes, e.g., paths, matchings and spanning trees
\OnlyInFull{(see Appendix~\ref{apx:fixed_budget} for implementation details).}
\OnlyInShort{(see the supplementary material for implementation details).}

$\bsar$ allocates the sample budget $T$ to $n$ phases adaptively, and maintains the accepted set $A_t$, rejected set $R_t$ and undetermined set $U_t$. 
In each phase, we only sample base arms in $U_t$ and set the empirical rewards of base arms in $A_t$ to infinity (Line~\ref{line:bsar_infinity}). Then, we call $\aroracle$ to compute the empirical best super arm $M_t$.
For each $e \in U_t$, we forbid $R_t$ and constrain $e$ inside/outside the calculated super arms and find the empirical best super arm $\tilde{M}_{t,e}$ from the restricted decision class (Lines~\ref{line:bsar_first_aroracle},\ref{line:bsar_second_aroracle}).
Then, we accept or reject the base arm $p_t$ that maximizes the empirical reward gap between $M_t$ and $\tilde{M}_{t,e}$, i.e., the one that is most likely to be in or out of $M_*$ (Line~\ref{line:bsar_p_t}).

\textbf{Special acceptance scheme for bottleneck and polynomial running time.}
The acceptance scheme $\hat{w}_t(e) \leftarrow \infty$ for all $e \in A_t$ (Line~\ref{line:bsar_infinity}) is critical to the correctness and computation efficiency of $\bsar$.
Since $A_t$ and $R_t$ are not pulled in phase $t$ and their estimated rewards are not accurate enough, we need to avoid them to disturb the following calculation of empirical bottleneck values  (Lines~\ref{line:bsar_find_M_t}-\ref{line:bsar_p_t}).
By setting the empirical rewards of $A_t$ to infinity, the estimation of bottleneck values for sub-optimal super arms $M_{\sub}$ avoids the disturbance of $A_t$, because each $M_{\sub}$ has at least one base arm with reward lower than $\opt$ and this base arm will never be included in $A_t$ (conditioned on high probability events). 
As for $M_*$, its empirical bottleneck value can be raised, but this only enlarges the empirical gap between $M_*$ and $M_{\sub}$ and does not affect the correctness of the choice $p_t$ (Line~\ref{line:bsar_p_t}). Hence, this acceptance scheme guarantees the correctness of $\bsar$ in bottleneck identification task.


Compared to existing CPE-L algorithm $\mathtt{CSAR}$~\cite{chen2014cpe}, they force the whole set $A_t$ inside the calculated super arms in the oracle, i.e., replacing Lines~\ref{line:bsar_first_aroracle},\ref{line:bsar_second_aroracle} with $\aroracle( A_t, R_t \cup \{e\}, \hat{\bw}_t)$ and $\aroracle(A_t \cup \{e\}, R_t, \hat{\bw}_t)$, and deleting Line~\ref{line:bsar_infinity}.
Such acceptance strategy incurs \emph{exponential-time} complexity on $s$-$t$ path instances,\footnote{Finding a $s$-$t$ path which contains a given edge set is NP-hard. \OnlyInFull{See Appendix~\ref{apx:np_hard} for its proof.}\OnlyInShort{See the supplementary material for its proof.}} and \emph{only works} for the linear reward function, where the common part $A_t$ between two compared super arms can be canceled out.
If one naively applies their acceptance strategy to our bottleneck problem, the common part $A_t$ is possible to drag down (dominate) the empirical bottleneck values of all calculated super arms  (Lines~\ref{line:bsar_find_M_t},\ref{line:bsar_first_aroracle},\ref{line:bsar_second_aroracle}) and their empirical gaps will become all zeros (Line~\ref{line:bsar_p_t}), which destroys the correctness of the choice $p_t$ in theoretical analysis.

$\bsar$ is the first to run in \emph{polynomial time} on fixed-budget $s$-$t$ path instances among existing CPE algorithms, owing to its skillful acceptance scheme and the simplified $\aroracle$ (only work with one accepted base arm instead of $A_t$).
Specifically, for $E$ edges and $V$ vertices, the time complexity of $\aroracle$ is $O(E(E+V))$ and $\bsar$ only spends $O(E^2(E+V))$ time in decision making. 

Now we give the definitions of fixed-budget reward gap and problem hardness, and then formally state the error probability result of $\bsar$.
%
%
For $e \in M_*$, $\Delta^\fb_e=\opt -  \max_{M \in \cM: e \notin M} \wmin(M, \bw)$, and for $e \notin M_*$, $\Delta^\fb_e=\opt-\max_{M \in \cM: e\in M} \wmin(M, \bw)$.
Let $\Delta^\fb_{(1)}, \dots, \Delta^\fb_{(n)}$ be the permutation of $\Delta^\fb_{1}, \dots, \Delta^\fb_{n}$ such that $\Delta^\fb_{(1)} \leq \dots \leq \Delta^\fb_{(n)}$, and the fixed-budget problem hardness is defined as $H^{\fb} = \max_{i \in [n]} \frac{i}{(\Delta^\fb_{(i)})^2}$. Let $\tilde{\log}(n) = \sum_{i=1}^{n} \frac{1}{i}$.
%
%
\begin{theorem}[Fixed-budget Upper Bound] \label{thm:bsar}
	For any $T>n$, algorithm $\bsar$ (Algorithm~\ref{alg:sar_bottleneck}) for CPE-B in the FB setting uses at most $T$ samples and returns the optimal super arm with the error probability bounded by
	$$
	O\sbr{ n^2 \exp \sbr{ - \frac{T-n  }{ \tilde{\log}(n)  R^2 H^B } } }.
	$$
\end{theorem}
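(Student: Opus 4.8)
The plan is to follow the standard successive-accept-reject (SAR) analysis template (as in \cite{bubeck2013_SAR,chen2014cpe}), but carefully adapted to the bottleneck reward function and the special acceptance scheme $\hat w_t(e)\leftarrow\infty$ for $e\in A_t$. First I would fix the ``good event'' $\mathcal{E}$ under which all empirical means are accurate: for each phase $t$ and each base arm $e\in U_t$ that has been pulled $\tilde T_t$ times, $|\hat w_t(e)-w(e)|\le \tfrac{1}{2}\Delta^\fb_{(n-t+1)}$ (or more precisely, less than half the gap that is "being resolved" in phase $t$). By the $R$-sub-Gaussian tail and a union bound over the $n$ phases and $n$ arms, $\Pr[\neg\mathcal{E}]$ is bounded by $\sum_{t=1}^n 2n\exp(-\tilde T_t (\Delta^\fb_{(n-t+1)})^2/(2R^2))$; plugging in $\tilde T_t\ge \tfrac{T-n}{\tilde{\log}(n)(n-t+1)}$ and the definition $H^B=\max_i i/(\Delta^\fb_{(i)})^2$ gives each term at most $\exp(-(T-n)/(\tilde{\log}(n)R^2 H^B))$, for a total of $O(n^2\exp(-(T-n)/(\tilde{\log}(n)R^2H^B)))$, which is the claimed bound. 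So the whole theorem reduces to showing: \emph{on $\mathcal{E}$, $\bsar$ returns $M_*$.}

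The combinatorial core is an inductive claim: on $\mathcal{E}$, at the start of every phase $t$ we have $A_t\subseteq M_*$ and $R_t\cap M_*=\varnothing$ (so $M_*\in\cM(\perp,R_t)$ and, for the relevant restricted classes, remains feasible). I would prove the inductive step by analyzing the arm $p_t$ accepted or rejected in phase $t$. The key structural facts to establish are: (i) under the $\infty$-substitution, for any sub-optimal feasible $M_{\sub}$ the empirical bottleneck $\wmin(M_{\sub},\hat\bw_t)$ still equals the empirical min over the base arms of $M_{\sub}$ that lie in $U_t$ (since $M_{\sub}$ must contain a base arm with true reward below $\opt$, and on $\mathcal{E}$ such an arm is never in $A_t$ — this is exactly the point made in the paragraph "Special acceptance scheme"); (ii) the empirical best super arm $M_t=\aroracle(\perp,R_t,\hat\bw_t)$ satisfies $\wmin(M_t,\hat\bw_t)\ge\wmin(M_*,\hat\bw_t)$; (iii) the quantity $\wmin(M_t,\hat\bw_t)-\wmin(\tilde M_{t,e},\hat\bw_t)$ maximized in Line~\ref{line:bsar_p_t} is, on $\mathcal{E}$, large (comparable to $\Delta^\fb_{(n-t+1)}$) precisely for arms whose true gap $\Delta^\fb_e$ is among the $n-t+1$ largest, and is "consistent" in the sense that if $p_t\in M_t$ then $p_t\in M_*$ and if $p_t\notin M_t$ then $p_t\notin M_*$. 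Fact (iii) is where one uses that among the $n-t+1$ undetermined arms, at least one has $\Delta^\fb_e\ge\Delta^\fb_{(n-t+1)}$, hence the maximized empirical gap is at least $\Delta^\fb_{(n-t+1)} - 2\cdot(\tfrac12\Delta^\fb_{(n-t+1)})>0$, and conversely a misclassifying choice would force an empirical gap contradiction with the accuracy guarantee.

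I would organize the correctness argument as two sub-lemmas. Sub-lemma A: if $p_t\in M_t$ then $p_t\in M_*$; I argue by contradiction — if $p_t\notin M_*$, then replacing the role of $p_t$ shows $M_*$ is still feasible in $\cM(\perp,R_t\cup\{p_t\})$, so $\tilde M_{t,p_t}$ has empirical bottleneck $\ge\wmin(M_*,\hat\bw_t)$, which on $\mathcal E$ is within $\Delta^\fb_{(n-t+1)}$-accuracy of $\opt$ and hence close to $\wmin(M_t,\hat\bw_t)$, contradicting that $p_t$ achieves a gap $\ge\Delta^\fb_{(n-t+1)}$. Sub-lemma B: if $p_t\notin M_t$ then $p_t\notin M_*$; symmetric argument, this time showing that if $p_t\in M_*$ the super arm $M_*\in\cM(p_t,R_t)$ gives $\tilde M_{t,p_t}$ a high empirical bottleneck, again contradicting the large gap. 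Combining A and B with the fact that \emph{some} arm in phase $t$ has a genuinely large gap (so the max in Line~\ref{line:bsar_p_t} is indeed large and the branch taken is forced) closes the induction; after $n$ phases $A_{n+1}$ is a feasible super arm contained in $M_*$, hence equals $M_*$, and the algorithm returns it.

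\textbf{Main obstacle.} The delicate point — and the place where the bottleneck structure genuinely differs from the linear case of \cite{chen2014cpe} — is handling the $\infty$-substitution cleanly in facts (i) and (iii): one must show that setting $\hat w_t(e)=\infty$ on $A_t$ never corrupts the empirical bottleneck \emph{ordering} between $M_*$ and a sub-optimal super arm, which requires knowing that every sub-optimal super arm still "exposes" a below-$\opt$ base arm in $U_t$, i.e. that no such critical arm has been wrongly accepted — but that is exactly what the induction hypothesis $A_t\subseteq M_*$ is supplying. So the real work is threading this induction so that the $\infty$-trick and the "$A_t\subseteq M_*$" invariant are proved simultaneously, phase by phase, without circularity; the gap-accounting and the final union bound are then routine.
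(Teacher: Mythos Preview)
Your approach is essentially the paper's: define the phase-wise good event, union-bound to get the $n^2\exp(\cdot)$ factor, then induct on the invariant $A_t\subseteq M_*$, $R_t\cap M_*=\varnothing$ by a large-gap-versus-small-gap comparison (the paper's Lemmas~\ref{lemma:bsar_e_in_M_t_cup_M_*}--\ref{lemma:bsar_mistake_p_t} implement exactly your Sub-lemmas A/B, phrased as ``any $e\in U_t$ with $\Delta^\fb_e\ge\Delta^\fb_{(n+1-t)}$ lies in $(M_*\cap M_t)\cup(\neg M_*\cap\neg M_t)$ and has empirical gap $>\tfrac34\Delta^\fb_{(n+1-t)}$'' versus ``any $p$ in the symmetric difference has empirical gap $<\tfrac14\Delta^\fb_{(n+1-t)}$''). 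Two small fixes: your accuracy constant $\tfrac12$ makes the arithmetic collapse ($\Delta-2\cdot\tfrac12\Delta=0$, not $>0$)---the paper takes $\tfrac18$ to get the $\tfrac34$ vs.\ $\tfrac14$ separation---and the reason $A_{n+1}=M_*$ is not that $A_{n+1}$ is a feasible super arm inside $M_*$ but that after $n$ phases $A_{n+1}\cup R_{n+1}=[n]$, which together with the two invariants forces both inclusions.
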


Compared to the uniform sampling algorithm, which plays all base arms equally and has $O( n \exp ( - \frac{ T }{ R^2  n \Delta^{-2}_{\min} } ) )$ error probability with $\Delta_{\min}=\opt-\max_{M \neq M_*} \wmin(M, \bw)$, Theorem~\ref{thm:bsar} achieves a significantly better correctness guarantee (when $\Delta^B_e>\Delta_{\min}$ for most $e \in [n]$).
In addition, when our CPE-B problem reduces to conventional $K$-armed pure exploration problem~\cite{bubeck2013_SAR}, Theorem~\ref{thm:bsar} matches existing state-of-the-art result in \cite{bubeck2013_SAR}.
To our best knowledge, the lower bound for the fixed-budget setting in the CPE literature~\cite{chen2014cpe,huang_CPE_CS2018,Online_Dense_Subgraph_Kuroki2020,CPE_BL_PL2020} remains open. 

Our error probability analysis falls on taking advantage of the bottleneck property to handle the disturbance from the accepted arm set (which are not pulled sufficiently) and guaranteeing the estimation accuracy of bottleneck rewards. The differences between our analysis and prior analysis for $\mathtt{CSAR}$~\cite{chen2014cpe} are highlighted as follows:
(i) Prior analysis~\cite{chen2014cpe} relies on the linear property to cancel out the common part between two super arms when calculating their reward gap, in order to avoid the disturbance of accepted arms. In contrast, to achieve this goal, we utilize the special acceptance scheme of $\bsar$ to exclude all accepted arms in the calculation of bottleneck rewards, which effectively addresses the perturbation of inaccurate estimation on accepted arms.
(ii) Prior analysis~\cite{chen2014cpe} mainly uses the ``exchange sets'' technique, which only works for the linear reward function and leads to the dependence on the parameter of decision class structures. Instead, our analysis exploits the bottleneck property to establish confidence intervals in the base arm level, and effectively avoids the dependence on the parameter of decision class structures.
\yihan{Added a paragraph to highlight the difference between our error probability analysis and prior analysis.}

\vspace*{-0.5em}
\section{Experiments}
\label{sec:experiments}
\vspace*{-0.5em}

\begin{figure}[t] 
	\centering    
	\subfigure[FC, $s$-$t$ path, large $\delta$] { \label{fig:fc_path_large_delta}
		\includegraphics[width=0.25\columnwidth]{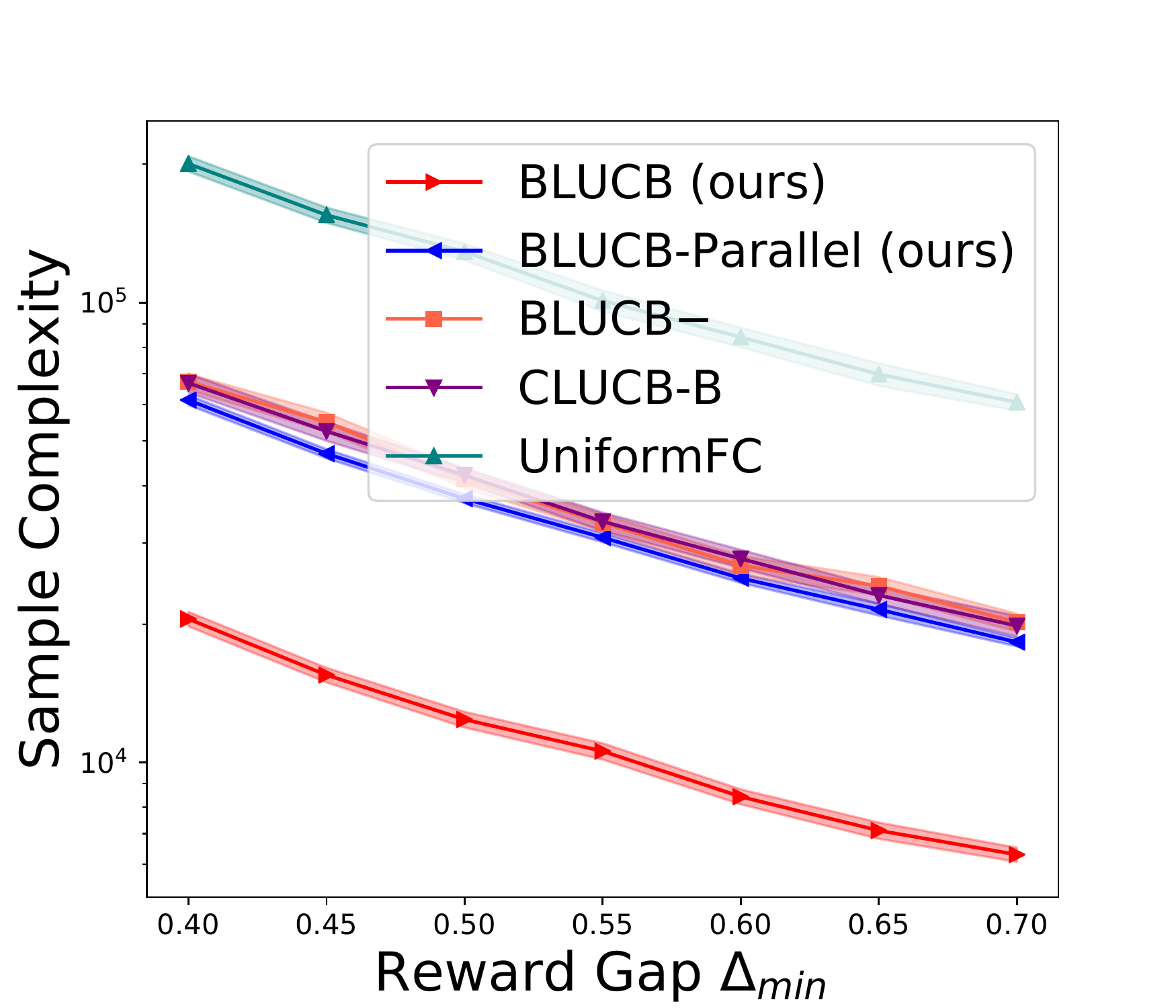} 
	}     
	\subfigure[FC, $s$-$t$ path, small $\delta$] { \label{fig:fc_path_small_delta}
		\includegraphics[width=0.25\columnwidth]{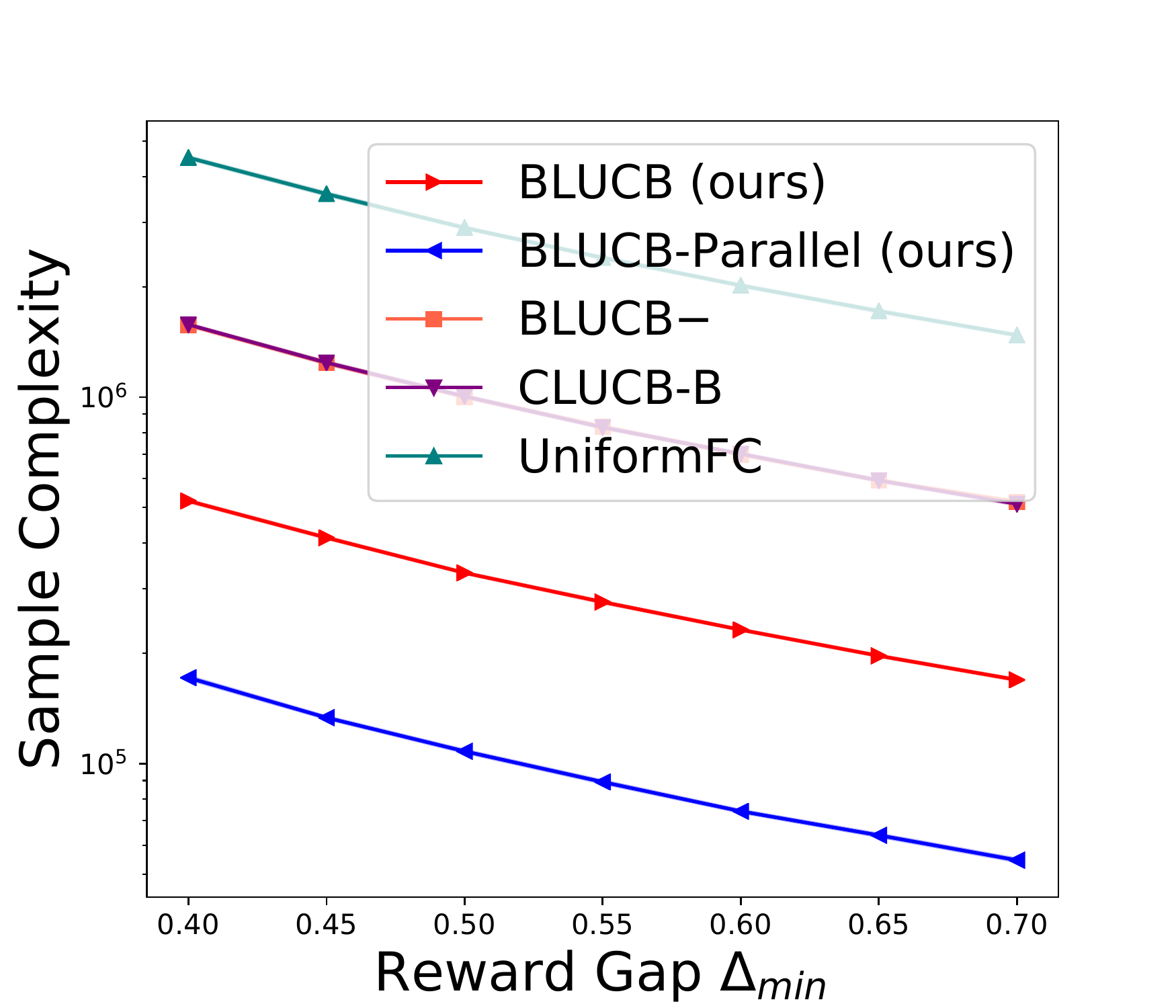}  
	}
	\subfigure[FC, real-world, small $\delta$] { \label{fig:fc_real}
		\includegraphics[width=0.25\columnwidth]{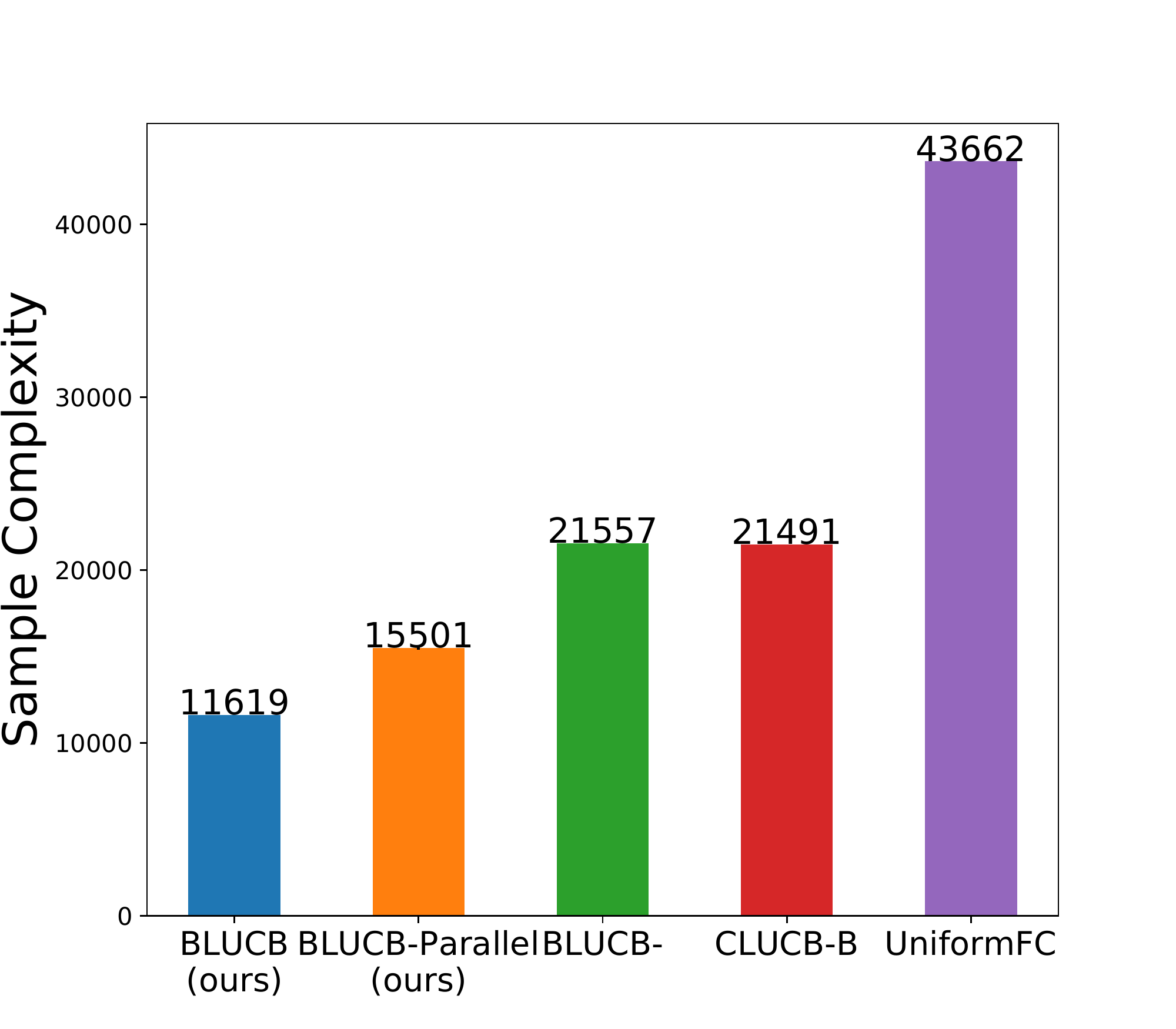}  
	} 
	\\
	\vspace*{-1em}
	\subfigure[FB, matching] { \label{fig:fb_matching_var_budget}
		\includegraphics[width=0.25\columnwidth]{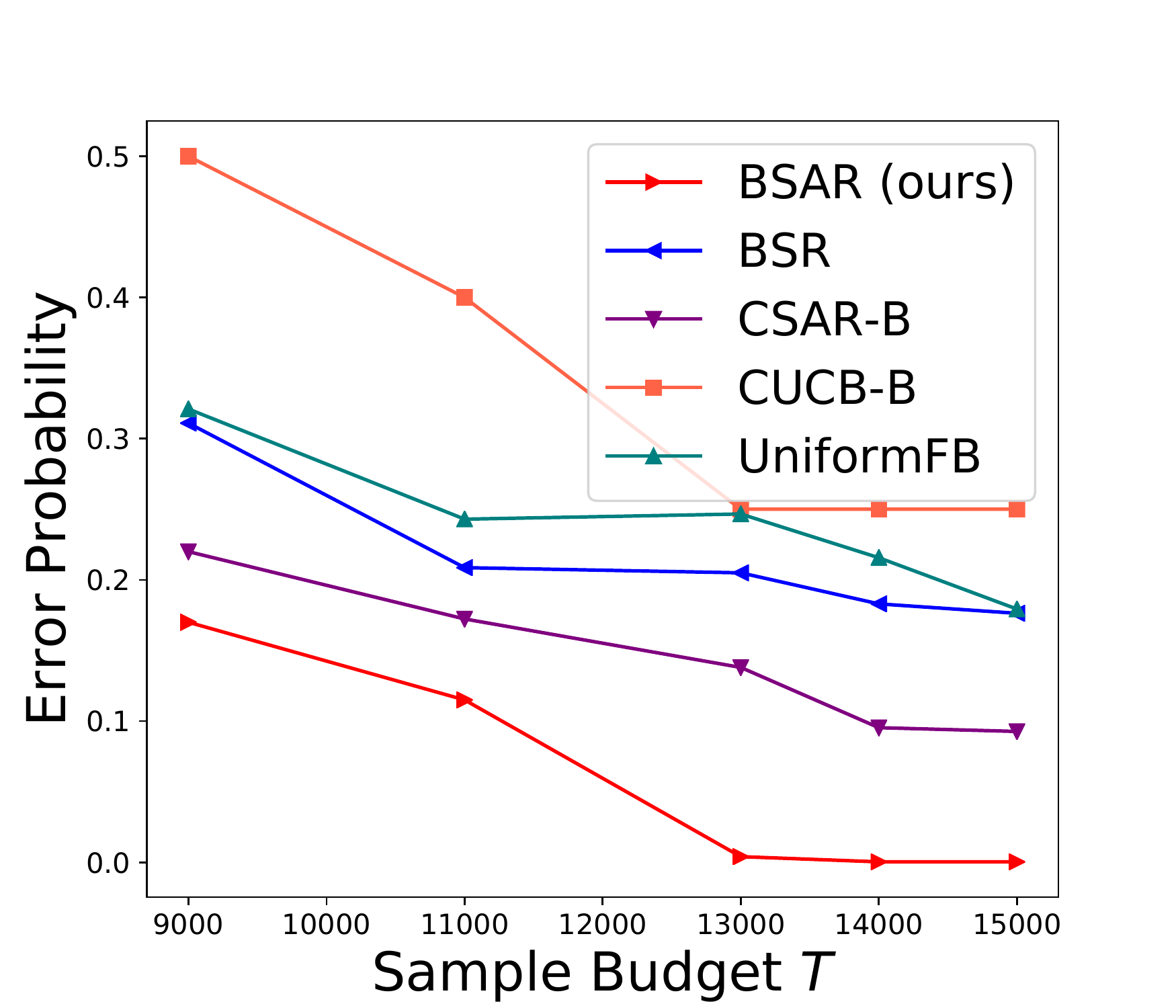}   
	}     
	\subfigure[FB, matching] { \label{fig:fb_matching_var_gap}
		\includegraphics[width=0.25\columnwidth]{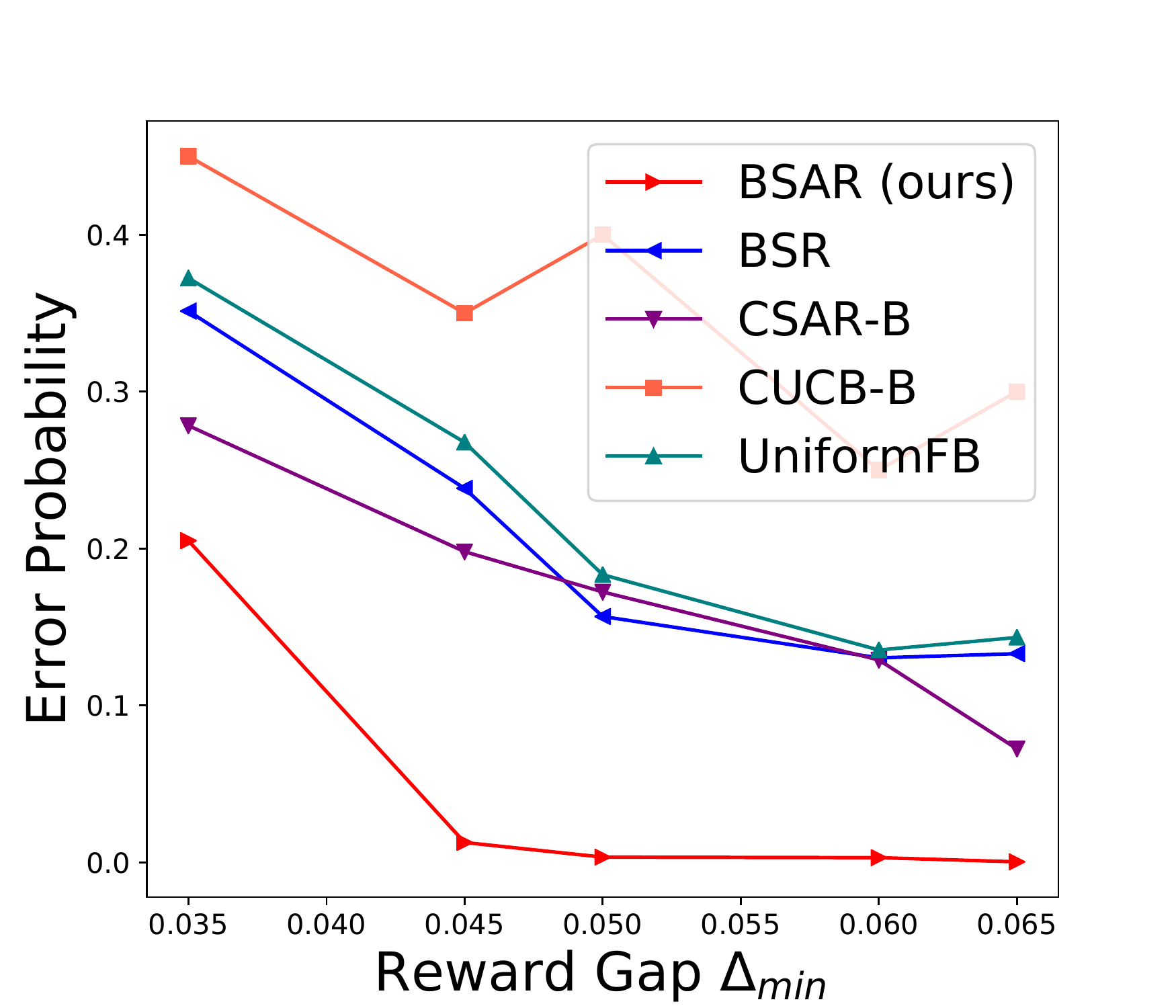}  
	} 
	\subfigure[FB, real-world] { \label{fig:fb_real}
		\includegraphics[width=0.25\columnwidth]{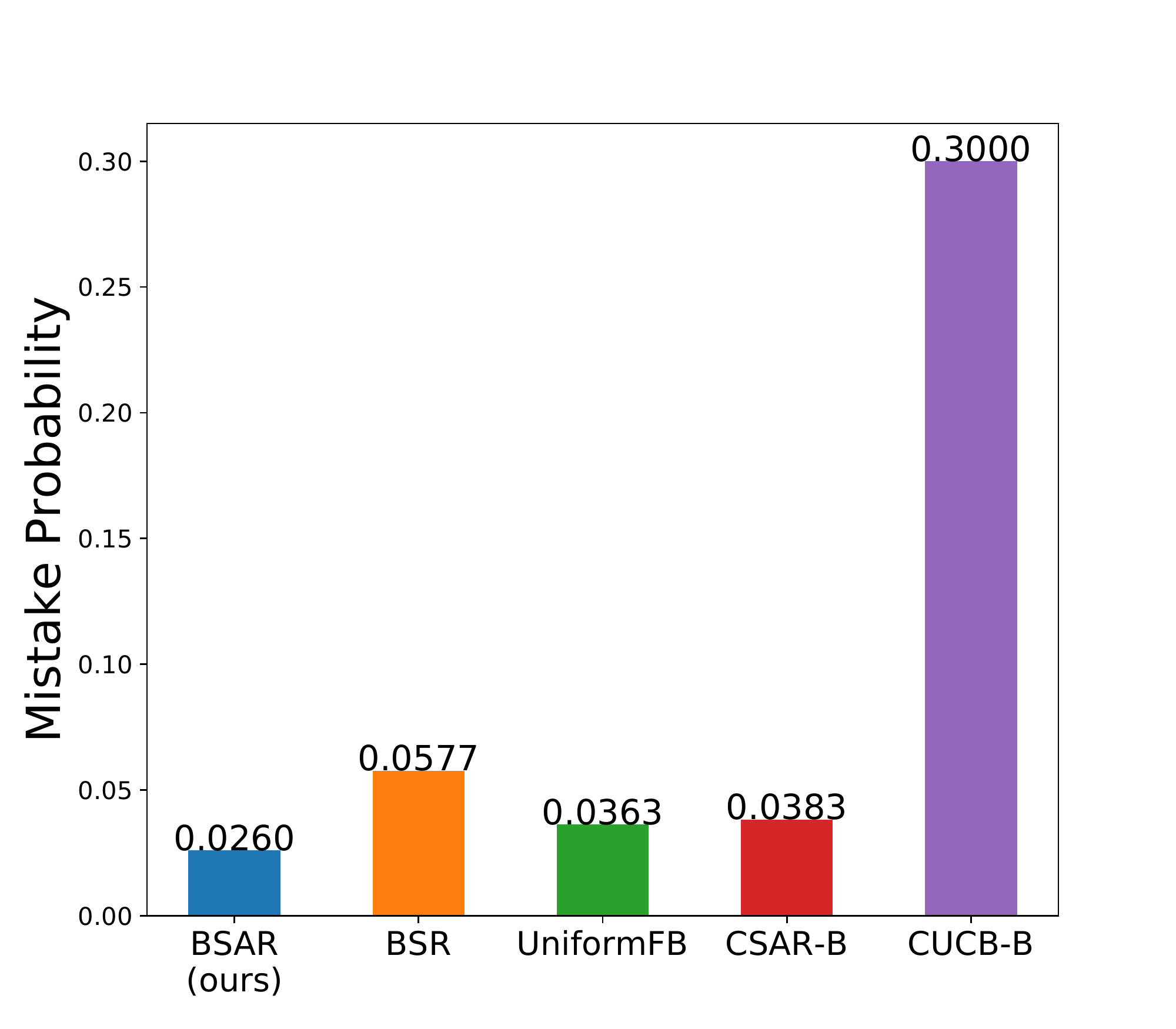}  
	} 
	\caption{Experiments for CPE-B in the FC/FB setting on synthetic and real-world datasets.} \vspace*{-1em}
	\label{fig:experiments_fc}     
\end{figure}

In this section, we conduct experiments for CPE-B in FC/FB settings on synthetic and real-world datasets.
The synthetic dataset consists of the $s$-$t$ path and matching instances. 
For the $s$-$t$ path instance, the number of edges (base arms) $n=85$, and the expected reward of edges $w(e) = [0, 10.5]$ ($e \in [n]$). The minimum reward gap  of any two edges (which is also the minimum gap of bottleneck values between two super arms) is denoted by $\Delta_{\min} \in [0.4,0.7]$. 
For the matching instances, we use a $5 \times 3$ complete bipartite graph, where $n=15$, $w(e) = [0.1, 1.08]$ and $\Delta_{\min}\in[0.03,0.07]$.
We change $\Delta_{\min}$ to generate a series of instances with different hardness (plotted points in Figures~\ref{fig:fc_path_large_delta},\ref{fig:fc_path_small_delta},\ref{fig:fb_matching_var_gap}).
In terms of the real-world dataset, we use the data of American airports and the number of available seats of flights in 2002, provided by the International Air Transportation Association database (\url{www.iata.org})~\cite{architecture2004}. Here we regard an airport as a vertex and a direct flight connecting two airports as an edge (base arm), and also consider the number of available seats of a flight as the expected reward of an edge. Our objective is to find an air route connecting the starting and destination airports which maximizes the minimum number of available seats among its passing flights. In this instance, $n=9$ and $w(e) \in [0.62, 1.84]$.
\OnlyInFull{We present the detailed graphs with specific values of $w(e)$ for the $s$-$t$ path, matching and real-world air route instances in Appendix~\ref{apx:experiment_graph}.}
\OnlyInShort{We present the detailed graphs with specific values of $w(e)$ for the $s$-$t$ path, matching and real-world air route instances in the supplementary material.}

In the FC setting, we set a large $\delta=0.005$ and a small $\delta=\exp(-1000)$, and perform $50$ independent runs to plot average sample complexity with $95\%$ confidence intervals. In the FB setting, we set sample budget $T \in [6000,15000]$, and perform $3000$ independent runs to show the error probability across runs.
For all experiments, the random reward of each edge $e \in [n]$ is i.i.d. drawn from Gaussian distribution $\cN(w(e),1)$.

\yihan{Added more details of experimental setups and presented the graphs in Appendix.}


\textbf{Experiments for the FC setting.}
We compare our $\algbottleneck$/$\algbottleneckparallel$ with three baselines. $\mathtt{BLUCB-}$ is an ablation variant of $\algbottleneck$, which replaces the sample strategy (Lines~\ref{line:blucb_c_t}-\ref{line:blucb_p_t}) with the one that uniformly samples a base arm in critical super arms. 
$\mathtt{CLUCB\mbox{-}B}$~\cite{chen2014cpe} is the state-of-the-art fixed-confidence CPE-L algorithm run with bottleneck reward function.
$\uniformfc$ is a fixed-confidence uniform sampling algorithm.
As shown in Figures~\ref{fig:fc_path_large_delta}-\ref{fig:fc_real}, $\algbottleneck$ and $\algbottleneckparallel$ achieve better performance than the three baselines, which validates the statistical efficiency of our bottleneck-adaptive sample strategy.
Under small $\delta$, $\algbottleneckparallel$ enjoys lower sample complexity than $\algbottleneck$ due to its careful algorithmic design to avoid playing unnecessary base arms, which matches our theoretical results.

\textbf{Experiments for the FB setting.}
Our $\bsar$ is compared with four baselines. 
As an ablation variant of $\bsar$, $\mathtt{BSR}$ removes the special acceptance scheme of $\bsar$. $\mathtt{CSAR\mbox{-}B}$~\cite{chen2014cpe} is the state-of-the-art fixed-budget CPE-L algorithm implemented with bottleneck reward function. 
$\mathtt{CUCB\mbox{-}B}$~\cite{chen2016combinatorial_JMLR} is a regret minimization algorithm allowing nonlinear reward functions, and in pure exploration experiments we let it return the empirical best super arm after $T$ (sample budget) timesteps. 
$\uniformfb$ is a fixed-budget uniform sampling algorithm.
One sees from Figures~\ref{fig:fb_matching_var_budget}-\ref{fig:fb_real} that, $\bsar$ achieves significantly better error probability than all the baselines, which demonstrates that its special acceptance scheme effectively guarantees the correctness for the bottleneck identification task. 

\vspace*{-1em}
\section{Conclusion and Future Work} \label{sec:conclusion_future_work}
\vspace*{-1em}

In this paper, we study the  Combinatorial Pure Exploration with the Bottleneck reward function (CPE-B) problem in FC/FB settings. For the FC setting, we propose two novel algorithms, which achieve the optimal sample complexity for a broad family of instances (within a logarithmic factor), and establish a matching lower bound to demonstrate their optimality.
For the FB setting, we propose an algorithm whose error probability matches the state-of-the-art result, and it is the first to run efficiently on fixed-budget path instances among existing CPE algorithms. 
The empirical evaluation also validates the superior performance of our algorithms.
There are several interesting directions worth further research. One direction is to derive a lower bound for the FB setting, and another direction is to investigate the general nonlinear reward functions.

\begin{ack}
The work of Yihan Du is  supported  in  part  by  the  Technology  and Innovation  Major  Project  of  the  Ministry  of  Science  and Technology  of  China  under  Grant 2020AAA0108400 and 2020AAA0108403. 
Yuko Kuroki is supported by Microsoft Research Asia and JST ACT-X 1124477.
\end{ack}

\bibliographystyle{plain}
\bibliography{neurips_2021_bottleneck_ref}

\OnlyInFull{
	\clearpage
\appendix	
\section*{Appendix}
\section{More Details of Experimental Setups} \label{apx:experiment_graph}

\begin{figure}[h] 
	\centering    
	\subfigure[$s$-$t$ path] { \label{fig:graph_path}
		\includegraphics[width=0.45\columnwidth]{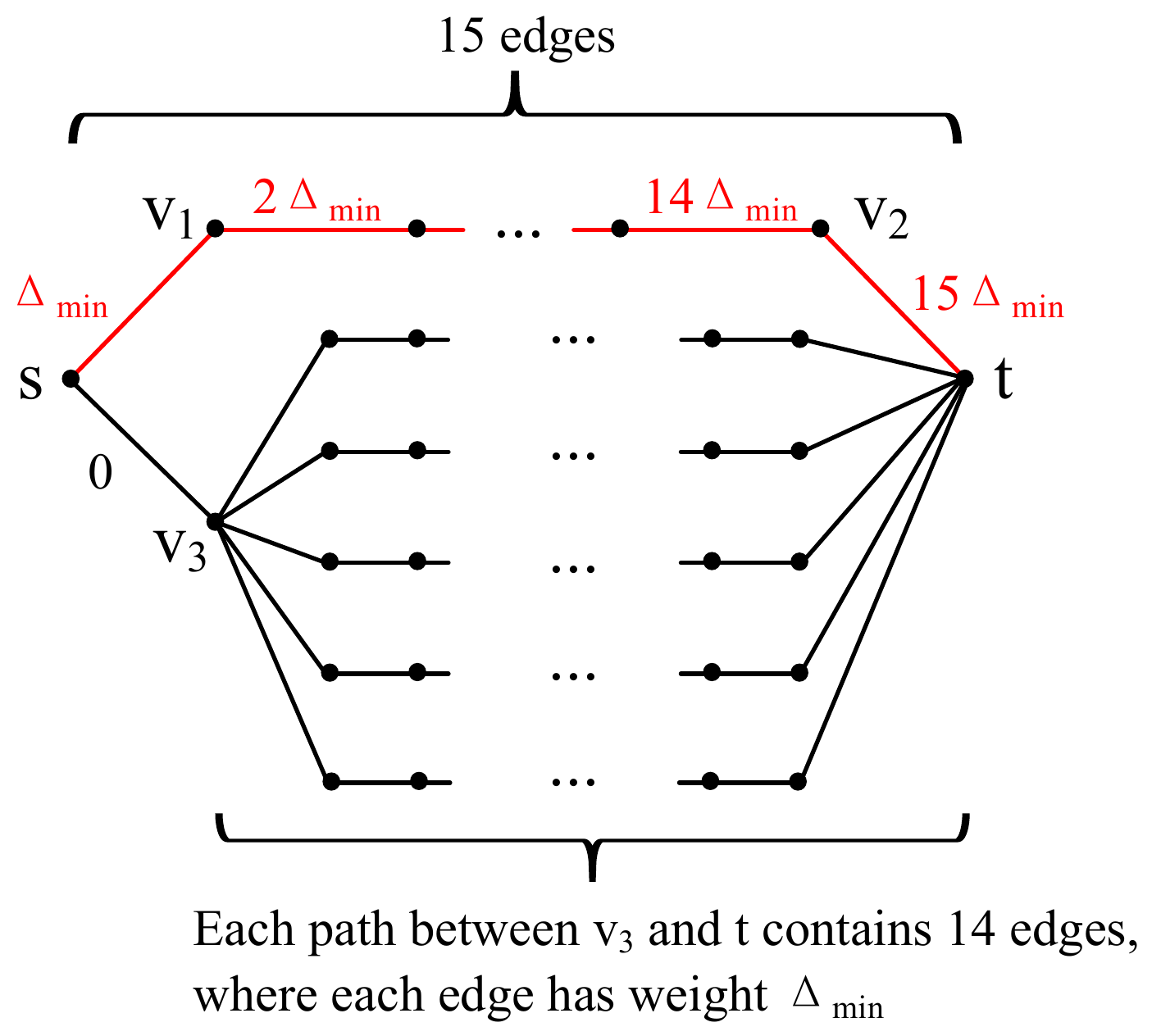}  
	}
	\hfill
	\subfigure[Matching] { \label{fig:graph_matching}
		\includegraphics[width=0.45\columnwidth]{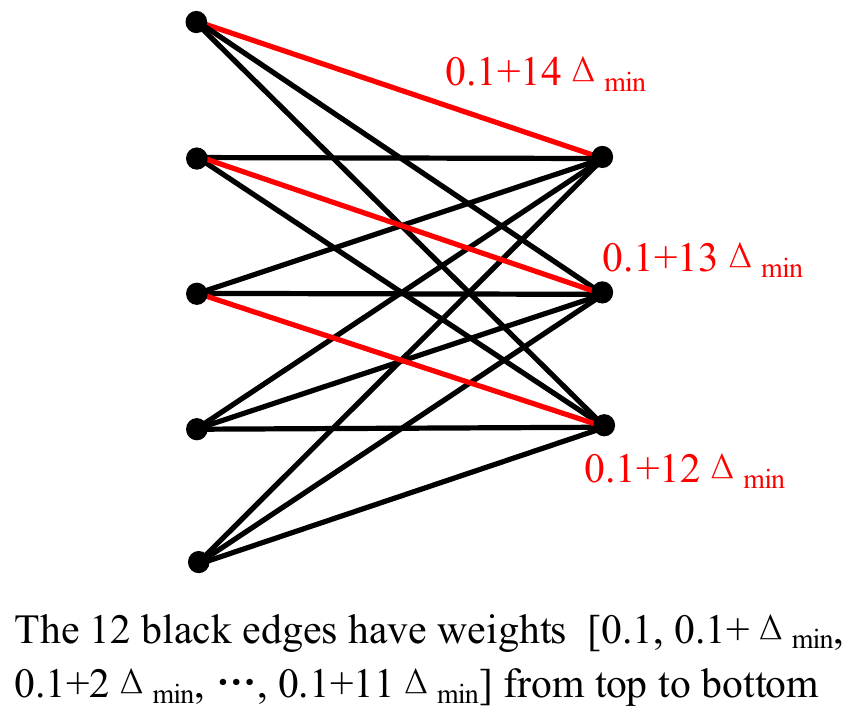}  
	}
	\\
	\subfigure[Real-world air route] { \label{fig:graph_real}
	\includegraphics[width=0.6\columnwidth]{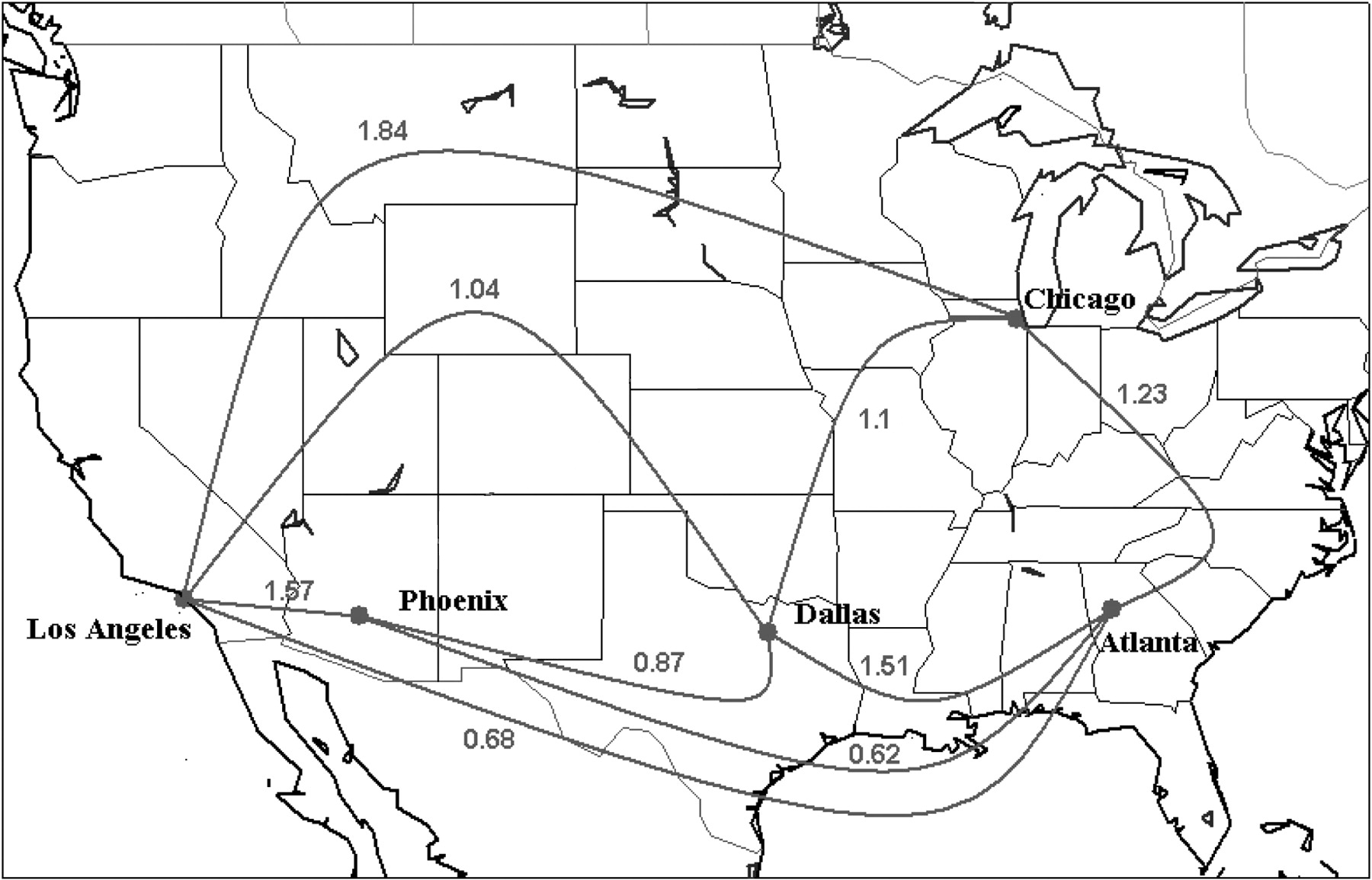}  
	}
	\caption{Graphs of the $s$-$t$ path, matching and real-world air route instances in our experiments.}   
	\label{fig:apx_experiments_fc}  
\end{figure}

In this section, we supplement more details of graphs and the expected rewards of edges (base arms) for the $s$-$t$ path, matching and real-world air route instances in our experiments.

Figure~\ref{fig:graph_path} shows the graph of $s$-$t$ path instance. The red path contains $15$ edges with weights $[\Delta_{\min},2\Delta_{\min},\dots,15\Delta_{\min}]$ and is the optimal $s$-$t$ path with the maximum bottleneck value. There are $5$ paths connecting $v_3$ and $t$, and each of them contains $14$ edges with weights $\Delta_{\min}$. In this instance, we set $\Delta_{\min} \in [0.4,0.7]$.

As shown in Figure~\ref{fig:graph_matching}, the matching instance uses a $5 \times 3$ complete bipartite graph with $n=15$ edges. The red matching is the optimal one, which contains three edges with weights $[0.1+14\Delta_{\min},0.1+13\Delta_{\min},0.1+12\Delta_{\min}]$. The remaining 12 black edges have weights $[0.1,0.1+\Delta_{\min},\dots,0.1+11\Delta_{\min}]$ from top to bottom. In this instance, $\Delta_{\min}\in[0.03,0.07]$.

Figure~\ref{fig:graph_real} illustrates the graph of real-world air route instance, which is originated from \cite{architecture2004}. We regard an airport (e.g., Los Angeles) as a vertex and a direct flight connecting two airports (e.g., Los Angeles $\leftrightarrow$ Chicago) as an edge. The number marked on each edge denotes the number of available seats of this flight, i.e., the expected reward of this edge. Our objective is to find an air route connecting Los Angeles and Atlanta, and the optimal route is [Los Angeles $\leftrightarrow$ Chicago $\leftrightarrow$ Atlanta].

\yihan{Added the graphs and more details of experimental setups.}

\section{CPE-B in the Fixed-Confidence Setting}
\subsection{Proof for Algorithm $\algbottleneck$} \label{apx:blucb}
In this subsection, we prove the sample complexity of Algorithm $\algbottleneck$ (Theorem~\ref{thm:bottleneck_baseline_ub}).

In order to prove Theorem~\ref{thm:bottleneck_baseline_ub}, we first introduce the following Lemmas~\ref{lemma:concentration}-\ref{lemma:blucb_e_notin_M_star_lower}.
For ease of notation, we define a function $\emin(M, \bv)$ to return the base arm with the minimum reward in $M$ with respect to weight vector $\bv$, i.e., $\emin(M, \bv) \in \argmin_{e \in M} v(e)$.

\begin{lemma}[Concentration] \label{lemma:concentration}
	For any $t>0$ and $e \in [n]$, defining the confidence radius $\rad_t(e) = R \sqrt{ \frac{2 \ln (\frac{4nt^3}{\delta})}{T_t(e)} }$ and the events
	$$
	\xi_t=\left\{ \forall e \in [n],\  | w(e)-\hat{w}_t(e) | < \rad_t(e) \right\}
	$$ 
	and 
	$$ 
	\xi=\bigcap \limits_{t=1}^{\infty} \xi_t , 
	$$
	then, we have 
	$$\Pr[\xi] \geq 1-\delta. $$
\end{lemma}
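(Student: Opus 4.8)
The plan is to derive this from the standard sub-Gaussian concentration bound for empirical means, combined with three nested union bounds: over base arms $e \in [n]$, over the (random) number of samples $s = T_t(e)$, and over timesteps $t$. The only genuinely non-mechanical point is that $T_t(e)$ is a random quantity chosen adaptively from past observations, so one cannot directly apply a concentration inequality to $\hat{w}_t(e)$; the standard remedy is to decouple. Concretely, I would model each base arm $e$ as carrying an i.i.d.\ reward tape $X_{e,1}, X_{e,2},\dots$, each entry being $R$-sub-Gaussian with mean $w(e)$, where the $s$-th play of $e$ reveals $X_{e,s}$. Writing $\bar{X}_{e,s} = \frac{1}{s}\sum_{i=1}^{s} X_{e,i}$, the distribution of $\bar{X}_{e,s}$ is independent of the algorithm's choices, and on the event $\{T_t(e)=s\}$ we have exactly $\hat{w}_t(e)=\bar{X}_{e,s}$ and $\rad_t(e)=R\sqrt{2\ln(4nt^3/\delta)/s}$.

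With the decoupling in hand, the core estimate is the sub-Gaussian tail inequality: since $\bar{X}_{e,s}$ is $(R/\sqrt{s})$-sub-Gaussian,
$$
\Pr\!\left[\,\bigl|\bar{X}_{e,s}-w(e)\bigr|\ge R\sqrt{\tfrac{2\ln(4nt^3/\delta)}{s}}\,\right]\le 2\exp\!\left(-\tfrac{s}{2R^2}\cdot\tfrac{2R^2\ln(4nt^3/\delta)}{s}\right)=\frac{2\delta}{4nt^3}=\frac{\delta}{2nt^3}.
$$
At timestep $t$ one has $T_t(e)\in\{1,\dots,t\}$, so a union bound over $s\in\{1,\dots,t\}$ gives $\Pr[|\hat w_t(e)-w(e)|\ge\rad_t(e)]\le t\cdot\frac{\delta}{2nt^3}=\frac{\delta}{2nt^2}$; a further union bound over $e\in[n]$ yields $\Pr[\xi_t^{c}]\le\frac{\delta}{2t^2}$. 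Summing over $t\ge 1$,
$$
\Pr[\xi^{c}]\le\sum_{t=1}^{\infty}\Pr[\xi_t^{c}]\le\sum_{t=1}^{\infty}\frac{\delta}{2t^2}=\frac{\pi^2}{12}\,\delta<\delta,
$$
so $\Pr[\xi]\ge 1-\delta$, as claimed.

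The main obstacle, as noted, is purely the decoupling step — justifying the replacement of the adaptively stopped average $\hat{w}_t(e)$ by the fixed-length average $\bar{X}_{e,s}$ before invoking concentration — which the reward-tape construction resolves cleanly; everything else is bookkeeping. In particular, the precise choice of $4nt^3/\delta$ inside the logarithm is exactly what is needed: the factor $n$ absorbs the union over arms, one factor of $t$ absorbs the union over the sample count $s\le t$, and the surviving $t^{-2}$ makes $\sum_t \Pr[\xi_t^{c}]$ a convergent series bounded by $\delta$ (even the cruder bound $\sum_t t^{-2}\le 2$ suffices). I would also remark that for $t\le n$, after the initialization round each arm has been played once so $T_t(e)\ge 1$ and $\rad_t(e)$ is well defined, and that the algorithm only ever invokes $\rad_t(e)$ for $t\ge n+1$ anyway, so there is no issue with the $s=0$ degenerate case.
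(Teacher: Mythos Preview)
Your proof is correct and follows essentially the same approach as the paper's: sub-Gaussian tail bound, union over the sample count $s$, union over arms $e$, and summation over $t$ via $\sum t^{-2}<2$. If anything, your treatment is slightly more careful in making the reward-tape decoupling explicit, whereas the paper simply writes $\sum_{s=1}^{t-1}\Pr[\,\cdot\,,\,T_t(e)=s]$ without comment.
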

\begin{proof}
	Since for any $e \in [n]$, the reward distribution of base arm $e$ has an R-sub-Gaussian tail and the mean of $w(e)$, according to the Hoeffding's inequality, we have that for any $t>0$ and $e \in [n]$,
	\begin{align*}
		\Pr \mbr{ | w(e)-\hat{w}_t(e) | \geq R \sqrt{ \frac{2 \ln (\frac{4nt^3}{\delta})}{T_t(e)} } } 
		= & \sum_{s=1}^{t-1} \Pr \mbr{ | w(e)-\hat{w}_t(e) | \geq R \sqrt{ \frac{2 \ln (\frac{4nt^3}{\delta})}{T_t(e)} } ,\  T_t(e)=s}
		\\
		\leq & \sum_{s=1}^{t-1} \frac{\delta}{2nt^3}
		\\
		\leq & \frac{\delta}{2nt^2}
	\end{align*}
	Using a union bound over $e \in [n]$, we have 
	\begin{align*}
	\Pr \mbr{ \xi_t }  \leq  \frac{\delta}{2t^2}
	\end{align*}
	and thus
	\begin{align*}
	\Pr \mbr{ \xi } \geq & 1- \sum_{t=1}^{\infty} \Pr[\neg \xi_t]
	\\
	\geq & 1- \sum_{t=1}^{\infty} \frac{\delta}{2t^2}
	\\
	\geq & 1 - \delta
	\end{align*}
\end{proof}

\begin{lemma} \label{lemma:correctness_blucb}
	Assume that event $\xi$ occurs. Then, if algorithm $\algbottleneck$  (Algorithm~\ref{alg:bottleneck}) terminates at round $t$, we have $M_t=M_*$.
\end{lemma}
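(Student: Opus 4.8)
\textbf{Proof plan for Lemma~\ref{lemma:correctness_blucb}.}
The plan is to show that whenever the stopping condition on Line~\ref{line:blucb_stop} holds under event $\xi$, the super arm $M_t$ returned must coincide with the true optimum $M_*$. First I would record the two consequences of event $\xi$ that matter here: for every base arm $e$ we have $\underline{w}_t(e) \le w(e) \le \bar{w}_t(e)$, and consequently, since $\wmin(M,\cdot)$ is monotone in each coordinate, $\wmin(M,\underline{\bw}_t) \le \wmin(M,\bw) \le \wmin(M,\bar{\bw}_t)$ for every $M \in \cM$. These sandwich inequalities are the only probabilistic input; everything else is deterministic.

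Next I would argue by the optimality of the two oracle calls. Suppose for contradiction that $M_t \ne M_*$. There are two cases. If $M_* \in \cM \setminus \superset(M_t)$, then by definition of $\tilde{M}_t = \oracle(\cM \setminus \superset(M_t), \bar{\bw}_t)$ we have $\wmin(\tilde{M}_t, \bar{\bw}_t) \ge \wmin(M_*, \bar{\bw}_t) \ge \wmin(M_*, \bw) = \opt$. Combining with the stopping condition $\wmin(M_t, \underline{\bw}_t) \ge \wmin(\tilde{M}_t, \bar{\bw}_t)$ and the sandwich bound on $M_t$, we get $\wmin(M_t, \bw) \ge \wmin(M_t, \underline{\bw}_t) \ge \wmin(\tilde{M}_t, \bar{\bw}_t) \ge \opt$, so $\wmin(M_t,\bw) \ge \opt = \wmin(M_*,\bw)$, which by uniqueness of $M_*$ forces $M_t = M_*$, a contradiction. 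The other case is $M_* \in \superset(M_t)$, i.e., $M_t \subsetneq M_*$ (proper, since $M_t \ne M_*$); then by monotonicity of the bottleneck function $\wmin(M_*, \bw) \le \wmin(M_t, \bw)$, and again by uniqueness this is impossible unless $M_t = M_*$. (One should also note $M_t \notin \superset(M_*)$ cannot give a separate escape, since a proper superset of $M_*$ has strictly-no-larger bottleneck value; and $M_t$ is itself feasible, so it is a legitimate candidate against which $M_*$'s optimality applies.) Either way we reach a contradiction, so $M_t = M_*$.

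The only mildly delicate point — and the one I would treat most carefully — is the interaction between the superset restriction and uniqueness: I must make sure that excluding $\superset(M_t)$ in the second oracle call never removes the "real competitor" in a way that lets a wrong $M_t$ slip through the stopping test. The resolution is exactly the monotonicity remark already stated in the paragraph after Algorithm~\ref{alg:bottleneck}: every $M' \in \superset(M_t)$ satisfies $\wmin(M',\bw) \le \wmin(M_t,\bw)$, so if $M_t$ were suboptimal the witness of its suboptimality — namely $M_*$ itself, or any super arm beating $M_t$ — cannot lie in $\superset(M_t)$ (it would have to be a subset-direction or incomparable super arm), hence it survives in $\cM \setminus \superset(M_t)$ and is seen by $\tilde{M}_t$. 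This closes the argument, and I would phrase the final lemma as: under $\xi$, termination at round $t$ implies $\wmin(M_t,\bw) \ge \wmin(M,\bw)$ for all $M \in \cM$, and then invoke uniqueness of $M_*$ to conclude $M_t = M_*$.
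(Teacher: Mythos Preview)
Your proposal is correct and follows essentially the same approach as the paper: both use the sandwich bounds from $\xi$, the optimality of $\tilde{M}_t$ over $\cM \setminus \superset(M_t)$ under $\bar{\bw}_t$, and the monotonicity of $\wmin$ on supersets to conclude $\wmin(M_t,\bw) \ge \wmin(M,\bw)$ for all $M$, then invoke uniqueness of $M_*$. The only cosmetic difference is that the paper iterates over all $M \neq M_t$ directly, whereas you organize it as a contradiction by locating $M_*$ in one of the two regions; the underlying argument is identical.
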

\begin{proof}
	According to the stop condition (Line \ref{line:blucb_stop} of Algorithm~\ref{alg:bottleneck}), when algorithm $\algbottleneck$  terminates at round $t$, we have that for any $M \in \cM \setminus \superset(M_t)$,
	$$
	\wmin(M_t, \bw) \geq \wmin(M_t, \underline{\bw}_t) \geq \wmin(M, \bar{\bw}_t) \geq \wmin(M, \bw) .
	$$
	For any $M \in \superset(M_t)$, according to the property of the bottleneck reward function,
	we have
	$$
	\wmin(M_t, \bw)  \geq \wmin(M, \bw) .
	$$
	Thus, we have $\wmin(M_t, \bw)  \geq \wmin(M, \bw)$ for any $M \neq M_t$ and according to the unique assumption of $M_*$, we obtain $M_t=M_*$.
\end{proof}

\begin{lemma} \label{lemma:blucb_e_in_M_star}
	Assume that event $\xi$ occurs. For any $e \in M_*$, if $\rad_t(e)<\frac{\Delta^\fc_e}{4}=\frac{1}{4}(w(e)-\max_{M \neq M_*} \wmin(M, \bw))$, then, base arm $e$ will not be pulled at round $t$, i.e., $p_t \neq e$.
\end{lemma}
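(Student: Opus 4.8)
The plan is to argue by contradiction. Assume event $\xi$ holds, $e \in M_*$ and $\rad_t(e) < \Delta^\fc_e/4$, and suppose toward a contradiction that $p_t = e$. Since $\algbottleneck$ reaches the sampling step at round $t$, the stopping test (Line~\ref{line:blucb_stop}) must have failed, so $\wmin(M_t, \underline{\bw}_t) < \wmin(\tilde M_t, \bar{\bw}_t)$; moreover $p_t \in \{c_t, d_t\}$ with $\rad_t(p_t) = \max\{\rad_t(c_t), \rad_t(d_t)\}$ by Line~\ref{line:blucb_p_t}. Throughout I will use the consequences of $\xi$, namely $w(e') - 2\rad_t(e') < \underline{w}_t(e') < w(e') < \bar{w}_t(e') < w(e') + 2\rad_t(e')$ for every $e'$, the monotonicity of $\wmin(M,\cdot)$ (so $\wmin(M,\underline{\bw}_t) \le \wmin(M,\bw) \le \wmin(M,\bar{\bw}_t)$), and the optimality of the oracle output $M_t$, in particular $\wmin(M_t, \underline{\bw}_t) \ge \wmin(M', \underline{\bw}_t)$ for every $M' \in \cM$. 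Then I split into two cases according to which of $c_t, d_t$ equals $e$.

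\textbf{Case $e = d_t$.} Here $e$ is the pessimistic bottleneck of $\tilde M_t$, so $\wmin(\tilde M_t, \underline{\bw}_t) = \underline{w}_t(e) > w(e) - 2\rad_t(e)$. I claim $\wmin(\tilde M_t, \underline{\bw}_t) \le w(e) - \Delta^\fc_e = \max_{M \neq M_*}\wmin(M, \bw)$: if $\tilde M_t \neq M_*$ this follows from monotonicity and Definition~\ref{def:fc_gap}(a); if $\tilde M_t = M_*$, then necessarily $M_t \neq M_*$ (as $\tilde M_t \in \cM\setminus\superset(M_t)$ cannot be $M_t$ or a superset of it), whence $\wmin(\tilde M_t, \underline{\bw}_t) = \wmin(M_*, \underline{\bw}_t) \le \wmin(M_t, \underline{\bw}_t) \le \wmin(M_t, \bw) \le \max_{M \neq M_*}\wmin(M,\bw)$. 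Combining the two bounds gives $w(e) - 2\rad_t(e) < w(e) - \Delta^\fc_e$, i.e. $\rad_t(e) > \Delta^\fc_e/2$, contradicting the hypothesis.

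\textbf{Case $e = c_t$.} Now $\wmin(M_t, \underline{\bw}_t) = \underline{w}_t(e) > w(e) - 2\rad_t(e) > w(e) - \tfrac12\Delta^\fc_e = \max_{M\neq M_*}\wmin(M,\bw) + \tfrac12\Delta^\fc_e$, and by monotonicity $\wmin(M_t,\bw) \ge \wmin(M_t,\underline{\bw}_t) > \max_{M\neq M_*}\wmin(M,\bw)$, which by uniqueness of $M_*$ forces $M_t = M_*$. Hence $\tilde M_t \in \cM\setminus\superset(M_*)$ is distinct from $M_*$, so $\wmin(\tilde M_t,\underline{\bw}_t) \le \wmin(\tilde M_t,\bw) \le w(e) - \Delta^\fc_e$. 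The crucial step is to bound $\wmin(\tilde M_t, \bar{\bw}_t)$ from above \emph{through the pessimistic bottleneck $d_t$ of $\tilde M_t$} rather than through its true (large-radius) bottleneck: $\wmin(\tilde M_t,\bar{\bw}_t) \le \bar{w}_t(d_t) = \underline{w}_t(d_t) + 2\rad_t(d_t) = \wmin(\tilde M_t,\underline{\bw}_t) + 2\rad_t(d_t) \le w(e) - \Delta^\fc_e + 2\rad_t(d_t)$. Since $p_t = c_t = e$ won the argmax in Line~\ref{line:blucb_p_t}, $\rad_t(d_t) \le \rad_t(c_t) = \rad_t(e)$; plugging this and the failed stopping test $\wmin(M_t,\underline{\bw}_t) < \wmin(\tilde M_t,\bar{\bw}_t)$ into the above yields $w(e) - 2\rad_t(e) < w(e) - \Delta^\fc_e + 2\rad_t(e)$, i.e. $\Delta^\fc_e < 4\rad_t(e)$, again a contradiction.

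\textbf{Expected main obstacle.} The only delicate point is upper-bounding $\wmin(\tilde M_t, \bar{\bw}_t)$ in the case $e = c_t$: its true bottleneck arm may have an arbitrarily large confidence radius, so bounding $\wmin(\tilde M_t, \bar{\bw}_t) < \wmin(\tilde M_t, \bw) + 2\rad_t(\cdot)$ via the true bottleneck is useless. The resolution is to route the bound through $d_t$, whose radius is controlled precisely because the sampling rule of Line~\ref{line:blucb_p_t} always picks the larger-radius arm among $\{c_t, d_t\}$ — this is exactly where the bottleneck-adaptive sample strategy enters the analysis. Everything else is routine algebra with the confidence-interval inequalities from Lemma~\ref{lemma:concentration}.
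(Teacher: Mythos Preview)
Your proof is correct and follows essentially the same contradiction strategy as the paper's: use the selection rule in Line~\ref{line:blucb_p_t} to infer $\rad_t(c_t),\rad_t(d_t)\le\rad_t(e)<\Delta^\fc_e/4$, then derive a contradiction either with the stopping condition or with the pessimistic-bottleneck definition. The only cosmetic difference is the case split: the paper branches on whether the super arm in which $e$ is the pessimistic bottleneck is $M_*$ or a sub-optimal super arm, whereas you branch on whether $e=c_t$ or $e=d_t$ and, in your $e=c_t$ branch, first prove $M_t=M_*$ before invoking the failed stopping test (your ``routing through $d_t$'' step is exactly the paper's Case~(i) argument); your $e=d_t$ branch with $\tilde M_t\neq M_*$ reaches a direct numerical contradiction where the paper instead exhibits a specific arm $\emin(M,\bw)$ violating the argmin.
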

\begin{proof}
	Suppose that for some $e \in M_*$, $\rad_t(e)<\frac{\Delta^\fc_e}{4}=\frac{1}{4}(w(e)-\max_{M \neq M_*} \wmin(M, \bw))$ and $p_t = e$. 
	According to the selection strategy of $p_t$, we have that $\rad_t(c_t) < \frac{\Delta^\fc_e}{4}$ and $\rad_t(d_t) < \frac{\Delta^\fc_e}{4}$.
	
	Case (i): If $e$ is selected from $M_*$, then one of $M_t$ and $\tilde{M}_t$ is $M_*$ such that $e=\emin(M_*, \underline{\bw}_t)$, and the other is a sub-optimal super arm $M'$. Let $e'=\emin(M', \underline{\bw}_t)$. $\underline{w}(e') \leq \underline{w}(\emin(M', \bw)) \leq w(\emin(M', \bw))= \wmin(M', \bw)$. $\{e, e'\}=\{c_t, d_t\}$. Then, 
	\begin{align*}
	\underline{w}(e)-\bar{w}(e') \geq & w(e)-\underline{w}(e')-2\rad_t(e)-2\rad_t(e')
	\\
	> & w(e)-\wmin(M', \bw)-\Delta^\fc_e 
	\\
	\geq & 0.
	\end{align*}
	Then, we have 
	\begin{align*}
	\wmin(M_*, \underline{\bw}_t) = \underline{w}(e) > \bar {w}(e') \geq \wmin(M', \bar{\bw}_t),
	\end{align*}
	and algorithm $\algbottleneck$ must have stopped, which gives a contradiction.
	
	Case (ii): If $e$ is selected from a sub-optimal super arm $M$, then one of $M_t$ and $\tilde{M}_t$ is $M$ such that $e=\emin(M, \underline{\bw}_t)$. Since $e \in M_*$, we have $w(e) \geq \wmin(M_*, \bw) > \wmin(M, \bw)$ and thus $w(e)-\wmin(M, \bw)=w(e)-w(\emin(M, \bw))>0$. Then, 
	\begin{align*}
	\underline{w}(e)-\underline{w}(\emin(M, \bw)) \geq & w(e)-2\rad_t(e)-\wmin(M, \bw)
	\\
	> & w(e)-\wmin(M, \bw)-\frac{\Delta^\fc_e}{2}
	\\
	>0,
	\end{align*}
	which contradicts $e=\emin(M, \underline{\bw}_t)$.
\end{proof}

\begin{lemma} \label{lemma:blucb_e_notin_M_star_upper}
	Assume that event $\xi$ occurs. For any $e \notin M_*, w(e) \geq \wmin(M_*, \bw)$, if $\rad_t(e)<\frac{\Delta^\fc_e}{2}=\frac{1}{2}(w(e)-\max_{M \in \cM: e\in M} \wmin(M, \bw))$, then, base arm $e$ will not be pulled at round $t$, i.e., $p_t \neq e$.
\end{lemma}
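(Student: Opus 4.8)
The plan is to argue by contradiction, mirroring the structure of the proof of Lemma~\ref{lemma:blucb_e_in_M_star} but in a simpler form. Suppose $e \in \tilde{N}$ (so $e \notin M_*$ and $w(e) \geq \opt$) satisfies $\rad_t(e) < \frac{\Delta^\fc_e}{2}$, yet $p_t = e$ at round $t$. Since $p_t$ is chosen from $\{c_t, d_t\}$ in Line~\ref{line:blucb_p_t}, the base arm $e$ equals either $c_t = \emin(M_t, \underline{\bw}_t)$ or $d_t = \emin(\tilde{M}_t, \underline{\bw}_t)$. In either case, $e$ is the $\underline{\bw}_t$-bottleneck of some super arm $M \in \cM$ with $e \in M$: for $c_t$ take $M = M_t \in \cM$, and for $d_t$ take $M = \tilde{M}_t$, noting $\tilde{M}_t \in \cM \setminus \superset(M_t) \subseteq \cM$.

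The heart of the argument is then a single chain of inequalities on $\underline{w}_t(e)$. On one side, since $e$ is the $\underline{\bw}_t$-minimizer of $M$ and, on $\xi$, the lower confidence bounds never exceed the true means, we get $\underline{w}_t(e) = \wmin(M, \underline{\bw}_t) \leq \wmin(M, \bw) \leq \max_{M' \in \cM: e \in M'} \wmin(M', \bw) = w(e) - \Delta^\fc_e$, where the last equality is exactly Definition~\ref{def:fc_gap}(b) (applicable because $e \in \tilde{N}$). On the other side, again on $\xi$, $\underline{w}_t(e) = \hat{w}_t(e) - \rad_t(e) > w(e) - 2\rad_t(e) > w(e) - \Delta^\fc_e$, using $\rad_t(e) < \frac{\Delta^\fc_e}{2}$. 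Combining the two bounds gives $w(e) - \Delta^\fc_e < \underline{w}_t(e) \leq w(e) - \Delta^\fc_e$, a contradiction; hence $p_t \neq e$.

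I do not expect a genuine obstacle here — the statement is a direct consequence of the confidence event and the definition of the gap, and is easier than Lemma~\ref{lemma:blucb_e_in_M_star} precisely because ``$e$ is the bottleneck of a super arm it belongs to'' is the exact configuration that $\Delta^\fc_e$ in case~(b) is built to control. The only point requiring a little care is the case analysis: one must confirm that whenever $p_t = e$, the arm $e$ really is the $\underline{\bw}_t$-bottleneck of a feasible super arm containing it (immediate for $c_t$ via $M_t$, and for $d_t$ via the membership $\tilde{M}_t \in \cM$). Once this lemma is established, it will be combined with Lemma~\ref{lemma:blucb_e_in_M_star} and Lemma~\ref{lemma:blucb_e_notin_M_star_lower} to bound the number of pulls of each base arm by $O(R^2 (\Delta^\fc_e)^{-2} \ln(4nt^3/\delta))$, and then summed to yield the sample-complexity bound of Theorem~\ref{thm:bottleneck_baseline_ub}.
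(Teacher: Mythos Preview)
Your proposal is correct and takes essentially the same approach as the paper: assume $p_t=e$, note that $e$ is then the $\underline{\bw}_t$-bottleneck of some feasible super arm $M$ containing it (either $M_t$ or $\tilde{M}_t$), and derive a contradiction from $\rad_t(e)<\frac{\Delta^\fc_e}{2}$ together with Definition~\ref{def:fc_gap}(b). The only cosmetic difference is that the paper exhibits the specific element $\emin(M,\bw)$ and shows $\underline{w}_t(e)>\underline{w}_t(\emin(M,\bw))$, whereas you bound $\underline{w}_t(e)$ above by $w(e)-\Delta^\fc_e$ via $\wmin(M,\underline{\bw}_t)\leq\wmin(M,\bw)\leq\max_{M':e\in M'}\wmin(M',\bw)$ and below by $w(e)-2\rad_t(e)$; both arrive at the same contradiction.
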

\begin{proof}
	Suppose that for some  $e \notin M_*, w(e) \geq \wmin(M_*, \bw)$, $\rad_t(e)<\frac{\Delta^\fc_e}{2}=\frac{1}{2}(w(e)-\max_{M \in \cM: e\in M} \wmin(M, \bw))$ and $p_t = e$. 
	According to the selection strategy of $p_t$, we have that $\rad_t(c_t) < \frac{\Delta^\fc_e}{2}$ and $\rad_t(d_t) < \frac{\Delta^\fc_e}{2}$.
	
	Since $e \notin M_*$, $e$ is selected from a sub-optimal super arm $M$. One of $M_t$ and $\tilde{M}_t$ is $M$ such that $e=\emin(M, \underline{\bw}_t)$. Since $w(e) \geq \wmin(M_*, \bw) > \wmin(M, \bw)$, we have $w(e)-\wmin(M, \bw)=w(e)-w(\emin(M, \bw))>0$. Then, 
	\begin{align*}
	\underline{w}(e)-\underline{w}(\emin(M, \bw)) \geq & w(e)-2\rad_t(e)-\wmin(M, \bw)
	\\
	> & w(e)-\wmin(M, \bw)-\Delta^\fc_e
	\\
	\geq & 0,
	\end{align*}
	which contradicts $e=\emin(M, \underline{\bw}_t)$.
\end{proof}

\begin{lemma} \label{lemma:blucb_e_notin_M_star_lower}
	Assume that event $\xi$ occurs. For any $e \notin M_*, w(e) < \wmin(M_*, \bw)$, if $\rad_t(e)<\frac{\Delta^\fc_e}{4}=\frac{1}{4}(\wmin(M_*, \bw)-\max_{M \in \cM: e\in M} \wmin(M, \bw))$, then, base arm $e$ will not be pulled at round $t$, i.e., $p_t \neq e$.
\end{lemma}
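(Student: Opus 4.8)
The plan is to argue by contradiction, in the same style as Lemmas~\ref{lemma:blucb_e_in_M_star} and~\ref{lemma:blucb_e_notin_M_star_upper}. Assume $\xi$ holds, that $e \notin M_*$ with $w(e) < \opt$, that $\rad_t(e) < \Delta^\fc_e/4$, and that nevertheless $p_t = e$. From the sampling rule (Lines~\ref{line:blucb_c_t}--\ref{line:blucb_p_t}), $p_t = e$ forces $e \in \{c_t, d_t\}$ with $\rad_t(c_t) \le \rad_t(e) < \Delta^\fc_e/4$ and $\rad_t(d_t) \le \rad_t(e) < \Delta^\fc_e/4$. Hence $e$ is the $\underline{\bw}_t$-bottleneck of one of the two critical super arms; call it $M$ (so $M = M_t$ if $e = c_t$, and $M = \tilde M_t$ if $e = d_t$). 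Then $e \in M$ and $M \neq M_*$ (since $e \notin M_*$), and by Definition~\ref{def:fc_gap} applied through any super arm containing $e$ we get $\wmin(M,\bw) \le \max_{M' \in \cM : e \in M'}\wmin(M',\bw) = \opt - \Delta^\fc_e$ as well as $w(e) \ge \opt - \Delta^\fc_e$.

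The first, easier branch is when $e$ is not the genuine bottleneck of $M$, i.e.\ $w(e) > \wmin(M,\bw)$. Here I would reuse the computation of Lemma~\ref{lemma:blucb_e_notin_M_star_upper}: from $e = \emin(M,\underline{\bw}_t)$ we have $\underline{w}_t(e) \le \underline{w}_t(\emin(M,\bw))$, and under $\xi$ this chain forces $2\rad_t(e) \ge w(e) - \wmin(M,\bw)$; one then shows the right-hand side is at least $\Delta^\fc_e/2$ by combining $w(e) \ge \opt - \Delta^\fc_e$ with the non-termination inequality $\wmin(M_t,\underline{\bw}_t) < \wmin(\tilde M_t,\bar{\bw}_t)$ and the fact that $\wmin(\tilde M_t,\bar{\bw}_t) \ge \wmin(M_*,\bar{\bw}_t) \ge \opt$ whenever $M_t \neq M_*$ (which holds here since $M_t \subseteq M_*$ would make $M_t$ optimal, contradicting uniqueness of $M_*$, so $M_* \in \cM \setminus \superset(M_t)$). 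This contradicts $\rad_t(e) < \Delta^\fc_e/4$. I would also dispose of the sub-situation $M_t = M_*$ directly: then $c_t \in M_*$, so $e \neq c_t$, forcing $e = d_t$ and $\tilde M_t$ to be a sub-optimal super arm containing $e$, to which the same $\underline{w}_t(e) \le \underline{w}_t(\emin(\tilde M_t,\bw))$ comparison applies.

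The main obstacle is the remaining case, where $e$ is genuinely the bottleneck of $M$ (so $w(e) = \wmin(M,\bw) = \opt - \Delta^\fc_e$ exactly): now the slack $w(e) - \wmin(M,\bw)$ vanishes, and the Lemma~\ref{lemma:blucb_e_notin_M_star_upper}-style argument no longer produces a contradiction by itself. This is precisely the feature that distinguishes $N$ from $\tilde N$: for $e \in N$ the sub-optimality of the super arms through $e$ is witnessed only relative to $M_*$, not by $w(e)$ being large. I expect the proof therefore has to carry $M_*$ into the picture explicitly — using oracle optimality $\wmin(M_t,\underline{\bw}_t) \ge \wmin(M_*,\underline{\bw}_t) \ge \opt - 2\max_{g \in M_*}\rad_t(g)$ together with $\wmin(M_t,\underline{\bw}_t) = \underline{w}_t(e) < w(e) = \opt - \Delta^\fc_e$ to conclude that the $\underline{\bw}_t$-bottleneck arm $g^\ast$ of $M_*$ has $\rad_t(g^\ast) > \Delta^\fc_e/2 > \rad_t(c_t), \rad_t(d_t)$, and then showing (via the non-termination condition and $\wmin(\tilde M_t,\bar{\bw}_t) \ge \opt$) that this is incompatible with the selection of $c_t, d_t$ and with $\rad_t(c_t), \rad_t(d_t) \le \rad_t(e) < \Delta^\fc_e/4$. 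Getting this incompatibility to close cleanly — in particular ruling out that $g^\ast$ simply fails to be $c_t$ or $d_t$ at this round — is the step I expect to require the most care.
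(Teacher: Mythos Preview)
Your decomposition --- by whether $e$ is the ``genuine'' bottleneck of the super arm $M \in \{M_t,\tilde{M}_t\}$ from which it was selected --- does not close, and both branches have real gaps. In the first branch you assert $w(e)-\wmin(M,\bw)\ge \Delta^\fc_e/2$, but the ingredients you list do not yield this; the non-termination inequality and $\wmin(\tilde{M}_t,\bar{\bw}_t)\ge \opt$ give no extra upper bound on $\wmin(M,\bw)$. A concrete counter-situation (with $e=c_t$): $\opt=1$, $\Delta^\fc_e=0.4$ so $\max_{M'\ni e}\wmin(M',\bw)=0.6$, $w(e)=0.7$, and $\wmin(M_t,\bw)=0.55$ --- all consistent with $e\in N$, $e\in M_t$, and $e$ not being the true bottleneck --- gives $w(e)-\wmin(M_t,\bw)=0.15<\Delta^\fc_e/2$. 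Your handling of the sub-situation $M_t=M_*$ has the same defect. In the second branch you correctly identify the obstruction (that $g^\ast=\emin(M_*,\underline{\bw}_t)$ need not equal $c_t$ or $d_t$ when neither $M_t$ nor $\tilde{M}_t$ is $M_*$), but you do not overcome it; bounding $\rad_t(g^\ast)$ yields nothing once $g^\ast\notin\{c_t,d_t\}$.

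The paper's proof uses a different case split and two ideas you are missing. The split is: (i) one of $M_t,\tilde{M}_t$ equals $M_*$; (ii) neither does and $e=c_t$; (iii) neither does and $e=d_t$. In Case~(i) one shows the stopping condition is already met. In Case~(ii) the contradiction is extracted from the \emph{other} candidate $d_t$: since $M_*\in\cM\setminus\superset(M_t)$ one has $\bar{w}_t(d_t)\ge\wmin(\tilde{M}_t,\bar{\bw}_t)\ge\opt$, while $\underline{w}_t(d_t)=\wmin(\tilde{M}_t,\underline{\bw}_t)\le\wmin(M_t,\underline{\bw}_t)\le\wmin(M_t,\bw)\le\opt-\Delta^\fc_e$, so $2\rad_t(d_t)\ge\Delta^\fc_e$, contradicting $\rad_t(d_t)<\Delta^\fc_e/4$. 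In Case~(iii) one splits at the midpoint $c=\tfrac12(\opt+\wmin(\tilde{M}_t,\bw))$: if $w(e)>c$ then $\underline{w}_t(e)>\wmin(\tilde{M}_t,\bw)\ge\underline{w}_t(\emin(\tilde{M}_t,\bw))$, contradicting $e=\emin(\tilde{M}_t,\underline{\bw}_t)$; if $w(e)\le c$ then $\wmin(\tilde{M}_t,\bar{\bw}_t)\le\bar{w}_t(e)<\opt\le\wmin(M_*,\bar{\bw}_t)$, contradicting the optimality of $\tilde{M}_t$ over $\cM\setminus\superset(M_t)$. Neither of these arguments is visible in your outline.
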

\begin{proof}
	Suppose that for some  $e \notin M_*, w(e) < \wmin(M_*, \bw)$, $\rad_t(e)<\frac{\Delta^\fc_e}{4}=\frac{1}{4}(\wmin(M_*, \bw)-\max_{M \in \cM: e\in M} \wmin(M, \bw))$ and $p_t = e$. 
	According to the selection strategy of $p_t$, we have that $\rad_t(c_t) < \frac{\Delta^\fc_e}{4}$ and $\rad_t(d_t) < \frac{\Delta^\fc_e}{4}$.
	
	Case (i): If one of $M_t$ and $\tilde{M}_t$ is $M_*$, then the other is a sub-optimal super arm $M$ such that $e=\emin(M, \underline{\bw}_t)$. Let $f=\emin(M_*, \underline{\bw}_t)$. $\{e, f\}=\{c_t, d_t\}$. Then, we have
	\begin{align*}
	\underline{w}(f)-\bar {w}(e) \geq & w(f)-2\rad_t(f)-\underline{w}(e)-2\rad_t(e)
	\\
	> & w(f)-\underline{w}(e)-\Delta^\fc_e
	\\
	\geq & \wmin(M_*, \bw)-\underline{w}(\emin(M, \bw))-\Delta^\fc_e
	\\
	\geq & \wmin(M_*, \bw)-\wmin(M, \bw)-\Delta^\fc_e
	\\
	\geq & 0.
	\end{align*}
	Thus,
	\begin{align*}
	\wmin(M_*, \bw) \geq \underline{w}(\emin(M_*, \bw)) \geq \underline{w}(f)>\bar {w}(e)\geq \wmin(M, \bar{\bw}_t)
	\end{align*}
	and algorithm $\algbottleneck$ must have stopped, which gives a contradiction.
	
	Case (ii): 
	If neither $M_t$ nor $\tilde{M}_t$ is $M_*$ and $e = c_t$, i.e., $e=\emin(M_t, \underline{\bw}_t)$, we have
	\begin{align*}
	\bar {w}(d_t) \geq  \wmin(\tilde{M}_t, \bar{\bw}_t) \geq  \wmin(M_*, \bar{\bw}_t) \geq  \wmin(M_*, \bw)
	\end{align*} 
	and
	\begin{align*}
	\underline{w}(d_t) = \wmin(\tilde{M}_t, \underline{\bw}_t) \leq \wmin(M_t, \underline{\bw}_t) \leq \wmin(M_t, \bw),
	\end{align*} 
	and thus
	\begin{align*}
	2 \rad_t(d_t) = & \bar {w}(d_t)-\underline{w}(d_t)
	\\
	\geq & \wmin(M_*, \bw) -\wmin(M_t, \bw),
	\end{align*} 
	which contradicts $\rad_t(d_t) < \frac{\Delta^\fc_e}{4} <\frac{\Delta^\fc_e}{2} \leq \frac{\wmin(M_*, \bw) -\wmin(M_t, \bw)}{2}$.
	
	Case (iii): If neither $M_t$ nor $\tilde{M}_t$ is $M_*$ and $e = d_t$, i.e., $e=\emin(\tilde{M}_t, \underline{\bw}_t)$. Let $c(M_*, \tilde{M}_t)=\frac{1}{2}(\wmin(M_*, \bw)+\wmin(\tilde{M}_t, \bw))$.
	If $c(M_*, \tilde{M}_t) < w(e) < \wmin(M_*, \bw)$, we have 
	\begin{align*}
	\underline{w}(e) \geq & w(e) - 2 \rad_t(e)
	\\
	> & c(M_*, \tilde{M}_t) - \frac{\Delta^\fc_e}{2}
	\\
	\geq & \frac{1}{2}(\wmin(M_*, \bw)+\wmin(\tilde{M}_t, \bw)) - \frac{1}{2}(\wmin(M_*, \bw)- \wmin(\tilde{M}_t, \bw))
	\\
	= & \wmin(\tilde{M}_t, \bw)
	\\
	\geq & \underline{w}(\emin(\tilde{M}_t, \bw)),
	\end{align*}
	which contradicts $e=\emin(\tilde{M}_t, \underline{\bw}_t)$.
	
	If $\wmin(\tilde{M}_t, \bw) \leq w(e) \leq c(M_*, \tilde{M}_t)$, we have 
	\begin{align*}
	\wmin(\tilde{M}_t, \bar{\bw}_t) \leq & \bar {w}(e)
	\\
	\leq & w(e) + 2 \rad_t(e)
	\\
	< & c(M_*, \tilde{M}_t) + \frac{\Delta^\fc_e}{2}
	\\
	\leq & \frac{1}{2}(\wmin(M_*, \bw)+\wmin(\tilde{M}_t, \bw)) + \frac{1}{2}(\wmin(M_*, \bw)- \wmin(\tilde{M}_t, \bw))
	\\
	= & \wmin(M_*, \bw)
	\\
	\leq & \wmin(M_*, \bar{\bw}_t ).
	\end{align*} 
	In addition, from the uniqueness of $M_*$, we have $M_* \notin \superset(\tilde{M}_t)$.
	Thus, the inequality $\wmin(\tilde{M}_t, \bar{\bw}_t) < \wmin(M_*, \bar{\bw}_t )$ violates the optimality of $\tilde{M}_t$ with respect to $ \bar{\bw}_t$. 
\yihan{Revised the inequality and added the explanation.}
\end{proof}

Next, we prove Theorem~\ref{thm:bottleneck_baseline_ub}.
\begin{proof}
	For any $e \in [n]$, let $T(e)$ denote the number of samples for base arm $e$, and $t_e$ denote the last timestep at which $e$ is pulled. Then, we have $T_{t_e}=T(e)-1$. Let $T$ denote the total number of samples.
	According to Lemmas~\ref{lemma:blucb_e_in_M_star}-\ref{lemma:blucb_e_notin_M_star_lower}, we have
	\begin{align*}
		R \sqrt{ \frac{2 \ln (\frac{4 n t_e^3}{\delta})}{T(e)-1} } \geq \frac{1}{4} \Delta^\fc_e
	\end{align*}
	Thus, we obtain
	\begin{align*}
		T(e) \leq \frac{32 R^2 }{(\Delta^\fc_e)^2}\ln \sbr{\frac{4 n t_e^3}{\delta}} +1 \leq \frac{32 R^2 }{(\Delta^\fc_e)^2}\ln \sbr{\frac{4 n T^3}{\delta}} +1
	\end{align*}
	Summing over $e \in [n]$, we have
	\begin{align*}
	T \leq \sum_{e \in [n]} \frac{32 R^2 }{(\Delta^\fc_e)^2} \ln \sbr{\frac{4 n T^3 }{\delta}   } + n \leq \sum_{e \in [n]} \frac{96 R^2 }{(\Delta^\fc_e)^2} \ln \sbr{\frac{2 n T }{\delta}   } + n,
	\end{align*}
	where $\sum_{e \in [n]} \frac{ R^2 }{(\Delta^\fc_e)^2} \geq n$.
	Then, applying Lemma~\ref{lemma:technical_tool}, we have
	\begin{align*}
		T \leq & \sum_{e \in [n]} \frac{576 R^2 }{(\Delta^\fc_e)^2} \ln \sbr{\frac{2 n^2  }{\delta}  \sum_{e \in [n]} \frac{96 R^2 }{(\Delta^\fc_e)^2} } + n
		\\
		= & O \sbr{ \sum_{e \in [n]} \frac{ R^2 }{(\Delta^\fc_e)^2} \ln \sbr{  \sum_{e \in [n]} \frac{ R^2 n^2  }{(\Delta^\fc_e)^2 \delta} } + n }
		\\
		= & O \sbr{ \sum_{e \in [n]} \frac{ R^2 }{(\Delta^\fc_e)^2} \ln \sbr{  \sum_{e \in [n]} \frac{ R^2 n  }{(\Delta^\fc_e)^2 \delta} }  }
	\end{align*}
\end{proof}

\subsection{Details for the Improved Algorithm $\algbottleneckparallel$}
\label{apx:verification}

In this subsection, we describe algorithm $\algbottleneckparallel$ and its sub-algorithms in details, present the pseudo-code of the offline subroutine $\findewinset$ in Algorithm~\ref{alg:findeminset} and give the proofs of theoretical results.

\subsubsection{Detailed Algorithm Description} \label{apx:verification_algorithm_details}
$\algbottleneckparallel$ simulates multiple sub-algorithms 
$\algbottleneckverify$ (Algorithm~\ref{alg:bottleneck_verify}) with different confidence parameters in parallel.
For $k \in \mathbb{N}$, $\algbottleneckverify_k$ denotes the sub-algorithm $\algbottleneckverify$ with confidence parameter $\delta^V_k=\frac{\delta}{2^{k+1}}$\;
At each timestep $t$, we start or resume sub-algorithms $\algbottleneckverify_k$ such that $t$ is divisible by $2^k$ with only one sample, and then suspend these sub-algorithms.
Such parallel simulation is performed until there exist some sub-algorithm which terminates and returns the answer $M_\textup{out}$, and then we output $M_\textup{out}$ as the answer of $\algbottleneckparallel$.

Algorithm $\algbottleneckverify$ (Algorithm~\ref{alg:bottleneck_verify}) calls Algorithm $\algbottleneckexplore$ (Algorithm~\ref{alg:bottleneck_explore}) as its preparation procedure.
Algorithm $\algbottleneckexplore$ uses a big constant confidence parameter $\kappa$ to guesses an optimal super arm and an advice set $\hat{B}_{\sub}$ which contains the bottleneck base arms for the sub-optimal super arms, and then algorithm $\algbottleneckverify$ verifies the correctness of the answer provided by $\algbottleneckexplore$ with the given  confidence parameter $\delta^V$. 

$\algbottleneckexplore$ first calculates an super arm $M_t$ with the maximum pessimistic bottleneck value, and then uses a subroutine $\findewinset$ (Algorithm~\ref{alg:findeminset}) to find the set of bottleneck base arms $\hat{B}_{\sub,t}$ from all super arms in $\cM \setminus \superset(M_t)$ with respect to the lower reward confidence bound $\underline{\bw}_t$.
Then, we check whether for any base arm $e \in \hat{B}_{\sub,t}$, the optimistic reward of $e$ is lower than a half of its pessimistic reward plus the pessimistic bottleneck value of $M_t$.
If this stopping condition holds, we simply return $M_t$ as the hypothesized optimal super arm and $\hat{B}_{\sub,t}$ as the advice set.
Otherwise, we find the bottleneck $c_t$ from $M_t$ with respect to $\underline{\bw}_t$, and collect the set of base arms $\hat{B}'_{\sub,t}$ which violate the stopping condition from $\hat{B}_{\sub,t}$.
Let $P_t^{E} \overset{\textup{def}}{=} \hat{B}'_{\sub,t} \cup \{ c_t \}$ denote the sampling set of $\algbottleneckexplore$. We plays the base arm with the maximum confidence radius in $P_t^{E}$.

$\algbottleneckverify$ first calculates the super arm $\tilde{M_t}$ with the maximum optimistic bottleneck value in $\cM \setminus \superset(\hat{M}_*)$, and checks whether the pessimistic bottleneck value of $\hat{M}_*$ is higher than the optimistic bottleneck value of $\tilde{M_t}$. If so, we can determine that the answer $\hat{M}_*$ is correct, and simply stop and return $\hat{M}_*$. 
Otherwise, we find the bottleneck $c_t$ from $M_t$ with respect to the lower reward confidence bound $\underline{\bw}_t$, and
collect the set of base arms $F_t$ whose optimistic rewards are higher than the pessimistic bottleneck value of $\hat{M}_*$ from $\hat{B}_{\sub}$. Let $P_t^{V} \overset{\textup{def}}{=} F_t \cup \{ c_t \}$ denote the sampling set of $\algbottleneckverify$. We  samples the base arm with the maximum confidence radius in $P_t^{V}$.

Now, we discuss the skillful subroutine $\findewinset$, which is formally defined as follows.
\begin{definition}[$\findewinset$]\label{def:findewinset}
	We define $\findewinset(\cM, M, \bv)$ as an algorithm that takes decision class $\cM$, super arm $M \in \cM$ and weight vector $\bv \in \mathbb{R}^d$ as inputs and returns a set of base arms $A_{\textup{out}}$ that satisfies (i) for any $M' \in \cM \setminus \superset(M)$, there exists a base arm $e \in A_{\textup{out}} \cap M'$ such that $\bv(e)=\wmin(M',\bv)$, and (ii) for any $e \in A_{\textup{out}}$, there exists a super arm $M' \in \cM \setminus \superset(M)$ such that $e \in M'$ and $\bv(e)=\wmin(M',\bv)$.
\end{definition}

\begin{algorithm}[t]
	\caption{$\findewinset$, offline subroutine of $\algbottleneckexplore$} \label{alg:findeminset}
	\begin{multicols}{2}
		\begin{algorithmic}[1]
			\STATE \textbf{Input:} $\cM$, $M_{\exclude}$, $\bv$ and existence oracle $\existoracle(\cF, e)$: check if there exists a feasible super arm $M \in \cF$ such that $e \in M$, and return $M$ if there exists and $\perp$ otherwise.
			\STATE $A_{\textup{out}} \leftarrow \varnothing$\;
			\FOR{$e \in [n] $}
			\STATE Remove all base arms with  rewards lower than $v(e)$ from $\cM$, and obtain $\cM_{\geq v(e)}$\;
			\label{line:findeminset_cM_geq_v_e}
			\IF{$e \!\in\! M_{\exclude}$ \textup{and} $v(e) \!=\! \wmin(M_{\exclude}, \bv)$}
			\label{line:findeminset_special_case_if}
			\FOR{\textup{each} $e_0 \in M_{\exclude} \setminus \{e\}$} 
			\label{line:findeminset_special_case_start}
			\STATE Remove $e_0$ from $\cM_{\geq v(e)}$, and obtain $\cM_{\geq v(e), -e_0}$
			\STATE \textbf{if} $\existoracle(\cM_{\geq v(e), -e_0}, e ) \neq \perp$, then
			$A_{\textup{out}} \leftarrow A_{\textup{out}} \cup \{e\}$ and \textbf{break}\;
			\ENDFOR
			\label{line:findeminset_special_case_end}
			\ELSIF{$\existoracle(\cM_{\geq v(e)}, e ) \neq \perp$}
			\label{line:findeminset_elseif_exist}
			\STATE $A_{\textup{out}} \leftarrow A_{\textup{out}} \cup \{e\}$\;
			\label{line:findeminset_include_e}
			\ENDIF
			\ENDFOR
			\STATE \textbf{return} $A_{\textup{out}}$\;
		\end{algorithmic}
	\end{multicols}
	\vspace*{-1em}
\end{algorithm}

Algorithm~\ref{alg:findeminset} illustrates the implementation procedure of $\findewinset$.
We access the subroutine $\findewinset$ an efficient existence oracle $\existoracle(\cF, e)$, which returns a feasible super arm that contains base arm $e$ from decision class $\cF$ if there exists, and otherwise returns  $\perp$, i.e., $\existoracle(\cF, e) \in \{M \in \cF : e \in M\}$.
Such efficient oracles exist for a wide family of decision classes. For example, for $s$-$t$ paths, this problem can be reduced to the well-studied $2$-vertex connectivity problem~\cite{two_vertex_connectivity2018}, which is polynomially tractable (see Section~\ref{sec:reduce_2_vertex} for the proof of reduction). For maximum cardinality matchings, we just need to remove $e$ and its two end vertices, and then find a feasible maximum cardinality matching in the remaining graph. For spanning trees, we can just merge the vertices of $e$ and find a feasible spanning tree in the remaining graph, which can also be solved efficiently.

In the subroutine $\findewinset$, we enumerate all the base arms to collect the bottlenecks.
For each enumerated base arm $e$, we first remove all base arms with the rewards lower than $w(e)$ from $\cM$ and obtain a new decision class $\cM_{\geq w(e)}$, in which the super arms only contain the base arms with the rewards at least $w(e)$. Then, we call $\existoracle$ to check whether there exists a feasible super arm $M_e$ that contains $e$ in $\cM_{\geq w(e)}$. If there exist, then we obtain that $e$ is the bottleneck of $M_e$ with respect to $\bv$, i.e., $e \in M_e$ and $w(e)=\wmin(M_e, \bv)$, and add $e$ to the output set $A_{\textup{out}}$. Otherwise, we can determine that $e$ is not the bottleneck for any super arm in $\cM \setminus \superset(M_t)$ with respect to $\bv$. However, for the particular $e$ such that $e \in M_t$ and $w(e)=\wmin(M_t, \bv)$, directly calling $\existoracle$ can obtain the output $M_t$, which should be excluded. We solve this problem by repeatedly removing each base arm in $M_t \setminus \{e\}$ and then calling $\existoracle$ on the new decision classes, which can check whether $e$ is some super arm's bottleneck apart from $\cM \setminus \superset(M_t)$.

\subsubsection{Proof for Algorithm $\algbottleneckparallel$}
Below we give the theoretical results for the proposed algorithms.

For algorithm $\algbottleneckexplore$, define the events 
$
\xi_{0,t}=\left\{ \forall e \in [n],\  | w(e)-\hat{w}_t(e) | < \rad_t(e) \right\}
$
and 
$ 
\xi_0=\bigcap \limits_{t=1}^{\infty} \xi_{0,t}  
$.
Then, similar to Lemma~\ref{lemma:concentration}, we have 
$\Pr[\xi_0] \geq 1-\kappa$.
For algorithm $\algbottleneckverify$, define the events 
$
\xi_{t}=\left\{ \forall e \in [n],\  | w(e)-\hat{w}_t(e) | < \rad_t(e) \right\}
$
and 
$ 
\xi=\bigcap \limits_{t=1}^{\infty} \xi_{t}  
$. Then, applying Lemma~\ref{lemma:concentration}, we have 
$\Pr[\xi] \geq 1-\delta^V$.
Let $M_{\textup{second}}=\argmax_{M \in \cM \setminus \{M_*\}} \wmin(M,w)$ denote the second best super arm.

\begin{lemma}[Correctness of $\algbottleneckexplore$] \label{lemma:bottleneck_explore_correctness}
	For algorithm $\algbottleneckexplore$, assume that event $\xi_0$ occurs. Then, if algorithm $\algbottleneckexplore$  (Algorithm~\ref{alg:bottleneck_explore}) terminates at round $t$, we have that (i) $M_t=M_*$, (ii) for any $M \in  \cM \setminus \superset(M_*)$, there exists a base arm $e \in \hat{B}_{\sub,t} \cap M$ satisfying $w(e) \leq \frac{1}{2}(\wmin(M_*, \bw)+\wmin(M, \bw))$, i.e., $\Delta^\fc_{\wmin(M_*, \bw),e} \geq \frac{1}{2}\Delta^\fc_{M_*,M}$ and (iii) for any $e \in \hat{B}_{\sub,t}$, there exists a sub-optimal super arm $M \in \cM \setminus \superset(M_*)$ such that $e \in M$ and $w(e) \leq \frac{1}{2} ( \wmin(M_*, \bw) + \wmin (M) ) \leq \frac{1}{2} ( \wmin(M_*, \bw) + \wmin (M_{\textup{second}}, \bw) ) $.
\end{lemma}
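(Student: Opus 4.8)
The plan is to prove the three claims by unwinding the stopping condition (Line~\ref{line:blucb_explore_stop_condition}) together with the defining guarantees of $\findewinset$ (Definition~\ref{def:findewinset}), all under the good event $\xi_0$. The key observation, which is already sketched in the main text, is that when the stopping condition triggers, for every base arm $e \in \hat{B}_{\sub,t}$ we have $\bar w_t(e) \le \frac12(\wmin(M_t, \underline{\bw}_t) + \underline{w}_t(e))$, and under $\xi_0$ the true means are sandwiched between $\underline{\bw}_t$ and $\bar{\bw}_t$. First I would establish claim~(i): take any $M' \in \cM \setminus \superset(M_t)$ and let $e'$ be its $\underline{\bw}_t$-bottleneck, so $\underline{w}_t(e') = \wmin(M', \underline{\bw}_t)$; by Definition~\ref{def:findewinset}(i), $e' \in \hat{B}_{\sub,t}$, so the stopping condition applies to $e'$, giving $w(e') \le \bar w_t(e') \le \frac12(\wmin(M_t,\underline{\bw}_t) + \underline{w}_t(e')) = \frac12(\wmin(M_t,\underline{\bw}_t) + \wmin(M',\underline{\bw}_t)) \le \frac12(\wmin(M_t,\bw) + \wmin(M',\bw))$. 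Since $\wmin(M',\bw) \le w(e')$, this forces $\wmin(M',\bw) \le \wmin(M_t,\bw)$ for every $M' \notin \superset(M_t)$; combined with the monotonicity of the bottleneck function on supersets (so $\wmin(M',\bw) \le \wmin(M_t,\bw)$ for $M' \in \superset(M_t)$ too) and the uniqueness of $M_*$, we conclude $M_t = M_*$.

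Claim~(ii) then follows by re-reading the same chain of inequalities with $M_t$ replaced by $M_*$: for any $M \in \cM \setminus \superset(M_*)$, its $\underline{\bw}_t$-bottleneck $e$ lies in $\hat{B}_{\sub,t}$ and satisfies $w(e) \le \frac12(\wmin(M_*,\bw) + \wmin(M,\bw))$, which is exactly the claimed $\Delta^\fc_{\wmin(M_*,\bw),e} \ge \frac12 \Delta^\fc_{M_*,M}$ once one expands the gap notation. For claim~(iii), I would use Definition~\ref{def:findewinset}(ii): every $e \in \hat{B}_{\sub,t}$ is the $\underline{\bw}_t$-bottleneck of \emph{some} $M \in \cM \setminus \superset(M_*)$, so the same stopping-condition argument gives $w(e) \le \frac12(\wmin(M_*,\bw) + \wmin(M,\bw))$; and since $M \ne M_*$, we have $\wmin(M,\bw) \le \wmin(M_{\textup{second}},\bw)$ by definition of the second-best super arm, yielding the final bound.

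The main subtlety — and where I expect to spend the most care — is the interface between $\findewinset$ and $\superset(M_t)$ when $M_t$ is only the \emph{current-round} optimal under $\underline{\bw}_t$, not yet known to equal $M_*$. One has to be careful that claims~(ii) and~(iii) are stated relative to $M_*$ while $\findewinset$ in the algorithm is invoked with the argument $M_t$; this is harmless precisely because claim~(i) first pins down $M_t = M_*$, so $\superset(M_t) = \superset(M_*)$ and the guarantees of $\findewinset(\cM, M_t, \underline{\bw}_t)$ transfer verbatim. A second point worth checking explicitly is the special-case branch of $\findewinset$ (Lines~\ref{line:findeminset_special_case_start}--\ref{line:findeminset_special_case_end} of Algorithm~\ref{alg:findeminset}), which handles the base arm that is the bottleneck of $M_{\exclude}$ itself; one should verify that Definition~\ref{def:findewinset}(i),(ii) still hold for the set $A_{\textup{out}}$ it returns, i.e., that this branch neither drops a genuine bottleneck of some $M' \notin \superset(M_t)$ nor wrongly includes an arm that is the bottleneck only of super arms inside $\superset(M_t)$. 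Given Definition~\ref{def:findewinset}, which I am entitled to assume, the rest is the routine sandwiching under $\xi_0$ outlined above.
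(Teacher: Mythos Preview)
Your proposal is correct and follows essentially the same approach as the paper: both arguments combine the stopping condition with the two guarantees of $\findewinset$ (Definition~\ref{def:findewinset}) under $\xi_0$ to derive $w(e') \le \frac{1}{2}(\wmin(M_t,\bw)+\wmin(M',\bw))$, then deduce (i) first and read off (ii) and (iii) once $M_t=M_*$ is established. The only cosmetic difference is that the paper proves (i) by contradiction (if $M_t\neq M_*$, apply the inequality to $M'=M_*$ and note no $e\in M_*$ can satisfy $w(e)<\wmin(M_*,\bw)$), whereas you argue directly that $\wmin(M',\bw)\le w(e')$ together with the displayed bound forces $\wmin(M',\bw)\le \wmin(M_t,\bw)$ for all $M'\notin\superset(M_t)$; these are the same argument in contrapositive form.
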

\begin{proof}
	According to the stop condition (Lines \ref{line:blucb_explore_stop_condition} of Algorithm~\ref{alg:bottleneck_explore}) and the definition of $\findewinset$ (Definition~\ref{def:findewinset}), when algorithm $\algbottleneckexplore$  (Algorithm~\ref{alg:bottleneck_explore}) terminates at round $t$, we have that for any $\cM \setminus \superset(M_t)$, there exists a base arm $e \in \hat{B}_{\sub,t} \cap M$ satisfying 
	$$
	\bar{w}_t(e) \leq  \frac{1}{2} (\wmin(M_t,\underline{\bw}_t)+\underline{w}_t(e)) = \frac{1}{2} ( \wmin(M_t,\underline{\bw}_t)+\wmin(M,\underline{\bw}_t) ) 
	$$
	and thus,
	$$
	w(e) \leq \frac{1}{2} (\wmin(M_t, \bw)+\wmin(M, \bw)).
	$$
	Then, we can obtain $M_t=M_*$. Otherwise, we cannot find any base arm $e \in M_*$ satisfying $w(e) \leq \frac{1}{2} (\wmin(M_t, \bw)+\wmin(M_*, \bw)) < \wmin(M_*, \bw)$, where $M_t$ is a sub-optimal super arm.
	Thus, we have (i) and (ii). 
	
	Now we prove (iii).
	According to the stop condition (Lines \ref{line:blucb_explore_stop_condition} of Algorithm~\ref{alg:bottleneck_explore}), the definition of $\findewinset$ (Definition~\ref{def:findewinset}) and $M_t=M_*$, we have that for any $e \in \hat{B}_{\sub,t}$, there exists a sub-optimal super arm $\cM \setminus \superset(M_*)$ such that $e \in M$ and $\underline{w}_t(e)=\wmin(M,\underline{\bw}_t)$, and thus
	$$
	\bar{w}_t(e) \leq  \frac{1}{2} (\wmin(M_*,\underline{\bw}_t)+\underline{w}_t(e)) = \frac{1}{2} ( \wmin(M_*,\underline{\bw}_t)+\wmin(M,\underline{\bw}_t) ),
	$$
	Then, for any $e \in \hat{B}_{\sub,t}$,
	$$
	w(e) \leq  \frac{1}{2} ( \wmin(M_*, \bw) + \wmin (M) )  \leq  \frac{1}{2} ( \wmin(M_*, \bw) + \wmin (M_{\textup{second}}, \bw) ) , 
	$$
	which completes the proof.
\end{proof}

\begin{lemma} \label{lemma:half_separate_two_base_arms}
	For algorithm $\algbottleneckexplore$, assume that event $\xi_0$ occurs. For any two base arms $e_1, e_2 \in [n]$ s.t. $w(e_1)<w(e_2)$, if $\rad_t(e_1)< \frac{1}{6}\Delta^\fc_{e_2, e_1}$ and $\rad_t(e_2)< \frac{1}{6}\Delta^\fc_{e_2, e_1}$, we have $\bar{w}_t(e_1) < \frac{1}{2}(\underline{w}_t(e_1)+\underline{w}_t(e_2))$.
\end{lemma}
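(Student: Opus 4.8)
The plan is to unfold both sides of the inequality in terms of the two radii $\rad_t(e_1),\rad_t(e_2)$ and the base-arm gap $\Delta^\fc_{e_2,e_1}=w(e_2)-w(e_1)$, and then verify that the hypothesis $\rad_t(e_1),\rad_t(e_2)<\tfrac16\Delta^\fc_{e_2,e_1}$ leaves exactly enough slack. This is a purely arithmetic consequence of the concentration event $\xi_0$, so there is no real obstacle; the only thing to be careful about is choosing, for each of $e_1,e_2$, the right one-sided deviation bound so that the coefficient budget works out under the factor $\tfrac16$.

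First I would record the consequences of $\xi_0$ (cf.\ Lemma~\ref{lemma:concentration}): for every $e\in[n]$ we have $\hat{w}_t(e)<w(e)+\rad_t(e)$ and $\hat{w}_t(e)>w(e)-\rad_t(e)$, hence $\bar{w}_t(e)<w(e)+2\rad_t(e)$, $\underline{w}_t(e)>w(e)-2\rad_t(e)$, and in particular $\underline{w}_t(e)<w(e)$. Next I would rewrite the target: since $\bar{w}_t(e_1)=\underline{w}_t(e_1)+2\rad_t(e_1)$ by definition of the confidence interval, the claim $\bar{w}_t(e_1)<\tfrac12(\underline{w}_t(e_1)+\underline{w}_t(e_2))$ is equivalent to $\underline{w}_t(e_2)-\underline{w}_t(e_1)>4\rad_t(e_1)$.

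Then I would lower bound the left-hand side of this reformulation, using the weak bound $\underline{w}_t(e_1)<w(e_1)$ for $e_1$ and the strong bound $\underline{w}_t(e_2)>w(e_2)-2\rad_t(e_2)$ for $e_2$:
$$
\underline{w}_t(e_2)-\underline{w}_t(e_1) > \bigl(w(e_2)-2\rad_t(e_2)\bigr)-w(e_1) = \Delta^\fc_{e_2,e_1}-2\rad_t(e_2).
$$
Finally I would close the argument with the radius hypothesis: $4\rad_t(e_1)+2\rad_t(e_2)<\tfrac46\Delta^\fc_{e_2,e_1}+\tfrac26\Delta^\fc_{e_2,e_1}=\Delta^\fc_{e_2,e_1}$, so $\Delta^\fc_{e_2,e_1}-2\rad_t(e_2)>4\rad_t(e_1)$, which combined with the previous display gives $\underline{w}_t(e_2)-\underline{w}_t(e_1)>4\rad_t(e_1)$, i.e.\ $\bar{w}_t(e_1)<\tfrac12(\underline{w}_t(e_1)+\underline{w}_t(e_2))$, as desired. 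The ``hard part'' is merely the bookkeeping: one must resist using the tighter $\underline{w}_t(e_1)>w(e_1)-2\rad_t(e_1)$ bound for $e_1$, since that would spend $6\rad_t(e_1)+2\rad_t(e_2)$ and the $\tfrac16$ slack would no longer suffice.
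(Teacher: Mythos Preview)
Your proposal is correct and follows essentially the same route as the paper: both use $\bar{w}_t(e_1)=\underline{w}_t(e_1)+2\rad_t(e_1)$, bound $\underline{w}_t(e_1)\le w(e_1)$ and $\underline{w}_t(e_2)\ge w(e_2)-2\rad_t(e_2)$, and then check that $2\rad_t(e_1)+\rad_t(e_2)<\tfrac12\Delta^\fc_{e_2,e_1}$ (equivalently your $4\rad_t(e_1)+2\rad_t(e_2)<\Delta^\fc_{e_2,e_1}$). Your reformulation to $\underline{w}_t(e_2)-\underline{w}_t(e_1)>4\rad_t(e_1)$ is just a cosmetic rearrangement of the paper's computation of $\bar{w}_t(e_1)-\tfrac12(\underline{w}_t(e_1)+\underline{w}_t(e_2))<0$.
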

\begin{proof}
	if $\rad_t(e_1)< \frac{1}{6}\Delta^\fc_{e_2, e_1}$ and $\frac{1}{6}\Delta^\fc_{e_2, e_1}$, we have
	\begin{align*}
	\bar{w}_t(e_1) - \frac{1}{2}(\underline{w}_t(e_1)+\underline{w}_t(e_2)) & = \underline{w}_t(e_1) + 2 \rad_t(e_1) - \frac{1}{2}(\underline{w}_t(e_1)+\underline{w}_t(e_2))
	\\
	& = \frac{1}{2} \underline{w}_t(e_1)  - \frac{1}{2} \underline{w}_t(e_2) + 2 \rad_t(e_1)
	\\
	& \leq \frac{1}{2} w_t(e_1) - \frac{1}{2} (w_t(e_2)-2 \rad_t(e_2)) + 2 \rad_t(e_1)
	\\
	& = \frac{1}{2} w_t(e_1) - \frac{1}{2} w_t(e_2) + \rad_t(e_2) + 2 \rad_t(e_1)
	\\
	& < \frac{1}{2} w_t(e_1) - \frac{1}{2} w_t(e_2) + \frac{1}{2}\Delta^\fc_{e_2, e_1}
	\\
	& = 0 .
	\end{align*}
\end{proof}

\begin{lemma}
	\label{lemma:blucb_explore_e_in_M_star}
	For algorithm $\algbottleneckexplore$, assume that event $\xi_0$ occurs. For any $e \in M_*$, if $\rad_t(e)<\frac{\Delta^\fc_e}{12}=\frac{1}{12}(w(e)-\max_{M \neq M_*} \wmin(M, \bw))$, then, base arm $e$ will not be pulled at round $t$, i.e., $p_t \neq e$.
\end{lemma}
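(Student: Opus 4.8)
The plan is to argue by contradiction, paralleling the proof of Lemma~\ref{lemma:blucb_e_in_M_star} but adapted to the sampling rule of $\algbottleneckexplore$. Suppose event $\xi_0$ holds, $e \in M_*$, $\rad_t(e) < \tfrac{1}{12}\Delta^\fc_e$, and $p_t = e$. Since $p_t = \argmax_{f \in \hat{B}'_{\sub,t}\cup\{c_t\}}\rad_t(f)$, we get $\rad_t(c_t) \le \rad_t(e) < \tfrac{1}{12}\Delta^\fc_e$ and $\rad_t(f) \le \rad_t(e) < \tfrac{1}{12}\Delta^\fc_e$ for every $f \in \hat{B}'_{\sub,t}$. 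Moreover $p_t = e$ forces either $e = c_t = \emin(M_t,\underline{\bw}_t)$, or $e \in \hat{B}'_{\sub,t} \subseteq \hat{B}_{\sub,t}$, in which case by the definition of $\findewinset$ there is a super arm $M' \in \cM\setminus\superset(M_t)$ with $e = \emin(M',\underline{\bw}_t)$. So in all cases $e$ is the $\underline{\bw}_t$-bottleneck of some super arm, and we split on whether that super arm is sub-optimal or is $M_*$.

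Case A: $e$ is the $\underline{\bw}_t$-bottleneck of a sub-optimal super arm $\bar M$ (namely $\bar M = M_t$ if $e = c_t$ and $M_t \ne M_*$, or $\bar M = M'$ if $e \in \hat{B}'_{\sub,t}$ and $M' \ne M_*$). Since $e \in M_*$, we have $w(e) \ge \opt > \wmin(\bar M,\bw) = w(\emin(\bar M,\bw))$, and $\Delta^\fc_e = w(e) - \max_{M\ne M_*}\wmin(M,\bw) \le w(e) - \wmin(\bar M,\bw)$. Under $\xi_0$, $\underline{w}_t(e) \ge w(e) - 2\rad_t(e) > w(e) - \tfrac16\Delta^\fc_e > \wmin(\bar M,\bw) \ge \underline{w}_t(\emin(\bar M,\bw))$; since $\emin(\bar M,\bw) \in \bar M$ and (being the true bottleneck of a sub-optimal super arm) cannot equal $e$, this contradicts $e = \emin(\bar M,\underline{\bw}_t)$.

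Case B: $e$ is the $\underline{\bw}_t$-bottleneck of $M_*$. First, if $e \in \hat{B}'_{\sub,t}$ witnesses $M' = M_*$, then $M_* \notin \superset(M_t)$, so $M_t$ is sub-optimal, and from $M_t = \oracle(\cM,\underline{\bw}_t)$ we get $\underline{w}_t(e) = \wmin(M_*,\underline{\bw}_t) \le \wmin(M_t,\underline{\bw}_t) \le \wmin(M_t,\bw) \le \max_{M\ne M_*}\wmin(M,\bw) = w(e) - \Delta^\fc_e$, contradicting $\underline{w}_t(e) > w(e) - \tfrac16\Delta^\fc_e$. The remaining possibility is $e = c_t$ with $M_t = M_*$; here $e$ alone yields no contradiction, so instead I show that the stopping condition of $\algbottleneckexplore$ (Line~\ref{line:blucb_explore_stop_condition}) must already have held. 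Since the algorithm did not stop, $\hat{B}'_{\sub,t} \ne \varnothing$; pick any $f \in \hat{B}'_{\sub,t}$, the $\underline{\bw}_t$-bottleneck of a sub-optimal $M'' \in \cM\setminus\superset(M_*)$. If $f = c_t$ then $\underline{w}_t(c_t) = \wmin(M'',\underline{\bw}_t) \le \wmin(M'',\bw) \le w(c_t) - \Delta^\fc_e < \underline{w}_t(c_t)$, a contradiction; otherwise $w(f) \le \wmin(M'',\bw) < \opt \le w(c_t)$ and $\Delta^\fc_{c_t,f} = w(c_t) - w(f) \ge \Delta^\fc_e$, so Lemma~\ref{lemma:half_separate_two_base_arms} applied to $f$ and $c_t$ (using $\rad_t(f),\rad_t(c_t) < \tfrac{1}{12}\Delta^\fc_e < \tfrac16\Delta^\fc_{c_t,f}$) gives $\bar{w}_t(f) < \tfrac12(\underline{w}_t(f) + \underline{w}_t(c_t)) = \tfrac12(\underline{w}_t(f) + \wmin(M_t,\underline{\bw}_t))$, which contradicts $f \in \hat{B}'_{\sub,t}$.

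The step I expect to be the real obstacle is the last one, $M_t = M_*$ with $e = c_t$: unlike in Lemma~\ref{lemma:blucb_e_in_M_star}, the ``comparison'' super arm is $M_*$ itself, so one must argue through the ``check-near-bottleneck'' stopping condition rather than the termination test of $\algbottleneck$, and this is exactly where the $\tfrac{1}{12}$ (instead of $\tfrac14$) slack in the hypothesis is consumed — via Lemma~\ref{lemma:half_separate_two_base_arms}, which itself costs a factor $\tfrac16$, together with the factor $\tfrac12$ from the half-threshold. A minor technical nuisance is the degenerate possibility that $c_t$ itself lies in $\hat{B}_{\sub,t}$, i.e. an exact tie in the $\wmin(\cdot,\underline{\bw}_t)$ values of $M_*$ and another super arm; as shown above this still yields a contradiction directly, but it can also simply be excluded under $\xi_0$ (a probability-zero event for continuous reward distributions) or by a consistent tie-breaking convention in $\oracle$.
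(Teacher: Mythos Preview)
Your overall case structure is fine and closely mirrors the paper's proof, but the final branch contains a genuine gap. In the subcase $M_t=M_*$, $e=c_t$, $f\in\hat{B}'_{\sub,t}$ with $f\neq c_t$, you assert $w(f)\le\wmin(M'',\bw)$. This is not justified: $f$ is the $\underline{\bw}_t$-bottleneck of $M''$, i.e.\ $\underline{w}_t(f)=\wmin(M'',\underline{\bw}_t)$, but nothing forces $f$ to be the \emph{true} $\bw$-bottleneck of $M''$. Since $f\in M''$, you always have $w(f)\ge\wmin(M'',\bw)$, so your inequality goes the wrong way. Consequently the claimed bound $\Delta^\fc_{c_t,f}\ge\Delta^\fc_e$ is unproven, and your invocation of Lemma~\ref{lemma:half_separate_two_base_arms} is not yet valid.

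The paper closes this gap by splitting on whether $w(f)\ge\tfrac12\bigl(w(e)+\wmin(M'',\bw)\bigr)$. If so, then $\underline{w}_t(f)\ge w(f)-2\rad_t(f)>\wmin(M'',\bw)$, contradicting $\underline{w}_t(f)=\wmin(M'',\underline{\bw}_t)\le\wmin(M'',\bw)$; if not, then $\Delta^\fc_{e,f}>\tfrac12\bigl(w(e)-\wmin(M'',\bw)\bigr)\ge\tfrac12\Delta^\fc_e$, and since $\rad_t(e),\rad_t(f)<\tfrac{1}{12}\Delta^\fc_e\le\tfrac{1}{12}\bigl(w(e)-\wmin(M'',\bw)\bigr)<\tfrac16\Delta^\fc_{e,f}$, Lemma~\ref{lemma:half_separate_two_base_arms} applies. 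An alternative one-line fix for your argument is to observe $\underline{w}_t(f)=\wmin(M'',\underline{\bw}_t)\le\wmin(M'',\bw)$, hence $w(f)<\wmin(M'',\bw)+2\rad_t(f)<\wmin(M'',\bw)+\tfrac16\Delta^\fc_e$, which gives $\Delta^\fc_{c_t,f}>\tfrac56\Delta^\fc_e$; then $\tfrac16\Delta^\fc_{c_t,f}>\tfrac{5}{36}\Delta^\fc_e>\tfrac{1}{12}\Delta^\fc_e$, so Lemma~\ref{lemma:half_separate_two_base_arms} still fires. Either way, your stated inequality $w(f)\le\wmin(M'',\bw)$ must be replaced.
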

\begin{proof}
	Suppose that for some $e \in M_*$, $\rad_t(e)<\frac{\Delta^\fc_e}{12}=\frac{1}{12}(w(e)-\max_{M \neq M_*} \wmin(M, \bw))$ and $p_t = e$. 
	
	According to the selection strategy of $p_t$, we have that for any $i \in P_t^{E}$, $\rad_t(i) \leq \rad_t(e) < \frac{\Delta^\fc_e}{12}$.
	From the definition of $\findewinset$ and $c_t$, we have that for any $i \in P_t^{E}$, there exists a super arm $M \in \cM$ such that $i \in M$ and $\underline{w}_t(i)=\wmin(M, \underline{\bw}_t)$.
	
	
	Case (i): Suppose that $e$ is selected from a sub-optimal super arm $M$, i.e., $e \in M$ and  $\underline{w}_t(e)=\wmin(M, \underline{\bw}_t)$. Then, using $\rad_t(e) < \frac{\Delta^\fc_e}{12} < \frac{\Delta^\fc_e}{2}$, we have
	\begin{align*}
	\underline{w}_t(e) \geq & w(e) - 2\rad_t(e)
	\\
	> & w(e) - ( w(e)- \wmin(M_{\textup{second}}, \bw) )
	\\
	= & \wmin(M_{\textup{second}}, \bw)
	\\ 
	\geq & \wmin(M,\bw)
	\\
	\geq & \wmin(M, \underline{\bw}_t)
	\end{align*}
	which gives a contradiction.

	Case (ii): Suppose that $e$ is selected from $M_*$, i.e., $\underline{w}_t(e)=\wmin(M_*, \underline{\bw}_t)$. 
	We can obtain $M_*=M_t$. Otherwise,
	\begin{align*}
	\wmin(M_*, \underline{\bw}_t) = & \underline{w}_t(e) 
	\\
	\geq & w(e) - 2\rad_t(e)
	\\
	> & w(e) - ( w(e)- \wmin(M_{\textup{second}}, \bw) )
	\\
	= & \wmin(M_{\textup{second}}, \bw)
	\\ 
	\geq & \wmin(M_t,\bw)
	\\
	\geq & \wmin(M_t, \underline{\bw}_t)
	\end{align*}
	Thus, We have $M_*=M_t$.
	From the definition of $\findewinset$, for any $e' \in \hat{B}'_{\sub,t}$, there exists a super arm $M' \in \cM \setminus \{M_*\}$ such that $e' \in M'$ and $\underline{w}_t(e')=\wmin(M', \underline{\bw}_t)$. If $w(e') \geq \frac{1}{2}(w(e)+\wmin(M',\bw)) $, using $\rad_t(e') < \frac{\Delta^\fc_e}{12} < \frac{\Delta^\fc_e}{4}$, we have
	\begin{align*}
	\underline{w}_t(e') \geq & w(e') - 2\rad_t(e')
	\\
	> & \frac{1}{2}(w(e)+\wmin(M',\bw)) - \frac{1}{2}( w(e)-M_{\textup{second}} )
	\\
	\geq & \frac{1}{2}(w(e)+\wmin(M',\bw)) - \frac{1}{2}( w(e)-\wmin(M',\bw) )
	\\ 
	= & \wmin(M',\bw),
	\end{align*}
	which gives a contradiction.
	
	If $w(e') < \frac{1}{2}(w(e)+\wmin(M',\bw)) $,
	we have 
	\begin{align*}
	\Delta^\fc_{e, e'} = & w(e)-w(e')
	\\
	> & w(e) -  \frac{1}{2}( w(e)+\wmin(M',\bw) )
	\\
	= & \frac{1}{2} ( w(e) - \wmin(M',\bw) )
	\end{align*} 
	Since $\rad_t(e) < \frac{\Delta^\fc_e}{12}=\frac{1}{12}(w(e)- \wmin (M_{\textup{second}}) ) \leq \frac{1}{12}(w(e)-  \wmin(M',\bw) ) \leq \frac{1}{6} \Delta^\fc_{e, e'}$ and $\rad_t(e') \leq \rad_t(e) < \frac{1}{6} \Delta^\fc_{e, e'}$, according to Lemma~\ref{lemma:half_separate_two_base_arms}, we have 
	\begin{align*}
	\bar{w}_t(e') < & \frac{1}{2}(\underline{w}_t(e)+\underline{w}_t(e'))
	\\
	= & \frac{1}{2}(\wmin(M_t, \underline{\bw}_t)+\underline{w}_t(e')) ,
	\end{align*}
	which contradicts the definition of $\hat{B}'_{\sub,t}$.
\end{proof}

\begin{lemma}
	\label{lemma:blucb_explore_e_notin_M_star_upper}
	For algorithm $\algbottleneckexplore$, assume that event $\xi_0$ occurs. For any $e \notin M_*, w(e) \geq \wmin(M_*, \bw)$, if $\rad_t(e)<\frac{\Delta^\fc_e}{2}=\frac{1}{2}(w(e)-\max_{M \in \cM: e\in M} \wmin(M, \bw))$, then, base arm $e$ will not be pulled at round $t$, i.e., $p_t \neq e$.
\end{lemma}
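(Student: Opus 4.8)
The plan is to adapt the argument of Lemma~\ref{lemma:blucb_e_notin_M_star_upper} (the $\algbottleneck$ analogue) to the sampling rule of $\algbottleneckexplore$, arguing by contradiction. Suppose $e \notin M_*$, $w(e) \geq \wmin(M_*,\bw)$, $\rad_t(e) < \Delta^\fc_e/2$, and $p_t = e$ at round $t$. I would first note that the hypothesis is non-vacuous: since $e \in \tilde{N}$ and $M_*$ is unique, every super arm $M$ with $e \in M$ has $\wmin(M,\bw) < \opt \leq w(e)$, so $\Delta^\fc_e = w(e) - \max_{M \in \cM: e \in M} \wmin(M,\bw) > 0$.

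Next I would pin down where $p_t$ can come from. By the selection rule in Algorithm~\ref{alg:bottleneck_explore}, $p_t \in \hat{B}'_{\sub,t} \cup \{c_t\}$, hence either $e = c_t$ or $e \in \hat{B}'_{\sub,t}$. The key structural claim I would establish is that in \emph{both} cases $e$ is the bottleneck, with respect to $\underline{\bw}_t$, of some super arm $M' \in \cM$ that contains it: $e \in M'$ and $\underline{w}_t(e) = \wmin(M',\underline{\bw}_t)$. If $e = c_t$, take $M' = M_t \in \cM$, since $c_t = \argmin_{f \in M_t}\underline{w}_t(f)$. If $e \in \hat{B}'_{\sub,t} \subseteq \hat{B}_{\sub,t}$, then clause~(ii) of the specification of $\findewinset$ (Definition~\ref{def:findewinset}), applied to the call $\findewinset(\cM, M_t, \underline{\bw}_t)$, provides such an $M' \in \cM \setminus \superset(M_t)$.

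Finally I would squeeze $\underline{w}_t(e)$ from two sides to reach a contradiction. From $e \in M'$ we get $\wmin(M',\bw) \leq \max_{M \in \cM: e \in M}\wmin(M,\bw) = w(e) - \Delta^\fc_e$; and on $\xi_0$ we have $\underline{w}_t(f) < w(f)$ for every base arm $f$, so $\underline{w}_t(e) = \wmin(M',\underline{\bw}_t) \leq \wmin(M',\bw) \leq w(e) - \Delta^\fc_e$. On the other hand, on $\xi_0$ together with $\rad_t(e) < \Delta^\fc_e/2$, we have $\underline{w}_t(e) = \hat{w}_t(e) - \rad_t(e) > w(e) - 2\rad_t(e) > w(e) - \Delta^\fc_e$. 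These two estimates are incompatible, so $p_t \neq e$.

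I do not anticipate a real obstacle; the only step needing care is the uniform treatment of the two branches $e = c_t$ and $e \in \hat{B}'_{\sub,t}$ through the ``$e$ is a bottleneck of a containing super arm'' observation --- in particular, invoking clause~(ii) rather than clause~(i) of the $\findewinset$ specification and recalling that $\hat{B}'_{\sub,t} \subseteq \hat{B}_{\sub,t}$. In contrast to the companion Lemma~\ref{lemma:blucb_explore_e_in_M_star} for arms in $M_*$, no case split on whether $M_t = M_*$ is required, and the radius threshold here is $\Delta^\fc_e/2$ rather than $\Delta^\fc_e/12$, because the contradiction stems solely from the bottleneck inequality $\underline{w}_t(e) \leq w(e) - \Delta^\fc_e$ and never invokes the stopping condition of $\algbottleneckexplore$.
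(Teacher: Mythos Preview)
Your proposal is correct and essentially identical to the paper's proof: both argue by contradiction, use the observation that any $e\in P_t^{E}=\hat{B}'_{\sub,t}\cup\{c_t\}$ is the $\underline{\bw}_t$-bottleneck of some super arm $M'\in\cM$ containing it (via $c_t$'s definition or clause~(ii) of Definition~\ref{def:findewinset}), and then derive $\underline{w}_t(e)>w(e)-2\rad_t(e)>\max_{M\ni e}\wmin(M,\bw)\geq\wmin(M',\bw)\geq\wmin(M',\underline{\bw}_t)=\underline{w}_t(e)$. The only cosmetic difference is that the paper states the key structural fact once for all $i\in P_t^{E}$ rather than splitting into the two sub-cases you spell out.
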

\begin{proof}
	Suppose that for some  $e \notin M_*, w(e) \geq \wmin(M_*, \bw)$, $\rad_t(e)<\frac{\Delta^\fc_e}{2}=\frac{1}{2}(w(e)-\max_{M \in \cM: e\in M} \wmin(M, \bw))$ and $p_t = e$. 
	
	According to the selection strategy of $p_t$, we have that for any $i \in P_t^{E}$, $\rad_t(i) \leq \rad_t(e) < \frac{\Delta^\fc_e}{12}$.
	From the definition of $\findewinset$ and $c_t$, we have that for any $i \in P_t^{E}$, there exists a super arm $M \in \cM$ such that $i \in M$ and $\underline{w}_t(i)=\wmin(M, \underline{\bw}_t)$.
	
	Since $e \notin M_*$, $e$ is selected from a sub-optimal super arm $M'$, i.e., $e \in M'$ and $\underline{w}_t(e)=\wmin(M', \underline{\bw}_t)$. Then, 
	\begin{align*}
	\underline{w}_t(e) \geq & w(e) - 2\rad_t(e)
	\\
	> & w(e) - (w(e)-\max_{M \in \cM: e\in M} \wmin(M, \bw))
	\\
	= & \max_{M \in \cM: e\in M} \wmin(M, \bw)
	\\ 
	\geq & \wmin(M',\bw)
	\\
	\geq & \wmin(M', \underline{\bw}_t)
	\end{align*}
	which contradicts $\underline{w}_t(e)=\wmin(M', \underline{\bw}_t)$.
\end{proof}

\begin{lemma} 
	\label{lemma:blucb_explore_e_notin_M_star_lower}
	For algorithm $\algbottleneckexplore$, assume that event $\xi_0$ occurs. For any $e \notin M_*, w(e) < \wmin(M_*, \bw)$, if $\rad_t(e)<\frac{\Delta^\fc_e}{12}=\frac{1}{12}(\wmin(M_*, \bw)-\max_{M \in \cM: e\in M} \wmin(M, \bw))$, then, base arm $e$ will not be pulled at round $t$, i.e., $p_t \neq e$.
\end{lemma}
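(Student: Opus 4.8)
I would argue by contradiction, following the template of Lemmas~\ref{lemma:blucb_explore_e_in_M_star} and~\ref{lemma:blucb_explore_e_notin_M_star_upper}, with the twist that the controlling inequality now comes from Definition~\ref{def:fc_gap}(c) (note that $e \in N$ here). Suppose $p_t = e$ for some $e \notin M_*$ with $w(e) < \wmin(M_*, \bw)$ and $\rad_t(e) < \Delta^\fc_e/12$. Since $p_t = e$, the algorithm does not stop at round $t$ and $e$ has the largest radius in the sampling set $\hat{B}'_{\sub,t} \cup \{c_t\}$, so every arm $i$ in that set satisfies $\rad_t(i) \le \rad_t(e) < \Delta^\fc_e/12$. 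By the definition of $c_t$ and of $\findewinset$ (Definition~\ref{def:findewinset}(ii)), each such $i$ is the $\underline{\bw}_t$-bottleneck of some super arm containing it, so $e$ itself is the $\underline{\bw}_t$-bottleneck of some $M_e$ with $e \in M_e$; since $e \notin M_*$ this $M_e$ is sub-optimal, and since $e \in M_e$, Definition~\ref{def:fc_gap}(c) yields the upper bound
\[
\underline{w}_t(e) = \wmin(M_e, \underline{\bw}_t) \le \wmin(M_e, \bw) \le \max_{M \in \cM:\, e \in M} \wmin(M, \bw) = \opt - \Delta^\fc_e .
\]
The rest of the proof is devoted to deriving the contradictory strict lower bound $\underline{w}_t(e) > \opt - \Delta^\fc_e$.

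The key intermediate claim I would establish is $\wmin(M_t, \underline{\bw}_t) > \opt - \Delta^\fc_e/6$. If $M_t = M_*$, then $c_t$ is the $\underline{\bw}_t$-bottleneck of $M_*$ and lies in the sampling set, so $\rad_t(c_t) < \Delta^\fc_e/12$ and, using $w(c_t) \ge \wmin(M_*,\bw) = \opt$, $\wmin(M_*, \underline{\bw}_t) = \underline{w}_t(c_t) \ge w(c_t) - 2\rad_t(c_t) > \opt - \Delta^\fc_e/6$. If $M_t \ne M_*$, then $M_t$ is sub-optimal, so by monotonicity of $\wmin$ we have $M_* \in \cM \setminus \superset(M_t)$ (otherwise $\opt = \wmin(M_*,\bw) \le \wmin(M_t,\bw) < \opt$); hence $\findewinset$ (Definition~\ref{def:findewinset}(i)) puts some $e' \in \hat{B}_{\sub,t} \cap M_*$ with $\underline{w}_t(e') = \wmin(M_*, \underline{\bw}_t)$ into $\hat{B}_{\sub,t}$. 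I would then check $e' \in \hat{B}'_{\sub,t}$, using $\bar{w}_t(e') \ge w(e') \ge \opt > \wmin(M_t, \bw) \ge \wmin(M_t, \underline{\bw}_t) \ge \frac{1}{2}(\wmin(M_t, \underline{\bw}_t) + \underline{w}_t(e'))$, where the last step holds because $\underline{w}_t(e') = \wmin(M_*, \underline{\bw}_t) \le \wmin(M_t, \underline{\bw}_t)$ by optimality of $M_t$. Thus $\rad_t(e') \le \rad_t(e) < \Delta^\fc_e/12$, so $\wmin(M_*, \underline{\bw}_t) = \underline{w}_t(e') > \opt - \Delta^\fc_e/6$, and therefore $\wmin(M_t, \underline{\bw}_t) \ge \wmin(M_*, \underline{\bw}_t) > \opt - \Delta^\fc_e/6$.

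To conclude, I would split on how $e$ sits in the sampling set. If $e = c_t$, then $\underline{w}_t(e) = \wmin(M_t, \underline{\bw}_t) > \opt - \Delta^\fc_e/6 > \opt - \Delta^\fc_e$, contradicting the upper bound. If $e \in \hat{B}'_{\sub,t}$, then its defining inequality $\bar{w}_t(e) > \frac{1}{2}(\wmin(M_t, \underline{\bw}_t) + \underline{w}_t(e))$, together with $\bar{w}_t(e) = \underline{w}_t(e) + 2\rad_t(e)$, rearranges to $\underline{w}_t(e) > \wmin(M_t, \underline{\bw}_t) - 4\rad_t(e) > (\opt - \Delta^\fc_e/6) - \Delta^\fc_e/3 = \opt - \Delta^\fc_e/2 > \opt - \Delta^\fc_e$, again a contradiction. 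In either case $\opt - \Delta^\fc_e < \underline{w}_t(e) \le \opt - \Delta^\fc_e$, so $p_t = e$ is impossible.

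\textbf{Main obstacle.} No step is computationally involved; the one conceptual point is the reduction in the first paragraph — recognizing that, whichever branch of the selection rule placed $e$ in the sampling set, $e$ is the $\underline{\bw}_t$-bottleneck of a sub-optimal super arm containing it, so that Definition~\ref{def:fc_gap}(c) caps $\underline{w}_t(e)$ at $\opt - \Delta^\fc_e$. The only place needing care is verifying that the $\underline{\bw}_t$-bottleneck of $M_*$ actually lands in $\hat{B}'_{\sub,t}$, not merely in $\hat{B}_{\sub,t}$, since it is membership in the sampling set — not in $\hat{B}_{\sub,t}$ — that bounds the confidence radius; this is exactly where sub-optimality of $M_t$ (or $M_t = M_*$ handled directly) is used to force $\wmin(M_t, \bw) < \opt$.
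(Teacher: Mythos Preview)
Your proof is correct. It shares the paper's core idea---show that the $\underline{\bw}_t$-bottleneck of $M_*$ lands in the sampling set, bound its radius, and leverage this to control $\wmin(M_t,\underline{\bw}_t)$---but organizes it differently and more economically. The paper first splits on whether $w(e)$ lies above or below the midpoint $\tfrac12(\wmin(M_*,\bw)+\wmin(M',\bw))$: the ``above'' case yields a direct contradiction, while the ``below'' case locates $\tilde e \in M_*$ in the sampling set, then splits again on $e=c_t$ versus $e\in\hat B'_{\sub,t}$, the latter handled via the auxiliary Lemma~\ref{lemma:half_separate_two_base_arms}. You instead fix the upper bound $\underline{w}_t(e)\le \opt-\Delta^\fc_e$ once (from Definition~\ref{def:fc_gap}(c)), prove the single intermediate claim $\wmin(M_t,\underline{\bw}_t)>\opt-\Delta^\fc_e/6$ uniformly, and then read off the contradictory lower bound directly from the defining inequality of $\hat B'_{\sub,t}$ by the rearrangement $\underline{w}_t(e)>\wmin(M_t,\underline{\bw}_t)-4\rad_t(e)$. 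Your route eliminates the $w(e)$-midpoint case split and the need for Lemma~\ref{lemma:half_separate_two_base_arms}; the paper's route makes the midpoint geometry (which mirrors the stopping condition of $\algbottleneckexplore$) more visible but carries extra bookkeeping.
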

\begin{proof}
	Suppose that for some  $e \notin M_*, w(e) < \wmin(M_*, \bw)$, $\rad_t(e)<\frac{\Delta^\fc_e}{12}=\frac{1}{12}(\wmin(M_*, \bw)-\max_{M \in \cM: e\in M} \wmin(M, \bw))$ and $p_t = e$. 
	
	According to the selection strategy of $p_t$, we have that for any $i \in P_t^{E}$, $\rad_t(i) \leq \rad_t(e) < \frac{\Delta^\fc_e}{12}$.
	From the definition of $\findewinset$ and $c_t$, we have that for any $i \in P_t^{E}$, there exists a super arm $M \in \cM$ such that $i \in M$ and $\underline{w}_t(i)=\wmin(M, \underline{\bw}_t)$.
	Thus, there exists a sub-optimal super arm $M' \in \cM$ such that $e \in M'$ and $\underline{w}_t(e)=\wmin(M', \underline{\bw}_t)$.
	
	Case (i) Suppose that $w(e) \geq \frac{1}{2}(\wmin(M_*,\bw)+\wmin(M',\bw)) $. Using $\rad_t(e) < \frac{\Delta^\fc_e}{12}< \frac{\Delta^\fc_e}{4}$, we have
	\begin{align*}
	\underline{w}_t(e) \geq & w(e) - 2\rad_t(e)
	\\
	> & \frac{1}{2}(\wmin(M_*,\bw)+\wmin(M',\bw)) - \frac{1}{2}( \wmin(M_*, \bw)-\max_{M \in \cM: e\in M} \wmin(M, \bw) )
	\\
	= & \frac{1}{2}\wmin(M',\bw) + \frac{1}{2} \max_{M \in \cM: e\in M} \wmin(M, \bw)
	\\
	\geq & \wmin(M',\bw) ,
	\end{align*}
	which contradicts $\underline{w}_t(e)=\wmin(M', \underline{\bw}_t)$.
	
	Case (ii) Suppose that $w(e) < \frac{1}{2}(\wmin(M_*,\bw)+\wmin(M',\bw))$. 
	According to the definition of $\findewinset$ (Definition~\ref{def:findewinset}) and $c_t$, there exists a base arm $\tilde{e} \in \hat{B}_{\sub,t} \cap \{c_t\}$ satisfying $\tilde{e} \in M_*$ and  $\underline{w}_t(\tilde{e})=\wmin(M_*, \underline{\bw}_t)$. 
	
	First, we prove $\tilde{e} \in P_t^{E}$ and thus $\rad_t(\tilde{e}) \leq \rad_t(e) < \frac{\Delta^\fc_e}{12}$. 
	If $M_*=M_t$, then $\tilde{e}=c_t$ and the claim holds.
	If $M_* \neq M_t$, then $\tilde{e} \in \hat{B}_{\sub,t}$. We can obtain that $\tilde{e}$ will be put into $\hat{B}'_{\sub,t} \subseteq P_t^{E}$. Otherwise, we have
	$$
	w(\tilde{e}) \leq \bar{w}_t(\tilde{e}) \leq  \frac{1}{2} (\wmin(M_t,\underline{\bw}_t)+\underline{w}_t(\tilde{e})) \leq \frac{1}{2} (\wmin(M_t, \bw)+w(\tilde{e})).
	$$
	Since $w(\tilde{e}) \geq \wmin(M_*, \bw)$ and $\wmin(M_t, \bw) < \wmin(M_*, \bw)$, the above inequality cannot hold.
	Thus, we obtain that $\tilde{e}$ will be put into $\hat{B}'_{\sub,t} \subseteq P_t^{E}$ and thus $\rad_t(\tilde{e}) \leq \rad_t(e) < \frac{\Delta^\fc_e}{12}$. 
	
	Next, we discuss the following two cases: (a) $e=c_t$ and (b) $e \neq c_t$.
	
	(a) if $e=c_t$, then $M_t \neq M_*$. Using $\rad_t(\tilde{e}) \leq \rad_t(e) < \frac{\Delta^\fc_e}{12}< \frac{\Delta^\fc_e}{2}$, we have
	\begin{align*}
	\wmin(M_*, \underline{\bw}_t) = & \underline{w}_t(\tilde{e}) 
	\\
	\geq & w(e) - 2\rad_t(e)
	\\
	> & w(e) - (\wmin(M_*, \bw)-\max_{M \in \cM: e\in M} \wmin(M, \bw))
	\\
	\geq & \wmin(M_*,\bw) - (\wmin(M_*,\bw)- \wmin(M_t,\bw))
	\\ 
	\geq & \wmin(M_t,\bw)
	\\
	\geq & \wmin(M_t, \underline{\bw}_t),
	\end{align*}
	which contradicts the definition of $M_t$.
	
	(b) if $e \neq c_t$, i.e., $e \in \hat{B}'_{\sub,t}$ we have
	\begin{align*}
	\Delta^\fc_{\tilde{e}, e} = & w(\tilde{e})-w(e)
	\\
	> & \wmin(M_*, \bw) -  \frac{1}{2}( \wmin(M_*,\bw)+\wmin(M',\bw) )
	\\
	= & \frac{1}{2} ( \wmin(M_*, \bw) - \wmin(M',\bw) )
	\end{align*} 
	Since $\rad_t(e) < \frac{\Delta^\fc_e}{12}=\frac{1}{12}(\wmin(M_*, \bw)-\max_{M \in \cM: e\in M} \wmin(M, \bw) ) \leq \frac{1}{12}(\wmin(M_*, \bw)- \wmin(M',\bw) ) < \frac{1}{6} \Delta^\fc_{\tilde{e}, e}$ and $\rad_t(\tilde{e}) \leq \rad_t(e) < \frac{1}{6} \Delta^\fc_{\tilde{e}, e}$, according to Lemma~\ref{lemma:half_separate_two_base_arms}, we have 
	\begin{align*}
	\bar{w}_t(e) < & \frac{1}{2}(\underline{w}_t(\tilde{e})+\underline{w}_t(e))
	\\
	= & \frac{1}{2}(\wmin(M_*, \underline{\bw}_t)+\underline{w}_t(e)) 
	\\
	\leq & \frac{1}{2}(\wmin(M_t, \underline{\bw}_t)+\underline{w}_t(e')) 
	\end{align*}    
	which contradicts the definition of $\hat{B}'_{\sub,t}$.
\end{proof}

\begin{theorem}[Sample Complexity of $\algbottleneckexplore$]
	\label{thm:blucb_explore_ub}
	With probability at least $1-\kappa$, the $\algbottleneckexplore$ algorithm (Algorithm~\ref{alg:bottleneck_explore}) will return $M_*$ with sample complexity 
	$$
	O \sbr{ \sum_{e \in [n]} \frac{ R^2 }{(\Delta^\fc_e)^2} \ln \sbr{  \sum_{e \in [n]} \frac{ R^2 n  }{(\Delta^\fc_e)^2 \kappa} }  } .
	$$
\end{theorem}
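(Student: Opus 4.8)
The plan is to replicate the sample-complexity analysis of $\algbottleneck$ (proof of Theorem~\ref{thm:bottleneck_baseline_ub}), with the confidence level $\delta$ replaced by the constant $\kappa$ and the three $\algbottleneck$ ``not-pulled'' lemmas replaced by Lemmas~\ref{lemma:blucb_explore_e_in_M_star}, \ref{lemma:blucb_explore_e_notin_M_star_upper}, \ref{lemma:blucb_explore_e_notin_M_star_lower}. First I would condition on the good event $\xi_0=\bigcap_{t\ge 1}\xi_{0,t}$, which by the same Hoeffding/union-bound argument as in Lemma~\ref{lemma:concentration} satisfies $\Pr[\xi_0]\ge 1-\kappa$; everything below is argued on $\xi_0$.

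Correctness is then immediate: by Lemma~\ref{lemma:bottleneck_explore_correctness}(i), whenever $\algbottleneckexplore$ (Algorithm~\ref{alg:bottleneck_explore}) halts at some round $t$ we have $M_t=M_*$, so the returned super arm is optimal. Termination and the sample bound are handled together. The three cases $e\in M_*$, $\{e\notin M_*,\ w(e)\ge\opt\}$ and $\{e\notin M_*,\ w(e)<\opt\}$ partition $[n]$, and Lemmas~\ref{lemma:blucb_explore_e_in_M_star}, \ref{lemma:blucb_explore_e_notin_M_star_upper}, \ref{lemma:blucb_explore_e_notin_M_star_lower} respectively show that a base arm $e$ in each of these classes cannot be pulled at round $t$ once $\rad_t(e)<\Delta^\fc_e/12$ (the weakest of the three thresholds, so $\Delta^\fc_e/12$ works uniformly). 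Since $\rad_t(e)=R\sqrt{2\ln(4nt^3/\kappa)/T_t(e)}\to 0$ as $T_t(e)$ increases while $t$ grows at most linearly in the total number of samples, each base arm is pulled finitely often, and the algorithm pulls some arm at every non-terminating round (failure of the stopping condition forces $\hat{B}'_{\sub,t}\neq\varnothing$), so it stops.

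For the quantitative bound, let $T(e)$ be the final number of pulls of $e$, $t_e$ its last pull round (so $T_{t_e}(e)=T(e)-1$), and $T=\sum_{e\in[n]}T(e)$. The ``not-pulled'' lemmas give $R\sqrt{2\ln(4nt_e^3/\kappa)/(T(e)-1)}\ge \Delta^\fc_e/12$, hence $T(e)\le \frac{288R^2}{(\Delta^\fc_e)^2}\ln(4nT^3/\kappa)+1$ using $t_e\le T$. Summing over $e$ and using $\sum_{e\in[n]} R^2/(\Delta^\fc_e)^2\ge n$ yields $T\le c_1\sum_{e\in[n]}\frac{R^2}{(\Delta^\fc_e)^2}\ln(c_2 nT/\kappa)$ for absolute constants $c_1,c_2$; applying the transcendental-inequality lemma (Lemma~\ref{lemma:technical_tool}, exactly as at the end of the proof of Theorem~\ref{thm:bottleneck_baseline_ub}) removes $T$ from inside the logarithm and gives the stated $O\!\left(\sum_{e\in[n]}\frac{R^2}{(\Delta^\fc_e)^2}\ln\!\big(\sum_{e\in[n]}\frac{R^2 n}{(\Delta^\fc_e)^2\kappa}\big)\right)$.

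The genuinely hard part is everything that feeds into the ``not-pulled'' lemmas — in particular verifying, via the $\findewinset$ guarantee (Definition~\ref{def:findewinset}) together with the definitions of $c_t$ and $\hat{B}'_{\sub,t}$, that every pulled arm is the $\underline{\bw}_t$-bottleneck of some super arm in $\cM\setminus\superset(M_t)$ (or equals $c_t$), and the delicate sub-case ii(b) of Lemma~\ref{lemma:blucb_explore_e_notin_M_star_lower}, where one must first show the hypothesised bottleneck $\tilde e$ of $M_*$ actually lies in the sampling set $P_t^{E}$ before its small radius can be combined with Lemma~\ref{lemma:half_separate_two_base_arms}. Since those lemmas are already in hand, the remaining work for the theorem itself is only the routine aggregation above.
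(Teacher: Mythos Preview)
Your proposal is correct and follows essentially the same route as the paper: condition on $\xi_0$, invoke Lemmas~\ref{lemma:blucb_explore_e_in_M_star}--\ref{lemma:blucb_explore_e_notin_M_star_lower} to get the uniform threshold $\rad_{t_e}(e)\ge\Delta^\fc_e/12$ at the last pull, solve for $T(e)\le\frac{288R^2}{(\Delta^\fc_e)^2}\ln(4nT^3/\kappa)+1$, sum over $e$, and close with Lemma~\ref{lemma:technical_tool}. The only difference is cosmetic: you add an explicit correctness sentence via Lemma~\ref{lemma:bottleneck_explore_correctness}(i) and a short termination remark, whereas the paper's proof of this theorem states only the quantitative aggregation and leaves correctness to the earlier lemma.
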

\begin{proof}
	For any $e \in [n]$, let $T(e)$ denote the number of samples for base arm $e$, and $t_e$ denote the last timestep at which $e$ is pulled. Then, we have $T_{t_e}=T(e)-1$. Let $T$ denote the total number of samples.
	According to Lemmas~\ref{lemma:blucb_explore_e_in_M_star}-\ref{lemma:blucb_explore_e_notin_M_star_lower}, we have
	\begin{align*}
	R \sqrt{ \frac{2 \ln (\frac{4 n t_e^3}{\kappa})}{T(e)-1} } \geq \frac{1}{12} \Delta^\fc_e
	\end{align*}
	Thus, we obtain
	\begin{align*}
	T(e) \leq \frac{288 R^2 }{(\Delta^\fc_e)^2}\ln \sbr{\frac{4 n t_e^3}{\kappa}} +1 \leq \frac{288 R^2 }{(\Delta^\fc_e)^2}\ln \sbr{\frac{4 n T^3}{\kappa}} +1
	\end{align*}
	Summing over $e \in [n]$, we have
	\begin{align*}
	T \leq \sum_{e \in [n]} \frac{288 R^2 }{(\Delta^\fc_e)^2} \ln \sbr{\frac{4 n T^3 }{\kappa}   } + n \leq \sum_{e \in [n]} \frac{864 R^2 }{(\Delta^\fc_e)^2} \ln \sbr{\frac{2 n T }{\kappa}   } + n,
	\end{align*}
	where $\sum_{e \in [n]} \frac{ R^2 }{(\Delta^\fc_e)^2} \geq n$.
	Then, applying Lemma~\ref{lemma:technical_tool}, we have
	\begin{align*}
	T \leq & \sum_{e \in [n]} \frac{5184 R^2 }{(\Delta^\fc_e)^2} \ln \sbr{\frac{2 n^2  }{\kappa}  \sum_{e \in [n]} \frac{864 R^2 }{(\Delta^\fc_e)^2} } + n
	\\
	= & O \sbr{ \sum_{e \in [n]} \frac{ R^2 }{(\Delta^\fc_e)^2} \ln \sbr{  \sum_{e \in [n]} \frac{ R^2 n^2  }{(\Delta^\fc_e)^2 \kappa} } + n }
	\\
	= & O \sbr{ \sum_{e \in [n]} \frac{ R^2 }{(\Delta^\fc_e)^2} \ln \sbr{  \sum_{e \in [n]} \frac{ R^2 n  }{(\Delta^\fc_e)^2 \kappa} }  }.
	\end{align*}
\end{proof}

\begin{lemma} \label{lemma:bottleneck_verify_e_in_M_star}
	For algorithm $\algbottleneckverify$, assume that event $\xi_0 \cap \xi$ occurs. For any $e \in M_*$, if $\rad_t(e)<\frac{\Delta^\fc_e}{8}=\frac{1}{8}(w(e)-\max_{M \neq M_*} \wmin(M, \bw))$, then, base arm $e$ will not be pulled at round $t$, i.e., $p_t \neq e$.
\end{lemma}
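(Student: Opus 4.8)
The plan is to argue by contradiction, following the template of Lemma~\ref{lemma:blucb_e_in_M_star} but adapted to the verification phase, where the second sampling candidate is drawn from the advice set $\hat{B}_{\sub}$ rather than from the bottleneck of $\tilde{M}_t$. Suppose some $e \in M_*$ has $\rad_t(e) < \Delta^\fc_e/8$ yet $p_t = e$. Under $\xi_0$, Lemma~\ref{lemma:bottleneck_explore_correctness} gives $\hat{M}_* = M_*$ and that every element of $\hat{B}_{\sub}$ has mean at most $\tfrac12(\wmin(M_*,\bw)+\wmin(M_{\textup{second}},\bw)) < \wmin(M_*,\bw) \le w(e)$; hence $e \notin \hat{B}_{\sub} \supseteq F_t$, so $p_t = e$ forces $e = c_t = \emin(M_*,\underline{\bw}_t)$, and therefore $\wmin(\hat{M}_*,\underline{\bw}_t) = \underline{w}_t(c_t) = \underline{w}_t(e)$. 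Since $p_t$ maximizes $\rad_t$ over $F_t\cup\{c_t\}$, every arm in that set has radius at most $\rad_t(e) < \Delta^\fc_e/8$.

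Next I would take $\tilde{M}_t = M'$ (a sub-optimal super arm, well-defined unless $\cM\setminus\superset(M_*)$ is empty, in which case the algorithm stops trivially) and invoke Lemma~\ref{lemma:bottleneck_explore_correctness}(ii) to get $e' \in \hat{B}_{\sub}\cap M'$ with $w(e') \le \tfrac12(\wmin(M_*,\bw)+\wmin(M',\bw))$. I then split on whether $e' \in F_t$. If $e'\notin F_t$, the definition of $F_t$ forces $\bar{w}_t(e') \le \underline{w}_t(c_t)$, hence $\wmin(\tilde{M}_t,\bar{\bw}_t)\le \bar{w}_t(e') \le \underline{w}_t(c_t) = \wmin(\hat{M}_*,\underline{\bw}_t)$, which is exactly the stopping condition of Line~\ref{line:bottleneck_verify_stop}; so the algorithm would have returned $\hat{M}_*$ before pulling at round $t$, a contradiction. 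If $e'\in F_t$, then $\rad_t(e')\le \rad_t(e) < \Delta^\fc_e/8$, and applying $\xi$ to both $e$ and $e'$ yields $\underline{w}_t(e) - \bar{w}_t(e') > w(e) - w(e') - \Delta^\fc_e/2$. Using $w(e')\le \tfrac12(\wmin(M_*,\bw)+\wmin(M',\bw))$, $w(e)\ge\wmin(M_*,\bw)$ (as $e\in M_*$), and $\wmin(M',\bw)\le\max_{M\neq M_*}\wmin(M,\bw)$, a short computation gives $w(e)-w(e')\ge\tfrac12\Delta^\fc_e$, so $\underline{w}_t(e) > \bar{w}_t(e') \ge \wmin(\tilde{M}_t,\bar{\bw}_t)$, i.e. $\wmin(\hat{M}_*,\underline{\bw}_t) > \wmin(\tilde{M}_t,\bar{\bw}_t)$, again triggering Line~\ref{line:bottleneck_verify_stop} and contradicting the pull at round $t$. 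Either way $p_t \neq e$.

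The point that needs the most care is the first paragraph's claim that a pulled arm $e\in M_*$ must coincide with $c_t$: this is where the correctness guarantee of $\algbottleneckexplore$ (Lemma~\ref{lemma:bottleneck_explore_correctness}(iii), via event $\xi_0$) is essential, since it is the only thing separating the means of the advice set $\hat{B}_{\sub}$ from those of $M_*$; this is also why the conditioning $\xi_0\cap\xi$ is exactly what is needed ($\xi_0$ for the handoff from $\algbottleneckexplore$, $\xi$ for concentration at round $t$). After that the two-case split is routine, and the factor $\tfrac18$ (versus $\tfrac14$ in Lemma~\ref{lemma:blucb_e_in_M_star}) is precisely the price of $\hat{B}_{\sub}$ supplying only a $\tfrac12$-approximate bottleneck $w(e')\le\tfrac12(\wmin(M_*,\bw)+\wmin(M',\bw))$ instead of the exact one $w(e')=\wmin(M',\bw)$, which halves the usable gap $w(e)-w(e')$ and doubles the required confidence-radius accuracy.
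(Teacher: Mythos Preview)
Your proof is correct and follows essentially the same route as the paper's: both establish $\hat{M}_*=M_*$ and $e=c_t$ via Lemma~\ref{lemma:bottleneck_explore_correctness}, then show the stopping condition of Line~\ref{line:bottleneck_verify_stop} must already hold, contradicting the pull. The only cosmetic difference is that the paper first proves the stronger intermediate fact $F_t=\varnothing$ (using part~(iii) uniformly over $\hat{B}_{\sub}$) and then invokes part~(ii), whereas you go straight to the witness $e'\in\hat{B}_{\sub}\cap\tilde{M}_t$ from part~(ii) and case-split on $e'\in F_t$; both arrive at $\wmin(\hat{M}_*,\underline{\bw}_t)\ge\wmin(\tilde{M}_t,\bar{\bw}_t)$ by the same arithmetic.
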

\begin{proof}
	Suppose that for some $e \in M_*$, $\rad_t(e)<\frac{\Delta^\fc_e}{8}=\frac{1}{8}(w(e)-\max_{M \neq M_*} \wmin(M, \bw))$ and $p_t = e$. 
	
	Since event $\xi_0$ occurs, according to Lemma~\ref{lemma:bottleneck_explore_correctness}, we have that (i) $\hat{M}_*=M_*$, (ii) for any $M \neq M_*$, there exists a base arm $e \in \hat{B}_{\sub} \cap M$ satisfying $w(e) \leq \frac{1}{2}(\wmin(M_*, \bw)+\wmin(M, \bw))$, i.e., $\Delta^\fc_{\wmin(M_*, \bw),e} \geq \frac{1}{2}\Delta^\fc_{M_*,M}$ and (iii) for any $i \in \hat{B}_{\sub}$, there exists a sub-optimal super arm $M \neq M_*$ such that $i \in M$ and $w(i) \leq \frac{1}{2} ( \wmin(M_*, \bw) + \wmin (M) ) \leq \frac{1}{2} ( \wmin(M_*, \bw) + \wmin (M_{\textup{second}}, \bw) ) $.
	According to the selection strategy of $p_t$, we have that for any $i \in P_t^{V}$, $\rad_t(i) \leq \rad_t(e) < \frac{\Delta^\fc_e}{8}$.
	
	First, since for any $i \in \hat{B}_{\sub}$, $w(i) \leq \frac{1}{2} ( \wmin(M_*, \bw) + \wmin(M_{\textup{second}}, \bw) ) < \wmin(M_*, \bw)$ and $w(e) \geq \wmin(M_*, \bw)$, we can obtain that $e=c_t$. 
	
	Then, for any $i \in F_t$, we have
	\begin{align*}
	\Delta^\fc_{e,i} = & w(e)-w(i)
	\\
	\geq & w(e)-\frac{1}{2} ( \wmin(M_*, \bw) + \wmin(M_{\textup{second}}, \bw) )
	\\
	\geq & w(e)-\frac{1}{2} ( w(e) + \wmin(M_{\textup{second}}, \bw) )
	\\
	= & \frac{1}{2} ( w(e) - \wmin(M_{\textup{second}}, \bw) )
	\\
	= & \frac{1}{2} \Delta^\fc_e
	\end{align*}
	and
	\begin{align*}
	\underline{w}_t(c_t)-\bar{w}_t(i) = & \underline{w}_t(e)-\bar{w}_t(i) 
	\\
	\geq & w(e)-2\rad_t(e)-(w(i)+2\rad_t(i))
	\\
	= & \Delta^\fc_{e,i}-2\rad_t(e)-2\rad_t(i)
	\\
	> & \frac{1}{2} \Delta^\fc_e - \frac{\Delta^\fc_e}{4} - \frac{\Delta^\fc_e}{4}
	\\
	= & 0,
	\end{align*}
	which implies $F_t=\varnothing$. Thus, for any $i \in \hat{B}_{\sub}$, $\underline{w}_t(c_t) \geq \bar{w}_t(i)$.
	
	According to Lemma~\ref{lemma:bottleneck_explore_correctness}(ii), for any $M' \neq \hat{M}_*$, there exists a base arm $e' \in \hat{B}_{\sub} \cap M'$, we have
	\begin{align*}
	\wmin(\hat{M}_*, \underline{\bw}_t) = & \underline{w}_t(c_t)
	\\
	\geq & \bar{w}_t(e')
	\\
	\geq & \wmin(M', \bar{\bw}_t)
	\end{align*}
	which implies that algorithm~\ref{alg:bottleneck_verify} has already stopped.
\end{proof}

\begin{lemma} \label{lemma:bottleneck_verify_e_notin_M_star}
	For algorithm $\algbottleneckverify$, assume that event $\xi_0 \cap \xi$ occurs. For any $e \notin M_*, w(e) < \wmin(M_*, \bw)$, if $\rad_t(e)<\frac{\Delta^\fc_e}{8}=\frac{1}{8}(\wmin(M_*, \bw)-\max_{M \in \cM: e\in M} \wmin(M, \bw))$, then, base arm $e$ will not be pulled at round $t$, i.e., $p_t \neq e$.
\end{lemma}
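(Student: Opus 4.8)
The plan is to argue by contradiction, in parallel with the proof of Lemma~\ref{lemma:bottleneck_verify_e_in_M_star}, but now exploiting that such a base arm $e$ can only enter the sampling set of $\algbottleneckverify$ through $F_t \subseteq \hat{B}_{\sub}$. Suppose $e \notin M_*$, $w(e) < \wmin(M_*, \bw)$, $\rad_t(e) < \frac{\Delta^\fc_e}{8}$, and yet $p_t = e$. Since $\xi_0$ holds, Lemma~\ref{lemma:bottleneck_explore_correctness} gives $\hat{M}_* = M_*$ (part (i)) together with the near-bottleneck characterization of $\hat{B}_{\sub}$ (part (iii)). Because $c_t = \emin(\hat{M}_*, \underline{\bw}_t) \in \hat{M}_* = M_*$ while $e \notin M_*$, we have $e \neq c_t$, so $p_t = e$ forces $e \in F_t \subseteq \hat{B}_{\sub}$; in particular $\bar{w}_t(e) > \underline{w}_t(c_t)$ by the definition of $F_t$, and the $\argmax$ selection of $p_t$ yields $\rad_t(c_t) \le \rad_t(e) < \frac{\Delta^\fc_e}{8}$.

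The next step is to pin down the true reward of $e$. Since $e \in \hat{B}_{\sub}$, Lemma~\ref{lemma:bottleneck_explore_correctness}(iii) supplies a sub-optimal super arm $M$ with $e \in M$ and $w(e) \le \frac{1}{2}(\wmin(M_*, \bw) + \wmin(M, \bw))$; as $e \in N$, Definition~\ref{def:fc_gap} gives $\wmin(M, \bw) \le \max_{M' \in \cM: e \in M'} \wmin(M', \bw) = \wmin(M_*, \bw) - \Delta^\fc_e$, hence $w(e) \le \wmin(M_*, \bw) - \frac{1}{2}\Delta^\fc_e$. Combined with event $\xi$, this gives $\bar{w}_t(e) \le w(e) + 2\rad_t(e) < \wmin(M_*, \bw) - \frac{1}{2}\Delta^\fc_e + \frac{1}{4}\Delta^\fc_e = \wmin(M_*, \bw) - \frac{1}{4}\Delta^\fc_e$. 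On the other hand, $c_t \in M_*$ implies $w(c_t) \ge \wmin(M_*, \bw)$, so under $\xi$ we get $\underline{w}_t(c_t) \ge w(c_t) - 2\rad_t(c_t) > \wmin(M_*, \bw) - \frac{1}{4}\Delta^\fc_e$. Chaining these two estimates yields $\bar{w}_t(e) < \underline{w}_t(c_t)$, contradicting $e \in F_t$, which completes the argument.

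The step I expect to carry the weight --- more a point requiring care than a genuine obstacle --- is the passage to $w(e) \le \wmin(M_*, \bw) - \frac{1}{2}\Delta^\fc_e$: one must observe that when $p_t = e$ the arm $e$ necessarily lies in $\hat{B}_{\sub}$, and it is precisely this membership, via Lemma~\ref{lemma:bottleneck_explore_correctness}(iii), that certifies $e$ is close to the actual bottleneck of some sub-optimal $M$, providing the factor-$\frac{1}{2}$ slack needed to absorb the two confidence radii under the $\frac{\Delta^\fc_e}{8}$ threshold. The concentration bounds on $\bar{w}_t(e)$ and $\underline{w}_t(c_t)$ are then routine, using $\rad_t(c_t) < \frac{\Delta^\fc_e}{8}$ obtained from the $\argmax$ rule. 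Note that the complementary regime $e \notin M_*$ with $w(e) \ge \wmin(M_*, \bw)$ needs no separate lemma, since such an $e$ is never in $\hat{B}_{\sub}$ (by Lemma~\ref{lemma:bottleneck_explore_correctness}(iii)) nor equal to $c_t \in M_*$, hence never sampled by $\algbottleneckverify$.
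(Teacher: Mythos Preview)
Your proof is correct and follows essentially the same approach as the paper. Both argue by contradiction, use Lemma~\ref{lemma:bottleneck_explore_correctness}(i) to get $\hat{M}_*=M_*$ so that $e\in F_t\subseteq \hat{B}_{\sub}$, invoke part~(iii) to obtain $w(e)\le \tfrac{1}{2}(\wmin(M_*,\bw)+\wmin(M',\bw))$ for some $M'\ni e$, and then combine the radius bounds on $e$ and $c_t$ to force $\bar{w}_t(e)<\underline{w}_t(c_t)$, contradicting $e\in F_t$; the only difference is that you first simplify to $w(e)\le \wmin(M_*,\bw)-\tfrac{1}{2}\Delta^\fc_e$, whereas the paper carries $\wmin(M',\bw)$ through the chain and uses $\Delta_{M_*,M'}\ge \Delta^\fc_e$ at the end---an equivalent rearrangement.
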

\begin{proof}
	Suppose that for some $e \notin M_*, w(e) < \wmin(M_*, \bw)$, $\rad_t(e)<\frac{\Delta^\fc_e}{8}=\frac{1}{8}(\wmin(M_*, \bw)-\max_{M \in \cM: e\in M} \wmin(M, \bw))$ and $p_t = e$. 
	
	Since event $\xi_0$ occurs, according to Lemma~\ref{lemma:bottleneck_explore_correctness}, we have that (i) $\hat{M}_*=M_*$, (ii) for any $M \neq M_*$, there exists a base arm $e \in \hat{B}_{\sub} \cap M$ satisfying $w(e) \leq \frac{1}{2}(\wmin(M_*, \bw)+\wmin(M, \bw))$, i.e., $\Delta^\fc_{\wmin(M_*, \bw),e} \geq \frac{1}{2}\Delta^\fc_{M_*,M}$ and (iii) for any $i \in \hat{B}_{\sub}$, there exists a sub-optimal super arm $M \neq M_*$ such that $i \in M$ and $w(i) \leq \frac{1}{2} ( \wmin(M_*, \bw) + \wmin (M) ) \leq \frac{1}{2} ( \wmin(M_*, \bw) + \wmin (M_{\textup{second}}, \bw) ) $.
	According to the selection strategy of $p_t$, we have that for any $i \in P_t^{V}$, $\rad_t(i) \leq \rad_t(e) < \frac{\Delta^\fc_e}{8}$.
	
	Since $e \notin M_*=\hat{M}_*$, we have that $e \in F_t$ and there exists a sub-optimal super arm $M'$ such that $e \in M'$ and $w(e) \leq \frac{1}{2} ( \wmin(M_*, \bw) + \wmin (M') )$.  Then, we have
	\begin{align*}
	\underline{w}_t(c_t)-\bar{w}_t(e) \geq & w(c_t)-2\rad_t(c_t)-(w(e)+2\rad_t(e))
	\\
	\geq & \wmin(M_*, \bw)-\frac{1}{2}(\wmin(M_*, \bw)+\wmin(M', \bw))-2\rad_t(c_t)-2\rad_t(e)
	\\
	> & \frac{1}{2} \Delta^\fc_{M_*, M'} - \frac{\Delta^\fc_e}{4} - \frac{\Delta^\fc_e}{4}
	\\
	= & \frac{1}{2} \Delta^\fc_{M_*, M'} - \frac{1}{2} (\wmin(M_*, \bw)-\max_{M \in \cM: e\in M} \wmin(M, \bw))
	\\
	\geq & \frac{1}{2} \Delta^\fc_{M_*, M'} - \frac{1}{2} (\wmin(M_*, \bw)-\wmin(M', \bw))
	\\
	= & 0,
	\end{align*}
	which contradicts $e \in F_t$.
\end{proof}

Recall that $B = \{e \mid e \notin M_*, w(e) < \wmin(M_*, \bw) \}$ and $B^c = \{e \mid e \notin M_*, w(e) \geq \wmin(M_*, \bw) \}$.

\begin{theorem}[Sample Complexity of $\algbottleneckverify$]
	\label{thm:blucb_verify_ub}
	With probability at least $1-\kappa-\delta^V$, the $\algbottleneckverify$ algorithm (Algorithm~\ref{alg:bottleneck_verify}) will return $M_*$ with sample complexity 
	$$
	O \sbr{ \sum_{e \in B^c} \frac{ R^2 }{(\Delta^\fc_e)^2} \ln \sbr{  \sum_{e \in B^c} \frac{ R^2 n  }{(\Delta^\fc_e)^2 } } + \sum_{e \in M_* \cup B} \frac{ R^2 }{(\Delta^\fc_e)^2} \ln \sbr{  \sum_{e \in M_* \cup B} \frac{ R^2 n  }{(\Delta^\fc_e)^2 \delta^V } } } .
	$$
\end{theorem}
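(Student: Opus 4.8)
The algorithm splits into an \emph{exploration phase} ($\algbottleneckexplore$, run with the constant confidence $\kappa=0.01$) followed by a \emph{verification phase} (the main loop of $\algbottleneckverify$, run with confidence $\delta^V$), so I would bound the two phases separately and add. Throughout I condition on the good event $\xi_0\cap\xi$, where $\xi_0$ is the concentration event for $\algbottleneckexplore$ (so $\Pr[\xi_0]\ge 1-\kappa$) and $\xi$ is the concentration event for the verification loop (so $\Pr[\xi]\ge 1-\delta^V$ by Lemma~\ref{lemma:concentration} applied with $\delta=\delta^V$); a union bound then gives $\Pr[\xi_0\cap\xi]\ge 1-\kappa-\delta^V$, matching the claimed confidence. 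By Theorem~\ref{thm:blucb_explore_ub}, under $\xi_0$ the exploration phase terminates after finitely many pulls and returns $\hat M_*=M_*$ together with a set $\hat B_{\sub}$; by Lemma~\ref{lemma:bottleneck_explore_correctness}(iii) every $e\in\hat B_{\sub}$ satisfies $w(e)\le\frac12(\wmin(M_*,\bw)+\wmin(M_{\textup{second}},\bw))<\opt$, so $\hat B_{\sub}\subseteq B$, and by part~(ii) every sub-optimal super arm contains a near-bottleneck lying in $\hat B_{\sub}$. For correctness of the output: when the stopping test on Line~\ref{line:bottleneck_verify_stop} fires we get $\wmin(M_*,\bw)\ge\wmin(M_*,\underline{\bw}_t)\ge\wmin(\tilde M_t,\bar{\bw}_t)\ge\wmin(M,\bw)$ for every $M\in\cM\setminus\superset(M_*)$, while monotonicity of $\wmin$ handles $M\in\superset(M_*)$; since $M_*$ is the unique maximizer, the returned $\hat M_*=M_*$ is optimal.

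For the verification loop's sample cost, the key observation is that after the one-time initialization (which pulls each of the $n$ base arms once), every $p_t$ lies in $F_t\cup\{c_t\}$ with $c_t\in\hat M_*=M_*$ and $F_t\subseteq\hat B_{\sub}\subseteq B$, so only arms in $M_*\cup B$ are ever pulled again. For $e\in M_*$, Lemma~\ref{lemma:bottleneck_verify_e_in_M_star} shows $e$ is not pulled once $\rad_t(e)<\Delta^\fc_e/8$, and for $e\in B$, Lemma~\ref{lemma:bottleneck_verify_e_notin_M_star} shows the same. Exactly as in the proof of Theorem~\ref{thm:bottleneck_baseline_ub}, this yields $T(e)\le \frac{128R^2}{(\Delta^\fc_e)^2}\ln\!\big(\frac{4nT^3}{\delta^V}\big)+1$ for each $e\in M_*\cup B$ (with $T$ the loop length), and the untouched $B^c$ arms contribute only an additive $|B^c|\le n$. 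Summing over $e\in M_*\cup B$, using $\sum_{e\in[n]}R^2/(\Delta^\fc_e)^2\ge n$ to absorb the additive term, and invoking the technical inequality Lemma~\ref{lemma:technical_tool} ($T\le a\ln(bT)\Rightarrow T=O(a\ln(ab))$) gives a verification cost of $O\big(\sum_{e\in M_*\cup B}\frac{R^2}{(\Delta^\fc_e)^2}\ln(\sum_{e\in M_*\cup B}\frac{R^2 n}{(\Delta^\fc_e)^2\delta^V})\big)$; the same finiteness argument shows the loop terminates.

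Finally I would add the exploration cost: by Theorem~\ref{thm:blucb_explore_ub} with $\kappa=0.01$ it is $O\big(\sum_{e\in[n]}\frac{R^2}{(\Delta^\fc_e)^2}\ln(\sum_{e\in[n]}\frac{R^2 n}{(\Delta^\fc_e)^2})\big)$. Splitting $[n]=(M_*\cup B)\sqcup B^c$ and using $\ln(x+y)\le\ln x+\ln y$ for $x,y\ge 2$ to separate the inner logarithm, the $B^c$ part becomes $O\big(\sum_{e\in B^c}\frac{R^2}{(\Delta^\fc_e)^2}\ln(\sum_{e\in B^c}\frac{R^2 n}{(\Delta^\fc_e)^2})\big)$, the first term of the claimed bound, while the $M_*\cup B$ part is absorbed into the verification cost above since $\delta^V<1$ only enlarges that logarithm. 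Adding the two phases gives the stated bound. I expect the main obstacle to be the step that makes the two phases combine cleanly: establishing that $\hat B_{\sub}$ contains \emph{no} unnecessary arm of $B^c$, so that the expensive $\delta^V$-dependent sampling never leaves $M_*\cup B$ — this is precisely where Lemma~\ref{lemma:bottleneck_explore_correctness}(iii) and the uniqueness of $M_*$ are used — together with the careful splitting of the logarithmic factors so that the $B^c$ dependence carries only the constant-$\kappa$ logarithm.
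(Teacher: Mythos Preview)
Your proposal is correct and follows essentially the same approach as the paper: condition on $\xi_0\cap\xi$, invoke Theorem~\ref{thm:blucb_explore_ub} for the exploration cost, use Lemmas~\ref{lemma:bottleneck_verify_e_in_M_star} and~\ref{lemma:bottleneck_verify_e_notin_M_star} together with $\hat B_{\sub}\subseteq B$ (from Lemma~\ref{lemma:bottleneck_explore_correctness}(iii)) for the verification cost, and then combine. If anything, you are more explicit than the paper, which simply writes ``Combining both parts'' without detailing the log-splitting you perform; your case analysis $A\ln(nC)\le\max\{A\ln(nA),\,C\ln(nC)\}$ is exactly what is needed to justify that step.
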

\begin{proof}
	Assume that event $\xi_0 \cap \xi$ occurs. $\Pr[\xi_0 \cap \xi] \geq 1-\kappa-\delta^V$.
	
	First, we prove the correctness.
	According to Lemma~\ref{lemma:bottleneck_explore_correctness}, the hypothesized $\hat{M}_*$ outputted by the preparation procedure $\algbottleneckexplore$ is exactly the optimal super arm, and thus if $\algbottleneckverify$ terminates, it returns the correct answer.
	
	Next, we prove the sample complexity upper bound.
	According to Theorem~\ref{thm:blucb_explore_ub},  the preparation procedure $\algbottleneckexplore$ costs sample complexity  
	$$
	O \sbr{ \sum_{e \in [n]} \frac{ R^2 }{(\Delta^\fc_e)^2} \ln \sbr{  \sum_{e \in [n]} \frac{ R^2 n  }{(\Delta^\fc_e)^2 \kappa} }  } .
	$$
	Then, we bound the sample complexity of the following verification part. Following the analysis procedure of Theorem~\ref{thm:blucb_explore_ub} with Lemmas~\ref{lemma:bottleneck_verify_e_in_M_star},\ref{lemma:bottleneck_verify_e_notin_M_star}, we can obtain that the verification part cost sample complexity  
	$$
	\sum_{e \in M_* \cup B} \frac{ R^2 }{(\Delta^\fc_e)^2} \ln \sbr{  \sum_{e \in M_* \cup B} \frac{ R^2 n  }{(\Delta^\fc_e)^2 \delta^V } } .
	$$
	Combining both parts, we obtain that the sample complexity is bounded by 
	\begin{align*}
	& O \sbr{ \sum_{e \in B^c} \frac{ R^2 }{(\Delta^\fc_e)^2} \ln \sbr{  \sum_{e \in B^c} \frac{ R^2 n  }{(\Delta^\fc_e)^2 \kappa} } + \sum_{e \in M_* \cup B} \frac{ R^2 }{(\Delta^\fc_e)^2} \ln \sbr{  \sum_{e \in M_* \cup B} \frac{ R^2 n  }{(\Delta^\fc_e)^2 \delta^V } } } 
	\\
	= & O \sbr{ \sum_{e \in B^c} \frac{ R^2 }{(\Delta^\fc_e)^2} \ln \sbr{  \sum_{e \in B^c} \frac{ R^2 n  }{(\Delta^\fc_e)^2 } } + \sum_{e \in M_* \cup B} \frac{ R^2 }{(\Delta^\fc_e)^2} \ln \sbr{  \sum_{e \in M_* \cup B} \frac{ R^2 n  }{(\Delta^\fc_e)^2 \delta^V } } }.
	\end{align*}
\end{proof}

\begin{lemma}[Correctness of $\algbottleneckverify$]
	\label{lemma:blucb_verify_correctness}
	With probability at least $1-\delta$, if algorithm $\algbottleneckverify$  (Algorithm~\ref{alg:bottleneck_verify}) terminates, it returns the optimal super arm $M_*$.
\end{lemma}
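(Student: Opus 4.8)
The plan is to reuse, almost verbatim, the correctness argument already carried out for $\algbottleneck$ in Lemma~\ref{lemma:correctness_blucb}, since the stopping rule of $\algbottleneckverify$ (Line~\ref{line:bottleneck_verify_stop}) has exactly the same shape, with the fixed candidate $\hat{M}_*$ playing the role of $M_t$ and $\cM\setminus\superset(\hat{M}_*)$ serving as the comparison class. The only stochastic ingredient is a single concentration event; everything else is deterministic. In particular, the argument will not use the explore-phase event $\xi_0$ at all, nor any property of $\algbottleneckexplore$ beyond the trivial fact that its output $\hat{M}_*$ lies in $\cM$ — which matches the lemma being conditional on termination only.

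First I would invoke the concentration lemma (Lemma~\ref{lemma:concentration}, applied with the algorithm's confidence parameter $\delta^V$ in place of $\delta$, matching the radius $\rad_t(e)=R\sqrt{2\ln(4nt^3/\delta^V)/T_t(e)}$ used in Algorithm~\ref{alg:bottleneck_verify}) to obtain that $\xi=\bigcap_{t}\{\forall e\in[n],\ |w(e)-\hat{w}_t(e)|<\rad_t(e)\}$ holds with probability at least $1-\delta^V$, which is the asserted confidence. It then suffices to show that, deterministically on $\xi$, if $\algbottleneckverify$ halts at some round $t$ and returns $\hat{M}_*$, then $\hat{M}_*=M_*$. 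On $\xi$ we have $\underline{\bw}_t\le\bw\le\bar{\bw}_t$ coordinatewise. Combining the stopping condition $\wmin(\hat{M}_*,\underline{\bw}_t)\ge\wmin(\tilde{M}_t,\bar{\bw}_t)$ with the optimality of $\tilde{M}_t=\oracle(\cM\setminus\superset(\hat{M}_*),\bar{\bw}_t)$ yields, for every $M\in\cM\setminus\superset(\hat{M}_*)$, the chain $\wmin(\hat{M}_*,\bw)\ge\wmin(\hat{M}_*,\underline{\bw}_t)\ge\wmin(\tilde{M}_t,\bar{\bw}_t)\ge\wmin(M,\bar{\bw}_t)\ge\wmin(M,\bw)$; and for $M\in\superset(\hat{M}_*)$ the monotonicity of the bottleneck reward function gives $\wmin(M,\bw)\le\wmin(\hat{M}_*,\bw)$ directly. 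Hence $\wmin(\hat{M}_*,\bw)\ge\wmin(M,\bw)$ for every $M\in\cM$, so $\hat{M}_*$ attains the maximum bottleneck value, and by the uniqueness of $M_*$ we conclude $\hat{M}_*=M_*$.

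I would also make explicit the (routine) point that $\hat{M}_*\in\cM$ because it is the super arm returned by $\algbottleneckexplore$ in Line~\ref{line:blucb_verify_call_explore}, which is drawn from $\cM$; in particular, even if the explore phase guessed a wrong $\hat{M}_*$, the conclusion still holds vacuously, since the argument above shows the stopping condition can fire only when $\hat{M}_*=M_*$. The main ``obstacle'' here is therefore nothing deep: it is just correctly bookkeeping the $\superset(\hat{M}_*)$ exclusion together with the monotonicity case, and carrying the confidence parameter through. The substantive work for $\algbottleneckverify$ lies in its expected sample complexity (Theorem~\ref{thm:blucb_verify_ub}, via Lemmas~\ref{lemma:bottleneck_verify_e_in_M_star} and~\ref{lemma:bottleneck_verify_e_notin_M_star}), which does rely on the explore-phase guarantee, not in this correctness statement.
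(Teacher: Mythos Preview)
Your proposal is correct and follows essentially the same approach as the paper: condition on the concentration event $\xi$ (which holds with probability at least $1-\delta^V$), then combine the stopping condition with monotonicity of $\wmin$ over $\superset(\hat{M}_*)$ and uniqueness of $M_*$ to conclude $\hat{M}_*=M_*$. The paper's proof is slightly terser (it omits the explicit step through $\tilde{M}_t$), but the argument is identical.
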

\begin{proof}
	Assume that event $\xi$ occurs, where $\Pr[\xi] \geq 1-\delta^V$.
	If algorithm $\algbottleneckverify$ terminates by returning $\hat{M}_*$, we have that for any $M \in \cM \setminus \superset(\hat{M}_*)$,
	$$
	\wmin(\hat{M}_*, \bw) \geq \wmin(\hat{M}_*, \underline{\bw}_t) \geq \wmin(M, \bar{\bw}_t) \geq \wmin(M, \bw) .
	$$
	For any $M \in \superset(\hat{M}_*)$, according to the property of the bottleneck reward function,
	we have
	$$
	\wmin(\hat{M}_*, \bw)  \geq \wmin(M, \bw) .
	$$
	Thus, we have $\wmin(\hat{M}_*, \bw)  \geq \wmin(M, \bw)$ for any $M \neq \hat{M}_*$ and according to the unique assumption of $M_*$, we obtain $\hat{M}_*=M_*$. In other words, algorithm $\algbottleneckverify$  (Algorithm~\ref{alg:bottleneck_verify}) will never return a wrong answer.
\end{proof}

Now, we prove Theorem~\ref{thm:bottleneck_verify_ub}.
\begin{proof}
	Using Theorem~\ref{thm:blucb_verify_ub}, Lemma~\ref{lemma:blucb_verify_correctness} and Lemma 4.8 in \cite{ChenLJ_Nearly_Optimal_Sampling17}, we can obtain this theorem.
\end{proof}

\subsection{PAC Learning}
\label{apx:PAC}

In this subsection, we further study the fixed-confidence CPE-B problem in the PAC learning setting, where the learner's objective is to identify a super arm $M_{\textup{pac}}$ such that  $\wmin(M_{\textup{pac}}, \bw) \geq \opt-\varepsilon$, and the uniqueness assumption of the optimal super arm is dropped. 
We propose two algorithms $\algbottleneckpac$ and $\algbottleneckparallelpac$ for the PAC learning setting, based on  $\algbottleneck$ and $\algbottleneckparallel$, respectively.

The PAC algorithms and their theoretical guarantees do not require the uniqueness assumption of the optimal super arm. Compared to the PAC lower bound, both proposed PAC algorithms achieve the optimal sample complexity for some family of instances. 
Similar to the exact case, when $\delta$ is small enough, the dominant term of sample complexity for algorithm $\algbottleneckparallelpac$ does not depend on the reward gaps of unnecessary base arms, and thus $\algbottleneckparallelpac$ achieves better theoretical guarantee than $\algbottleneckpac$ and matches the lower bound for a broader family of instances.

\subsubsection{Algorithm $\algbottleneckpac$}

$\algbottleneckpac$ simply replaces the stopping condition of $\algbottleneck$ (Line~\ref{line:blucb_stop} in Algorithm~\ref{alg:bottleneck}) with $  \wmin(\tilde{M}_t, \bar{\bw_t})-\wmin(M_t, \underline{\bw}_t) \leq \varepsilon $ to allow an $\varepsilon$ deviation between the returned answer and the optimal one.  The sample complexity of algorithm $\algbottleneckpac$ is given as follows.

\begin{theorem}[Fixed-confidence Upper Bound for PAC]
	\label{thm:bottleneck_pac_ub}
	With probability at least $1-\delta$, the $\algbottleneckpac$ algorithm will return $M_{\textup{out}}$ such that  $\wmin( M_{\textup{out}} , \bw ) \geq \wmin( M_* , \bw ) - \varepsilon $, with sample complexity 
	$$
	O  \sbr{  \sum_{e \in [n]}  \frac{ R^2 }{\max\{(\Delta^\fc_e)^2, \varepsilon^2 \}} \ln  \sbr{  \sum_{e \in [n]}  \frac{ R^2 n  }{\max\{(\Delta^\fc_e)^2, \varepsilon^2 \} \delta}  }  }  .
	$$
\end{theorem}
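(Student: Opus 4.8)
The plan is to mirror the proof of Theorem~\ref{thm:bottleneck_baseline_ub} almost verbatim, since $\algbottleneckpac$ differs from $\algbottleneck$ only in the stopping condition. I keep the concentration event $\xi$ of Lemma~\ref{lemma:concentration}, which holds with probability at least $1-\delta$. Correctness first: if $\algbottleneckpac$ stops at round $t$, the relaxed test $\wmin(\tilde{M}_t,\bar{\bw}_t)-\wmin(M_t,\underline{\bw}_t)\le\varepsilon$ gives, for every $M\in\cM\setminus\superset(M_t)$,
\[
\wmin(M,\bw)\le\wmin(M,\bar{\bw}_t)\le\wmin(\tilde{M}_t,\bar{\bw}_t)\le\wmin(M_t,\underline{\bw}_t)+\varepsilon\le\wmin(M_t,\bw)+\varepsilon,
\]
while for $M\in\superset(M_t)$ monotonicity of $\wmin$ gives $\wmin(M,\bw)\le\wmin(M_t,\bw)$; hence $\wmin(M_t,\bw)\ge\opt-\varepsilon$, i.e.\ the returned super arm is $\varepsilon$-optimal. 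Unlike the exact case (Lemma~\ref{lemma:correctness_blucb}), this argument never asserts $M_t=M_*$, which is what lets us drop the uniqueness assumption.

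The heart of the sample-complexity bound is a single ``not-pulled'' lemma subsuming Lemmas~\ref{lemma:blucb_e_in_M_star}--\ref{lemma:blucb_e_notin_M_star_lower}: on $\xi$, for every $e\in[n]$, if $\rad_t(e)<\tfrac14\max\{\Delta^\fc_e,\varepsilon\}$ then $p_t\ne e$. I prove it by splitting on whether $\Delta^\fc_e\ge\varepsilon$. If $\Delta^\fc_e\ge\varepsilon$, the hypothesis reads $\rad_t(e)<\Delta^\fc_e/4$ and the arguments of Lemmas~\ref{lemma:blucb_e_in_M_star}--\ref{lemma:blucb_e_notin_M_star_lower} apply unchanged — whenever those proofs deduce ``$\algbottleneck$ must have stopped'' from $\wmin(M_t,\underline{\bw}_t)\ge\wmin(\tilde{M}_t,\bar{\bw}_t)$, the weaker relaxed condition of $\algbottleneckpac$ is a fortiori satisfied, giving the same contradiction. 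If $\Delta^\fc_e<\varepsilon$ and $p_t=e$, then $\rad_t(c_t),\rad_t(d_t)\le\rad_t(e)<\varepsilon/4$ by the choice rule for $p_t$, and since $M_t=\oracle(\cM,\underline{\bw}_t)$ is optimal over all of $\cM$ with $\tilde{M}_t\in\cM$ we have $\wmin(M_t,\underline{\bw}_t)\ge\wmin(\tilde{M}_t,\underline{\bw}_t)=\underline{w}_t(d_t)$, whence
\[
\wmin(\tilde{M}_t,\bar{\bw}_t)-\wmin(M_t,\underline{\bw}_t)\le\bar{w}_t(d_t)-\underline{w}_t(c_t)=\wmin(\tilde{M}_t,\underline{\bw}_t)+2\rad_t(d_t)-\wmin(M_t,\underline{\bw}_t)\le 2\rad_t(d_t)<\frac{\varepsilon}{2}\le\varepsilon,
\]
so the stopping test already held before $e$ was pulled, a contradiction.

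Given this lemma, for each $e$ let $t_e$ be the last round at which $e$ is played and $T(e)$ its final pull count, so $T_{t_e}(e)=T(e)-1$ and the contrapositive gives $R\sqrt{2\ln(4nt_e^3/\delta)/(T(e)-1)}=\rad_{t_e}(e)\ge\tfrac14\max\{\Delta^\fc_e,\varepsilon\}$, i.e.\ $T(e)\le\tfrac{32R^2}{\max\{(\Delta^\fc_e)^2,\varepsilon^2\}}\ln(4nT^3/\delta)+1$ with $T=\sum_e T(e)$. Summing over $e\in[n]$ produces the implicit inequality $T\le\sum_{e}\tfrac{32R^2}{\max\{(\Delta^\fc_e)^2,\varepsilon^2\}}\ln(4nT^3/\delta)+n$, which I resolve with Lemma~\ref{lemma:technical_tool} exactly as in the proof of Theorem~\ref{thm:bottleneck_baseline_ub}, yielding the claimed $O\!\sbr{\sum_{e\in[n]}\tfrac{R^2}{\max\{(\Delta^\fc_e)^2,\varepsilon^2\}}\ln\!\sbr{\sum_{e\in[n]}\tfrac{R^2n}{\max\{(\Delta^\fc_e)^2,\varepsilon^2\}\delta}}}$.

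I expect the only genuinely new work to be the $\Delta^\fc_e<\varepsilon$ branch of the not-pulled lemma, but as sketched it follows cleanly from the optimality of $M_t$ with respect to $\underline{\bw}_t$ together with $\rad_t(d_t)\le\rad_t(p_t)$; everything else is a transcription of the $\algbottleneck$ analysis with $\Delta^\fc_e$ replaced throughout by $\max\{\Delta^\fc_e,\varepsilon\}$. The one bookkeeping subtlety is that $\Delta^\fc_e$ and the not-pulled lemma must be stated relative to a fixed maximizer $M_*$ — legitimate because the new correctness argument never uses uniqueness — and that when $e\in M_*$ has $\Delta^\fc_e=0$ (possible once ties are allowed), the claim is covered precisely by the $\varepsilon$-dominated branch, so no special handling is needed.
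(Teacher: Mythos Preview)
Your proposal is correct and follows essentially the same approach as the paper. The paper's proof is organized slightly differently: it simply states that, in addition to inheriting Lemmas~\ref{lemma:blucb_e_in_M_star}--\ref{lemma:blucb_e_notin_M_star_lower}, it suffices to show that $\rad_t(e)<\varepsilon/2$ implies $p_t\neq e$ (using the same $\wmin(\tilde M_t,\bar\bw_t)-\wmin(M_t,\underline\bw_t)\le 2\rad_t(d_t)$ inequality you derived), whereas you package the two criteria into a single ``not-pulled'' lemma with threshold $\tfrac14\max\{\Delta^\fc_e,\varepsilon\}$; the constants differ ($\varepsilon/2$ versus $\varepsilon/4$) but this is immaterial for the $O(\cdot)$ bound.
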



Now we prove the sample complexity of algorithm $\algbottleneckpac$ (Theorem~\ref{thm:bottleneck_pac_ub}). 
\begin{proof}
	First, we prove the correctness.
	When the stop condition of $\algbottleneckpac$ is satisfied, we have that for any $M \neq M_t$,
	$$
	\wmin(M, \bw)-\wmin(M_t, \bw) \leq \wmin(M, \bar{\bw}_t)-\wmin(M_t, \underline{\bw}_t) \leq \wmin(\tilde{M}_t, \bar{\bw}_t)-\wmin(M_t, \underline{\bw}_t) \leq \varepsilon .
	$$
	If $M_t=M_*$, then the correctness holds.
	If $M_t \neq M_*$, the returned super arm $M_t$ satisfies
	$$
	\wmin(M_*, \bw)-\wmin(M_t, \bw) \leq \varepsilon ,
	$$
	which guarantees the correctness.
	
	Next, we prove the sample complexity.
	When inheriting the proof of Theorem~\ref{thm:bottleneck_baseline_ub} for the baseline algorithm $\algbottleneck$, to prove Theorem~\ref{thm:bottleneck_pac_ub} for PAC leaning, it suffices to prove that for any $e \in [n]$, if $\rad_t(e)<\frac{\varepsilon}{2}$, base arm $e$ will not be pulled at round $t$, i.e., $p_t \neq e$.
	
	Suppose that $\rad_t(e)<\frac{\varepsilon}{2}$ and $p_t = e$. According to the selection strategy of $p_t$, we have $\rad_t(c_t)<\frac{\varepsilon}{2}$ and $\rad_t(d_t)<\frac{\varepsilon}{2}$.
	According to the definition of $d_t$, we have 
	\begin{align*}
	\wmin(\tilde{M}_t, \bar{\bw}_t )-\wmin(M_t, \underline{\bw}_t) \leq & \bar{w}(d_t)-\wmin(\tilde{M}_t, \underline{\bw}_t)
	\\
	= & \bar{w}(d_t)- \underline{w}(d_t)
	\\
	= & 2 \rad_t(d_t)
	\\
	< & \varepsilon ,
	\end{align*}
	which contradicts the stop condition.
\end{proof}

\subsubsection{Algorithm $\algbottleneckparallelpac$}
$\algbottleneckparallelpac$ is obtained by simply replacing the stopping condition of $\algbottleneckverify$ (Line~\ref{line:bottleneck_verify_stop} in Algorithm~\ref{alg:bottleneck_verify}) with $  \wmin(\tilde{M}_t, \bar{\bw}_t)-\wmin(M_t, \underline{\bw}_t) \leq \varepsilon $.

Theorem~\ref{thm:bottleneck_verify_pac_ub} presents the sample complexity of algorithm $\algbottleneckparallelpac$.
\begin{theorem}[Improved Fixed-confidence Upper Bound for PAC]
	\label{thm:bottleneck_verify_pac_ub}
	For any $\delta < 0.01$, with probability at least $1-\delta$, algorithm $\algbottleneckparallelpac$  returns $M_*$ and takes the expected sample complexity 
	\begin{align*}
		O \sbr{   \sum_{e \in M_* \cup N} \frac{ R^2 }{ \max\{(\Delta^\fc_e)^2, \varepsilon^2 \} } \ln \sbr{  \sum_{e \in M_* \cup N} \frac{ R^2 n  }{\max\{(\Delta^\fc_e)^2, \varepsilon^2 \} \delta } } + \sum_{e \in \tilde{N}} \frac{ R^2 }{(\Delta^\fc_e)^2} \ln \sbr{  \sum_{e \in \tilde{N}} \frac{ R^2 n  }{(\Delta^\fc_e)^2 } } }  .
	\end{align*}
\end{theorem}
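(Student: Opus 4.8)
The plan is to mirror the proof of Theorem~\ref{thm:bottleneck_verify_ub}, fusing into it the $\varepsilon$-tolerant stopping argument already used for $\algbottleneckpac$ in the proof of Theorem~\ref{thm:bottleneck_pac_ub}. As in the exact case, $\algbottleneckparallelpac$ is the explore-verify-parallel meta-algorithm of~\cite{ChenLJ_Nearly_Optimal_Sampling17} run over PAC variants of $\algbottleneckverify$ and of its subroutine $\algbottleneckexplore$ (invoked at Line~\ref{line:blucb_verify_call_explore}): the verify stopping test of Line~\ref{line:bottleneck_verify_stop} is relaxed to $\wmin(\tilde{M}_t,\bar{\bw}_t)-\wmin(\hat{M}_*,\underline{\bw}_t)\le\varepsilon$, and the near-bottleneck test of Line~\ref{line:blucb_explore_stop_condition} is analogously widened so that a base arm is no longer refined once its confidence radius drops below $\varepsilon/2$. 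Three ingredients then have to be re-established: (i) correctness of the PAC sub-algorithm; (ii) a per-instance sample-complexity bound for it; and (iii) the parallel-composition step.

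For (i) I would combine Lemma~\ref{lemma:blucb_verify_correctness} with the correctness half of Theorem~\ref{thm:bottleneck_pac_ub}. On the good event $\xi_0\cap\xi$, when the relaxed verify test fires we get, for every $M\in\cM\setminus\superset(\hat{M}_*)$, $\wmin(M,\bw)-\wmin(\hat{M}_*,\bw)\le\wmin(M,\bar{\bw}_t)-\wmin(\hat{M}_*,\underline{\bw}_t)\le\varepsilon$, while super arms in $\superset(\hat{M}_*)$ are handled by monotonicity of $\wmin$; hence the returned super arm $\hat{M}$ satisfies $\wmin(\hat{M},\bw)\ge\opt-\varepsilon$. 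Since uniqueness of $M_*$ is used nowhere along this chain (nor in the gap lemmas below), the uniqueness assumption is dropped. For the explore phase I would check, as in Lemma~\ref{lemma:bottleneck_explore_correctness}, that the widened stopping test still certifies that the hypothesized $\hat{M}_*$ is $\varepsilon$-optimal and that the returned $\hat{B}_{\sub}$ contains, for every competitor $M$ with $\opt-\wmin(M,\bw)>\varepsilon$, a base arm of reward at most $\tfrac12(\opt+\wmin(M,\bw))<\opt$ — i.e.\ a genuine member of $N$, within $\tfrac12\Delta^\fc_{M_*,M}$ of that super arm's bottleneck, which is exactly what the verify phase consumes.

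For (ii) the key point is that every base arm now has two independent ``stop-pulling'' triggers. The gap trigger from Lemmas~\ref{lemma:blucb_explore_e_in_M_star}--\ref{lemma:blucb_explore_e_notin_M_star_lower} (explore) and Lemmas~\ref{lemma:bottleneck_verify_e_in_M_star}--\ref{lemma:bottleneck_verify_e_notin_M_star} (verify) fires once $\rad_t(e)<c\,\Delta^\fc_e$ for the appropriate absolute constant $c$; the $\varepsilon$ trigger fires once $\rad_t(e)<\varepsilon/2$, by the one-line argument in the proof of Theorem~\ref{thm:bottleneck_pac_ub} (if $\rad_t(p_t)<\varepsilon/2$ then the current estimated gap is already $\le\varepsilon$, contradicting non-termination), re-run for the sampling sets $P_t^{E}$ and $P_t^{V}$. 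Hence $e$ is pulled $O\!\bigl(R^2/\max\{(\Delta^\fc_e)^2,\varepsilon^2\}\bigr)$ times in each phase; summing over $[n]$ for explore (confidence $\kappa=\Theta(1)$, so the logarithm keeps no $\delta$) and over $\hat{M}_*\cup\hat{B}_{\sub}\subseteq M_*\cup N$ for verify (confidence $\delta^V$), and linearizing with Lemma~\ref{lemma:technical_tool} exactly as in Theorems~\ref{thm:blucb_explore_ub} and~\ref{thm:blucb_verify_ub}, yields the PAC analogue of Theorem~\ref{thm:blucb_verify_ub}. Splitting the explore sum into its $M_*\cup N$ and $\tilde{N}$ pieces, absorbing the former into the dominant verify term and keeping the weaker $(\Delta^\fc_e)^2$ denominator on the $\tilde{N}$ piece, gives the stated form. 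Finally, for (iii) I would feed this sub-algorithm bound together with its correctness into Lemma~4.8 of~\cite{ChenLJ_Nearly_Optimal_Sampling17} (which requires $\delta<0.01$), exactly as in the proof of Theorem~\ref{thm:bottleneck_verify_ub}, to obtain the expected sample complexity of $\algbottleneckparallelpac$.

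The main obstacle I anticipate is the explore phase under the $\varepsilon$-relaxation: one must ensure the widened near-bottleneck target does not let $\hat{B}_{\sub}$ pick up an arm of $\tilde{N}$ that would then be refined during verification and blow up the bound with a spurious $\ln\delta^{-1}$ factor on $\tilde{N}$. The resolution is the one sketched above — only competitors with super-arm gap exceeding $\varepsilon$ must be separated, and for those the relevant near-bottleneck arm has reward strictly below $\opt$, hence lies in $N$; arms of $\tilde{N}$ are touched only by the constant-confidence explore phase and only down to radius $\Theta(\Delta^\fc_e)$, which is precisely where the $\tilde{N}$ term with the $(\Delta^\fc_e)^2$ denominator comes from. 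Everything else is a mechanical re-run of the exact-case calculations with $\varepsilon/2$ inserted as a floor on the confidence radius.
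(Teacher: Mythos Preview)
Your correctness argument and the $\varepsilon$-trigger for the verify phase are essentially what the paper does. The key deviation is that you propose to \emph{also} relax the explore phase (widening the near-bottleneck test of Line~\ref{line:blucb_explore_stop_condition}), whereas the paper leaves $\algbottleneckexplore$ completely untouched and only replaces the verify stopping rule (Line~\ref{line:bottleneck_verify_stop}) by $\wmin(\tilde{M}_t,\bar{\bw}_t)-\wmin(\hat{M}_*,\underline{\bw}_t)\le\varepsilon$.

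This deviation is not merely cosmetic; it creates the very obstacle you then struggle to defuse. With the explore test relaxed, on $\xi_0$ you only get that $\hat{M}_*$ is $\varepsilon$-optimal, not that $\hat{M}_*=M_*$. But the entire verify analysis you invoke (Lemmas~\ref{lemma:bottleneck_verify_e_in_M_star} and~\ref{lemma:bottleneck_verify_e_notin_M_star}, and the $\varepsilon$-trigger computation using $w(e)<\wmin(\hat{M}_*,\bw)$ for $e\in\hat{B}_{\sub}$) is predicated on $\hat{M}_*=M_*$ via Lemma~\ref{lemma:bottleneck_explore_correctness}. If $\hat{M}_*\neq M_*$, two things break: (a) $\hat{M}_*$ may contain base arms of $\tilde{N}$ (any $e\in\hat{M}_*\setminus M_*$ with $w(e)\ge\opt$), so your claimed inclusion $\hat{M}_*\cup\hat{B}_{\sub}\subseteq M_*\cup N$ fails and the verify phase spends $\ln(1/\delta)$ samples on $\tilde{N}$, destroying the stated bound; and (b) $M_*$ itself becomes a competitor in $\cM\setminus\superset(\hat{M}_*)$ with $\wmin(M_*,\bar{\bw}_t)\ge\opt\ge\wmin(\hat{M}_*,\bw)$, so the verify stopping gap can stall at $\opt-\wmin(\hat{M}_*,\bw)\in(0,\varepsilon]$ and neither the gap-trigger lemmas nor the $\varepsilon$-trigger force termination in the claimed time.

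The fix is exactly what the paper does: do not touch $\algbottleneckexplore$. Then on $\xi_0$ Lemma~\ref{lemma:bottleneck_explore_correctness} gives $\hat{M}_*=M_*$ and $\hat{B}_{\sub}\subseteq N$ outright, the verify gap lemmas and your $\varepsilon$-trigger go through verbatim, and the explore contribution over $M_*\cup N$ (with only a constant log, since $\kappa$ is fixed) is absorbed into the first displayed term. Your ``main obstacle'' paragraph then becomes unnecessary.
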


\begin{proof}
	First, we prove the correctness.
	When the stop condition of $\algbottleneckverify$ is satisfied, we have that for any $M \neq \hat{M}_*$,
	$$
	\wmin(M, \bw)-\wmin(\hat{M}_*, \bw) \leq \wmin(M, \bar{\bw}_t)-\wmin(\hat{M}_*, \underline{\bw}_t) \leq \wmin(\tilde{M}_t, \bar{\bw}_t)-\wmin(\hat{M}_*, \underline{\bw}_t) \leq \varepsilon .
	$$
	If $\hat{M}_*=M_*$, then the correctness holds.
	If $\hat{M}_* \neq M_*$, the returned super arm $\hat{M}_*$ satisfies
	$$
	\wmin(M_*, \bw)-\wmin(\hat{M}_*, \bw) \leq \varepsilon ,
	$$
	which guarantees the correctness.
	
	Next, we prove the sample complexity.
	We inherit the proofs of Theorems~\ref{thm:bottleneck_verify_ub},\ref{thm:blucb_verify_ub}. Then, to prove Theorem~\ref{thm:bottleneck_verify_pac_ub} for PAC leaning, it suffices to prove that conditioning on $\xi_0 \cap \xi$, for any $e \in [n]$, if $\rad_t(e)<\frac{\varepsilon}{4}$, base arm $e$ will not be pulled at round $t$, i.e., $p_t \neq e$.
	
	Suppose that $\rad_t(e)<\frac{\varepsilon}{4}$ and $p_t = e$. According to the selection strategy of $p_t$, we have $\rad_t(c_t)<\frac{\varepsilon}{4}$ and for any $e \in F_t$ $\rad_t(e)<\frac{\varepsilon}{4}$.
	Using $F_t \subseteq \hat{B}_{\sub}$ and the definition of $\hat{B}_{\sub}$, we have that for any $e \in F_t$
	\begin{align*}
	\bar{w}(e)-\wmin(\hat{M}_*, \underline{\bw}_t) 
	\leq & w(e) + 2 \rad_t(e) -( w(c_t) - 2 \rad_t(c_t))
	\\
	< & \wmin(\hat{M}_*, \bw) + 2 \rad_t(e) -( \wmin(\hat{M}_*, \bw) - 2 \rad_t(c_t))
	\\
	< & 4 \cdot \frac{\varepsilon}{4}
	\\
	= & \varepsilon 
	\end{align*}
	and thus 
	\begin{align*}
	\wmin(\tilde{M}_t, \bar{\bw}_t )-\wmin(\hat{M}_*, \underline{\bw}_t) \leq \varepsilon ,
	\end{align*}
	which contradicts the stop condition.
\end{proof}

\section{Lower Bounds for the Fixed-Confidence Setting}
\label{apx:lower_bound}

In this section, we present the proof of lower bound for the exact fixed-confidence CPE-B problem. Then, we also provide a lower bound for the PAC fixed-confidence CPE-B problem and give its proof.

First, we prove the lower bound for the exact fixed-confidence CPE-B problem (Theorem~\ref{thm:fixed_confidence_lb}).
Notice that, the sample complexity of algorithm $\algbottleneck$ also matches the lower bound within a logarithmic factor if we replace condition (iii) 
	below with that each sub-optimal super arm only has a single base arm.

\begin{proof}
	Consider an instance $\cI$ of the fixed-confidence CPE-B problem such that: (i) the reward distribution of each base arm $e \in [n]$ is $\cN(w(e), R)$; (ii) both $M_*$ and the second best super arms are unique, and the second best super arm has no overlapped base arm with $M_*$; (iii) in each sub-optimal super arm, there is a single base arm with reward below $\wmin(M_*, \bw)$. 
	
	Fix an arbitrary $\delta$-correct algorithm $\mathbb{A}$.
	For an arbitrary base arm $e \in M_*$, we construct an instance $\cI'$ by changing its reward distribution to $\cN(w'(e), R)$ where $w'(e)=w(e)-2\Delta^\fc_e$.
	Recall that  $M_{\textup{second}}=\argmax_{M \neq M_*} \wmin(M, \bw)$. 
	For instance $\cI'$, from the definition of $\Delta^\fc_e$ (Definition~\ref{def:fc_gap}),
	\begin{align*}
	w'(e) = & w(e)-2\Delta^\fc_e 
	\\
	= & w(e) - ( w(e)-\wmin(M_{\textup{second}}, \bw) ) - \Delta^\fc_e
	\\
	= & \wmin(M_{\textup{second}}, \bw) - \Delta^\fc_e
	\\
	< & \wmin(M_{\textup{second}}, \bw)
	\end{align*}
	and $\wmin(M_*, \bw')=w'(e)<\wmin(M_{\textup{second}}, \bw)$. Thus, $M_{\textup{second}}$ becomes the optimal super arm.

	Let $T_e$ denote the number of samples drawn from base arm $e$ when algorithm $\mathbb{A}$ runs on instance $\cI$. 
	Let $d(x,y)=x\ln(x/y)+(1-x)\ln[(1-x)/(1-y)]$ denote the binary relative entropy function.
	Define $\cH$ as the event that algorithm $\mathbb{A}$ returns $M_*$.
	Since $\mathbb{A}$ is $\delta$-correct, we have $\Pr \limits_{\mathbb{A},\cI}[\cH] \geq 1-\delta$ and $\Pr \limits_{\mathbb{A},\cI'}[\cH] \leq \delta$.
	Thus, $d(\Pr \limits_{\mathbb{A},\cI}[\cH], \Pr \limits_{\mathbb{A},\cI'}[\cH]) \geq d(1-\delta,\delta)$.
	Using Lemma 1 in~\cite{kaufmann2016complexity}, we can obtain
	\begin{align*}
	\ex[T_e] \kl(\cN(w(e), R), \cN(w'(e), R)) \geq d(1-\delta,\delta),
	\end{align*}
	\wei{What is $d(1-\delta,\delta)$? Also I cannot directly make a connection between $\delta$ the high-probability measure, with the rest of the formula.
	The only way that I can build a connection is through $T_e$, and that means $T_e$ is the number of samples need to for the algorithm to be $\delta$-correct. Is it so?
	If so, it is fine. Just want to get a better understanding of the above formula.}
	\yihan{Added the definition of $d(x,y)$ and explanation before the formula.}
	Since the reward distribution of each base arm is Gaussian distribution, we have $\kl(\cN(w(e), R), \cN(w(e'), R))=\frac{1}{2 R^2}(w(e)-w'(e))^2=\frac{2}{ R^2} (\Delta^\fc_e)^2$. Since $\delta \in (0,0.1)$, $d(1-\delta,\delta) \geq 0.4 \ln(1/ \delta)$. Thus, we have
	\begin{align*}
	\frac{2}{ R^2} (\Delta^\fc_e)^2 \cdot \ex[T_e] \geq 0.4 \ln( \frac{1}{\delta} ).
	\end{align*}
	Then, 
	\begin{align*}
	\ex[T_e]  \geq 0.2 \frac{ R^2 }{(\Delta^\fc_e)^2} \ln( \frac{1}{\delta} ).
	\end{align*}
	
	For an arbitrary base arm $e \notin M_*, w(e)<\wmin(M_*, \bw)$, we can construct another instance $\cI'$ by changing its reward distribution to $\cN(w'(e), R)$ where $w'(e)=w(e)+2\Delta^\fc_e$. Let $M_e$ denote the sub-optimal super arm that contains $e$.

	For instance $\cI'$, from the definition of $\Delta^\fc_e$ (Definition~\ref{def:fc_gap}),
	\begin{align*}
	w'(e) = & w(e) + 2\Delta^\fc_e 
	\\
	= & w(e) + ( \wmin(M_*, \bw)-\wmin(M_e) ) + \Delta^\fc_e
	\\
	= & w(e) + ( \wmin(M_*, \bw)-w(e) ) + \Delta^\fc_e
	\\
	= & \wmin(M_*, \bw) + \Delta^\fc_e
	\\
	> & \wmin(M_*, \bw).
	\end{align*}
	Thus, $M_e$ become the optimal super arm. Similarly, using Lemma 1 in~\cite{kaufmann2016complexity} we can obtain
	\begin{align*}
	\frac{2}{ R^2} (\Delta^\fc_e)^2 \cdot \ex[T_e] \geq 0.4 \ln( \frac{1}{\delta} ).
	\end{align*}
	Then, 
	\begin{align*}
	\ex[T_e]  \geq 0.2 \frac{ R^2 }{(\Delta^\fc_e)^2} \ln( \frac{1}{\delta} ).
	\end{align*}
	
	Summing over all $e \in M_*$ and $e \notin M_*, w(e)<\wmin(M_*, \bw)$, we can obtain that any $\delta$-correct algorithm has sample complexity 
	$$
	\Omega \left(  \sum_{e \in M_* \cup B} \frac{R^2}{(\Delta^\fc_e)^2}  \ln \left(\frac{1}{\delta}\right)  \right).
	$$
\end{proof}

Next, we present the lower bound for the PAC fixed-confidence CPE-B problem, where we can relax condition (ii) in the proof of the exact lower bound (Theorem~\ref{thm:fixed_confidence_lb}).
To formally state our result, we first introduce 
the notion of  \emph{$(\delta,\varepsilon)$-correct algorithm} as follows.
For any confidence parameter $\delta \in (0,1)$ and accuracy parameter $\varepsilon>0$, we call an algorithm $\cA$ a $(\delta,\varepsilon)$-correct algorithm if for the fixed-confidence CPE-B in PAC learning, $\cA$ returns a super arm $M_{\textup{pac}}$ such that  $\wmin(M_{\textup{pac}}, \bw) \geq \wmin(M_*, \bw)-\varepsilon$ with probability at least $1-\delta$. 

\begin{theorem}[Fixed-confidence Lower Bound for PAC]
	\label{thm:fixed_confidence_pac_lb}
	There exists a family of instances for the fixed-confidence CPE-B problem, where for any $\delta \in (0,0.1)$,  any $(\delta,\varepsilon)$-correct algorithm has the expected sample complexity 
	$$
	\Omega \Bigg( \sum_{e \in M_* \cup B} \frac{R^2}{ \max\{(\Delta^\fc_e)^2, \varepsilon^2 \}}  \ln \left(\frac{1}{\delta}\right)  \Bigg).
	$$
\end{theorem}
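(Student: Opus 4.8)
The plan is to mirror the change-of-measure argument used for the exact lower bound (Theorem~\ref{thm:fixed_confidence_lb}), replacing the perturbation of magnitude $2\Delta^\fc_e$ by one of magnitude $2\max\{\Delta^\fc_e,\varepsilon\}$, which is exactly what is needed to push the designated answer out of the $\varepsilon$-optimal set. First I would fix a hard family $\cI$ with the following structure: (i) every base arm $e\in[n]$ has reward law $\cN(w(e),R)$; (iii) every sub-optimal super arm contains exactly one base arm with mean below $\opt$, and moreover all of its remaining base arms have mean above $\opt+\varepsilon$; and, in place of condition (ii) of Theorem~\ref{thm:fixed_confidence_lb}, the weaker requirement that $M_*$ is the \emph{unique} $\varepsilon$-optimal super arm and that for each $e\in M_*$ the value $\max_{M\neq M_*}\wmin(M,\bw)$ is attained by some super arm not containing $e$. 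I would additionally take the gaps $\Delta^\fc_e$ as close to $\varepsilon$ as the ``unique $\varepsilon$-optimal'' constraint allows, so that $\max\{(\Delta^\fc_e)^2,\varepsilon^2\}=\Theta((\Delta^\fc_e)^2)$ and the claimed bound is non-vacuous and attained; I would check this can be realized in one of the decision classes already considered (e.g., $s$-$t$ paths). As under condition (iii) of the exact proof, this gives $\Delta^\fc_e=\opt-w(e)$ for $e\in B$ and $\Delta^\fc_e=w(e)-\max_{M\neq M_*}\wmin(M,\bw)$ for $e\in M_*$.

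\textbf{The perturbation and change of measure.} Fix a $(\delta,\varepsilon)$-correct algorithm $\mathbb{A}$ and a base arm $e\in M_*\cup B$, and set $\theta_e:=2\max\{\Delta^\fc_e,\varepsilon\}\ge\Delta^\fc_e+\varepsilon$. I would form the alternative instance $\cI'$ by shifting the mean of $e$ by $\theta_e$: downward ($w'(e)=w(e)-\theta_e$) if $e\in M_*$, so that $e$ becomes the bottleneck of $M_*$ and $\wmin(M_*,\bw')=w'(e)\le\max_{M\neq M_*}\wmin(M,\bw)-\varepsilon$ while the maximizing super arm is untouched; upward ($w'(e)=w(e)+\theta_e$) if $e\in B$, so that $\wmin(M_*,\bw')=\opt$ is unchanged but the best super arm through $e$ climbs above $\opt+\varepsilon$ (using the ``high-arm'' part of~(iii)). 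In either case $M_*$ is no longer $\varepsilon$-optimal in $\cI'$, hence not a valid PAC answer there. Taking $\cH=\{\mathbb{A}\text{ returns }M_*\}$, correctness yields $\Pr_{\mathbb{A},\cI}[\cH]\ge1-\delta$ and $\Pr_{\mathbb{A},\cI'}[\cH]\le\delta$, so $d(\Pr_{\mathbb{A},\cI}[\cH],\Pr_{\mathbb{A},\cI'}[\cH])\ge d(1-\delta,\delta)$. Lemma~1 of~\cite{kaufmann2016complexity} then gives
\[
\E_{\mathbb{A},\cI}[T_e]\cdot\kl\big(\cN(w(e),R),\cN(w'(e),R)\big)\ \ge\ d(1-\delta,\delta),
\]
and since the Gaussian KL equals $\theta_e^2/(2R^2)=2\max\{(\Delta^\fc_e)^2,\varepsilon^2\}/R^2$ and $d(1-\delta,\delta)\ge0.4\ln(1/\delta)$ for $\delta\in(0,0.1)$, I conclude $\E_{\mathbb{A},\cI}[T_e]\ge 0.2\,R^2\ln(1/\delta)/\max\{(\Delta^\fc_e)^2,\varepsilon^2\}$. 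Summing this over the distinct base arms $e\in M_*\cup B$ gives the stated $\Omega$-bound, which, combined with Theorem~\ref{thm:bottleneck_verify_pac_ub}, certifies the optimality of $\algbottleneckparallelpac$ up to a logarithmic factor for small $\delta$.

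\textbf{Main obstacle.} The information-theoretic step is routine; the work is the instance engineering. I must produce an explicit family satisfying simultaneously: $M_*$ uniquely $\varepsilon$-optimal, condition (iii) with the strengthened ``other arms above $\opt+\varepsilon$'' clause, the second-best value attained away from each $e\in M_*$, and gaps $\Delta^\fc_e$ pushed down to $\Theta(\varepsilon)$ so the $\max$ is tight — and I must check these survive each single-arm perturbation, i.e., that raising an arm of $B$ really promotes some super arm strictly past $\opt+\varepsilon$ without another super arm sliding back within $\varepsilon$ of $M_*$, and that lowering an arm of $M_*$ does not leave $M_*$ $\varepsilon$-optimal through a different bottleneck. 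Exhibiting such a family inside a combinatorial decision class (paths, matchings, spanning trees), and relaxing exactly the parts of condition (ii) that are no longer needed, is the delicate part; once it is fixed, the remaining computation is identical to that of Theorem~\ref{thm:fixed_confidence_lb}.
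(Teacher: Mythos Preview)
Your approach is essentially the paper's: construct a family where $M_*$ is the unique $\varepsilon$-optimal super arm (i.e., $\Delta_{\min}>\varepsilon$), then run the per-arm change-of-measure argument of Theorem~\ref{thm:fixed_confidence_lb}. The paper simply imposes $\varepsilon<\wmin(M_*,\bw)-\wmin(M_{\textup{second}},\bw)$, keeps conditions (i)--(iii), and says ``follow the proof of Theorem~\ref{thm:fixed_confidence_lb}''. Two remarks on the differences:

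\emph{Unnecessary refinements.} Once you require $M_*$ to be uniquely $\varepsilon$-optimal, every $e\in M_*\cup B$ satisfies $\Delta^\fc_e\ge\Delta_{\min}>\varepsilon$, so your perturbation $\theta_e=2\max\{\Delta^\fc_e,\varepsilon\}$ collapses to $2\Delta^\fc_e$ and $\max\{(\Delta^\fc_e)^2,\varepsilon^2\}=(\Delta^\fc_e)^2$ exactly. There is no need to push the gaps down to $\Theta(\varepsilon)$: the theorem only asserts existence of a family, and the $\Omega$-claim is already met with the $\max$ degenerate. So the ``main obstacle'' you describe (engineering gaps near $\varepsilon$) is not actually an obstacle.

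\emph{A point where you are more careful than the paper.} Your strengthened clause in (iii)---that the non-bottleneck arms of each sub-optimal super arm lie above $\opt+\varepsilon$---is genuinely needed for the $e\in B$ case: after raising $w(e)$ one must ensure the containing super arm climbs strictly past $\opt+\varepsilon$, not merely past $\opt$. The paper's terse ``follow the proof procedure'' leaves this implicit; you make it explicit, which is the right call.
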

\begin{proof}
	Consider the instance $\cI$ for the PAC  fixed-confidence CPE-B problem, where $\varepsilon<\wmin(M_*, \bw)-\wmin(M_\textup{second}, \bw)$ (to guarantee that $M_*$ is unique) and  (i) the reward distribution of each base arm $e \in [n]$ is $\cN(w(e), 1)$; (ii) the PAC solution $M_{\textup{pac}}$ is unique and the second best super arm has no overlapped base arm with $M_{\textup{pac}}$; (iii) in each sub-optimal super arm, there is a single base arm with reward below $\wmin(M_*, \bw)$. 
	Then, following the proof procedure of Theorem~\ref{thm:fixed_confidence_lb}, we can obtain Theorem~\ref{thm:fixed_confidence_pac_lb}.
\end{proof}

\section{CPE-B in the Fixed-Budget Setting}
\label{apx:fixed_budget}

\begin{algorithm}[t!]
	\caption{$\aroracle$} \label{alg:aroracle}
	\begin{algorithmic}[1]
		\STATE \textbf{Input:} decision class $\cM$, accepted base arm $a$,  set of the rejected base arms $R$ and weight vector $\bv$.
		\STATE Remove the base arms in $R$ from $\cM$ and obtain a new decision class $\cM_{-R}$.
		\STATE Sort the remaining base arms by descending rewards and denote them by $e_{(1)}, \dots, e_{(n-|R|)}$\;
		\FOR{$e= e_{(1)}, \dots, e_{(n-|R|)} $}
		\STATE Remove all base arms with the rewards lower than $w(e)$ from $\cM$ and obtain a new decision class $\cM_{-R, \geq w(e)}$\;
		\STATE $M_{\textup{out}} \leftarrow \existoracle(\cM_{-R, \geq w(e)}, a )$\;
		\IF{$M_{\textup{out}} \neq \perp$}
		\STATE \textbf{return} $M_{\textup{out}}$\;
		\ENDIF
		\ENDFOR
	\end{algorithmic}
\end{algorithm}

In this section, we present the implementation details of $\aroracle$ and error probability proof for algorithm $\bsar$.

\subsection{Implementation Details of $\aroracle$}
First, we discuss $\aroracle$.
Recall that $\aroracle \in \argmax_{M \in \cM(e, R)} \wmin(M, \bw)$, where $\cM(e, R)=\{ M \in \cM: e \in M, R \cap M = \varnothing \}$. If $\cM(e, R)=\varnothing$, $\aroracle=\perp$. 
Algorithm~\ref{alg:aroracle} gives the algorithm pseudo-code of $\aroracle$. As $\findewinset$, $\aroracle$ also uses the existence oracle $\existoracle$ to find a feasible super arm that contains some base arm from the given decision class if there exists, and otherwise return $\perp$.
We explain the procedure of $\aroracle$ as follows: we first remove all base arms in $R$ from the decision class $\cM$ to disable the super arms that contain the rejected base arms. Then, we enumerate the remaining base arms by descending rewards. For each enumerated base arm $e$, we remove the base arms with rewards lower than $w(e)$ and obtain a new decision class $\cM_{-R, \geq w(e)}$, and then use $\existoracle$ to find  a feasible super arm that contains the accepted base arm $a$ from  $\cM_{-R, \geq w(e)}$. Once such a feasible super arm is found, the procedure terminates and returns this super arm.
%
%
Since the enumeration of base arms is performed according to descending rewards and the computed decision class only contains base arms no worse than the enumerated one, $\aroracle$ guarantees to return an optimal super arm from $\cM(e, R)$. 

As for the computational efficiency, the time complexity of $\aroracle$ mainly depends on the step of finding a feasible super arm containing some base arm $e$. 
Fortunately, this existence problem can be solved in polynomial time for a wide family of decision classes. For example, for $s$-$t$ paths, this problem can be reduced to the well-studied $2$-vertex connectivity problem~\cite{two_vertex_connectivity2018}, which is polynomially tractable (see Section~\ref{sec:reduce_2_vertex} for the proof of reduction). 
For maximum cardinality matchings, we just need to remove $e$ and its two end vertices, and then find a feasible maximum cardinality matching in the remaining graph; and for spanning trees, we can just merge the vertices of $e$ and find a feasible spanning tree in the remaining graph.
All of the above cases can be solved efficiently.

\subsection{Proof for Algorithm $\bsar$}
Below we present the proof of error probability for algorithm $\bsar$. 
To prove Theorem~\ref{thm:bsar}, we first introduce the lowing Lemmas~\ref{lemma:bsar_mistake_event}-\ref{lemma:bsar_mistake_p_t}.

\begin{lemma} \label{lemma:bsar_mistake_event}
	For phase $t=1, \dots, n$, define events
	$$
	\cE_t=\lbr{ \forall i \in [n]\setminus(A_t \cup R_t), \    |\hat{w}_t(i)-w(i)| < \frac{ \Delta^\fb_{(n+1-t)} }{8}  } .
	$$
	and $\cE \triangleq \bigcap_{t=1}^{n} \cE_t$.
	Then, we have 
	$$
	\Pr\mbr{ \cE } \geq 1- 2 n^2  \exp \sbr{ - \frac{(T-n)  }{128 \tilde{\log}(n)  R^2 H^B } } .
	$$
\end{lemma}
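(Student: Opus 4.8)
The plan is to prove Lemma~\ref{lemma:bsar_mistake_event} by an arm-by-arm, phase-by-phase sub-Gaussian concentration argument followed by a union bound, using the definition of $H^B$ to turn each per-arm sample count into a single uniform exponent.

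First I would record the sample count. By the sampling rule in Line~\ref{line:bsar_play_U_t}, any base arm $i$ that is still undetermined at the start of phase $t$ (i.e.\ $i \in U_t$) has been played exactly $\tilde{T}_t$ times by the moment $\hat{w}_t(i)$ is formed: it was played $\tilde{T}_s - \tilde{T}_{s-1}$ times in every phase $s \le t$, and these telescope to $\tilde{T}_t$ since $\tilde{T}_0 = 0$. Thus, on the event $\{i \in U_t\}$, $\hat{w}_t(i)$ is the empirical mean of the first $\tilde{T}_t$ i.i.d.\ $R$-sub-Gaussian samples of arm $i$. Since $U_t$, $A_t$, $R_t$ are themselves random (they depend on the accepts/rejects of earlier phases), I would not condition on $U_t$; instead, for a fixed pair $(t,i)$ I would dominate the joint event by a purely sample-driven event: writing $\bar{X}_{i,m}$ for the average of the first $m$ samples of arm $i$,
\[
\Pr\!\left[\, i \in U_t,\ \big|\hat{w}_t(i) - w(i)\big| \ge \tfrac{\Delta^\fb_{(n+1-t)}}{8} \,\right] \ \le\ \Pr\!\left[\, \big|\bar{X}_{i,\tilde{T}_t} - w(i)\big| \ge \tfrac{\Delta^\fb_{(n+1-t)}}{8} \,\right] \ \le\ 2\exp\!\left(-\frac{\tilde{T}_t\,(\Delta^\fb_{(n+1-t)})^2}{128\,R^2}\right),
\]
the last step being the standard sub-Gaussian tail bound with deviation $\Delta^\fb_{(n+1-t)}/8$.

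The key step is to make this exponent uniform in $t$. Using $\tilde{T}_t \ge \frac{T-n}{\tilde{\log}(n)\,(n-t+1)}$ from the definition of $\tilde{T}_t$, together with the definition $H^B = \max_{i\in[n]} \frac{i}{(\Delta^\fb_{(i)})^2}$ evaluated at index $i = n-t+1$, which yields $\frac{(\Delta^\fb_{(n+1-t)})^2}{n-t+1} \ge \frac{1}{H^B}$, I get
\[
\tilde{T}_t\,(\Delta^\fb_{(n+1-t)})^2 \ \ge\ \frac{T-n}{\tilde{\log}(n)}\cdot\frac{(\Delta^\fb_{(n+1-t)})^2}{n-t+1} \ \ge\ \frac{T-n}{\tilde{\log}(n)\,H^B}.
\]
Hence every term above is at most $2\exp\!\big(-\frac{T-n}{128\,R^2\,\tilde{\log}(n)\,H^B}\big)$, independent of $t$ and $i$. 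Finally I would union-bound over the $n$ phases and, within phase $t$, over the at most $n$ base arms that can lie in $U_t$ — at most $n^2$ $(t,i)$ pairs in all — to conclude $\Pr[\neg\cE] \le 2n^2\exp\!\big(-\frac{T-n}{128\,R^2\,\tilde{\log}(n)\,H^B}\big)$, which is exactly the claimed bound.

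The only genuine subtlety, and the point I would treat most carefully, is the adaptivity: the sets $U_t, A_t, R_t$ are determined by the algorithm's earlier decisions, so "$\hat{w}_t(i)$" refers to different sample counts on different sample paths. The clean fix, used above, is to bound the joint event $\{i \in U_t,\ \hat{w}_t(i)\text{ deviates}\}$ by the event that the fixed-length prefix average $\bar{X}_{i,\tilde{T}_t}$ deviates, thereby stripping out all adaptivity before applying concentration. Everything else is a routine calculation with the explicit schedule $\tilde{T}_t$ and the definition of $H^B$; no structural property of the decision class $\cM$ is needed for this lemma.
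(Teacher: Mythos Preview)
Your proposal is correct and follows essentially the same route as the paper: per-arm sub-Gaussian concentration with deviation $\Delta^\fb_{(n+1-t)}/8$ after $\tilde{T}_t$ samples, then lower-bounding $\tilde{T}_t \,(\Delta^\fb_{(n+1-t)})^2$ via the definitions of $\tilde{T}_t$ and $H^B$, then a union bound over at most $n^2$ pairs $(t,i)$. You are in fact a bit more careful than the paper in handling the randomness of $U_t$ by dominating with the fixed-length prefix average $\bar{X}_{i,\tilde{T}_t}$, whereas the paper applies Hoeffding directly without commenting on this adaptivity.
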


\begin{proof}
	For any $t \in [n]$ and $e \in  [n]\setminus(A_t \cup R_t)$, according to the Hoeffding's inequality,
	$$
	\lbr{ |\hat{w}_t(i)-w(i)| \geq \frac{ \Delta^\fb_{(n+1-t)} }{8}  } \leq 2 \exp \sbr{ - \frac{\tilde{T} (\Delta^\fb_{n-t+1})^2}{128 R^2} } .
	$$
	From the definition of $\tilde{T}$ and $H^B$, we have
	\begin{align*}
		\Pr \mbr{ |\hat{w}_t(i)-w(i)| \geq \frac{ \Delta^\fb_{(n+1-t)} }{8}  } \leq & 2 \exp \sbr{ - \frac{\tilde{T} (\Delta^\fb_{n-t+1})^2}{128 R^2} }
		\\
		\leq & 2 \exp \sbr{ - \frac{ \frac{T-n}{\tilde{\log}(n)(n-t+1)} (\Delta^\fb_{n-t+1})^2}{128 R^2} }
		\\
		= & 2 \exp \sbr{ - \frac{(T-n)  }{128 \tilde{\log}(n)  R^2 \frac{n-t+1}{(\Delta^\fb_{n-t+1})^2}} }
		\\
		\leq & 2 \exp \sbr{ - \frac{(T-n)  }{128 \tilde{\log}(n)  R^2 H^B } }
	\end{align*}
	By a union bound over $t \in [n]$ and $e \in  [n]\setminus(A_t \cup R_t)$, we have
	\begin{align*}
		\Pr[\cE] \geq & 1-n^2 \Pr \mbr{ |\hat{w}_t(i)-w(i)| \geq \frac{ \Delta^\fb_{(n+1-t)} }{8}  }
		\\
		\geq & 1- 2 n^2 \exp \sbr{ - \frac{(T-n)  }{128 \tilde{\log}(n)  R^2 H^B } }.
	\end{align*}
\end{proof}

\begin{lemma}\label{lemma:bsar_e_in_M_t_cup_M_*}
	Fix any phase $t>0$. Assume that event $\cE_t$ occurs and algorithm $\bsar$ does not make any mistake before phase $t$, i.e., $A_t \subseteq M_*$ and $R_t \cap M_* = \varnothing$. Then, for any $e \in [n]\setminus(A_t \cup R_t)$ s.t. $\Delta^\fb_e \geq \Delta^\fb_{(n+1-t)} $, we have $e \in (M_* \cap M_t)\cup(\neg M_* \cap \neg M_t)$.
\end{lemma}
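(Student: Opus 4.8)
The plan is to prove the statement by contradiction: suppose some $e\in[n]\setminus(A_t\cup R_t)$ with $\Delta^\fb_e\geq\Delta^\fb_{(n+1-t)}$ is \emph{misclassified} by $M_t$, meaning $e\in M_*\setminus M_t$ or $e\in M_t\setminus M_*$, and derive a contradiction. Throughout I write $\epsilon:=\Delta^\fb_{(n+1-t)}/8$ for the confidence half-width appearing in $\cE_t$, so that $\Delta^\fb_e\geq 8\epsilon$ and, since $\cE_t$ occurs, $|\hat{w}_t(i)-w(i)|<\epsilon$ for every $i\in[n]\setminus(A_t\cup R_t)$. The whole argument amounts to squeezing $\wmin(M_t,\hat{\bw}_t)$ between two incompatible estimates of $\opt$.

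First I would lower bound $\wmin(M_t,\hat{\bw}_t)$. Since no mistake has occurred before phase $t$, we have $A_t\subseteq M_*$ and $R_t\cap M_*=\varnothing$, so $M_*$ is feasible for the oracle call on Line~\ref{line:bsar_find_M_t} (in particular that call does not return $\perp$), whence $\wmin(M_t,\hat{\bw}_t)\geq\wmin(M_*,\hat{\bw}_t)$. Every arm of $M_*$ lies either in $A_t$ — where $\hat{w}_t=\infty$ by Line~\ref{line:bsar_infinity} — or in $[n]\setminus(A_t\cup R_t)$, where $\cE_t$ together with $w(i)\geq\wmin(M_*,\bw)=\opt$ gives $\hat{w}_t(i)>\opt-\epsilon$. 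Taking the minimum over $M_*$ yields $\wmin(M_*,\hat{\bw}_t)>\opt-\epsilon$, hence $\wmin(M_t,\hat{\bw}_t)>\opt-\epsilon$.

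Next I would upper bound $\wmin(M_t,\hat{\bw}_t)$. In either misclassification case the definition of $\Delta^\fb_e$ forces $\wmin(M_t,\bw)\leq\opt-\Delta^\fb_e\leq\opt-8\epsilon$: if $e\in M_*\setminus M_t$ then $M_t$ is a super arm not containing $e$, so $\wmin(M_t,\bw)\leq\max_{M\in\cM:\,e\notin M}\wmin(M,\bw)=\opt-\Delta^\fb_e$; if $e\in M_t\setminus M_*$ then $M_t$ contains $e$, so $\wmin(M_t,\bw)\leq\max_{M\in\cM:\,e\in M}\wmin(M,\bw)=\opt-\Delta^\fb_e$. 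Let $f_0\in M_t$ be a true bottleneck arm, so $w(f_0)=\wmin(M_t,\bw)\leq\opt-8\epsilon<\opt$. Since every arm of $A_t\subseteq M_*$ has true mean at least $\opt$, we get $f_0\notin A_t$; and $f_0\in M_t$ with $R_t\cap M_t=\varnothing$ gives $f_0\notin R_t$; hence $f_0\in[n]\setminus(A_t\cup R_t)$ and $\cE_t$ applies, so $\hat{w}_t(f_0)<w(f_0)+\epsilon\leq\opt-7\epsilon$. Therefore $\wmin(M_t,\hat{\bw}_t)\leq\hat{w}_t(f_0)<\opt-7\epsilon$, contradicting the lower bound. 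Thus no misclassified high-gap arm exists, i.e., $e\in(M_*\cap M_t)\cup(\neg M_*\cap\neg M_t)$.

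I expect the only load-bearing step to be the one just highlighted: using $A_t\subseteq M_*$ — hence ``accepted arm $\Rightarrow$ true mean $\geq\opt$'' — to conclude that the true bottleneck arm of any super arm misclassifying a high-gap $e$ must be \emph{undetermined}, so that the concentration event $\cE_t$ (which controls only undetermined arms) can be invoked precisely on the arm that dominates $\wmin(M_t,\hat{\bw}_t)$. This is exactly where the special ``$\hat{w}_t\leftarrow\infty$'' acceptance scheme of $\bsar$ is indispensable; everything else is a mechanical chaining of $\epsilon$-inequalities from the gap definitions and $\cE_t$.
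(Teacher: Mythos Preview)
Your proof is correct and follows the same contradiction route as the paper: assume $e$ is misclassified by $M_t$, then show $\wmin(M_*,\hat{\bw}_t)>\wmin(M_t,\hat{\bw}_t)$, contradicting the optimality of $M_t$ in the oracle call. The paper simply writes $\wmin(M_*,\hat{\bw}_t)-\wmin(M_t,\hat{\bw}_t)>\wmin(M_*,\bw)-\tfrac18\Delta^\fb_{(n+1-t)}-(\wmin(M_t,\bw)+\tfrac18\Delta^\fb_{(n+1-t)})>0$ in each case without further comment; you actually justify the upper bound on $\wmin(M_t,\hat{\bw}_t)$ by checking that the true bottleneck $f_0$ of $M_t$ lies in $U_t$ (using $w(f_0)<\opt$ so $f_0\notin A_t\subseteq M_*$, and $M_t\cap R_t=\varnothing$ so $f_0\notin R_t$), which is precisely the step where the ``$\hat w_t\leftarrow\infty$ on $A_t$'' scheme matters. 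So your argument is the same as the paper's, just with that load-bearing detail spelled out explicitly.
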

\begin{proof}
	Suppose that $e \in (M_* \cap \neg M_t)\cup(\neg M_* \cap M_t)$.
	
	Case (I). If $e \in M_*, e \notin M_t$, then $M_t$ is a sub-optimal super arm and $\Delta^\fb_{M_*,M_t} \geq  \Delta^\fb_e \geq \Delta^\fb_{(n+1-t)}$.
	Then, we have 
	\begin{align*}
		\wmin(M_*,\hat{\bw}_t)-\wmin(M_t,\hat{\bw}_t)  > & \wmin(M_*,\bw)-\frac{1}{8} \Delta^\fb_{(n+1-t)} - (\wmin(M_t,\bw)+\frac{1}{8} \Delta^\fb_{(n+1-t)} )
		\\
		= & \wmin(M_*,\bw) - \wmin(M_t,\bw) - \frac{1}{4} \Delta^\fb_{(n+1-t)}
		\\
		\geq & \Delta^\fb_e - \frac{1}{4} \Delta^\fb_{(n+1-t)}
		\\
		\geq & \frac{3}{4} \Delta^\fb_{(n+1-t)}
		\\
		> & 0,
	\end{align*}
	which contradicts the definition of $M_t$.
	
	Case (II). If $e \in M_t, e \notin M_*$, then $M_t$ is a sub-optimal super arm and $\Delta^\fb_{M_*,M_t} \geq  \Delta^\fb_e \geq \Delta^\fb_{(n+1-t)}$.
	Then, we have 
	\begin{align*}
		\wmin(M_*,\hat{\bw}_t)-\wmin(M_t,\hat{\bw}_t)  > & \wmin(M_*,\bw)-\frac{1}{8} \Delta^\fb_{(n+1-t)} - (\wmin(M_t,\bw)+\frac{1}{8} \Delta^\fb_{(n+1-t)} )
		\\
		= & \wmin(M_*,\bw) - \wmin(M_t,\bw) - \frac{1}{4} \Delta^\fb_{(n+1-t)}
		\\
		\geq & \Delta^\fb_e - \frac{1}{4} \Delta^\fb_{(n+1-t)}
		\\
		\geq & \frac{3}{4} \Delta^\fb_{(n+1-t)}
		\\
		> & 0,
	\end{align*}
	which contradicts the definition of $M_t$.
	
	Thus, the supposition does not hold and we obtain $e \in (M_* \cap M_t)\cup(\neg M_* \cap \neg M_t)$.
\end{proof}

\begin{lemma} \label{lemma:bsar_exist_e}
	Fix any phase $t>0$. Assume that event $\cE_t$ occurs and algorithm $\bsar$ does not make any mistake before phase $t$, i.e., $A_t \subseteq M_*$ and $R_t \cap M_* = \varnothing$. Then, there exists some base arm 
	$e \in [n]\setminus(A_t \cup R_t)$ s.t. $\Delta^\fb_e \geq \Delta^\fb_{(n+1-t)} $ and this base arm $e$ satisfies 
	$$
	\wmin(M_t,\hat{\bw}_t)-\wmin(\tilde{M}_{t,e},\hat{\bw}_t) > \frac{3}{4} \Delta^\fb_{(n+1-t)}.
	$$
\end{lemma}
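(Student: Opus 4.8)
The plan is to first pin down a suitable witness base arm inside $U_t=[n]\setminus(A_t\cup R_t)$, and then trace it through the two oracle branches of $\bsar$, using the concentration event $\cE_t$ to lower bound $\wmin(M_t,\hat{\bw}_t)$ and upper bound $\wmin(\tilde{M}_{t,e},\hat{\bw}_t)$.

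\textbf{Finding the witness.} At the start of phase $t$ exactly $t-1$ base arms have been accepted or rejected, so $|U_t|=n-t+1$. The base arms with fixed-budget gap strictly below $\Delta^\fb_{(n+1-t)}$ are among the $n-t$ arms realizing $\Delta^\fb_{(1)},\dots,\Delta^\fb_{(n-t)}$, hence number at most $n-t$; by pigeonhole there is at least one $e^\star\in U_t$ with $\Delta^\fb_{e^\star}\geq\Delta^\fb_{(n+1-t)}$. I will show this $e^\star$ witnesses the claim. Since $\cE_t$ holds and $\bsar$ has made no mistake before phase $t$, Lemma~\ref{lemma:bsar_e_in_M_t_cup_M_*} gives $e^\star\in(M_*\cap M_t)\cup(\neg M_*\cap\neg M_t)$, so either $e^\star\in M_t$ and $e^\star\in M_*$, or $e^\star\notin M_t$ and $e^\star\notin M_*$.

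\textbf{Lower bound on $\wmin(M_t,\hat{\bw}_t)$.} Because $R_t\cap M_*=\varnothing$, the super arm $M_*$ is feasible for the call $\aroracle(\perp,R_t,\hat{\bw}_t)$, so $\wmin(M_t,\hat{\bw}_t)\geq\wmin(M_*,\hat{\bw}_t)$. Every $e\in M_*\cap A_t$ has $\hat{w}_t(e)=\infty$ by Line~\ref{line:bsar_infinity}, and every $e\in M_*\setminus A_t=M_*\cap U_t$ (using $R_t\cap M_*=\varnothing$) satisfies, by $\cE_t$, $\hat{w}_t(e)>w(e)-\tfrac18\Delta^\fb_{(n+1-t)}\geq\opt-\tfrac18\Delta^\fb_{(n+1-t)}$, since $w(e)\geq\opt$ for $e\in M_*$. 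Hence $\wmin(M_t,\hat{\bw}_t)\geq\wmin(M_*,\hat{\bw}_t)>\opt-\tfrac18\Delta^\fb_{(n+1-t)}$.

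\textbf{Upper bound on $\wmin(\tilde{M}_{t,e^\star},\hat{\bw}_t)$.} By construction $\tilde{M}_{t,e^\star}$ is either forced to avoid $e^\star$ (when $e^\star\in M_t$, hence $e^\star\in M_*$) via $\aroracle(\perp,R_t\cup\{e^\star\},\hat{\bw}_t)$, or forced to contain $e^\star$ (when $e^\star\notin M_t$, hence $e^\star\notin M_*$) via $\aroracle(e^\star,R_t,\hat{\bw}_t)$; in either case it also avoids $R_t$. If the relevant feasible class is empty, $\wmin(\tilde{M}_{t,e^\star},\hat{\bw}_t)=-\infty$ and the claimed inequality is immediate, so assume $\tilde{M}_{t,e^\star}$ is a genuine super arm. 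From the definition of $\Delta^\fb$, avoiding an $e\in M_*$ or containing an $e\notin M_*$ both force $\wmin(\tilde{M}_{t,e^\star},\bw)\leq\opt-\Delta^\fb_{e^\star}\leq\opt-\Delta^\fb_{(n+1-t)}$; in particular $\tilde{M}_{t,e^\star}$ is sub-optimal. Let $b^\star=\argmin_{e\in\tilde{M}_{t,e^\star}}w(e)$ be its true bottleneck; then $w(b^\star)\leq\opt-\Delta^\fb_{(n+1-t)}<\opt$, so $b^\star\notin M_*$ (every arm of $M_*$ has true reward at least $\opt$), hence $b^\star\notin A_t$ (as $A_t\subseteq M_*$ by the no-mistake hypothesis), and $b^\star\notin R_t$ since $\tilde{M}_{t,e^\star}$ avoids $R_t$. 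Therefore $b^\star\in U_t$, and $\cE_t$ yields $\hat{w}_t(b^\star)<w(b^\star)+\tfrac18\Delta^\fb_{(n+1-t)}\leq\opt-\tfrac78\Delta^\fb_{(n+1-t)}$, whence $\wmin(\tilde{M}_{t,e^\star},\hat{\bw}_t)\leq\hat{w}_t(b^\star)<\opt-\tfrac78\Delta^\fb_{(n+1-t)}$. Subtracting the two bounds gives $\wmin(M_t,\hat{\bw}_t)-\wmin(\tilde{M}_{t,e^\star},\hat{\bw}_t)>\tfrac78\Delta^\fb_{(n+1-t)}-\tfrac18\Delta^\fb_{(n+1-t)}=\tfrac34\Delta^\fb_{(n+1-t)}$.

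\textbf{Main obstacle.} The crux is the step $b^\star\in U_t$, i.e., that the true bottleneck of the competitor $\tilde{M}_{t,e^\star}$ has never been accepted and so is accurately estimated by $\cE_t$. This is exactly where the special acceptance scheme ($\hat{w}_t\leftarrow\infty$ on $A_t$) together with the inductive hypothesis $A_t\subseteq M_*$ is needed: it guarantees that a sub-optimal super arm's empirical bottleneck is realized on an undetermined arm rather than a poorly-estimated accepted arm. The remaining work is routine bookkeeping across the two oracle branches and the $\perp$ edge cases.
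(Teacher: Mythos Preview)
Your proposal is correct and follows essentially the same route as the paper: pick a witness $e^\star\in U_t$ with $\Delta^\fb_{e^\star}\ge\Delta^\fb_{(n+1-t)}$, use Lemma~\ref{lemma:bsar_e_in_M_t_cup_M_*} for the case split, lower bound $\wmin(M_t,\hat{\bw}_t)$ via $\wmin(M_*,\hat{\bw}_t)$, and upper bound $\wmin(\tilde{M}_{t,e^\star},\hat{\bw}_t)$ via the true bottleneck of $\tilde{M}_{t,e^\star}$. In fact you are more explicit than the paper in two places it leaves implicit: the pigeonhole existence of $e^\star$, and the argument that the true bottleneck $b^\star$ of $\tilde{M}_{t,e^\star}$ lies in $U_t$ (hence is covered by $\cE_t$), which is exactly where the acceptance scheme $\hat{w}_t(A_t)\leftarrow\infty$ together with $A_t\subseteq M_*$ is used.
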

\begin{proof}
	Since $e \in [n]\setminus(A_t \cup R_t)$ and $\Delta^\fb_e \geq \Delta^\fb_{(n+1-t)} $, according to Lemma~\ref{lemma:bsar_e_in_M_t_cup_M_*}, we have $e \in (M_* \cap M_t)\cup(\neg M_* \cap \neg M_t)$.
	
	Case (I). If $e \in (M_* \cap M_t)$, then $e \notin \tilde{M}_{t,e}$ (if $\tilde{M}_{t,e}=\perp$ then the lemma trivially holds) and $\Delta^\fb_{M_*, \tilde{M}_{t,e}} \geq \Delta^\fb_e$. We have
	\begin{align*}
		\wmin(M_*,\hat{\bw}_t)-\wmin(\tilde{M}_{t,e},\hat{\bw}_t)  > & \wmin(M_*,\bw)-\frac{1}{8} \Delta^\fb_{(n+1-t)} - (\wmin(\tilde{M}_{t,e},\bw)+\frac{1}{8} \Delta^\fb_{(n+1-t)} )
		\\
		= & \wmin(M_*,\bw)-\wmin(\tilde{M}_{t,e},\bw)-\frac{1}{4} \Delta^\fb_{(n+1-t)} 
		\\
		\geq & \Delta^\fc_e-\frac{1}{4} \Delta^\fb_{(n+1-t)} 
		\\
		\geq & \frac{3}{4} \Delta^\fb_{(n+1-t)} .
	\end{align*}
	
	Case (II). If $e \in (\neg M_* \cap \neg M_t)$, then $e \in \tilde{M}_{t,e}$ (if $\tilde{M}_{t,e}=\perp$ then the lemma trivially holds) and $\Delta^\fb_{M_*, \tilde{M}_{t,e}} \geq \Delta^\fb_e$. We have
	\begin{align*}
		\wmin(M_*,\hat{\bw}_t)-\wmin(\tilde{M}_{t,e},\hat{\bw}_t)  > & \wmin(M_*,\bw)-\frac{1}{8} \Delta^\fb_{(n+1-t)} - (\wmin(\tilde{M}_{t,e},\bw)+\frac{1}{8} \Delta^\fb_{(n+1-t)} )
		\\
		= & \wmin(M_*,\bw)-\wmin(\tilde{M}_{t,e},\bw)-\frac{1}{4} \Delta^\fb_{(n+1-t)} 
		\\
		\geq & \Delta^\fc_e-\frac{1}{4} \Delta^\fb_{(n+1-t)} 
		\\
		\geq & \frac{3}{4} \Delta^\fb_{(n+1-t)} .
	\end{align*}
	
	Combining cases (I) and (II), we obtain the lemma.
\end{proof}

\begin{lemma} \label{lemma:bsar_mistake_p_t}
	Fix any phase $t>0$. Assume that event $\cE_t$ occurs and algorithm $\bsar$ does not make any mistake before phase $t$, i.e., $A_t \subseteq M_*$ and $R_t \cap M_* = \varnothing$.
	Then, for any $p \in [n]\setminus(A_t \cup R_t)$ s.t. $p \in (M_* \cap \neg M_t)\cup(\neg M_* \cap M_t)$, we have 
	$$
	\wmin(M_t,\hat{\bw}_t)-\wmin(\tilde{M}_{t,p},\hat{\bw}_t) < \frac{1}{4} \Delta^\fb_{(n+1-t)}
	$$
\end{lemma}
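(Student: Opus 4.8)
The plan is to reduce the claim, via a two-case split on which of $M_*$ and $M_t$ contains $p$, to a couple of one-sided empirical estimates centred at $\opt$. First note that, since $p$ lies in exactly one of $M_t$ and $M_*$, we have $M_t\neq M_*$, and hence $\wmin(M_t,\bw)<\opt$ (by optimality and uniqueness of $M_*$). It then suffices to establish, under $\cE_t$, the three facts: (i) $\wmin(M_t,\hat{\bw}_t)<\opt+\tfrac18\Delta^\fb_{(n+1-t)}$; (ii) $\wmin(M_*,\hat{\bw}_t)>\opt-\tfrac18\Delta^\fb_{(n+1-t)}$; and (iii) $\wmin(\tilde M_{t,p},\hat{\bw}_t)\ge\wmin(M_*,\hat{\bw}_t)$. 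Chaining (i)--(iii) immediately gives $\wmin(M_t,\hat{\bw}_t)-\wmin(\tilde M_{t,p},\hat{\bw}_t)<\tfrac14\Delta^\fb_{(n+1-t)}$, which is the claim.

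For (i), I would take $e_{\min}\in\argmin_{e\in M_t}w(e)$, so $w(e_{\min})=\wmin(M_t,\bw)<\opt$. Every arm of $M_*$ has reward at least $\opt$ and $A_t\subseteq M_*$ by hypothesis, so $e_{\min}\notin A_t$; also $e_{\min}\in M_t$ is feasible with respect to $R_t$, so $e_{\min}\notin R_t$. Hence $e_{\min}\in[n]\setminus(A_t\cup R_t)$, and $\cE_t$ gives $\hat{w}_t(e_{\min})<w(e_{\min})+\tfrac18\Delta^\fb_{(n+1-t)}<\opt+\tfrac18\Delta^\fb_{(n+1-t)}$; since $\wmin(M_t,\hat{\bw}_t)\le\hat{w}_t(e_{\min})$, (i) follows. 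For (ii), I would split $M_*$ into $M_*\cap A_t$, on which $\hat{w}_t(e)=\infty$ by the acceptance scheme (Line~\ref{line:bsar_infinity}), and $M_*\setminus A_t\subseteq[n]\setminus(A_t\cup R_t)$, on which $\cE_t$ together with $w(e)\ge\opt$ gives $\hat{w}_t(e)>\opt-\tfrac18\Delta^\fb_{(n+1-t)}$; taking the minimum over $M_*$ yields (ii).

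For (iii) I would split on $p$. If $p\in M_*\setminus M_t$, then because $p\notin M_t$ the algorithm computes $\tilde M_{t,p}=\aroracle(p,R_t,\hat{\bw}_t)$ (Line~\ref{line:bsar_second_aroracle}); since $p\in M_*$ and $M_*\cap R_t=\varnothing$, the super arm $M_*$ lies in $\cM(p,R_t)$, so $\aroracle$ returns a genuine super arm with $\wmin(\tilde M_{t,p},\hat{\bw}_t)\ge\wmin(M_*,\hat{\bw}_t)$. If instead $p\in M_t\setminus M_*$, then the algorithm computes $\tilde M_{t,p}=\aroracle(\perp,R_t\cup\{p\},\hat{\bw}_t)$ (Line~\ref{line:bsar_first_aroracle}); since $p\notin M_*$ and $M_*\cap R_t=\varnothing$, we have $M_*\in\cM(\perp,R_t\cup\{p\})$, and again $\wmin(\tilde M_{t,p},\hat{\bw}_t)\ge\wmin(M_*,\hat{\bw}_t)$. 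In both cases $\tilde M_{t,p}\neq\perp$, so the left-hand side is not $+\infty$, and (iii) holds.

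The one step that needs care --- the analogue of the ``special acceptance scheme'' discussion in the main text --- is (i): the reset $\hat{w}_t(e)\leftarrow\infty$ on $A_t$ could in principle hide the true bottleneck of $M_t$ and inflate $\wmin(M_t,\hat{\bw}_t)$, so one must argue that the true bottleneck arm $e_{\min}$ of $M_t$ is still sampled in phase $t$. This is exactly where sub-optimality of $M_t$ (giving $\wmin(M_t,\bw)<\opt$) combines with $A_t\subseteq M_*$ (no mistakes so far) to force $e_{\min}\in[n]\setminus(A_t\cup R_t)$. Everything else is a direct application of $\cE_t$ and of the feasibility of $M_*$ for the relevant restricted oracle call.
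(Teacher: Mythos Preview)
Your proof is correct and follows essentially the same approach as the paper: bound $\wmin(M_t,\hat{\bw}_t)-\wmin(M_*,\hat{\bw}_t)<\tfrac14\Delta^\fb_{(n+1-t)}$ via concentration, then use that $M_*$ is feasible for the restricted oracle call defining $\tilde{M}_{t,p}$ to get $\wmin(\tilde{M}_{t,p},\hat{\bw}_t)\ge\wmin(M_*,\hat{\bw}_t)$. If anything, your write-up is more careful than the paper's, which invokes $\cE_t$ on the bottlenecks of $M_t$ and $M_*$ without explicitly checking (as you do) that those arms lie in $[n]\setminus(A_t\cup R_t)$ and that the $\hat{w}_t\leftarrow\infty$ reset on $A_t$ does not interfere.
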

\begin{proof}
	Since $p \in (M_* \cap \neg M_t)\cup(\neg M_* \cap M_t)$, then $M_t$ is a sub-optimal super arm and $\Delta^\fb_{M_t, M_*} < 0$. We have 
	\begin{align*}
		\wmin(M_t,\hat{\bw}_t)-\wmin(M_*,\hat{\bw}_t)  < &   \wmin(M_t,\bw)+\frac{1}{8} \Delta^\fb_{(n+1-t)} - (\wmin(M_*,\bw)-\frac{1}{8} \Delta^\fb_{(n+1-t)})
		\\
		= & \wmin(M_t,\bw) - \wmin(M_*,\bw) + \frac{1}{4} \Delta^\fb_{(n+1-t)}
		\\
		< & \frac{1}{4} \Delta^\fb_{(n+1-t)}.
	\end{align*}
	Since $p \in (M_* \cap \neg M_t)\cup(\neg M_* \cap M_t)$, according to the definition of $\tilde{M}_{t,p}$, we have $\wmin(\tilde{M}_{t,p},\hat{\bw}_t) \geq \wmin(M_*,\hat{\bw}_t)$. Then, we have
	\begin{align*}
		\wmin(M_t,\hat{\bw}_t)-\wmin(\tilde{M}_{t,p},\hat{\bw}_t) \leq & \wmin(M_t,\hat{\bw}_t)-\wmin(M_*,\hat{\bw}_t) 
		\\
		< & \frac{1}{4} \Delta^\fb_{(n+1-t)}.
	\end{align*}
\end{proof}

Now, we prove Theorem~\ref{thm:bsar}.
\begin{proof}
	First, we prove that the number of samples for algorithm $\bsar$ is bounded by $T$. Summing the number of samples for each phase, we have
	\begin{align*}
		\sum_{t=1}^{n} \tilde{T}_t \leq & \sum_{t=1}^{n} \sbr{ \frac{T-n}{\tilde{\log}(n)(n-t+1)} +1}
		\\
		= & \frac{T-n}{\tilde{\log}(n)} \tilde{\log}(n)+n
		\\
		= & T.
	\end{align*}
	
	Next, we prove the mistake probability. According to Lemma~\ref{lemma:bsar_mistake_event}, in order to prove Theorem~\ref{thm:bsar}, it suffices to prove that conditioning on $\cE$, algorithm $\bsar$ returns $M_*$.
	
	Assuming that $\cE$ occurs, we prove by induction. Fix a phase $t \in [n]$. Suppose that algorithm $\bsar$ does not make any mistake before phase $t$, i.e., $A_t \subseteq M_*$ and $R_t \cap M_* = \varnothing$. We show that algorithm $\bsar$ does not make any mistake in phase $t$ either.
	
	According to Lemma~\ref{lemma:bsar_exist_e}, there exists some base arm 
	$e \in [n]\setminus(A_t \cup R_t)$ s.t. $\Delta^\fb_e \geq \Delta^\fb_{(n+1-t)} $ and this base arm $e$ satisfies $ \wmin(M_t,\hat{\bw}_t)-\wmin(\tilde{M}_{t,e},\hat{\bw}_t) > \frac{3}{4} \Delta^\fb_{(n+1-t)} $.
	Suppose that algorithm $\bsar$ makes a mistake in phase $t$, i.e., $p_t \in (M_* \cap \neg M_t)\cup(\neg M_* \cap M_t)$. According to Lemma~\ref{lemma:bsar_mistake_p_t}, we have $ \wmin(M_t,\hat{\bw}_t)-\wmin(\tilde{M}_{t,p_t},\hat{\bw}_t) < \frac{1}{4} \Delta^\fb_{(n+1-t)} $.
	Then,
	\begin{align*}
		\wmin(M_t,\bar{\bw}_t)-\wmin(\tilde{M}_{t,e},\bar{\bw}_t) > & \frac{3}{4} \Delta^\fb_{(n+1-t)}
		\\
		> & \frac{1}{4} \Delta^\fb_{(n+1-t)}
		\\
		> & \wmin(M_t,\bar{\bw}_t)-\wmin(\tilde{M}_{t,p_t},\bar{\bw}_t), 
	\end{align*}
	which contradicts the selection strategy of $p_t$.
	Thus, $p_t \in (M_* \cap M_t)\cup(\neg M_* \cap \neg M_t)$, i.e., algorithm $\bsar$ does not make any mistake in phase $t$, which completes the proof.
\end{proof}

\subsection{Exponential-time Complexity of the Accept-reject Oracle used in Prior Work \cite{chen2014cpe}}
\label{apx:np_hard}
The accept-reject oracle used in prior CPE-L work~\cite{chen2014cpe}, which returns the optimal super arm with a given base arm set $A_t$ contained in it, costs exponential running time on $s$-$t$ path instances.
This is because the problem $\cP$ of finding an $s$-$t$ path which contains a given edge set is NP-hard. 
In the following, we prove the NP-hardness of problem $\cP$ by building a reduction from the Hamiltonian Path Problem~\cite{Hamiltonian_path_problem} to $\cP$.

\begin{proof}
	Given any Hamiltonian Path instance $G=(V,E)$ with start and end nodes $s, t \in V$, we need to find an $s$-$t$ path that passes through every vertex in $V$ once ($s$-$t$ Hamiltonian path).
	We construct a new graph $G'$ as follows: for each vertex $u \in V \setminus \{s, t\}$, we split $u$ into two vertices $u_1, u_2$ and add an ``internal'' edge $(u_1, u_2)$. 
	For each edge $(u,v) \in E$ such that $u,v \in V \setminus \{s, t\}$, we change the original $(u,v)$ to two edges $(u_1,v_2)$ and $(u_2,v_1)$.
	For each edge $(s,u) \in E$ such that $u \in V \setminus \{s, t\}$, we change the original $(s,u)$ to edge $(s,u_1)$.
	For each edge $(u,t) \in E$ such that $u \in V \setminus \{s, t\}$, we change the original $(u,t)$ to edge $(u_2,t)$.
	
	Then, the following two statements are equivalent: (i) there exists an $s$-$t$ Hamiltonian path in $G$, and (ii) there exists an $s$-$t$ path in $G'$, which contains all internal edges $(u_1, u_2)$ for $u \in V \setminus \{s, t\}$.
	If there is a polynomial-time oracle to find an $s$-$t$ path which contains a given edge set, then this oracle can solve the given Hamiltonian path instance in polynomial time. 
	However, the Hamiltonian Path Problem is NP-hard, and thus the problem of finding an $s$-$t$ path which contains a given edge set is also NP-hard.
\end{proof}

\section{Time Complexity}
\label{apx:time_complexity}
In this paper, all our algorithms run in polynomial time.
Since the running time of our algorithms mainly depends on their used offline procedures, here we present the time complexity of used offline procedures on three common decision classes, e.g., $s$-$t$ paths, maximum cardinality matchings and spanning trees.
Let $E$ and $V$ denote the numbers of edges and vertices in the graph, respectively.

\begin{table}[h]
\centering
\begin{tabular}{ccccc}
	\toprule  
	& $\oracle$ & $\existoracle$ & $\findewinset$ & $\aroracle$ \\
	\midrule  
	$s$-$t$ paths& $O(E)$ & $O(E+V)$ & $O(E^2(E+V))$ & $O(E(E+V))$\\
	matchings& $O(V \sqrt{V E})$ & $O(E \sqrt{V})$ & $O(E^3 \sqrt{V})$ & $O(E^2 \sqrt{V})$\\
	spanning trees& $O(E)$ & $O(E)$ & $O(E^3)$ & $O(E^2)$\\
	\bottomrule 
\end{tabular}
\vspace*{0.8em}
\caption{Time complexity of the offline procedures used in our algorithms.}
\end{table}
 
\subsection{Reduction of $\existoracle$ to $2$-vertex Connectivity} \label{sec:reduce_2_vertex}
In this subsection, we show how to reduce \emph{the problem of finding a $s$-$t$ path that contains a given edge $(u,v)$ ($\existoracle$)} to \emph{the $2$-vertex connectivity problem~\cite{two_vertex_connectivity2018}} as follows.

First, we formally define these two problems.

\textbf{Problem A ($\existoracle$).} Given a graph $G$ with vertices $s, t, u, v$, check if there exists a $s$-$t$ simple path that contains $(u,v)$, and output such a path if it exists.

\textbf{Problem B ($2$-vertex connectivity).} Given a graph $G$ with vertices $w, z$, check if there exist two vertex-disjoint paths connecting $w$ and $z$, and output such two vertex-disjoint paths if they exist.

Now we present the proof of reduction from Problem A to Problem B.

\begin{proof}
The reduction starts from a given instance of Problem A. Given a graph $G$ with vertices $s, t, u, v$, we divide edge $(u,v)$ into two edges $(u,w), (w,v)$ with an added virtual vertex $w$. Similarly, we also divide edge $(s,t)$ into two edges $(s,z),(z,t)$ with an added virtual vertex $z$. 
Now, we show that finding a $s$-$t$ simple path that contains $(u,v)$ is equivalent to finding two vertex-disjoint paths connecting $w$ and $z$.

(i) If we have a $s$-$t$ simple path $p$ that contains $(u,v)$, then $p$ has two subpaths $p_1, p_2$ connecting $s,w$ and $w,t$, respectively, where $p_1, p_2$ do not have overlapped vertices. We concatenate $p_1$ and $(s,z)$, and concatenate $p_2$ and $(t,z)$. Then, we obtain two vertex-disjoint paths connecting $w$ and $z$.

(ii) If we have two vertex-disjoint paths connecting $w$ and $z$, then using the facts that $w$ is only connected to vertices $u,v$ and $z$ is only connected to vertices $s,t$, we can obtain two vertex-disjoint paths $q_1, q_2$ connecting $s,u$ and $t,v$, respectively (or connecting $s,v$ and $t,u$, respectively). We concatenate $q_1, q_2$ and $(u,v)$. Then, we obtain a $s$-$t$ simple path that contains $(u,v)$.

Therefore, we showed that for any given instance of Problem A, we can transform it to an instance of Problem B (by the above construction), and then use an existing oracle of Problem B \cite{two_vertex_connectivity2018} to solve the given instance of Problem A. 
\end{proof}
\yihan{Added the proof of reduction.}

\section{Extension to General Reward Functions}
\label{apx:cpe_g}

\subsection{Problem Setting}
In this section, we study the extension of CPE-B to general reward functions (CPE-G) in the fixed-confidence setting.
Let $f(M, \bw)$ denote the expected reward function of super arm $M$, which only depends on $\{w(e)\}_{e \in M}$.
Different from previous CPE works~\cite{chen2014cpe,ChenLJ_Nearly_Optimal_Sampling17,huang_CPE_CS2018} which either study the linear reward function or impose strong assumptions (continuous and separable~\cite{huang_CPE_CS2018}) on nonlinear reward functions, we only make the following two standard assumptions:

\begin{assumption}[Monotonicity] \label{assumption:monotonicity_basic}
	For any $M \in \cM$ and $\boldsymbol{v}, \boldsymbol{v}' \in \R^n$ such that $\forall e \in [n],\  v'(e) \leq v(e)$, we have $f(M, \boldsymbol{v}') \leq f(M, \boldsymbol{v})$.
\end{assumption}


\begin{assumption}[Lipschitz continuity with $\infty$-norm] \label{assumption:lipschitz_basic_infinity_norm}
	For any $M \in \cM$ and $\bv, \bv' \in \R^n$, there exists a universal constant $U>0$ such that $| f(M, \bv)-f(M, \bv') | \leq U \max_{e \in M} | v(e)-v'(e) | $.
\end{assumption}

A wide family of reward functions satisfy these two mild assumptions, with the linear reward function (CPE-L)~\cite{chen2014cpe,ChenLJ_Nearly_Optimal_Sampling17},  bottleneck reward function (CPE-B) and continuous and separable reward functions (CPE-CS)~\cite{huang_CPE_CS2018} as its special cases. In addition, many other interesting problems, such as the quadratic network flow~\cite{Quadratic_network_flow_problems}, quadratic network allocation~\cite{tree_allocation_problem} and the densest subgraph~\cite{Goldberg84}, are encompassed by CPE-G.


\subsection{Algorithm for CPE-G}

\begin{algorithm}[t!]
	\caption{$\nonlucb$} \label{alg:nonlinear_lucb}
	\begin{algorithmic}[1]
		\STATE \textbf{Input:} decision class $\cM$, confidence $\delta \in (0,1)$, reward function $f$ and maximization oracle $\oracle$ for $f$.
		\STATE Initialization: play each base arm $e \!\in\! [n]$ once. Initialize empirical means $\hat{\bw}_{n+1}$ and set $T_{n+1}(e) \!\leftarrow\! 1,  \forall e \!\in\! [n]$.\\
		\FOR{$t=n+1, n+2, \dots$}
		\STATE $\rad_t(e) \leftarrow R \sqrt{2 \ln (\frac{4nt^3}{\delta})/ T_t(e)}, \  \forall e \in [n]$\;
		\STATE $\underline{w}_t(e)\leftarrow \hat{w}_t(e)-\rad_t(e),\  \forall e \in [n]$\;
		\STATE $\bar{w}_t(e)\leftarrow \hat{w}_t(e)+\rad_t(e),\  \forall e \in [n]$\;
		\STATE $M_t \leftarrow \oracle(\cM, \underline{\bw}_t)$\; \label{line:GLUCB_M_t}
		\STATE $\tilde{M}_t \leftarrow \oracle(\cM \setminus \{M_t\}, \bar{\bw}_t)$ or $\tilde{M}_t \leftarrow \oracle(\cM \setminus \superset(M_t), \bar{\bw}_t)$\;
		\label{line:GLUCB_tilde_M_t}
		\IF{ $f(M_t, \underline{\bw}_t) \geq f(\tilde{M}_t, \bar{\bw}_t)$ }  \label{line:general_stop_condition}
		\STATE \textbf{return} $M_t$\;
		\ENDIF
		\STATE $p_t \leftarrow \argmax_{M_t \cup \tilde{M}_t} \rad_t(e)$\; \label{general_select_p_t}
		\STATE Play base arm $p_t$ and observe the reward\;
		\STATE Update empirical means $\hat{\bw}_{t+1}$\;
		\STATE Update the number of pulls: $T_{t+1}(p_t) \leftarrow T_{t}(p_t)+1$ and $T_{t+1}(e) \leftarrow T_{t}(e)$ for all $e \neq p_t$.
		\ENDFOR
	\end{algorithmic}
\end{algorithm}

For CPE-G, we propose a novel algorithm $\nonlucb$ as in 
Algorithm~\ref{alg:nonlinear_lucb}.
We allow $\nonlucb$ to access an efficient maximization oracle $\oracle$ for reward function $f$ to find an optimal super arm from the given decision class and weight vector. 
Formally, $\oracle(\cF, \bv) \in \argmax_{M \in \cF} f(M, \bv)$.
We describe the procedure of $\nonlucb$ as follows: at each timestep, we compute the lower and upper confidence bounds of the base arm rewards, and use the maximization oracle $\oracle$ to find the super arm $M_t$ with the maximum pessimistic reward from $\cM$ and super arm $\tilde{M}_t$ with the maximum optimistic reward from $\cM \setminus \{M_t\}$. Then, we play the base arm $p_t$ with the maximum confidence radius from $M_t \cup \tilde{M}_t$. When we see that the pessimistic reward of $M_t$ is higher than the optimistic reward of $\tilde{M}_t$, which implies that  $M_t$ has a higher reward than any other super arm with high confidence, we stop the algorithm and return $M_t$. 

Different from CPE-B or previous CPE works~\cite{chen2014cpe,CPE_DB_2020} which only focus on the bottleneck base arms or those in symmetric difference, in CPE-G we select the base arm among the \emph{entire} union set of two critical super arms, since for these two super arms, any base arm in their union can affect the reward difference and should be estimated.

\subsection{Implementation of the Oracle in $\nonlucb$}
Now we discuss the implementation of $\oracle$ in $\nonlucb$.
For $\cF=\cM$, we simply calculate an optimal super arm from $\cM$ with respect to $\bv$.
Such a maximization oracle can be implemented efficiently for a rich class of decision classes and nonlinear reward functions, such as the densest subgraph~\cite{dense_subgraphs}, quadratic network flow problems~\cite{Quadratic_network_flow_problems} and quadratic network allocation  problems~\cite{tree_allocation_problem}.
For $\cF=\cM \setminus \{M_t\}$, it is more challenging to implement in polynomial time. 
We first discuss three common cases, where the step $\tilde{M}_t \leftarrow \oracle(\cM \setminus \{M_t\}, \bar{\bw}_t)$ (labeled as (a)) can be replaced with a more practical statement $\tilde{M}_t \leftarrow \oracle(\cM \setminus \superset(M_t), \bar{\bw}_t)$ (labeled as (b)). Then, we can implement it as follows: repeatedly remove each base arm in $M_t$ and compute the best super arm from the remaining decision class, and then return the one with the maximum reward.

Below we formally state the three cases:

\emph{Case (i). Any two super arms $M,M' \in \cM$ satisfies $M\setminus M' \neq \varnothing$.}

In this case, $\superset(M_t)=M_t$ and the statements (a),(b) are equivalent.
Many decision classes such as top $k$, maximum cardinality matchings, spanning trees fall in this case.

\emph{Case (ii). $f$ is set monotonically decreasing.} 

As CPE-B, $f(M_t,\bw) \geq f(M',\bw)$ for all $M' \in \superset(M_t)$, and we only need to compare $M_t$ against super arms in $\cM \setminus \superset(M_t)$.

\emph{Case (iii). $f$ is strictly set monotonically increasing.} 

According to Line~\ref{line:GLUCB_M_t} of Algorithm~\ref{alg:nonlinear_lucb}, we have that $\superset(M_t)=M_t$ and the statements (a),(b) are equivalent.
Linear (CPE-L), quadratic, and continuous and separable (CPE-CS) reward functions satisfy this property when the expected rewards of base arms are non-negative.


If neither of the above cases holds, algorithm $\nonlucb$ executes $\tilde{M}_t \leftarrow \oracle(\cM \setminus \{M_t\}, \bar{\bw}_t)$ by disabling $M_t$ in some way and finding the best super arm from the remaining decision class with the basic maximization oracle.
For the densest subgraph problem, for example, we can construct $\oracle(\cM \setminus \{M_t\}, \bar{\bw}_t)$ efficiently by the following procedure.
Given $M_t \subseteq E$, we consider the corresponding a set of vertices $S_t \subseteq V$.
First, for each vertex $i \in S_t$, we remove $i \in S_t$ from $V$, and obtain the best solution $S^*_i$ in the remaining graph by using any exact algorithms.
Second, for each $j \notin S_t$, we force $\{j\} \cup S$ to be included, and obtain the best solution $S^*_j$. Then we output the best solution among them. Note that the second step can be efficiently done by an exact flow-based algorithm with a min-cut procedure~\cite{Goldberg84}.

\subsection{Sample Complexity of $\nonlucb$}
Now we show the sample complexity of $\nonlucb$ for CPE-G.
For any $e \notin M_*$,
let $\Delta^\fg_e=f(M_*, \bw)-\max \limits_{M \in \cM: e\in M} f(M, \bw)$, and for any $e \in M_*$, let $\Delta^\fg_e=f(M_*, \bw)-\max_{M \neq M_*} f(M, \bw) =\Delta_{\min}$. We formally state the sample complexity of $\nonlucb$ in Theorem~\ref{thm:general_cpe_ub}.

\begin{theorem} \label{thm:general_cpe_ub}
	With probability at least $1-\delta$, the $\nonlucb$ algorithm for CPE-G will return the optimal super arm with sample complexity 
	$$
	O \sbr{ \sum_{e \in [n]} \frac{ R^2 U^2 }{(\Delta^G_e)^2} \ln \sbr{  \sum_{e \in [n]} \frac{ R^2 U^2 n  }{(\Delta^G_e)^2 \delta} }  } .
	$$
\end{theorem}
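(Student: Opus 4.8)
The plan is to mirror the fixed-confidence analysis of $\algbottleneck$ (Theorem~\ref{thm:bottleneck_baseline_ub}), replacing the bottleneck-specific arguments with the two assumptions on $f$. First I would establish the concentration event $\xi$ exactly as in Lemma~\ref{lemma:concentration}: with the radius $\rad_t(e)=R\sqrt{2\ln(4nt^3/\delta)/T_t(e)}$ we have $\Pr[\xi]\geq 1-\delta$, where $\xi$ says that for all $t$ and all $e$, $|w(e)-\hat w_t(e)|<\rad_t(e)$. Conditioned on $\xi$, monotonicity (Assumption~\ref{assumption:monotonicity_basic}) gives $f(M,\underline{\bw}_t)\le f(M,\bw)\le f(M,\bar{\bw}_t)$ for every $M$, which immediately yields the correctness claim: if the stopping condition $f(M_t,\underline{\bw}_t)\ge f(\tilde M_t,\bar{\bw}_t)$ holds at line~\ref{line:general_stop_condition}, then $f(M_t,\bw)\ge f(M,\bw)$ for every $M\neq M_t$ (using the superset property when the second oracle form is used, exactly as in Lemma~\ref{lemma:correctness_blucb}), hence $M_t=M_*$ by uniqueness.

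Next I would prove a sampling lemma analogous to Lemmas~\ref{lemma:blucb_e_in_M_star}--\ref{lemma:blucb_e_notin_M_star_lower}: conditioned on $\xi$, if $\rad_t(e)<\frac{\Delta^\fg_e}{4U}$ then $p_t\neq e$. The argument is by contradiction. Since $p_t=\argmax_{e\in M_t\cup\tilde M_t}\rad_t(e)$, if $p_t=e$ then every base arm in $M_t\cup\tilde M_t$ has radius below $\frac{\Delta^\fg_e}{4U}$. Now I split on which of $M_t,\tilde M_t$ contains $e$ and on whether $e\in M_*$. The key estimate is Lipschitz continuity (Assumption~\ref{assumption:lipschitz_basic_infinity_norm}): for any super arm $M$ containing $e$, $|f(M,\underline{\bw}_t)-f(M,\bw)|\le U\max_{e'\in M}2\rad_t(e')<\frac{\Delta^\fg_e}{2}$, and similarly for $\bar{\bw}_t$; the same bound applies to $\tilde M_t$. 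Combining these, the pessimistic value of $M_*$ (or of $M_t$, as appropriate) and the optimistic value of the competing super arm are separated by more than $0$ when the radius is that small, forcing the algorithm to have stopped already --- a contradiction. The case $e\notin M_*$ with $M\ni e$ is handled by the same Lipschitz sandwich around $f(M_*,\bw)$ and $f(M,\bw)$, whose gap is $\Delta^\fg_e$ by definition, and the factor $4U$ (versus $2U$) absorbs the radii on both super arms simultaneously.

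From the sampling lemma, for each $e$ the last pull time $t_e$ satisfies $R\sqrt{2\ln(4nt_e^3/\delta)/(T(e)-1)}\ge\frac{\Delta^\fg_e}{4U}$, so $T(e)\le \frac{32R^2U^2}{(\Delta^\fg_e)^2}\ln(4nT^3/\delta)+1$ where $T=\sum_e T(e)$ is the total sample count. Summing over $e\in[n]$, using $\sum_e R^2U^2/(\Delta^\fg_e)^2\ge n$, and then applying the standard transcendental-inequality lemma (Lemma~\ref{lemma:technical_tool}, as in the proof of Theorem~\ref{thm:bottleneck_baseline_ub}) to dispose of the self-referential $\ln T$ term, gives
\[
T=O\!\left(\sum_{e\in[n]}\frac{R^2U^2}{(\Delta^\fg_e)^2}\ln\!\left(\sum_{e\in[n]}\frac{R^2U^2 n}{(\Delta^\fg_e)^2\delta}\right)\right),
\]
which is the claimed bound. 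The main obstacle I anticipate is the sampling lemma's case analysis: unlike the bottleneck setting where one can reason at the level of a single bottleneck base arm, here the Lipschitz bound $U\max_{e'\in M}|v(e')-v'(e')|$ depends on the worst radius over the \emph{whole} super arm, so I must be careful that the ``select-maximum-radius over $M_t\cup\tilde M_t$'' rule indeed controls $\max_{e'\in M_t}\rad_t(e')$ and $\max_{e'\in\tilde M_t}\rad_t(e')$ at once; verifying that the constant $4U$ (rather than $2U$) suffices in every case, and that the stopping condition is genuinely violated rather than merely close, is where the delicate bookkeeping lies.
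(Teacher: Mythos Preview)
Your overall plan matches the paper's proof: establish the concentration event $\xi$, prove correctness via monotonicity, prove a sampling lemma of the form ``if $\rad_t(e)<\Delta^\fg_e/(4U)$ then $p_t\neq e$'' (this is exactly the paper's Lemma~\ref{lemma:nonlucb_pull_radius}), and finish with the summation plus Lemma~\ref{lemma:technical_tool}. The final arithmetic is identical.

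There is, however, a real gap in your sketch of the sampling lemma. You write that the Lipschitz sandwich gives $|f(M,\underline{\bw}_t)-f(M,\bw)|<\Delta^\fg_e/2$ ``for any super arm $M$ containing $e$'' and then speak of ``the pessimistic value of $M_*$''. But the bound $U\max_{e'\in M}2\rad_t(e')<\Delta^\fg_e/2$ requires \emph{every} base arm of $M$ to have radius below $\Delta^\fg_e/(4U)$; the rule $p_t=\argmax_{e'\in M_t\cup\tilde M_t}\rad_t(e')$ only guarantees this for $M_t$ and $\tilde M_t$, not for $M_*$. So in the case where neither $M_t$ nor $\tilde M_t$ equals $M_*$, you cannot apply the Lipschitz sandwich to $M_*$, and in particular the contradiction ``the algorithm must have stopped already'' does not go through.

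The paper resolves this exactly where your worry flags the difficulty. It first shows that if one of $M_t,\tilde M_t$ is $M_*$, the other must contain $e$ and the Lipschitz bound on both yields either the stopping condition or a violation of $M_t$'s optimality. It then treats the case where \emph{neither} is $M_*$ without ever applying Lipschitz to $M_*$: instead it uses monotonicity ($f(M_*,\bw)\le f(M_*,\bar{\bw}_t)$) together with the optimality of $\tilde M_t$ (resp.\ $M_t$). If $e\in\tilde M_t$, Lipschitz on $\tilde M_t$ gives $f(\tilde M_t,\bar{\bw}_t)<f(M_*,\bw)\le f(M_*,\bar{\bw}_t)$, contradicting the definition of $\tilde M_t$. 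If $e\in M_t$, one bounds the width $f(\tilde M_t,\bar{\bw}_t)-f(\tilde M_t,\underline{\bw}_t)$ from below by $f(M_*,\bar{\bw}_t)-f(M_t,\underline{\bw}_t)\ge f(M_*,\bw)-f(M_t,\bw)\ge\Delta^\fg_e$ and from above by $2U\max_{i\in\tilde M_t}\rad_t(i)<\Delta^\fg_e/2$, again a contradiction. These sub-cases do not reduce to ``the stopping condition is triggered''; you should replace your single Lipschitz-sandwich argument with this finer case split.
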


Compared to the uniform sampling algorithm 
\OnlyInFull{(presented in Appendix~\ref{apx:uniform_algorithm})}
\OnlyInShort{(presented in the supplementary material)}
which has the $O (  \frac{ R^2 U^2 n }{\Delta_{\min}^2} \ln (  \frac{ R^2 U^2 n  }{\Delta_{\min}^2 \delta} )  )$ sample complexity, $\nonlucb$ achieves a much tighter result owing to the adaptive sample strategy, which validates its effectiveness.
Moreover, to our best knowledge, $\nonlucb$ is the first algorithm with  non-trivial sample complexity for CPE with general reward functions, 
which encompass a rich class of nonlinear combinatorial problems, such as the densest subgraph problem~\cite{Goldberg84}, quadratic network flow problem~\cite{Quadratic_network_flow_problems} and quadratic network allocation  problem~\cite{tree_allocation_problem}.

To prove the sample complexity of algorithm $\nonlucb$ (Theorem~\ref{thm:general_cpe_ub}), we first introduce the following Lemmas~\ref{lemma:nonlucb_correctness},\ref{lemma:nonlucb_pull_radius}.

\begin{lemma}[Correctness of $\nonlucb$]
	\label{lemma:nonlucb_correctness}
	Assume that event $\xi$ occurs. Then, if algorithm $\nonlucb$  (Algorithm~\ref{alg:nonlinear_lucb}) terminates at round $t$, we have $M_t=M_*$.
\end{lemma}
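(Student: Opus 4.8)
The plan is to mirror the correctness argument for $\algbottleneck$ (Lemma~\ref{lemma:correctness_blucb}), since $\nonlucb$ uses the same confidence-bound machinery and the same style of stopping condition, only with the general reward function $f$ in place of $\wmin$. First I would invoke the event $\xi = \bigcap_{t} \xi_t$ (defined as in Lemma~\ref{lemma:concentration} but with the radius $\rad_t(e) = R\sqrt{2\ln(4nt^3/\delta)/T_t(e)}$), so that on $\xi$ we have $\underline{w}_t(e) \le w(e) \le \bar{w}_t(e)$ for every base arm $e$ and every round $t$. The key structural fact I would use is Assumption~\ref{assumption:monotonicity_basic} (monotonicity of $f$): since $\underline{\bw}_t \le \bw \le \bar{\bw}_t$ coordinatewise, we get $f(M, \underline{\bw}_t) \le f(M, \bw) \le f(M, \bar{\bw}_t)$ for every super arm $M$.

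Next I would unpack the stopping condition on Line~\ref{line:general_stop_condition}. Suppose $\nonlucb$ terminates at round $t$ and returns $M_t = \oracle(\cM, \underline{\bw}_t)$. The condition gives $f(M_t, \underline{\bw}_t) \ge f(\tilde{M}_t, \bar{\bw}_t)$, where $\tilde{M}_t = \oracle(\cM \setminus \{M_t\}, \bar{\bw}_t)$ (or $\oracle(\cM \setminus \superset(M_t), \bar{\bw}_t)$). For any competing super arm $M \ne M_t$ in the relevant class, optimality of $\tilde{M}_t$ under $\bar{\bw}_t$ yields $f(\tilde{M}_t, \bar{\bw}_t) \ge f(M, \bar{\bw}_t)$. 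Chaining the inequalities on event $\xi$:
\[
f(M_t, \bw) \ge f(M_t, \underline{\bw}_t) \ge f(\tilde{M}_t, \bar{\bw}_t) \ge f(M, \bar{\bw}_t) \ge f(M, \bw).
\]
In the variant where $\tilde{M}_t$ is chosen from $\cM \setminus \superset(M_t)$, I would additionally handle $M \in \superset(M_t)$ using the relevant monotonicity case (for set-monotonically-decreasing $f$, $f(M_t,\bw) \ge f(M,\bw)$ directly; for the strictly set-monotonically-increasing case, $\superset(M_t) = \{M_t\}$ as argued in the oracle-implementation discussion, so nothing further is needed). Thus $f(M_t, \bw) \ge f(M, \bw)$ for all $M \ne M_t$, and by the uniqueness assumption on $M_*$ we conclude $M_t = M_*$.

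The argument is essentially a short chain of inequalities, so there is no deep obstacle; the one point requiring care is the treatment of supersets of $M_t$ when the second oracle call excludes $\superset(M_t)$ rather than just $\{M_t\}$ — this is where one must split into the three reward-function cases identified in the oracle-implementation subsection and verify that in each case no super arm $M$ with $f(M,\bw) > f(M_t,\bw)$ can have been overlooked. Everything else is a direct transcription of Lemma~\ref{lemma:correctness_blucb} with $\wmin$ replaced by $f$ and the bottleneck-monotonicity property replaced by Assumption~\ref{assumption:monotonicity_basic}.
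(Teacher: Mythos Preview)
Your proposal is correct and follows essentially the same approach as the paper: the paper's proof is simply the one-line chain $f(M_t,\bw) \geq f(M_t,\underline{\bw}_t) \geq f(M,\bar{\bw}_t) \geq f(M,\bw)$ for all $M \neq M_t$, followed by the uniqueness of $M_*$. Your treatment is in fact slightly more careful than the paper's, since you explicitly address the case where the second oracle call excludes $\superset(M_t)$ rather than just $\{M_t\}$, splitting into the three reward-function cases from the oracle-implementation discussion; the paper's proof simply writes the chain for all $M \neq M_t$ without separately justifying the superset case.
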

\begin{proof}
	According to the stop condition (Line \ref{line:general_stop_condition} of Algorithm~\ref{alg:nonlinear_lucb}), when algorithm $\nonlucb$  (Algorithm~\ref{alg:nonlinear_lucb}) terminates at round $t$, we have that for any $M \neq M_t$,
	$$
	f(M_t, \bw) \geq f(M_t, \underline{\bw}_t) \geq f(M, \bar{\bw}_t) \geq f(M, \bw) .
	$$
	Thus, we have $M_t=M_*$.
\end{proof}

\begin{lemma} \label{lemma:nonlucb_pull_radius}
	Assume that event $\xi$ occurs. For any $e \in [n]$, if $\rad_t(e)<\frac{\Delta^G_e}{4U}$, then, base arm $e$ will not be pulled at round $t$, i.e., $p_t \neq e$.
\end{lemma}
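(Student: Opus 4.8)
The plan is to prove the contrapositive in the usual ``if a base arm is pulled, its confidence radius is large'' form. So suppose $p_t=e$ and set $\rho:=\rad_t(e)$, assume for contradiction that $\rho<\Delta^\fg_e/(4U)$, and derive a contradiction. Since $p_t=\argmax_{e'\in M_t\cup\tilde{M}_t}\rad_t(e')$, we get $e\in M_t\cup\tilde{M}_t$ together with $\max_{e'\in M_t}\rad_t(e')\le\rho$ and $\max_{e'\in\tilde{M}_t}\rad_t(e')\le\rho$; and since the algorithm is still running at round $t$, its stopping condition fails, i.e. $f(M_t,\underline{\bw}_t)<f(\tilde{M}_t,\bar{\bw}_t)$. (I treat the variant $\tilde{M}_t=\oracle(\cM\setminus\{M_t\},\bar{\bw}_t)$; the $\superset(M_t)$ variant goes through the same way in each setting where it is used, since there $M_*\notin\superset(M_t)$ once $M_t\neq M_*$.)

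Two preliminaries prepare the ground. First, a \emph{value envelope}: on $\xi$ we have $\underline{w}_t(e')<w(e')<\bar{w}_t(e')$ for every $e'$, so Assumption~\ref{assumption:monotonicity_basic} yields $f(M,\underline{\bw}_t)\le f(M,\bw)\le f(M,\bar{\bw}_t)$ and Assumption~\ref{assumption:lipschitz_basic_infinity_norm} yields $f(M,\bar{\bw}_t)-f(M,\underline{\bw}_t)\le 2U\max_{e'\in M}\rad_t(e')$ for every super arm $M$; hence $f(M,\bw)-f(M,\underline{\bw}_t)$ and $f(M,\bar{\bw}_t)-f(M,\bw)$ are each at most $2U\max_{e'\in M}\rad_t(e')$, which is $\le 2U\rho$ when $M\in\{M_t,\tilde{M}_t\}$. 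Second, a \emph{gap fact}: if $M\neq M_*$ and $e\in M$ then $f(M,\bw)\le f(M_*,\bw)-\Delta^\fg_e$ --- for $e\notin M_*$ this is exactly the definition of $\Delta^\fg_e$, and for $e\in M_*$ we have $\Delta^\fg_e=\Delta_{\min}$ and $f(M,\bw)\le\max_{M'\neq M_*}f(M',\bw)=f(M_*,\bw)-\Delta_{\min}$.

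I would then split into three cases according to whether $M_*$ equals $M_t$, equals $\tilde{M}_t$, or neither. If $M_t=M_*$, then $\tilde{M}_t$ is sub-optimal, so the gap fact (or, when $e\in M_*\setminus\tilde{M}_t$, the $\Delta_{\min}$ bound) gives $f(\tilde{M}_t,\bw)\le f(M_*,\bw)-\Delta^\fg_e$; plugging the envelope into the failed stopping condition gives $f(M_*,\bw)-2U\rho\le f(M_*,\underline{\bw}_t)<f(\tilde{M}_t,\bar{\bw}_t)\le f(M_*,\bw)-\Delta^\fg_e+2U\rho$, i.e. $\Delta^\fg_e<4U\rho<\Delta^\fg_e$, a contradiction. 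If $\tilde{M}_t=M_*$ (so $M_t\neq M_*$), then $f(M_t,\bw)\le f(M_*,\bw)-\Delta^\fg_e$ (gap fact if $e\in M_t$, else $e\in M_*$ and the $\Delta_{\min}$ bound), and optimality of $M_t=\oracle(\cM,\underline{\bw}_t)$ together with the envelope give $f(M_*,\bw)-2U\rho\le f(M_*,\underline{\bw}_t)\le f(M_t,\underline{\bw}_t)\le f(M_t,\bw)\le f(M_*,\bw)-\Delta^\fg_e$, again a contradiction. The case $M_*\notin\{M_t,\tilde{M}_t\}$ is the one I expect to be the real obstacle, because now the arms of $M_*$ need not lie in $M_t\cup\tilde{M}_t$, so the envelope cannot be applied to $M_*$ directly. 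Here the lever is that $M_t\neq M_*$ forces $M_*\in\cM\setminus\{M_t\}$, so optimality of $\tilde{M}_t$ gives $f(\tilde{M}_t,\bar{\bw}_t)\ge f(M_*,\bar{\bw}_t)\ge f(M_*,\bw)$, hence $f(\tilde{M}_t,\bw)\ge f(M_*,\bw)-2U\rho$. When $e\in\tilde{M}_t\setminus M_t$ this already clashes with $f(\tilde{M}_t,\bw)\le f(M_*,\bw)-\Delta^\fg_e$ from the gap fact. When $e\in M_t$, I would run a two-sided squeeze on $f(\tilde{M}_t,\bw)$: the gap fact gives the upper anchor $f(M_t,\underline{\bw}_t)\le f(M_t,\bw)\le f(M_*,\bw)-\Delta^\fg_e$, while optimality of $M_t$ over all of $\cM$ gives $f(M_t,\underline{\bw}_t)\ge f(\tilde{M}_t,\underline{\bw}_t)\ge f(\tilde{M}_t,\bw)-2U\rho$, so $f(\tilde{M}_t,\bw)\le f(M_*,\bw)-\Delta^\fg_e+2U\rho$; combined with the lower bound it forces $\Delta^\fg_e\le 4U\rho<\Delta^\fg_e$.

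The hard point is exactly this last sub-case: unlike in the linear CLUCB analysis, a shared portion of $M_t$ and $\tilde{M}_t$ cannot be cancelled, so the contradiction must be produced by simultaneously exploiting that $M_t$ maximizes the pessimistic value over $\cM$ and that $\tilde{M}_t$ maximizes the optimistic value over $\cM\setminus\{M_t\}$. Once Lemma~\ref{lemma:nonlucb_pull_radius} is established, Theorem~\ref{thm:general_cpe_ub} follows exactly as Theorem~\ref{thm:bottleneck_baseline_ub} did: on $\xi$ each base arm $e$ is pulled only while $\rad_t(e)\ge\Delta^\fg_e/(4U)$, which caps its total pulls by $O\!\left(R^2U^2(\Delta^\fg_e)^{-2}\ln(4nt^3/\delta)\right)$, and summing over $e$ and solving the resulting self-referential inequality (as in the proof of Theorem~\ref{thm:bottleneck_baseline_ub}) yields the stated sample complexity; correctness of the returned super arm is Lemma~\ref{lemma:nonlucb_correctness}.
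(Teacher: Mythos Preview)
Your proposal is correct and follows essentially the same approach as the paper. The paper organizes the case analysis by first splitting on whether $e\in M_*$ or $e\notin M_*$, and within each case rules out $M_*\in\{M_t,\tilde{M}_t\}$ before handling the residual situation; you instead split directly on whether $M_*$ coincides with $M_t$, with $\tilde{M}_t$, or with neither. The underlying ingredients---the monotonicity/Lipschitz ``envelope'' bound $|f(M,\bar{\bw}_t)-f(M,\underline{\bw}_t)|\le 2U\max_{e'\in M}\rad_t(e')$, the optimality of $M_t$ over $\cM$ with respect to $\underline{\bw}_t$, and the optimality of $\tilde{M}_t$ over $\cM\setminus\{M_t\}$ with respect to $\bar{\bw}_t$---are identical, and your ``hard'' sub-case ($M_*\notin\{M_t,\tilde{M}_t\}$ with $e\in M_t$) is exactly the paper's two-sided bound on $f(\tilde{M}_t,\bar{\bw}_t)-f(\tilde{M}_t,\underline{\bw}_t)$, just rearranged.
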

\begin{proof}
	(i) Suppose that for some $e \notin M_*$, $\rad_t(e)< \frac{\Delta^G_e}{4U} = \frac{1}{4U} (f(M_*, \boldsymbol{w}) - \max_{M \in \cM: e \in M} f(M, \boldsymbol{w}))$ and $p_t = e$. 
	According to the selection strategy of $p_t$, we have that for any $e' \in M_t \cup \tilde{M}_t$, $\rad_t(e')\leq \rad_t(e)< \frac{\Delta^G_e}{4U} $.
	
	First, we can prove that $M_t \neq M_*$ and $\tilde{M}_t \neq M_*$. Otherwise, one of $M_t, \tilde{M}_t$ is $M_*$ and the other is a sub-optimal super arm containing $e$, which is denoted by $M'$. Then,
	\begin{align*}
	f(M_*, \underline{\bw}_t) - f(M', \bar{\bw}_t) \geq & (f(M_*, \bw) - 2 U \max_{i \in M_*} \rad_i) - (f(M', \bw) + 2 U \max_{j \in M'} \rad_j) 
	\\
	> & \Delta^G_{M_*,M'} -   \frac{\Delta^G_e}{2} -  \frac{\Delta^G_e}{2}
	\\
	= & 0,
	\end{align*}
	which gives a contradiction.
	
	Then, if $e \in \tilde{M}_t$, we have
	\begin{align*}
	f(\tilde{M}_t, \bar{\bw}_t) \leq & f(\tilde{M}_t, \bw) + 2U \max_{i \in \tilde{M}_t} \rad_i
	\\
	< & f(\tilde{M}_t, \bw) + \frac{\Delta^G_e}{2} 
	\\
	< & f(M_*, \boldsymbol{w})
	\\
	\leq & f(M_*, \bar{\bw}_t),
	\end{align*}
	which contradicts the definition of $\tilde{M}_t$.
	
	If $e \in M_t$, we have
	\begin{align*}
	f(\tilde{M}_t, \bar{\bw}_t) - f(\tilde{M}_t, \underline{\bw}_t) \geq & f(M_*, \bar{\bw}_t) - f(M_t, \underline{\bw}_t)
	\\
	\geq & f(M_*, \bw) - f(M_t, \bw)
	\\
	= & \Delta^G_{M_*,M_t}.
	\end{align*}
	On the other hand, we have
	\begin{align*}
	f(\tilde{M}_t, \bar{\bw}_t) - f(\tilde{M}_t, \underline{\bw}_t) \leq & (f(\tilde{M}_t, \hat{\bw})+ U\max_{i \in \tilde{M}_t} \rad_i) - (f(\tilde{M}_t, \hat{\bw})- U\max_{i \in \tilde{M}_t} \rad_i)
	\\
	= & 2 U\max_{i \in \tilde{M}_t} \rad_i
	\\
	< & \frac{\Delta^G_e}{2}
	\\
	\leq & \frac{\Delta^G_{M_*,M_t}}{2}
	\\
	< & \Delta^G_{M_*,M_t},
	\end{align*}
	which gives a contradiction.
	
	(ii) Suppose that for some $e \in M_*$, $\rad_t(e)<\frac{\Delta^G_e}{4U} =  \frac{\Delta_{\min}}{4U}$ and $p_t = e$. 
	According to the selection strategy of $p_t$, we have that for any $e' \in M_t \cup \tilde{M}_t$, $\rad_t(e')\leq \rad_t(e)< \frac{\Delta_{\min}}{4U}$.
	
	First, we can prove that $M_t \neq M_*$ and $\tilde{M}_t \neq M_*$. Otherwise, one of $M_t, \tilde{M}_t$ is $M_*$ and the other is a sub-optimal super arm, which is denoted by $M'$. Then,
	\begin{align*}
	f(M_*, \underline{\bw}_t) - f(M', \bar{\bw}_t) \geq & (f(M_*, \bw) - 2U \max_{i \in M_*} \rad_i) - (f(M', \bw) + 2U \max_{j \in M'} \rad_j) \\
	\\
	> & \Delta^G_{M_*,M'} -   \frac{\Delta_{\min}}{2} -  \frac{\Delta_{\min}}{2}
	\\
	= & 0,
	\end{align*}
	which gives a contradiction.
	
	Thus, both $M_t$ and $\tilde{M}_t$ are sub-optimal super arms.
	
	However, on the other hand, we have
	\begin{align*}
	f(\tilde{M}_t, \bar{\bw}_t) \leq & f(\tilde{M}_t, \bw) + 2U \max_{i \in \tilde{M}_t} \rad_i
	\\
	< & f(\tilde{M}_t, \bw) + \frac{\Delta_{\min}}{2} 
	\\
	< & f(M_*, \boldsymbol{w})
	\\
	\leq & f(M_*, \bar{\bw}_t),
	\end{align*}
	which contradicts the definition of $\tilde{M}_t$.
\end{proof}

Now, we prove Theorem~\ref{thm:general_cpe_ub}.
\begin{proof}
	For any $e \in [n]$, let $T(e)$ denote the number of samples for base arm $e$, and $t_e$ denote the last timestep at which $e$ is pulled. Then, we have $T_{t_e}=T(e)-1$. Let $T$ denote the total number of samples.
	According to Lemma~\ref{lemma:nonlucb_pull_radius}, we have
	\begin{align*}
	R \sqrt{ \frac{2 \ln (\frac{4 n t_e^3}{\delta})}{T(e)-1} } \geq \frac{\Delta^G_e}{4U} 
	\end{align*}
	Thus, we obtain
	\begin{align*}
	T(e) \leq \frac{32 R^2 U^2}{(\Delta^G_e)^2}\ln \sbr{\frac{4 n t_e^3}{\delta}} +1 \leq \frac{32 R^2 U^2 }{(\Delta^G_e)^2}\ln \sbr{\frac{4 n T^3}{\delta}} +1
	\end{align*}
	Summing over $e \in [n]$, we have
	\begin{align*}
	T \leq \sum_{e \in [n]} \frac{32 R^2 U^2 }{(\Delta^G_e)^2} \ln \sbr{\frac{4 n T^3 }{\delta}   } + n \leq \sum_{e \in [n]} \frac{96 R^2 U^2 }{(\Delta^G_e)^2} \ln \sbr{\frac{2 n T }{\delta}   } + n,
	\end{align*}
	where $\sum_{e \in [n]} \frac{ R^2 U^2 }{(\Delta^G_e)^2} \geq n$.
	Then, applying Lemma~\ref{lemma:technical_tool}, we have
	\begin{align*}
	T \leq & \sum_{e \in [n]} \frac{576 R^2 U^2 }{(\Delta^G_e)^2} \ln \sbr{\frac{2 n^2  }{\delta}  \sum_{e \in [n]} \frac{96 R^2 U^2 }{(\Delta^G_e)^2} } + n
	\\
	= & O \sbr{ \sum_{e \in [n]} \frac{ R^2 U^2 }{(\Delta^G_e)^2} \ln \sbr{  \sum_{e \in [n]} \frac{ R^2 U^2 n^2  }{(\Delta^G_e)^2 \delta} } + n }
	\\
	= & O \sbr{ \sum_{e \in [n]} \frac{ R^2 U^2 }{(\Delta^G_e)^2} \ln \sbr{  \sum_{e \in [n]} \frac{ R^2 U^2 n  }{(\Delta^G_e)^2 \delta} }  }.
	\end{align*}
\end{proof}

\section{Uniform Sampling Algorithms}
\label{apx:uniform_algorithm}

\begin{algorithm}[t!]
	\caption{$\uniformfc$} \label{alg:uniform_fc}
	\begin{algorithmic}[1]
		\STATE \textbf{Input:} decision class $\cM$, confidence $\delta \in (0,1)$, reward function $f$ and maximization oracle $\oracle$ for $f$.
		\FOR{$t=1, 2, \dots$}
		\STATE For each base arm $e \in [n]$, pull $e$ once and then update its empirical mean $\hat{w}_t(e)$ and number of samples $T_t(e)$\;
		\STATE $\rad_t \leftarrow R \sqrt{2 \ln (\frac{4nt^3}{\delta})/ t}$\;
		\STATE $\underline{w}_t(e)\leftarrow \hat{w}_t(e)-\rad_t,\  \forall e \in [n]$\;
		\STATE $\bar{w}_t(e)\leftarrow \hat{w}_t(e)+\rad_t,\  \forall e \in [n]$\;
		\STATE $M_t \leftarrow \oracle(\cM, \underline{\bw}_t)$\;
		\STATE $\tilde{M}_t \leftarrow \oracle(\cM \setminus \superset(M_t), \bar{\bw}_t)$\;
		\IF{ $f(M_t, \underline{\bw}_t) \geq f(\tilde{M}_t, \bar{\bw}_t)$ }  
		\STATE \textbf{return} $M_t$\;
		\ENDIF
		\ENDFOR
	\end{algorithmic}
\end{algorithm}

In this section, we present the uniform sampling algorithms for the fixed-confidence and fixed-budget CPE problems.  

Algorithm~\ref{alg:uniform_fc} illustrates the uniform sampling algorithm $\uniformfc$ for the fixed-confidence CPE problem.
Below we state the sample complexity of algorithm $\uniformfc$.

\begin{theorem} \label{thm:uniform_fc}
	With probability at least $1-\delta$, the $\uniformfc$ algorithm (Algorithm~\ref{alg:uniform_fc}) will return the optimal super arm with sample complexity 
	$$
	O \sbr{  \frac{ R^2 U^2 n }{\Delta_{\min}^2} \ln \sbr{   \frac{ R^2 U^2 n  }{\Delta_{\min}^2 \delta} }  } .
	$$
\end{theorem}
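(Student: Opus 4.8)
\textbf{Proof proposal for Theorem~\ref{thm:uniform_fc}.}
The plan is to mirror the analysis of $\nonlucb$ (Theorem~\ref{thm:general_cpe_ub}), but with the crucial simplification that all base arms share the same number of samples and hence the same confidence radius $\rad_t$. First I would set up the concentration event. Exactly as in Lemma~\ref{lemma:concentration}, define $\xi_t = \{ \forall e \in [n],\ |w(e)-\hat{w}_t(e)| < \rad_t \}$ where $\rad_t = R\sqrt{2\ln(4nt^3/\delta)/t}$ (here $t$ is the per-arm sample count, which in $\uniformfc$ equals the phase index), and $\xi = \bigcap_{t=1}^\infty \xi_t$. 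The same Hoeffding-plus-union-bound argument, with the $\delta/(2nt^3)$ budgeting, gives $\Pr[\xi]\ge 1-\delta$. Condition on $\xi$ for the rest of the proof.

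Next I would establish correctness: if the algorithm stops at phase $t$ and returns $M_t$, then for any $M\neq M_t$ with $M\notin\superset(M_t)$ we have $f(M_t,\bw)\ge f(M_t,\underline{\bw}_t)\ge f(\tilde M_t,\bar\bw_t)\ge f(M,\bar\bw_t)\ge f(M,\bw)$ by monotonicity (Assumption~\ref{assumption:monotonicity_basic}) and the definition of $\tilde M_t$; for $M\in\superset(M_t)$, when $f$ is decreasing we are in Case (ii) and $f(M_t,\bw)\ge f(M,\bw)$ directly, and in the strictly-increasing cases $\superset(M_t)=\{M_t\}$ so nothing to check. Hence $M_t=M_*$. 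This is the verbatim analogue of Lemma~\ref{lemma:nonlucb_correctness}.

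Then comes the sample-complexity bound. The key step is a stopping lemma: conditioned on $\xi$, if $\rad_t < \frac{\Delta_{\min}}{4U}$ then the algorithm has already terminated. The argument is the same as the second half of Lemma~\ref{lemma:nonlucb_pull_radius}: if $\rad_t < \Delta_{\min}/(4U) \le \Delta^\fg_e/(4U)$ for all $e$, then for any $M\neq M_*$, Lipschitz continuity (Assumption~\ref{assumption:lipschitz_basic_infinity_norm}) gives $f(M_*,\underline\bw_t) - f(M,\bar\bw_t) \ge \Delta^\fg_{M_*,M} - 2U\rad_t - 2U\rad_t > 0$; taking $M=\tilde M_t$ (which is $\neq M_*$ by the definition of $M_t$ and $\tilde M_t$ under $\xi$, using the same case analysis as in the correctness proof) shows the stopping condition $f(M_t,\underline\bw_t)\ge f(\tilde M_t,\bar\bw_t)$ holds — so the loop has exited. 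Therefore the last phase $T_{\text{stop}}$ satisfies $R\sqrt{2\ln(4nT_{\text{stop}}^3/\delta)/(T_{\text{stop}}-1)} \ge \Delta_{\min}/(4U)$, which yields $T_{\text{stop}} = O\!\left(\frac{R^2U^2}{\Delta_{\min}^2}\ln\frac{4nT_{\text{stop}}^3}{\delta}\right)+1$. Since each phase costs $n$ samples, the total sample complexity is $n\,T_{\text{stop}}$, and after unwinding the implicit $T_{\text{stop}}$ via Lemma~\ref{lemma:technical_tool} (the standard $x\le a\ln(bx)\Rightarrow x = O(a\ln(ab))$ trick), one arrives at
\begin{align*}
O\!\left(\frac{R^2U^2 n}{\Delta_{\min}^2}\ln\!\left(\frac{R^2U^2 n}{\Delta_{\min}^2\delta}\right)\right),
\end{align*}
which is the claimed bound. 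The main obstacle — really the only subtlety beyond bookkeeping — is verifying that $\tilde M_t\neq M_*$ so that the gap $\Delta^\fg_{M_*,\tilde M_t}$ is genuinely at least $\Delta_{\min}$; this needs the same small case distinction (whether $M_t=M_*$ or not, and then examining $\tilde M_t$ computed on $\bar\bw_t$ from $\cM\setminus\superset(M_t)$) already carried out for $\nonlucb$, and it is exactly where the three structural cases (i)–(iii) discussed after Algorithm~\ref{alg:nonlinear_lucb} are invoked to justify the $\oracle(\cM\setminus\superset(M_t),\cdot)$ step. Everything else is a direct transcription of the $\nonlucb$ argument with $\rad_t(e)$ replaced by the uniform $\rad_t$ and $\sum_{e\in[n]} 1/(\Delta^\fg_e)^2$ replaced by $n/\Delta_{\min}^2$.
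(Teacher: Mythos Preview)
Your proposal is correct and follows essentially the same approach as the paper: establish the concentration event $\xi$, verify correctness of the returned $M_t$, show that the stopping condition must fire once $\rad_t < \Delta_{\min}/(4U)$, and then invert the radius inequality via Lemma~\ref{lemma:technical_tool} before multiplying the per-arm count by $n$. You are in fact a bit more careful than the paper, which writes $f(M_t,\bw)-f(\tilde M_t,\bw)\ge\Delta_{\min}$ without explicitly noting that $M_t=M_*$ (and hence $\tilde M_t\neq M_*$) is forced once $\rad_t$ is small enough.
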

\begin{proof}
	Let $\Delta_{\min}=\min_{e \in [n]} \Delta^G_{e}=f(M_*, \bw)-f(M_\textup{second}, \bw) $. 
	Assume that event $\xi$ occurs. Then, if $\rad_t<\frac{\Delta_{\min}}{4U}$, algorithm $\uniformfc$ will stop.
	Otherwise, 
	\begin{align*}
		f(M_t, \underline{\bw}_t) - f(\tilde{M}_t, \bar{\bw}_t) \geq & f(M_t, \bw)-2U \max_{i \in M_t} \rad_t - (f(\tilde{M}_t, \bw)+2U \max_{i \in \tilde{M}_t} \rad_t)
		\\
		= & f(M_t, \bw)- f(\tilde{M}_t, \bw) - 4U \rad_t 
		\\
		> & \Delta_{\min} - \Delta_{\min}
		\\
		= & 0,
	\end{align*}
	which contradicts the stop condition.
	
	Let $T_n$ denote the number of rounds and $T$ denote the total number of samples. Then, we have
	\begin{align*}
	R \sqrt{ \frac{2 \ln (\frac{4 n T_n^3}{\delta})}{T_n-1} } \geq \frac{\Delta_{\min}}{4U} 
	\end{align*}
	Thus, we obtain
	\begin{align*}
	T_n \leq \frac{32 R^2 U^2 }{\Delta_{\min}^2}\ln \sbr{\frac{4 n T_n^3}{\delta}} +1 \leq \frac{96 R^2 U^2 }{\Delta_{\min}^2}\ln \sbr{\frac{2 n T_n}{\delta}} +1 .
	\end{align*}
	Then, applying Lemma~\ref{lemma:technical_tool}, we have
	\begin{align*}
	T_n \leq & \frac{576 R^2 U^2 }{\Delta_{\min}^2} \ln \sbr{\frac{2 n  }{\delta}   \frac{96 R^2 U^2 }{\Delta_{\min}^2} } + 1
	\\
	= & O \sbr{  \frac{ R^2 U^2 }{\Delta_{\min}^2} \ln \sbr{   \frac{ R^2 U^2 n  }{\Delta_{\min}^2 \delta} }  }.
	\end{align*}
	Summing over the number of samples for all the base arms, we obtain
	$$
	T= O \sbr{  \frac{ R^2 U^2 n }{\Delta_{\min}^2} \ln \sbr{   \frac{ R^2 U^2 n  }{\Delta_{\min}^2 \delta} }  } .
	$$
\end{proof}

Algorithm~\ref{alg:uniform_fb} illustrates the uniform sampling algorithm $\uniformfb$ for the fixed-budget CPE problem.
Below we state the error probability of algorithm $\uniformfb$.

\begin{algorithm}[t!]
	\caption{$\uniformfb$} \label{alg:uniform_fb}
	\begin{algorithmic}[1]
		\STATE \textbf{Input:} $\cM$, budget $T$, reward function $f$ and maximization oracle $\oracle$ for $f$.
		\STATE Pull each base arm $e \in [n]$ for $\left \lfloor T/n \right \rfloor$ times\;
		\STATE Update the empirical means $\hat{\bw}_t$
		\STATE $M_{\textup{out}} \leftarrow \oracle(\cM, \hat{\bw}_t)$\;
		\STATE \textbf{return} $M_{\textup{out}}$\;
	\end{algorithmic}
\end{algorithm}

Let $H^U=n (\Delta_{\min})^{-2}$, where $\Delta_{\min}=f(M_*, \bw)-f(M_\textup{second}, \bw) $.
\begin{theorem} \label{thm:uniform_fb}
	For any $T>n$, algorithm $\uniformfb$ uses at most $T$ samples and returns the optimal super arm with the error probability bounded by
	$$
	O\sbr{ n \exp \sbr{ - \frac{ T }{ R^2 U^2 H^U} } }.
	$$
\end{theorem}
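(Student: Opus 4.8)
The plan is to mirror the (simpler) analysis of $\uniformfc$ (Theorem~\ref{thm:uniform_fc}): a single uniform concentration event for the empirical means, followed by the Lipschitz property of $f$ to conclude that the final oracle call returns $M_*$. First I would dispatch the budget claim: each base arm is pulled $m := \lfloor T/n\rfloor$ times, so the total number of samples is $n\lfloor T/n\rfloor \le T$. I would also record $m \ge (T-n)/n$ (and $m\ge 1$ since $T>n$), which is all that is needed in the exponent later.

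Next, define the good event $\xi = \{\forall e\in[n]:\ |\hat w(e)-w(e)| < \Delta_{\min}/(4U)\}$, where $\hat{\bw}$ is the empirical mean vector after the $m$ pulls. Since each base arm has an $R$-sub-Gaussian tail and is sampled $m$ times, Hoeffding's inequality (exactly as in the proof of Lemma~\ref{lemma:concentration}) gives $\Pr[|\hat w(e)-w(e)|\ge \Delta_{\min}/(4U)] \le 2\exp(-m\Delta_{\min}^2/(32 R^2 U^2))$, and a union bound over $e\in[n]$ yields $\Pr[\neg\xi] \le 2n\exp(-m\Delta_{\min}^2/(32 R^2 U^2))$. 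On $\xi$, Assumption~\ref{assumption:lipschitz_basic_infinity_norm} gives, for every $M\in\cM$, $|f(M,\hat{\bw})-f(M,\bw)| \le U\max_{e\in M}|\hat w(e)-w(e)| < \Delta_{\min}/4$. Hence $f(M_*,\hat{\bw}) > f(M_*,\bw) - \Delta_{\min}/4 = f(M_\textup{second},\bw) + 3\Delta_{\min}/4$, while for any $M\ne M_*$ we have $f(M,\hat{\bw}) < f(M,\bw)+\Delta_{\min}/4 \le f(M_\textup{second},\bw)+\Delta_{\min}/4 < f(M_*,\hat{\bw})$. Therefore $\oracle(\cM,\hat{\bw})$ outputs $M_{\textup{out}}=M_*$, so the error probability is at most $\Pr[\neg\xi]$.

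Finally, substituting $m \ge (T-n)/n$ and $H^U = n/\Delta_{\min}^2$ into the bound on $\Pr[\neg\xi]$ yields $O\big(n\exp(-(T-n)/(32 R^2 U^2 H^U))\big) = O\big(n\exp(-T/(R^2 U^2 H^U))\big)$ after absorbing the constant $32$ and the $-n$ shift (licit since $T>n$) into the $O(\cdot)$, which is the claimed bound. There is essentially no hard step here; the only points requiring care are (i) taking the confidence radius $\Delta_{\min}/(4U)$ small enough that the Lipschitz slack $\Delta_{\min}/4$ cannot flip the ordering between $M_*$ and the second-best super arm, and (ii) the routine bookkeeping that turns $\lfloor T/n\rfloor$ into a clean $T/(R^2U^2H^U)$ inside the exponent.
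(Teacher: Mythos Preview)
Your proposal is correct and follows essentially the same route as the paper: define the concentration event $\{\forall e,\ |\hat w(e)-w(e)|<\Delta_{\min}/(4U)\}$, bound its failure probability by Hoeffding plus a union bound, and use the $\infty$-norm Lipschitz assumption to show the oracle returns $M_*$ on this event. You are in fact slightly more careful than the paper, which plugs $T/n$ directly into the Hoeffding exponent without tracking the floor, whereas you explicitly pass through $m\ge (T-n)/n$ before absorbing constants into the $O(\cdot)$.
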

\begin{proof}
    Since algorithm $\uniformfb$ allocates $\left \lfloor T/n \right \rfloor$ samples to each base arm $e \in [n]$, the total number of samples is at most $T$.

    Now we prove the error probablity.
	Define event 
	$$
	\cG=\lbr{ \forall i \in [n], \    |\hat{w}_t(i)-w(i)| < \frac{ \Delta_{\min} }{4U}  } .
	$$

	According to the Hoeffding's inequality,
	$$
	\lbr{ |\hat{w}_t(i)-w(i)| \geq \frac{ \Delta_{\min} }{4U}  } \leq 2 \exp \sbr{ - \frac{ T \Delta_{\min}^2 }{32 R^2 U^2 n} } .
	$$
	By a union bound over $e \in  [n]$, we have
	\begin{align*}
	\Pr[\cG] \geq & 1- 2 n \exp \sbr{ - \frac{ T \Delta_{\min}^2 }{32 R^2 U^2 n} }
	\\
	= & 1- 2 n \exp \sbr{ - \frac{ T }{32 R^2 U^2 H^U} }
	\end{align*}
	
	Below we prove that conditioning on event $\cG$, $M_{\textup{out}}=M_*$. 
	Suppose that $M_{\textup{out}} \neq M_*$,
	\begin{align*}
	f(M_{\textup{out}}, \hat{\bw}_t) - f(M_*, \hat{\bw}_t)  \leq & f(M_{\textup{out}}, \bw) + \frac{\Delta_{\min} }{4} - (f(M_*, \bw) - \frac{\Delta_{\min} }{4})
	\\
	\leq & - \Delta_{\min}  + \frac{\Delta_{\min} }{2}
	\\
	= & - \frac{\Delta_{\min} }{2}
	\\
	< & 0,
	\end{align*}
	which contradicts the selection strategy of $M_{\textup{out}}$.
	Thus, conditioning on event $\cG$, algorithm $\uniformfb$ returns $M_*$. Then, we obtain Theorem~\ref{thm:uniform_fb}.
\end{proof}

\section{Technical Tool}
In this section, we present a technical tool used in the proofs of our results.
\begin{lemma} \label{lemma:technical_tool}
	If $T \leq c_1 \ln(c_2 T)+c_3$ holds for some constants $c_1,c_2,c_3 \geq 1$ such that $\ln(c_1 c_2 c_3) \geq 1$, we have $T \leq 6 c_1 \ln(c_1 c_2 c_3) + c_3$.
\end{lemma}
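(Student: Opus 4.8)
The statement to prove is the technical Lemma~\ref{lemma:technical_tool}: if $T \le c_1 \ln(c_2 T) + c_3$ for constants $c_1, c_2, c_3 \ge 1$ with $\ln(c_1 c_2 c_3) \ge 1$, then $T \le 6 c_1 \ln(c_1 c_2 c_3) + c_3$. The plan is a standard ``self-bounding'' argument: substitute a candidate upper bound for $T$ back into the right-hand side of the hypothesis and check that the inequality becomes self-consistent, using the elementary fact that $\ln x$ grows much more slowly than $x$. Concretely, I would set $T_0 = 6 c_1 \ln(c_1 c_2 c_3) + c_3$ and argue by contradiction: suppose $T > T_0$. Since the function $g(T) = T - c_1 \ln(c_2 T)$ is increasing for $T \ge c_1$ (its derivative is $1 - c_1/T \ge 0$ there, and one checks $T_0 \ge c_1$ because $c_1 \ln(c_1c_2c_3) \ge c_1 \ge 1$), it would suffice to show $g(T_0) > c_3$, i.e. $T_0 - c_3 > c_1 \ln(c_2 T_0)$, which contradicts the hypothesis applied at $T$ (after pushing the monotonicity through).

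The core estimate is therefore: $6 c_1 \ln(c_1 c_2 c_3) > c_1 \ln\bigl(c_2 (6 c_1 \ln(c_1 c_2 c_3) + c_3)\bigr)$, i.e. after dividing by $c_1$,
\[
6 \ln(c_1 c_2 c_3) > \ln\bigl(c_2 (6 c_1 \ln(c_1 c_2 c_3) + c_3)\bigr).
\]
To bound the right-hand side, write $L = \ln(c_1 c_2 c_3) \ge 1$ and note $6 c_1 L + c_3 \le 7 c_1 c_3 L$ (using $c_1, c_3 \ge 1$ and $L \ge 1$, so $c_3 \le c_1 c_3 L$ and $6 c_1 L \le 6 c_1 c_3 L$). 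Hence $c_2(6 c_1 L + c_3) \le 7 c_1 c_2 c_3 L = 7 (c_1 c_2 c_3) L \le 7 (c_1 c_2 c_3)^2$ since $L \le \ln(c_1c_2c_3) \le c_1 c_2 c_3$ when $c_1c_2c_3 \ge e$ (which follows from $L \ge 1$). Therefore the right-hand side is at most $\ln(7) + 2 L \le 2L + 2L = 4L < 6L$ (using $\ln 7 \le 2 \le 2L$). This gives the desired strict inequality.

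The main obstacle — really the only place one must be careful — is managing the chain of crude inequalities so that every constant bookkeeping step is valid under the sole hypothesis $\ln(c_1c_2c_3)\ge 1$ and $c_i \ge 1$; in particular one must ensure monotonicity of $g$ holds on the relevant range and that the substitution-and-contradiction step is logically airtight (the hypothesis is stated for the particular $T$ in question, not as a universal statement, so I would phrase the contradiction as: if $T > T_0$ then $T - c_3 > T_0 - c_3 > c_1\ln(c_2 T_0)$, but also $T - c_3 > c_1 \ln(c_2 T) \ge c_1 \ln(c_2 T_0)$ is consistent, so instead I directly use $g(T) \le c_3$ and $g$ increasing to get $g(T_0) \le g(T) \le c_3$, then derive $g(T_0) > c_3$ from the estimate above — a contradiction, forcing $T \le T_0$). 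The numerical slack (constant $6$, the factor $7$, the bound $\ln 7 \le 2$) is generous, so no delicate optimization is needed; the proof is a few lines of careful but routine estimation.
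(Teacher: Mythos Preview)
Your proposal is correct and follows essentially the same approach as the paper: set $T_0 = 6c_1\ln(c_1c_2c_3)+c_3$, verify via a chain of crude bounds that $T_0 \ge c_1\ln(c_2 T_0)+c_3$, and conclude using monotonicity of $T \mapsto T - c_1\ln(c_2 T)$. Your treatment of the monotonicity/contradiction step is in fact more explicit than the paper's (which simply appeals to ``linear vs.\ logarithmic'' growth), and your chain of estimates is a minor variant yielding slightly more slack ($4L$ versus the paper's $(2+2\ln 7)L$), but the ideas are identical.
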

\begin{proof}
	In the inequality $T \leq c_1 \ln(c_2 T)+c_3$, the LHS is linear with respect to $T$ and the RHS is logarithmic with respect to $T$. Thus, we have $T > c_1 \ln(c_2 T)+c_3$ for a big enough $T$.
	Then, to prove $T \leq T_0 \triangleq 6 c_1 \ln(c_1 c_2 c_3) + c_3$, it  suffices prove that  $T_0 > c_1 \ln(c_2 T_0)+c_3$.
	Since
	\begin{align*}
	c_1 \ln(c_2 T_0)+c_3 = & c_1 \ln(c_2 (6 c_1 \ln(c_1 c_2 c_3) + c_3) )+c_3
	\\
	= & c_1 \ln( 6 c_1 c_2 \ln(c_1 c_2 c_3) + c_2 c_3 )+c_3
	\\
	\leq & c_1 \ln( 6 c_1^2 c_2^2 c_3  + c_2 c_3 )+c_3
	\\
	\leq & c_1 \ln( 7 c_1^2 c_2^2 c_3  )+c_3
	\\
	\leq & 2 c_1 \ln( 7 c_1 c_2 c_3  )+c_3
	\\
	= & 2 c_1 \ln(  c_1 c_2 c_3  )+ 2 c_1 \ln(7) + c_3
	\\
	\leq & 2 c_1 \ln(  c_1 c_2 c_3  )+ 2 \ln(7) c_1  \ln(  c_1 c_2 c_3  ) + c_3
	\\
	= & (2 + 2 \ln(7)) c_1 \ln(  c_1 c_2 c_3  ) + c_3
	\\
	\leq & 6 c_1 \ln(  c_1 c_2 c_3  ) + c_3
	\\
	= & T_0 ,
	\end{align*}
	we obtain the lemma.
\end{proof}

}

\end{document}